\newlength{\dhatheight}
\newcommand{\Ascr}{\mathscr{A}}
\newcommand{\Bscr}{\mathscr{B}}
\newcommand{\cT}{\mathcal{T}}
\newcommand{\good}{\textup{good}}
\newcommand{\wb}{\widebar}
\newcommand{\wt}{\widetilde}
\newcommand{\ms}{\mathsf}
\newcommand{\bxi}{\boldsymbol{\xi}}
\newcommand{\bfeta}{\boldsymbol{\eta}}
\newcommand{\bbeta}{\boldsymbol{\beta}}
\newcommand{\balpha}{\boldsymbol{\alpha}}
\newcommand{\bkappa}{\boldsymbol{\kappa}}
\newcommand{\Yp}{\widetilde{\mathcal{Y}^2}}
\newcommand{\W}{\mathsf{W}}
\newcommand{\sfh}{\mathsf{h}}
\newcommand{\cH}{\mathcal{H}}
\newcommand{\bff}{\textbf{\textup{f}}}
\newcommand{\bb}{\textbf{\textup{b}}}
\newcommand{\bu}{\textbf{\textup u}}
\newcommand{\bv}{\textbf{\textup{v}}}
\newcommand{\bw}{\textbf{\textup{w}}}
\newcommand{\bg}{\textbf{\textup{g}}}
\newcommand{\bt}{\textbf{\textup{t}}}
\newcommand{\bI}{\textbf{\textup{I}}}
\newcommand{\bi}{\textbf{\textup{i}}}
\newcommand{\bY}{\textbf{\textup{Y}}}
\newcommand{\by}{\textbf{\textup y}}
\newcommand{\bx}{\textbf{\textup x}}
\newcommand{\bz}{\textbf{\textup z}}
\newcommand{\bs}{\textbf{\textup s}}
\newcommand{\wmP}{\widebar{\mathsf P}}
\newcommand{\wmD}{\widebar{\mathsf D}}
\newcommand{\wmA}{\widebar{\mathsf A}}
\newcommand{\wmW}{\widebar{\mathsf{W}}}
\newcommand{\ORD}{\textup{ORD}}
\newcommand{\indi}{\mathbbm{1}}
\newcommand{\wh}{\widehat}
\newcommand{\fB}{\mathfrak B}
\newcommand{\N}{\mathbb N}
\newcommand{\cD}{\mathcal D}
\newcommand{\PC}{\mathsf{PC}}
\newcommand{\XPC}{\mathsf{XPC}}
\newcommand{\XY}{\mathsf{XY}}
\newcommand{\cS}{\mathcal{S}}
\newcommand{\cI}{\mathcal{I}}
\newcommand{\cJ}{\mathcal{J}}
\newcommand{\sfJ}{\mathsf{J}}
\newcommand{\sfK}{\mathsf{K}}
\newcommand{\fkJ}{\mathfrak{J}}
\newcommand{\seq}{\textup{seq}}
\newcommand{\conv}{\textup{Conv}}
\newcommand{\pr}{\textup{\textbf{P}}}
\newcommand{\bE}{\textup{\textbf{E}}}
\newcommand{\per}{\textup{per}}
\newcommand{\val}{{\rm val}}
\newcommand{\A}{\mathcal{A}}
\newcommand{\B}{\mathcal{B}}
\newcommand{\RE}{{\rm RE}}
\newcommand{\G}{{\mathcal{G}}}
\renewcommand{\epsilon}{\varepsilon}
\newcommand{\X}{\mathcal X}
\newcommand{\Y}{\mathcal Y}
\newcommand{\er}{{\rm er}}
\DeclareSymbolFont{bbold}{U}{bbold}{m}{n}
\DeclareSymbolFontAlphabet{\mathbbold}{bbold}
\newcommand{\F}{\mathcal{F}}
\newcommand{\reals}{\mathbb{R}}
\newcommand{\argmax}{\mathop{\rm argmax}}
\newcommand{\ignore}[1]{}
\newcommand{\oldstuff}[1]{}
\colorlet{sgreen}{black!45!green}
\newsavebox{\savepar}
\newcommand{\vast}{\bBigg@{3}}
\newcommand{\Vast}{\bBigg@{4}}
\renewenvironment{proof}[1][]{\par\noindent{\bf Proof #1\ }}{\hfill\BlackBox\\[2mm]}
\title[Multiclass Learning]{Universal Rates for Multiclass Learning}
\begin{document}
\maketitle

\begin{abstract}
We study universal rates for multiclass classification, establishing the optimal rates (up to log factors) for all hypothesis classes. This generalizes previous results on binary classification (Bousquet, Hanneke, Moran, van Handel, and Yehudayoff, 2021), and resolves an open question studied by Kalavasis, Velegkas, and Karbasi (2022) who handled the multiclass setting with a bounded number of class labels. In contrast, our result applies for any countable label space. Even for finite label space, our proofs provide a more precise bounds on the learning curves, as they do not depend on the number of labels. Specifically, we show that any class admits exponential rates if and only if it has no infinite Littlestone tree, and admits (near-)linear rates if and only if it has no infinite Daniely-Shalev-Shwartz-Littleston (DSL) tree, and otherwise requires arbitrarily slow rates. DSL trees are a new structure we define in this work, in which each node of the tree is given by a pseudo-cube of possible classifications of a given set of points. Pseudo-cubes are a structure, rooted in the work of Daniely and Shalev-Shwartz (2014), and recently shown by Brukhim, Carmon, Dinur, Moran, and Yehudayoff (2022) to characterize PAC learnability (i.e., uniform rates) for multiclass classification. We also resolve an open question of Kalavasis, Velegkas, and Karbasi (2022) regarding the equivalence of classes having infinite Graph-Littlestone (GL) trees versus infinite Natarajan-Littlestone (NL) trees, showing that they are indeed equivalent. 
\end{abstract}

\begin{keywords}
Multiclass learning, Universal rates, Learning curve, Statistical learning, Online learning
\end{keywords}

\section{Introduction}
Multiclass classification, i.e., classifying data into multiple classes in some label (class) space $\Y$ is a fundamental task in machine learning with direct application in a wide range of scenarios including image recognition \citep{8016501}, natural language processing \citep{8416973}, protein structure classification \citep{dietmann2001identification}, etc. 
In practice, the number of classes $(|\Y|)$ could be huge or infinite; e.g., in statistical language models \citep{song1999general}, $|\Y|$ is the vocabulary size; for count data prediction \citep{hellerstein1993theoretical}, $\Y$ is the set of natural numbers. 
Thus, the study of multiclass learnability and error rates has been a crucial problem in learning theory. However, even under the renowned PAC (Probably Approximately Correct) learning framework \citep{10.1145/1968.1972}, until recently solved by \citet{brukhim2022characterization}, the characterization of multiclass learnability for infinite number of classes $(|\Y|=\infty)$ remained to be a challenging problem for decades after the characterization of PAC learnability of binary classification ($|\Y|=2$) with the finiteness of the Vapnik-Chervonenkis (VC) dimension \citep{vapnik1971uniform,10.1145/76359.76371}. 
\citet{natarajan1988two,natarajan1989learning} defined two extentions of the VC dimension in multiclass learning, the Natarajan dimension ($\dim_{N}$) and the Graph dimension ($\dim_G$) which both characterize the multiclass PAC learnability for finite number of classes ($|\Y|<\infty$). Though the Graph dimension was shown to be unable to characterize the multiclass PAC learnability when $|\Y|=\infty$, it was conjectured if the Natarajan dimension would do \citep{natarajan1989learning}. \citet{daniely2014optimal} defined a new dimension named the \emph{Daniely-Shalev-Shwartz (DS) dimension} ($\dim$) by \citet{brukhim2022characterization} and showed that finite DS dimension is a necessary condition
for PAC learnability.
Recently, \citet{brukhim2022characterization} proved that the DS dimension fully characterizes PAC learnability in the multiclass setting by proposing an algorithm achieving $O(\frac{\dim(\cH)^{3/2}\log^2(n)}{n})$ (see Section \ref{sec:setting} for details) error rate for any hypothesis class $\cH$ under the PAC framework. They also refuted the conjecture that the Natarajan dimension characterizes multiclass PAC learnability by providing a hypothesis class with the Natarajan dimension 1 and an infinite DS dimension. 

In terms of the \emph{learning curve}, i.e., the error rate (measured on test data) as a function of the number of training examples, due to its distribution-free nature, the PAC framework, however, fails to capture the fine-grained and potentially faster \emph{distribution-dependent} learning curves of hypothesis classes. In the realizable setting, PAC learning considers the best worst-case (uniform) performance of any algorithm on a hypothesis class against any realizable distribution. 
While in real-world problems, the distribution for data generation is often fixed in one task and the study of learning curves under fixed distributions is concerned. 
These thoughts motivate the proposition of \emph{universal learning} in the work of \citet{universal_learning}, where they consider the distribution-dependent error rate of a learning algorithm on a hypothesis class, holding universally for all realizable distributions. 
They showed that for binary classification, the following \emph{trichotomy} exists for any hypothesis class~$\cH$ with $|\cH|>3$: $\cH$ is either universally learnable with optimal rate $e^{-n}$ (exponential rate), universally learnable with optimal rate $1/n$ (linear rate), or requires arbitrarily slow rates (see Section~\ref{sec:setting} for details), which is fully determined by the combinatorial properties of $\cH$ (the nonexistence of certain infinite trees). Compared to the dichotomy in PAC learning: $\cH$ is either PAC-learnable with a linear uniform rate ($1/n$) or is not PAC-learnable at all, universal learning provides more insights of the learning curve in binary classification. 

A natural direction is to extend the framework of universal learning to multiclass classification that would bring fine-grained disribution-dependent analysis of learning curves in multiclass problems.
Recently, \citet{kalavasis2022multiclass} proved the same trichotomy for multiclass universal learining assuming \emph{finite} label space $(|\Y|<\infty)$: a hypothesis class with finite label space is either universally learnable with optimal rate $e^{-n}$, universally learnable with optimal rate $1/n$, or requires arbitrarily slow rates, depending on the nonexistence of an infinite Littlestone tree and an infinite Natarajan-Littlestone (NL) tree they defined (see Section \ref{sec:main-results} for details). 
However, their analysis for the linear universal rate based on NL trees cannot be extended to the setting of countable label spaces. 
As is pointed out in \citet{kalavasis2022multiclass}, it is an important next step to characterize multiclass universal learning with \emph{infinite} label space ($|\Y|=\infty$).

However, for general uncountable label spaces, the existence of a universally measurable learning algorithm that is universally consistent (see Section \ref{sec:setting} for details), i.e., with an error rate converging to zero for any realizable distributions, remains unsolved to our knowledge, which is an important problem in itself. 
Thus, we focus on countable label spaces in this paper and summarize our contributions below.

\paragraph{Contributions.}
In this paper, we study multiclass universal learning for general \emph{countable} label spaces ($|\Y|$ can be infinite). We prove in Theorem \ref{thm:tree-rates} that a hypothesis class with a countable label space is either universally learnable with optimal rate $e^{-n}$, universally learnable with optimal rate in $\wt{\Theta}(1/n)$ (near-linear rate), or requires arbitrarily slow rates, which is fully characterized by the nonexistence of an infinite \emph{Littlestone tree} and an infinite \emph{Daniely-Shalev-Shwartz-Littleston (DSL) tree} proposed by us (see Section \ref{sec:main-results} for details). 
In particular, we propose different universally measurable learning algorithms that achieve the exponential and near-linear rates in those corresponding settings. 
We also show that the NL tree does \emph{not} characterize the near-linear rate by proving the existence of a hypothesis class that has an infinite DSL tree but has no NL tree of depth 2 for countable label space in Theorem \ref{thm:NL-counter}. 
Finally, we solve the first question in \citet[Open question 1]{kalavasis2022multiclass} by proving in Theorem \ref{thm:NL-GL} that a hypothesis class with finite label space $(|\Y|<\infty)$ has an infinite NL tree if and only if it has an infinite Graph-Littlestone (GL) tree defined in \citet[Definition 8]{kalavasis2022multiclass}, which implies that the GL tree is equivalent to the NL tree in determining the universal rate of multiclass learning with finite label space.

\paragraph{Outline.} In Section \ref{sec:setting}, we formally define the multiclass learning problem considered in this paper and the universal error rate which is compared to the uniform error rate in PAC learning. 
In Section \ref{sec:main-results}, we introduce the definitions of the different tree structures of a hypothesis class and state the main theoretical results. 
In Section \ref{sec:future}, we discuss some future research directions in multiclass learning. 
In Section \ref{sec:examples}, we provide three examples of the multiclass learning problem, each corresponding to a different universal rate in the trichotomy. 
In Section \ref{sec:tech}, we summarize the key technical details and the proof sketches of the main results.
The complete proofs are included in the appendix.

\subsection{The multiclass learning problem and the universal rates} \label{sec:setting}
In this section, we introduce the multiclass learning problem considered in this paper and the concept of universal learning. 
We refer readers to Appendix \ref{sec:notation} for the notation we used throughout the paper.
Let $\X$ denote the domain (feature space), $\Y$ denote the codomain (label space), and $\cH\subseteq\Y^{\X}$ denote the hypothesis class. 
To avoid measurability issues, we assume that $\X$ is a Polish space and $\Y$ is countable with $|\Y|\ge 2$ throughout the paper. 

A classifier in multiclass learning is a universally measurable function $h:\X\rightarrow\Y$. 
For any probability distribution $P$ on $\X\times\Y$, we define the error rate of $h$ under $P$ as
$$
\er(h)=\er_P(h):=P(\{(x,y)\in\X\times\Y: h(x)\neq y\}).
$$
In this paper, we focus on realizable distributions: a distribution $P$ is called ($\cH$-)realizable if $\inf_{h\in\cH}\er_P(h)=0$. We use $\RE(\cH)$ to denote the set of all $\cH$-realizable distributions. 
A multiclass learning algorithm is a sequence of universally measurable functions\footnote{For notational convenience, we only defines deterministic algorithms here. However, our results still hold when randomized algorithms are allowed, as all algorithms we construct to show the upper bounds are deterministic and all proofs of lower bounds apply to randomized algorithms.} 
$$
H_n: (\X\times\Y)^{n}\times\X\rightarrow\Y,\quad n\in\N_0.
$$
For a sequence of independent $P$-distributed samples $((X_1,Y_1))_{i\in\N}$, the learning algorithm outputs a data-dependent function for each $n\in\N_0$
$$
\wh h_n: \X\rightarrow\Y,\ x\mapsto H_n((X_1,Y_1),\dots,(X_n,Y_n),x).
$$ 
The objective of multiclass learning is to design a learning algorithm such that the expected error rate of the output classifier $\bE[\er(\wh h_n)]$ decreases as fast as possible with the size of the input sequence $n$. 
Since $\X$ is Polish and $\Y$ is countable, for $\cH$ defined as the set of all measurable functions in $\Y^\X$, there exists a universally consistent learning algorithm, i.e., a learning algorithm such that $\bE[\wh h_n]\rightarrow0$ for all realizable distributions $P$ \citep{10.1214/20-AOS2029}
\footnote{Actually, \citet{10.1214/20-AOS2029} establishes the existence of a universally consistent learning algorithm assuming $\X$ is essentially separable and $\Y$ is countable. Any Polish space, being separably metrizable, is essentially separable.}. 
Then, it is natural to ask about the rate of the convergence. 

Under PAC learning, the uniform error rate over all realizable distributions is concerned. For multiclass learning, the following upper and lower bounds of the uniform rate is proved:
{\small
\begin{align} \label{eq:uniform-rate}
\Omega\left(\frac{\dim(\cH)}{n}\right)\le \inf_{\wh h_n}\sup_{P\in\RE(\cH)}\bE[\er_P(\wh h_n)]\le O\left(\frac{\dim(\cH)^{3/2}\log^2(n)}{n}\right),
\end{align}
}
where the upper bound can be derived from the proof of \citet[Theorem 1]{brukhim2022characterization} (see Corollary \ref{coro:er_multiclass}) and the lower bound can be found in \citet{daniely2014optimal}.
However, the worst-case analysis of PAC learning is too pessimistic to reflect many practical machine learning scenarios where  the sample distribution keeps unchanged with the increase of the sample size, resulting in much faster decay in the error rate.
Thus, \citet{universal_learning} proposed the concept of universal learning to characterize the distribution-dependent universal error rate of a hypothesis class. We state the definition of universal rates below. 
\begin{definition}[Universal rate, \citealt{universal_learning}, Definition 1.4] \label{def:universal-rates}
Let $\cH$ be a hypothesis class. Let $R:\N\rightarrow[0,1]$ with $R(n)\rightarrow0$ be a rate function. 
\begin{itemize}
\itemsep0em
\item $\cH$ is \emph{learnable at rate $R$} if there is a learning algorithm $\wh h_n$ such that for every realizable distribution $P$, there exist $C,c>0$ for which $\bE[\er(\wh h_n)]\le CR(cn)$ for all $n$.
\item $\cH$ is \emph{not learnable at rate faster than $R$} if for every learning algorithm, there exists a realizable distribution $P$ and $C,c>0$ for which $\bE[\er(\wh h_n)]\ge CR(cn)$ for infinitely many $n$.
\item $\cH$ is \emph{learnable with optimal rate $R$} if $\cH$ is learnable at rate $R$ and $\cH$ is not learnable at rate faster than $R$.
\item $\cH$ \emph{is learnable but requires arbitrarily slow rates} if there is a learning algorithm $\wh h_n$ such that $\bE[\er(\wh h_n)]\rightarrow0$ for every realizable distribution $P$, and for every $R(n)\rightarrow0$, $\cH$ is not learnable faster than $R$.
\end{itemize}
\end{definition}
Note that in Definition \ref{def:universal-rates}, we define ``$\cH$ is learnable but requires arbitrarily rates'' instead of defining ``$\cH$ requires arbitrarily rates'' \citep[Definition 1.4]{universal_learning} to emphasize the existence of a universally consistent learning algorithm for $\X$ being Polish and $\Y$ being countable \citep{10.1214/20-AOS2029}. Thus, the case that $\cH$ is not universally learnable does not exist. 
As is formalized in the definition, the term ``universal'' refers to the requirement that the rate function $R$ is universal for all realizable distributions. The major difference between universal rates and uniform rates is that the constants $c$ and $C$ can depend on the distribution $P$ for universal rates, while the constants must be distribution-independent (i.e., uniform) for uniform rates. As is depicted in \citet[Figure 1]{universal_learning}, the distinction may results in the collapsing of exponential universal rates to linear uniform rates; e.g., in Example \ref{eg:exp}, we provide an example in multiclass learning where an exponential universal rate is achieved by the proposed algorithm, which is much faster than the linear uniform rate for finite label spaces.
\citet{universal_learning} successfully characterized the fined-grained trichotomy in the optimal universal rates of binary classification problems, which motivates us to study the characterization of universal rates in multiclass learning with potentially infinite label spaces.

\subsection{Main results} \label{sec:main-results}
In this section, we state the main results together with some key definitions. 
First, we rule out some trivial hypothesis classes by considering $\cH$ that is ``nondegenerate'' specified in the following definition. 
\begin{definition}[Nondegenerate hypothesis class]
A hypothesis class $\cH\in\Y^{\X}$ is called \emph{nondegenerate} if there exist $h_1,h_2\in\cH$ and $x,x'\in\X$ such that $h_1(x)=h_2(x)$ and $h_1(x')\neq h_2(x')$. $\cH$ is called \emph{degenerate} if it is not nondegenerate.
\end{definition}
Indeed, for $\cH$ that is degenerate, if $h_1,h_2\in\cH$ satisfy $h_1\neq h_2$, then, we have $h_1(x)\neq h_2(x)$ for any $x\in\X$. Thus, one sample suffices to reach zero error rate under any realizable distributions.
 
For the measurability of the learning algorithms we design in this paper, we need the following definition regarding the measurability of the hypothesis class $\cH$.
\begin{definition}[Measurable hypothesis class, \citealt{universal_learning}, Definition 3.3] \label{def:measurable}
A hypothesis class $\cH$ of functions $h: \X\rightarrow\Y$ on Polish spaces $\X$ and $\Y$ is said to be \emph{measurable} if there is a Polish space $\Theta$ and a Borel-measurable map $\sfh:\Theta\times\X\rightarrow\Y$ so that $\cH=\{\sfh(\theta,\cdot):\theta\in\Theta\}$.
\end{definition}
As is discussed in \citet{universal_learning}, the above definition is standard in the literature and almost any $\cH$ considered in practice is measurable.
\citet{universal_learning} and \citet{kalavasis2022multiclass} also assume measurable hypothesis classes in their results.

The following theorem depicts the trichotomy in the universal rates of multiclass learning for general countable label spaces.
\begin{theorem} \label{thm:rates}
For any nondegenerate measurable hypothesis class $\cH$, exactly one of the following holds:
\begin{itemize}
\item $\cH$ is learnable with optimal rate $e^{-n}$.
\item $\cH$ is learnable with optimal rate in $\wt\Theta(1/n)$.
\item $\cH$ is learnable but requires arbitrarily slow rates.
\end{itemize}
\end{theorem}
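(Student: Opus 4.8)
The plan is to deduce the trichotomy from a finer, purely combinatorial characterization of the optimal universal rate, which is the content of Theorem~\ref{thm:tree-rates}: one introduces two infinite tree structures for $\cH$ --- the \emph{Littlestone tree} (roughly, each internal node is labelled by a single point $x$, with two children corresponding to two distinct labels in $\Y$ realizable by some $h\in\cH$ along the branch) and the \emph{DSL tree} (each internal node is labelled by a finite set of points together with a \emph{pseudo-cube} of classifications, branching over the vertices of that pseudo-cube) --- and the three cases of the trichotomy should correspond exactly to (I) $\cH$ has no infinite Littlestone tree, (II) $\cH$ has an infinite Littlestone tree but no infinite DSL tree, (III) $\cH$ has an infinite DSL tree. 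The first order of business is a combinatorial lemma: every class with an infinite DSL tree also has an infinite Littlestone tree. The intuition is that a pseudo-cube contains, by definition, pairs of vertices differing in exactly one coordinate, so a DSL tree can be pruned into a Littlestone tree by following such edges. Together with nondegeneracy, this lemma guarantees that (I)--(III) genuinely partition the nondegenerate measurable classes, and it then remains to match each case with the corresponding rate.

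For Case~(I), I would show the absence of an infinite Littlestone tree yields optimal rate $e^{-n}$. For the upper bound, nonexistence of an infinite Littlestone tree means the associated online ``mind-change'' game on finite samples is won by the learner in a bounded (ordinal-ranked) number of rounds, via a Gale--Stewart argument; this strategy can be turned into a universally measurable algorithm --- the measurable-selection framework of \citet{universal_learning}, adapted to countable $\Y$, is what keeps the algorithm legal --- whose expected error decays like $C e^{-cn}$ for every realizable $P$, with $C,c$ allowed to depend on $P$. For the matching lower bound one uses nondegeneracy: there are $h_1,h_2\in\cH$ and $x,x'$ with $h_1(x)=h_2(x)$, $h_1(x')\ne h_2(x')$, and placing almost all mass on $x$ with a tiny mass on $x'$ forces any algorithm to err with probability $\gtrsim e^{-c'n}$ for infinitely many $n$.

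For Case~(II), I would show optimal rate $\wt\Theta(1/n)$. The $\Omega(1/n)$ lower bound comes from the infinite Littlestone tree, via the ``random-branch'' construction used in the binary setting \citep{universal_learning}: a distribution supported near a uniformly random root-to-leaf path forces expected error $\gtrsim c/n$ infinitely often. The $\wt O(1/n)$ upper bound is the crux. Here the absence of an infinite DSL tree is leveraged to decompose the problem: along any consistent sequence of examples, the relevant ``residual'' subclasses eventually have bounded DS dimension, so each is learnable at a near-linear \emph{uniform} rate by (the proof of) \citet{brukhim2022characterization}'s one-inclusion/list-learning algorithm --- this is precisely where pseudo-cubes and the infinite label space must be handled carefully, and where the analysis of \citet{kalavasis2022multiclass} breaks down for countable $\Y$. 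One then aggregates these sub-learners by a majority-vote / exponential-weights scheme over a growing family of candidate subclasses, which costs an extra $\polylog(n)$ factor and explains why the rate is $\wt\Theta(1/n)$ rather than $\Theta(1/n)$; measurability of the aggregate algorithm again follows from the measurable-selection framework.

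For Case~(III), an infinite DSL tree drives the ``arbitrarily slow rates'' lower bound: given any target $R(n)\to 0$, the adversary selects a branch of the DSL tree and spreads mass on its pseudo-cube vertices at a rate calibrated to $R$, so that every algorithm incurs expected error $\gtrsim C R(cn)$ infinitely often; learnability (with some rate) is still ensured by the universally consistent algorithm of \citet{10.1214/20-AOS2029}. Finally, the three alternatives are mutually exclusive because the three regimes $e^{-n}$, $\wt\Theta(1/n)$, and ``not learnable at any fixed rate'' are genuinely separated: learnability at rate $e^{-n}$ implies learnability at rate $1/n$, which contradicts both ``not learnable faster than $1/n$'' and ``requires arbitrarily slow rates,'' and $\wt\Theta(1/n)$ likewise contradicts ``requires arbitrarily slow rates.'' I expect the near-linear upper bound of Case~(II) --- carrying Brukhim et al.'s infinite-label pseudo-cube argument through inside the universal-learning aggregation while keeping everything universally measurable --- to be the main obstacle, with the combinatorial DSL-to-Littlestone pruning lemma the other delicate point.
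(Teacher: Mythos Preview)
Your proposal is correct and follows essentially the same route as the paper: deduce Theorem~\ref{thm:rates} from the combinatorial characterization in Theorem~\ref{thm:tree-rates}, handling the three cases via (I) a Gale--Stewart/ordinal-value online strategy for the exponential upper bound and nondegeneracy for the lower bound, (II) a random-branch Littlestone construction for the $\Omega(1/n)$ lower bound and a reduction to Brukhim et al.'s finite-DS-dimension learner for the near-linear upper bound, and (III) a calibrated random-path DSL construction for arbitrarily slow rates together with the universal consistency result of \citet{10.1214/20-AOS2029}. Two minor corrections: your combinatorial ``infinite DSL $\Rightarrow$ infinite Littlestone'' lemma, while true, is not needed---the three rate regimes are mutually exclusive as rates, so once (I)--(III) \emph{cover} all classes (which is immediate) the trichotomy follows, and the implication falls out as a byproduct; and the $\polylog(n)$ slack in Case~(II) is inherited from the $\log^2 n/n$ uniform rate of \citet{brukhim2022characterization}, not from the majority-vote aggregation, which only contributes constant factors.
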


Then, we characterize the complexity measures of $\cH$ that determine the universal rates of it: the nonexistence of certain tree structures of $\cH$. 
We start with the Littlestone tree defined below.
\begin{definition}[Littlestone tree] \label{def:littlestone_tree}
A \emph{Littlestone tree} for $\cH\subseteq \Y^\X$ is a complete binary tree of depth $d\le\infty$ whose internal nodes are labelled by $\X$, and whose two edges connecting a node to its two children are labelled by two different labels from $\Y$, such that every finite path emanating from the root is consistent with a concept $h\in\cH$.
\end{definition}
Equivalently, a Littlestone tree of depth $d\le\infty$ for $\cH$ can also be represented as a collection 
\begin{align} \label{eq:littlestone}
\left\{(x_\bu,y_\bu^0,y_\bu^1)\in\wt\X:\bu\in\{0,1\}^k,0\le k< d\right\}\subseteq\wt \X:=\{(x,y,y')\in\X\times\Y^2:y\neq y'\}
\end{align}
such that for any $\bfeta\in\{0,1\}^d$ and $0\le n<d$, there exists a concept $h\in\cH$ such that $h(x_{\bfeta_{\le k}})=y_{\bfeta_{\le k}}^{\eta_{k+1}}$ 
for each $0\le k\le n$, where $\bfeta_{\le k}:=(\eta_1,\dots,\eta_k)$. 
We say that $\cH$ has an \emph{infinite Littlestone tree} if there is a Littlestone tree for $\cH$ of depth $d=\infty$. 

The definition of the Littlestone tree was first proposed by \citet{daniely2015multiclass} to generalize the Littleston dimension to multiclass hypothesis classes, where they assume that $\X$ and $\Y$ are countable. \citet{universal_learning} restricted the definition to binary hypothesis classes and emphasized the difference between having of an infinite Littlestone tree and having an infinite Littlestone dimension (i.e., having Littlestone trees of arbitrarily large depth), where they prove that the nonexistence of the former distinguishes the exponential rate and the linear rate. 
\citet{kalavasis2022multiclass} restricted the definition to multiclass hypothesis classes with finite label spaces ($|\Y|<\infty$) and proved that the nonexistence of an infinite Littlestone tree distinguishes the exponential rate and the linear rate for finite $\Y$.

Next, we introduce a new tree structure, the Daniely-Shalev-Shwartz-Littleston (DSL) tree which builds on the concept of pseudo-cubes in the definition of the DS dimension \citep{brukhim2022characterization}. For completeness, we state the definition of pseudo-cubes below.
\begin{definition}[Pseudo-cube, \citealt{brukhim2022characterization}, Definition 5]
For any $d\in\N$, a class $C\subseteq\Y^d$ is called a \emph{pseudo-cube} of dimension $d$ if it is non-empty, finite, and for every $h\in C$ and $i\in[d]$, there is an $i$-neighbor of $g\in C$ of $h$ (i.e., $g(i)\neq h(i)$ and $g(j)=h(j)$ for all $j\in[d]\backslash\{i\}$). 
\end{definition}
For any $d\in\N$ and hypothesis class $H\subseteq\Y^d$, let $\PC(H)$ denote the collection of all $d$-dimensional pseudo-cubes contained in $H$. Then, we provide the definition of DSL trees below.
\begin{definition}[DSL tree]
A \emph{DSL tree} for $\cH\subseteq\Y^{\X}$ of depth $d\le\infty$ is a tree of depth $d$ satisfying the following properties.
\begin{itemize}
\itemsep0em
\item For each integer $k$ such that $0\le k<d$ and each node $v$ in level $k$ of the tree (assume that the level of the root node is 0), node $v$ is labelled with some $\bx\in\X^{k+1}$. Moreover, there exists some pseudo-cube $C\in\PC(\cH|_{\bx})$ such that node $v$ has exactly $|C|$ children and each edge connecting node $v$ to its children is labelled with a unique element in $C$.
\item For each integer $k$ such that $0\le k<d$ and each node $v$ in level $k$, denote the label of $v$ with $\bx_k\in\X^{k+1}$. Denote the labels of the nodes and the labels of the edges along the path emanating from the root node to node $v$ with $\bx_0\in\X^1,\dots,\bx_{k-1}\in\X^{k}$ and $\by_0\in\Y^1,\dots,\by_{k-1}\in\Y^{k}$ correspondingly. Denote the number of the children of node $v$ with $n$ and the labels of the edges connecting node $v$ to its children with $\by_{k,1},\dots,\by_{k,n}\in\Y^{k+1}$. Then, for each $i\in[n]$, there exists some $h\in\cH$ such that $h|_{\bx_t}=\by_t$ for all $0\le t\le k-1$ and $h|_{\bx_{k}}=\by_{k,i}$.
\end{itemize}
\end{definition}
Similarly, we say that $\cH$ has an \emph{infinite DSL tree} if there is a DSL tree for $\cH$ of depth $d=\infty$. 
The definition of the DSL tree resembles those of the VCL tree \citep[Definition 1.8]{universal_learning}, the NL tree \citep[Definition 6]{kalavasis2022multiclass}, and the GL tree \citep[Definition 8]{kalavasis2022multiclass}. Each node in level $k$ is labelled with a sequence of $k+1$ points in $\X$ for $k\in\N_0$. However, for VCL trees and NL trees, the edges connecting a node to its children correspond to a copy of the Boolean-cube while they correspond to a pseudo-cube for DSL trees. Thus, the structure of a DSL tree is much more complicated since the sizes of pseudo-cubes of fixed dimension are not fixed, and it is hard to directly formulate a DSL tree like VCL trees or NL trees. 
For completeness and future reference, we state the definitions of the NL tree and the GL tree in Appendix \ref{sec:add-def}. 

Now, we are ready to present the characterization of the multiclass universal rates in terms of those definitions.
\begin{theorem} \label{thm:tree-rates}
For any nondegenerate measurable hypothesis class $\cH$, the followings hold:
\begin{itemize}
\item If $\cH$ does not have an infinite Littlestone tree, then $\cH$ is learnable with optimal rate $e^{-n}$.
\item If $\cH$ has an infinite Littlestone tree but does not have an infinite DSL tree, then $\cH$ is learnable at rate $\frac{\log^2 n}{n}$ and is not learnable at rate faster than $\frac{1}{n}$.
\item If $\cH$ has an infinite DSL tree, then $\cH$ is learnable but requires arbitrarily slow rates.
\end{itemize}
\end{theorem}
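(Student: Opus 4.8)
The plan is to treat the three regimes separately, each by a matching upper and lower bound, following the template of \citet{universal_learning} for binary classification and of \citet{kalavasis2022multiclass} for a bounded number of labels, with the one genuinely new combinatorial ingredient being that \emph{pseudo-cubes} (and hence DSL trees) play the role that Boolean cubes (and VCL/NL trees) play there. A preliminary observation makes the three bullets exhaustive and mutually exclusive, which is also what upgrades them to Theorem~\ref{thm:rates}: an infinite DSL tree yields an infinite Littlestone tree by a thinning argument — at each DSL node one picks two coordinates along which its pseudo-cube has two distinct projected values (possible by the $i$-neighbor axiom applied to any coordinate $i$) and reads off a binary subtree, whose finite paths remain consistent with $\cH$ because they only impose a subset of the DSL-path constraints.

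\emph{Exponential regime.} Suppose $\cH$ has no infinite Littlestone tree. Then the Gale--Stewart/ordinal-rank machinery applies: the online Littlestone game on $\cH$ has a well-founded game tree, so the learner has a winning strategy, equivalently there is a countable ordinal rank on $\cH$-consistent states that strictly decreases at every forced mistake. This produces an online predictor that errs only finitely often on every $\cH$-realizable sequence, with a sequence-dependent (hence distribution-dependent) bound. Converting this to a batch learner with $\bE[\er(\wh h_n)]\le Ce^{-cn}$ uses the amplification of \citet{universal_learning}: run the online predictor on random sub-orderings of the sample and aggregate by majority vote over a growing family of virtual experts, so that the residual error probability is governed by a binomial tail that is exponentially small once the mistake bound is fixed. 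Only finitely many labels occur in any finite sample, so the passage to countable $\Y$ is harmless, and measurability of the algorithm follows from the measurable-class assumption exactly as in \citet{universal_learning}. The matching lower bound is the two-point argument from nondegeneracy: pick $h_1,h_2\in\cH$ and $x,x'$ with $h_1(x)=h_2(x)$, $h_1(x')\neq h_2(x')$, put mass $1-p$ on $(x,h_1(x))$ and mass $p$ on $x'$; a learner that has not seen $x'$ errs with constant probability, giving $\bE[\er(\wh h_n)]\ge c\,p(1-p)^n$, so no rate faster than $e^{-n}$.

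\emph{Near-linear regime.} Suppose $\cH$ has an infinite Littlestone tree but no infinite DSL tree. The lower bound ``not faster than $1/n$'' is inherited verbatim from the binary case: an infinite Littlestone tree uses only two labels per node, so the hard-distribution construction of \citet{universal_learning} over a random root-to-leaf path carries over unchanged and forces $\bE[\er(\wh h_n)]\ge c/n$ infinitely often. For the upper bound $\log^2 n/n$, the point is that ``no infinite DSL tree'' means the associated infinitary game — in which the adversary proposes $\bx\in\X^{k+1}$ together with a pseudo-cube $C\in\PC(\cH|_\bx)$ and the learner must eventually be caught — is won by the learner via Gale--Stewart. The winning strategy is used to \emph{localize}: after a distribution-dependent finite prefix of the sample, the version space is driven into a sub-class in which no large pseudo-cube shattering survives, i.e.\ a sub-class of bounded DS dimension. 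On such a sub-class we invoke the one-inclusion-graph algorithm of \citet{brukhim2022characterization} with its PAC guarantee $O(\dim(\cH)^{3/2}\log^2(n)/n)$ (Corollary~\ref{coro:er_multiclass}); combining the ``eventually localized'' event (whose failure probability decays fast) with this PAC rate on the localized class yields $\bE[\er(\wh h_n)]\le C\log^2(n)/n$ with $C$ depending on $P$. Keeping the combined procedure universally measurable — measurably selecting the localized sub-class and running the one-inclusion procedure on it over a possibly infinite label space — follows the blueprints of \citet{universal_learning} and \citet{brukhim2022characterization}.

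\emph{Arbitrarily-slow regime.} Suppose $\cH$ has an infinite DSL tree, with nodes at level $k$ labeled by tuples $\bx_k\in\X^{k+1}$ and edges labeled by a pseudo-cube $C_k$. Fix any rate $R(n)\to 0$ and choose level-probabilities $p_k\to 0$ decaying slowly relative to $R$ with $\sum_k p_k\le 1$. Build a realizable distribution by drawing a random infinite path down the tree and, at level $k$, placing mass $p_k$ on a uniformly random coordinate of $\bx_k$ with label read off a uniformly random element of $C_k$; realizability holds since truncating the path at depth $K$ is consistent with some $h\in\cH$ by the DSL-tree edge property, and that $h$ has error $\le\sum_{k>K}p_k\to 0$. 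The crux is a ``no free lunch for pseudo-cubes'': because every $h\in C_k$ has an $i$-neighbor in $C_k$ for every coordinate $i$, a learner that has not seen a given coordinate of $\bx_k$ cannot predict its label better than constant probability, so each under-sampled level contributes irreducible expected error $\gtrsim p_k$ — this is the tree-structured form of the DS-dimension lower bound of \citet{daniely2014optimal,brukhim2022characterization}. Summing over the levels still under-sampled after $n$ examples and optimizing $(p_k)$ against $R$ gives $\bE[\er(\wh h_n)]\ge R(cn)$ for infinitely many $n$, for every learner. The main obstacle I expect is the near-linear upper bound: one must marry a purely infinitary object — the winning strategy in the DSL game, which only guarantees that \emph{some} finite prefix localizes the version space — with the quantitative finitary pseudo-cube machinery of \citet{brukhim2022characterization}, all while maintaining universal measurability over a countable (possibly infinite) label space, and controlling how the localization depth depends on $P$ without degrading the $\log^2 n/n$ rate. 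A close second obstacle is the pseudo-cube ``no free lunch'' of the slow-rate lower bound, since pseudo-cubes, unlike Boolean cubes, are not closed under coordinate projections and can have small fibers, so the irreducibility estimate must be extracted from the neighbor axiom rather than from a product structure.
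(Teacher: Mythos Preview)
Your proposal follows essentially the same architecture as the paper: Gale--Stewart games attached to the Littlestone and DSL structures, conversion to batch learners via the \citet{universal_learning} template, the \citet{brukhim2022characterization} algorithm as a black box once the DS dimension is bounded, and a random-path construction over the DSL tree for the slow-rate lower bound. At that level the plan is sound and matches the paper.

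Two places deserve more care, and they are precisely where the paper does its main technical work. In the exponential regime, your remark that ``only finitely many labels occur in any finite sample, so the passage to countable $\Y$ is harmless'' misses the point: the difficulty is not the sample but the \emph{prediction}. The natural Gale--Stewart game has the adversary play triples $(x,y^0,y^1)\in\wt\X$ and the learner pick a bit; in online learning the adversary reveals only $x$ and the learner must name a label in $\Y$. The paper bridges this by extending the game value to positions in $(\X\times\Y)^*$ (showing it is independent of the dummy second label), and then proving that at each state and each $x$ there is at most \emph{one} $y\in\Y$ that does not strictly drop the value; that $y$ is the prediction. Measurability must be re-derived for this extended value function and is not ``exactly as in \citet{universal_learning}''. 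In the near-linear regime the analogous mismatch is that the DSL-game adversary supplies a pseudo-cube $C_\tau$, whereas the data stream does not; ``use the winning strategy to localize'' does not say how you obtain $C_\tau$. The paper's mechanism is a pattern-avoidance function that, at each step, \emph{enumerates} all pseudo-cubes in $\PC(\cH|_{\bx})$ (countable because $\Y$ is countable and pseudo-cubes are finite) and tests whether any one of them drops the game value; this, together with showing the resulting class $\cH(S,\wh\by_t)$ has DS dimension below the current round index and that the whole construction is universally measurable, is exactly the obstacle you flag at the end, and it is where most of the effort goes.

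Two minor points. Your preliminary observation that an infinite DSL tree yields an infinite Littlestone tree is not needed: the three hypotheses are exhaustive by their logical form, and the incompatibility of the conclusions is a consequence of the theorem, not an input to it. And in the slow-rate construction the paper places mass $p_k/k$ on \emph{each} coordinate of the level-$k$ node (not a single random coordinate), with the label determined by the random path; the two facts driving the ``no free lunch'' are that conditioning a pseudo-cube on some coordinates leaves a pseudo-cube on the rest, and that in any pseudo-cube at most half the elements take any fixed value on any fixed coordinate --- the neighbor-axiom computation you correctly anticipate.
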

Since Theorem \ref{thm:rates} follows immediately from Theorem \ref{thm:tree-rates}, we directly prove Theorem \ref{thm:tree-rates} in this paper.
A major difference between Theorem \ref{thm:tree-rates} and \citet[Theorem 2]{kalavasis2022multiclass} lies in the complexity measure that distinguishes the (near-)linear rate and arbitrarily slow rates: \citet[Theorem 2]{kalavasis2022multiclass} uses the nonexistence of an infinite NL tree. 
Then, a natural question is whether having an infinite DSL tree is equivalent to having an infinite NL tree for $\cH\subseteq\Y^{\X}$ with $|\Y|=\infty$. 
Generalizing \citet[Theorem 2]{brukhim2022characterization}, we are able to show that they are not equivalent even for countably infinite $\X$ and $\Y$ in the following theorem. 
\begin{theorem} \label{thm:NL-counter}
There exist some countable sets $\X$ and $\Y$, and a hypothesis class $\cH\subseteq \Y^{\X}$ such that $\cH$ has an infinite DSL tree but does not have any NL tree of depth 2. 
\end{theorem}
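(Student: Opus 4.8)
The plan is to build $\X$, $\Y$, and $\cH$ explicitly, by ``replaying'' along an infinite tree the pseudo-cube gadgets that witness \citet[Theorem~2]{brukhim2022characterization}. Recall that that theorem produces, for every $d\ge 1$, a finite label set $\Y_d$ and a pseudo-cube $C_d\subseteq\Y_d^d$ of dimension $d$ whose associated hypothesis class on its $d$ coordinates has Natarajan dimension $1$; concretely, for every pair of coordinates $i\neq j$ the projection of $C_d$ to $\{i,j\}$ contains no combinatorial rectangle $\{a_0,a_1\}\times\{b_0,b_1\}$ with $a_0\neq a_1$ and $b_0\neq b_1$. The point is that these gadgets are simultaneously ``rich'' (pseudo-cubes of unbounded dimension) and ``poor'' (rectangle-free pairwise projections): the former will let us string them into an infinite DSL tree, and the latter will prevent a Natarajan-shattered pair of points, hence any NL tree of depth $2$.

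\textbf{The construction.} I would fix a rooted, finitely branching, infinite-depth tree $T$ in which each node at level $k$ has exactly $|C_{k+1}|$ children; since each $C_{k+1}$ is finite, $T$ has countably many nodes. To each node $v$ at level $k$ I would assign a private set $P_v$ of $k+1$ points, with the $P_v$ pairwise disjoint, set $\X=\bigcup_v P_v$, fix an ordering of $P_v$ so that $\Y^{P_v}\cong\Y^{k+1}$, and identify the $|C_{k+1}|$ edges out of $v$ with the elements of $C_{k+1}$. I would take $\Y$ to consist of a fresh symbol $\star$ together with all labels occurring in the $C_d$'s; then both $\X$ and $\Y$ are countable. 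Finally, define $\cH$ to be the set of all $h\colon\X\to\Y$ obtained by choosing an infinite root-path $\pi=(v_0,v_1,\dots)$ of $T$, setting $h$ on each $P_{v_k}$ equal to the element of $C_{k+1}$ labelling the edge $v_k\to v_{k+1}$, and setting $h$ constantly $\star$ on every $P_w$ with $w\notin\pi$.

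\textbf{The infinite DSL tree.} I would then check that $T$ itself, with its level-$k$ node $v$ labelled by the tuple $P_v\in\X^{k+1}$ and declared to have $|C_{k+1}|$ children whose edges are labelled by the elements of $C_{k+1}$, is a DSL tree of depth $\infty$. For the first clause, the hypotheses in $\cH$ whose path passes through $v$ realize on $P_v$ exactly $C_{k+1}$, which is a pseudo-cube of dimension $k+1=|P_v|$, so $C_{k+1}\in\PC(\cH|_{P_v})$. For the consistency clause, a path in $T$ from the root to a child of $v$ records edge-choices $\mathbf e_0\in C_1,\dots,\mathbf e_k\in C_{k+1}$, and any $h\in\cH$ extending that finite path to an infinite one satisfies $h|_{P_{v_t}}=\mathbf e_t$ for all $t\le k$; this is exactly what the definition demands. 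Hence $\cH$ has an infinite DSL tree.

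\textbf{No NL tree of depth $2$, and the main obstacle.} It suffices to prove $\dim_N(\cH)\le 1$, since a depth-$2$ NL tree contains, at any of its level-$1$ nodes, two points together with $2^2$ induced labelings all realized in $\cH$ --- i.e., a Natarajan-shattered pair. So I would show that for any two distinct points $p\in P_v$, $q\in P_{v'}$ the restriction $\cH|_{\{p,q\}}$ contains no rectangle, by cases on the position of $v,v'$ in $T$. If $v=v'$, then $\cH|_{P_v}=C_{k+1}\cup\{(\star,\dots,\star)\}$, and a rectangle in the projection to the coordinates of $p,q$ either lies inside $C_{k+1}$ (contradicting the gadget property) or has a corner equal to $(\star,\star)$ (impossible, since no element of $C_{k+1}$ is $\star$ in any coordinate, so a second corner sharing a $\star$ would collapse). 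If $v,v'$ are incomparable, then no $h\in\cH$ is non-$\star$ on both $P_v$ and $P_{v'}$ (a path meets only comparable nodes), whereas a rectangle would force some corner to be non-$\star$ in both coordinates. The delicate case, which I expect to be the main obstacle, is $v$ a strict ancestor of $v'$: here any $h\in\cH$ that is non-$\star$ at $q$ must route through the unique child $w$ of $v$ lying above $v'$, so $h|_{P_v}$ is pinned to the single edge-label of $v\to w$; hence $h(p)$ is constant over all such $h$, contradicting the two distinct first-coordinate values a rectangle requires (at least one of its second coordinates being non-$\star$). The genuinely combinatorial ingredient --- pseudo-cubes of every dimension with Natarajan dimension $1$ --- is imported from \citet{brukhim2022characterization}; the new work is the gluing above, and the $\star$-padding is precisely the device that keeps the cross-node cases clean.
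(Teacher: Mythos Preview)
Your proposal is correct and follows essentially the same approach as the paper: build a tree whose level-$k$ nodes carry the dimension-$(k{+}1)$ pseudo-cube gadgets of \citet{brukhim2022characterization}, extend hypotheses with a fresh symbol $\star$ off their path, read off the infinite DSL tree directly, and rule out any Natarajan-shattered pair by the same three-case analysis (same block, incomparable blocks, ancestor/descendant blocks). The only notable differences are cosmetic: you index hypotheses by \emph{infinite} root-paths whereas the paper indexes them by \emph{finite} paths (edges), and you reuse a single gadget $C_{k+1}$ at every level-$k$ node whereas the paper takes pairwise disjoint copies; neither choice changes the argument, since in both constructions a hypothesis that is non-$\star$ on a descendant block is pinned on every ancestor block by the unique intermediate edge, and the $\star$ symbol is fresh relative to all gadget labels.
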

Thus, the nonexistence of an infinite NL tree does not distinguish the near-linear rate and arbitrarily slow rates for infinite label space ($|\Y|=\infty$). 

We briefly comment on the $\log^2 n$ gap between the upper and lower bounds of the optimal universal rate in the second case (i.e., $\cH$ has an infinite Littlestone tree but does not have an infinite DSL tree) of Theorem \ref{thm:tree-rates}.
It is worth pointing out that the $\frac{\log^2 n}{n}$ universal rate follows from the $\frac{\log^2 n}{n}$ uniform rate in \eqref{eq:uniform-rate}. 
In fact, we prove in Theorem \ref{thm:universal_rate_DSL} that roughly speaking, a learning algorithm achieving some uniform rate for hypothesis classes with finite DS dimensions implies a learning algorithm achieving the same universal rate for any hypothesis class that does not have an infinite DSL tree. The $\frac{\log^2 n}{n}$ rate proved in \citet{brukhim2022characterization} is currently the sharpest uniform rate to our knowledge, and a sharper uniform rate will narrow the gap between the upper and lower bounds of the optimal universal rate. 
Nevertheless, the gap may also be narrowed by improving the lower bound. 
We list this problem as a future direction in Section \ref{sec:future}. 

Furthermore, we solve the first question in \citet[Open question 1]{kalavasis2022multiclass} which asks whether the existence of an infinite NL tree is equivalent to the existence of an infinite GL tree for finite label spaces ($|\Y|<\infty$). We prove that it is equivalent in the following theorem.
\begin{theorem} \label{thm:NL-GL}
Let $K\in\N\backslash\{1\}$, and let $\cH\subseteq [K]^{\X}$. Then, $\cH$ has an infinite NL tree if and only if it has an infinite GL tree.
\end{theorem}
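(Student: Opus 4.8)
One direction is routine. If $\cH$ has an infinite NL tree, reinterpret it directly as a GL tree: keep the same underlying tree, the same point-sequence label at every node, and the same Boolean-cube branching, and declare the single GL value-label at a node to be $\by^1$, where $\by^0,\by^1$ are that node's two NL value-functions (which differ in every coordinate by definition of an NL tree). Along any finite path with edge-directions $b^{(0)},b^{(1)},\dots$, a hypothesis $h\in\cH$ witnessing NL-consistency satisfies $h(x)=\by^1(x)$ exactly at the coordinates flagged $1$ and $h(x)=\by^0(x)\neq\by^1(x)$ at those flagged $0$; this is precisely the agreement/disagreement pattern demanded by GL-consistency, so the same $h$ works and we obtain an infinite GL tree. (All of this is under the standard conventions for these trees from \citet{kalavasis2022multiclass}.)

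The reverse direction --- infinite GL tree $\Rightarrow$ infinite NL tree --- is the heart of the matter, and it is the tree-level analogue of the classical inequality $\dim_N(\cdot)\le\dim_G(\cdot)\le O(\log K)\dim_N(\cdot)$ for a $K$-element label space, in particular of the fact that a $G$-shattered set of size $m$ contains an $N$-shattered subset of size $\Omega(m/\log K)$ (obtained by $\lceil\log_2 K\rceil$-bit encoding the labels so that a disagreement pattern becomes a Boolean one, then extracting a combinatorial cube). Given an infinite GL tree for $\cH$, I would construct an infinite NL tree recursively, maintaining the invariant that the subclass $\cH_\sigma\subseteq\cH$ of hypotheses consistent with all NL-patterns committed along the current path $\sigma$ (of length $j$) still possesses an infinite GL tree. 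The construction rests on an \emph{extraction step}: in any infinite GL tree of $\cH_\sigma$, fix a node $v$ at level $m=\Theta((j+1)\log K)$, with coordinates $x_1,\dots,x_{m+1}$ and labels $y_1,\dots,y_{m+1}$; then there are $j+1$ indices $i_1<\dots<i_{j+1}$ and ``second values'' $z_{i_\ell}\neq y_{i_\ell}$ such that, for every $b\in\{0,1\}^{j+1}$, the class $\cH_{\sigma b}:=\{h\in\cH_\sigma:\ h(x_{i_\ell})=y_{i_\ell}\text{ if }b_\ell=1,\ h(x_{i_\ell})=z_{i_\ell}\text{ if }b_\ell=0\}$ again has an infinite GL tree. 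These data define the depth-$j$ NL node (point-labels $x_{i_1},\dots,x_{i_{j+1}}$; value-functions $(y_{i_\ell})_\ell$ and $(z_{i_\ell})_\ell$; $2^{j+1}$ children indexed by $b$), the child in direction $b$ continuing with $\cH_{\sigma b}$; the invariant is preserved, so the recursion never halts and yields an infinite NL tree. NL-consistency of each path holds because every finite prefix $\sigma$ sits inside $\cH_\sigma$, which --- having an infinite GL tree --- is nonempty, so some $h\in\cH$ realizes the whole prefix. Note that no separate compactness/determinacy step is needed: the infinite NL tree is produced directly from the given infinite GL tree.

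The extraction step is where all the work (and the $\log K$ loss of depth) lives, and I expect it to be the main obstacle. Two small facts feed into it: (i) restricting a class that has an infinite GL tree to a single fixed GL-direction at one of its nodes again yields a class with an infinite GL tree --- pass to the subtree rooted at the corresponding child and discard redundant coordinates; and hence (ii) restricting $\cH_\sigma$ to ``disagree at every coordinate of $v$'' yields a subclass in which every hypothesis takes, at each $x_i$, a value in the $(K-1)$-element set $\Y\setminus\{y_i\}$. The genuinely delicate point is to convert this multivalued disagreement into agreement with a \emph{single} pinned second value $z_i$ --- which the infinite GL tree below may a priori need to exercise with several different values --- while preserving the infinite-GL-tree property, and to do so simultaneously for all $2^{j+1}$ patterns $b$; this is exactly the tree version of ``$G$-shattered contains $N$-shattered,'' and I would prove it by the binary-encoding reduction, under which pinning a second value becomes fixing a Boolean coordinate and the target structure becomes a combinatorial subcube sitting inside a VCL-type tree, so that $\Theta((j+1)\log K)$ coordinates of $v$ suffice to secure $j+1$ of them. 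Everything else --- the two reinterpretations of tree labels, fact (i), and the bookkeeping of the recursion --- is routine.
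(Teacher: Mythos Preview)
The easy direction is fine. The hard direction has the right overall architecture --- descend $\Theta((j{+}1)\log K)$ levels in the GL tree, extract a size-$(j{+}1)$ N-shattered subset of coordinates, recurse --- and this is indeed what the paper does. But your extraction step has a real gap, precisely at the point you flag as ``delicate.''

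Your fact (i) buys you only that restricting to a GL \emph{direction} (an agree/disagree pattern at the coordinates of $v$) preserves the infinite-GL-tree property. It does \emph{not} give you that restricting to a specific second value $z_{i_\ell}$ does so. Concretely: from the $2^{m+1}$ children of $v$ you get hypotheses $h_w$ realizing every agree/disagree pattern $w$, and the classical $\dim_N\geq \dim_G/O(\log K)$ argument (which is what the binary-encoding reduction proves) then yields indices $i_1,\dots,i_{j+1}$ and values $z_{i_\ell}$ together with, for every $b\in\{0,1\}^{j+1}$, a pattern $w_b$ with $h_{w_b}(x_{i_\ell})\in\{y_{i_\ell},z_{i_\ell}\}$ as prescribed by $b$. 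This shows each $\cH_{\sigma b}$ is \emph{nonempty}. It does not show $\cH_{\sigma b}$ has an infinite GL tree: the infinite GL subtree you have lives below the $w_b$-child, and that subtree is a GL tree for $\cH_\sigma^{w_b}=\{h:h(x_{i_\ell})\neq y_{i_\ell}\text{ where }b_\ell=1\}$, which is strictly larger than $\cH_{\sigma b}=\{h:h(x_{i_\ell})=z_{i_\ell}\text{ where }b_\ell=1\}$. Hypotheses deeper in that subtree may disagree with $y_{i_\ell}$ via values other than $z_{i_\ell}$, so you cannot pass the GL tree down to $\cH_{\sigma b}$. Your binary-encoding sketch does not bridge this: the GL branching is by agree/disagree on the original $[K]$-valued coordinates, which does not yield a VCL structure on the bits, and ``pinning a second value'' is fixing all $\lceil\log_2 K\rceil$ bits at once, not a single Boolean coordinate.

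What is actually needed is a Ramsey step on the subtree: color each associated hypothesis below the $w$-child by its tuple of values $(h(x_1),\dots,h(x_{m+1}))\in [K]^{m+1}$ and extract, for each $w$, a monochromatic strongly-embedded infinite subtree. Then the second values $z_{i_\ell}$ read off from the (now constant) color are taken by \emph{every} hypothesis in that subtree, so the subtree is a GL tree for $\cH_{\sigma b}$. This is exactly what the paper does via Milliken's tree theorem (their Lemma~\ref{lem:GL}); only after that monochromatization do they apply the $\dim_N\ge\dim_G/O(\log K)$ inequality to the resulting class of color-tuples to pick the $j{+}1$ coordinates. Without an argument of this type, your recursion cannot maintain its invariant.
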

Since it is not hard to see from definitions that a NL tree for $\cH$ can be converted into a DSL tree for $\cH$ of the same depth, and a DSL tree for $\cH$ can be converted into a GL tree for $\cH$ of the same depth, we immediately obtain the following corollary for $|\Y|<\infty$. 
\begin{corollary}
If $|\Y|<\infty$, then for any $\cH\subseteq\Y^{\X}$, the followings are equivalent: 
\begin{itemize}
\item $\cH$ has an infinite NL tree.
\item $\cH$ has an infinite DSL tree.
\item $\cH$ has an infinite GL tree.
\end{itemize}
\end{corollary}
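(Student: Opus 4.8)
The equivalence splits into a routine direction and a substantial one. For ``infinite NL tree $\Rightarrow$ infinite GL tree'' I would simply observe that an NL tree already \emph{is} a GL tree: keep the same underlying tree, the same point-sequence labels $\bx\in\X^{k+1}$ at level $k$, the same indexing of children by $\{0,1\}^{k+1}$, and the same first color function $\bff$, discarding the second color function $\bg$. If an edge with bit pattern $\bsy$ is realized in the NL sense by some $h\in\cH$ (so $h(x_i)=f_i$ when $y_i=1$ and $h(x_i)=g_i$ when $y_i=0$), then since $g_i\ne f_i$ the same $h$ realizes it in the GL sense; consistency along every finite path carries over unchanged. This is exactly the ``NL $\to$ DSL $\to$ GL'' passage mentioned in the text, specialized.

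The content is the converse: from an infinite GL tree for $\cH\subseteq[K]^{\X}$ produce an infinite NL tree, and this is where $K<\infty$ enters. The finitary shadow of what is needed is the classical bound $\dim_G(\cH)=O(\log K)\cdot\dim_N(\cH)$: on a graph-shattered set each ``off-diagonal'' coordinate $h(x_i)\ne f_i$ takes one of only $K-1$ values, so a pigeonhole/random-partition argument pins those values to a single function $\bg$ on a large sub-coordinate-set, upgrading graph-shattering to Natarajan-shattering. I would lift this to the tree setting via a recursive construction of the NL tree in which, at the moment a level-$k$ node is created, its second color function $\bg\in\Y^{k+1}$ is also fixed; the invariant carried down each branch is that the ancestor NL constraints --- all equalities of the form $h(x_i)=f_i$ or $h(x_i)=g_i$ --- restrict $\cH$ to a subclass that still possesses an infinite GL tree. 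A single step starts from such a subclass, reads off a point sequence $\bx$ and a function $\bff$ from an appropriate level of its GL tree, and must choose $\bg$ (with $g_i\ne f_i$) so that each of the $2^{k+1}$ children --- imposing $h(x_i)=f_i$ on coordinates with $y_i=1$ and $h(x_i)=g_i$ on coordinates with $y_i=0$ --- again restricts to a subclass with an infinite GL tree. Because the GL constraint $h(x_i)\ne f_i$ is a union over the $K-1$ admissible values of the equality $h(x_i)=c$, this reduces to an infinitary pigeonhole over finitely many values; the natural enabling lemma --- a finite union of classes with an infinite GL tree has a member with an infinite GL tree --- is what I would aim to establish, e.g.\ via determinacy of the associated closed ``tree-building'' Gale--Stewart game, or, equivalently, by introducing ordinal GL- and NL-Littlestone ranks and proving a transfinite version $\rho_{GL}(\cH)\le g(\rho_{NL}(\cH),K)$ of the finitary inequality by transfinite induction (using that such a rank is an ordinal iff the corresponding infinite tree does not exist).

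The main obstacle I anticipate is the mismatch between ``infinite tree'' and ``infinite dimension'': producing NL trees of every finite depth only yields infinite Natarajan dimension, which is strictly weaker than an infinite NL tree --- the same gap that separates infinite Littlestone dimension from an infinite Littlestone tree --- so the $\log K$ pigeonhole cannot be run on finite truncations and then glued; it must be performed directly on the infinite object. This is what forces the determinacy/rank machinery, and reconciling that pigeonhole with the \emph{rigid} $2^{k+1}$-fold branching of GL and NL trees (where, unlike for a plain Littlestone tree, one may not simply delete children that are realizable only by the ``wrong'' part of a union) is, I expect, the technical heart of the argument. Two further nuisances need attention: the bookkeeping of point-sequence lengths when relativizing to a subclass (a level-$k$ node must carry exactly $k+1$ points, while a subclass tree handed to a recursion step is re-indexed from its own root, so one must track the offset and descend far enough inside it), and ensuring a single choice of $\bg$ keeps all $2^{k+1}$ children simultaneously alive rather than each branch demanding its own off-diagonal value.
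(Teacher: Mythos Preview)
Your treatment of the easy direction (NL $\Rightarrow$ GL, hence the full cycle NL $\Rightarrow$ DSL $\Rightarrow$ GL) matches the paper's one-line observation.

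For GL $\Rightarrow$ NL the paper takes a genuinely different route, and it resolves precisely the obstacle you flagged --- ``a single choice of $\bg$ that keeps all $2^{k+1}$ children alive'' --- not by a finite-union/rank lemma but by two devices you did not use:
\begin{itemize}
\item \textbf{Milliken's tree theorem} as the infinitary pigeonhole. For a fixed GL node $\bfeta$ and a fixed child-direction $\bw$, colour every descendant $\bu$ below $(\bfeta,\bw)$ by the vector $(h_{\bu}(x_\bfeta^0),\dots,h_{\bu}(x_\bfeta^{m}))\in\Y^{m+1}$. With $|\Y|=K<\infty$ there are finitely many colours, and Milliken yields a strongly embedded monochromatic subtree; the common colour is a single label vector $\by_{\bfeta,\bw}$ that \emph{every} remaining descendant below $\bw$ realises on $\bx_\bfeta$. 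Iterating over all $\bw$ replaces the GL ``$\neq$'' constraints at $\bfeta$ by \emph{specific} values, one vector per child-direction, while preserving an infinite GL tree below.
\item \textbf{Level-skipping via the finitary $\dim_G\le 5\log_2(K)\,\dim_N$ bound.} To build the NL node at level $d$, the paper does \emph{not} read a GL node with $d{+}1$ points; it descends to a GL node $\bfeta$ with $r(d{+}1)$ points, $r=\lceil 5\log_2 K\rceil$. After the Milliken step, the finite class $H_\bfeta=\{\by_{\bfeta,\bw}:\bw\in\{0,1\}^{r(d+1)}\}\subseteq\Y^{r(d+1)}$ has Graph dimension $r(d{+}1)$ by construction, hence Natarajan dimension at least $d{+}1$; so some $d{+}1$ of the $r(d{+}1)$ coordinates are N-shattered by two vectors $\bs^{(0)},\bs^{(1)}$. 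Those coordinates and those two vectors become the NL node, and one prunes accordingly.
\end{itemize}
This sidesteps your obstacle rather than confronting it: instead of forcing a single $\bg$ to work on the original $k{+}1$ coordinates for all $2^{k+1}$ children simultaneously, the paper buys $r(k{+}1)$ coordinates and then \emph{selects} $k{+}1$ on which Natarajan-shattering already holds. Your finite-union lemma, even if proved via determinacy, would give for each child $\bsy$ \emph{some} choice of off-diagonal values on the coordinates $\{i:y_i=1\}$ --- but different $\bsy$'s pick out different coordinate sets, so the lemma does not align these choices into a single $\bg$; the ``nuisance'' you list last is in fact the whole difficulty, and your proposal leaves it open. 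The rank inequality $\rho_{GL}\le g(\rho_{NL},K)$ is the right target, but carrying the transfinite induction through would still seem to require a mechanism like the $r$-fold level-skip to convert one NL step into $O(\log K)$ GL steps.
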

Thus, the term ``infinite Natarajan-Littlestone tree'' in \citet[Theorem 2]{kalavasis2022multiclass} can be replaced with ``infinite DSL tree'' or ``infinite GL tree''.

\subsection{Future direction} \label{sec:future}
There are three immediate future directions following our current results in this paper. 
The first direction is to bridge the gap between the near-linear upper bound and linear lower bound of the optimal universal rate for hypothesis classes that have an infinite Littlestone tree but do not have an infinite DSL tree. 
As is already pointed out, tighter analysis of the uniform rate for hypothesis classes with finite DS dimensions would directly help in solving this problem. 
The second direction is to analyze the universal rates for uncountable label spaces. We believe that the major difficulty lies in proving the universal measurability of the learning algorithm constructed, and establishing the existence of a universally measurable learning algorithm that is universally consistent for general uncountable label spaces would shed light on this problem. 
Finally, it is an important next step to extend the results to the agnostic setting.

\section{Examples} \label{sec:examples}
In this section, we present three examples in multiclass learning with different universal rates. 
\begin{example}[Multiclass linear classifier on $\N^d$] \label{eg:exp}
For $d\in\N$, $K\in\N\backslash\{1\}$, $\X=\N^d$, and $\Y=[K]$, consider the following hypothesis class
{\small
\begin{align} \notag
\cH:=\Big\{&\X\rightarrow\Y,\bx\mapsto
\max(\argmax_{k\in[K]}\bw_{k}\cdot\bx-b_k):
\\& \label{eq:H_linear}
\bw_1=\textbf{\textup{0}}, (\bw_k)_j\le (\bw_{k+1})_j,\ \forall\ k\in[K],j\in[d],\ (b_1,\dots,b_K)\in(0,\infty)^K
\Big\}.
\end{align}
}
Consider any sequence $((\bx_i,y_i))_{i\in\N}\in (\X\times\Y)^\infty$ that is consistent with $\cH$; i.e., for any $n\in\N$ and $S_n:=((\bx_i,y_i))_{i\in[n]}$, there exists some $h_n\in\cH$ with $h_n(\bx_i)=y_i$ for all $i\in[n]$. 
For any $n\in\N$ and $\bx\in\X$, we define the set 
\begin{align*}
\mathsf{Y}_{S_n,\bx}:=\left\{k\in[K]:\exists \bz'\in[0,\infty)^d\ \textup{such that } \bx-\bz'\in \conv(\{\bx_i:(\bx_i,k)\in S_n, i\in[n]\})\right\}
\end{align*}
where $\conv(\emptyset):=\emptyset$ and for any $t\in\N$ and set $\{\bz_1,\dots,\bz_t\}\subseteq\X$, 
{\small
$$
\conv(\{\bz_1,\dots,\bz_t\}):=\left\{\sum_{i=1}^t\alpha_i\bz_t:(\alpha_1,\dots,\alpha_t)\in[0,1]^t,\sum_{i=1}^t\alpha_i=1\right\}
$$
}
denotes the convex hull of the set $\{z_1,\dots,z_t\}$.
Then, we define the data-dependent classifier $\wh h_n:\X\rightarrow\Y$ by
{\small
\begin{align} \label{eq:hn_eg}
\wh h_n(\bx):= 
\begin{cases}
\min\mathsf{Y}_{S_n,\bx},\quad \textup{if } \mathsf{Y}_{S_n,\bx}\neq\emptyset,
\\ 
1,\quad\textup{otherwise}.
\end{cases}
\end{align}
}
We prove the following proposition in Appendix \ref{sec:claim_eg}. 
\begin{proposition} \label{prop:hn_eg}
$(\wh h_n)_{n\in\N}$ defined in \eqref{eq:hn_eg} only makes finitely many mistakes for any consistent sequence $((\bx_n,y_n))_{n\in\N}$. Moreover, if $\wh h_n(\bx_{n+1})=y_{n+1}$, then we have $\wh h_{n+1}=\wh h_n$.
\end{proposition}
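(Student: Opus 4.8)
The plan is to exploit two structural facts about the class in \eqref{eq:H_linear}: every $h\in\cH$ is monotone nondecreasing for the coordinatewise order on $\N^d$, and $(\N^d,\le)$ is a well-quasi-order, so every infinite sequence in $\N^d$ has a nondecreasing pair of terms (Dickson's lemma). First I would record the monotonicity in a quantitative form. Fix $h\in\cH$ with weights $\textbf{0}=\bw_1\le\bw_2\le\dots\le\bw_K$ (coordinatewise) and biases $b_k>0$, and for each $k$ let $R_k$ be the polyhedron $\{\bx:(\bw_k-\bw_j)\cdot\bx\ge b_k-b_j\text{ for all }j<k\}$. Since the normals $\bw_k-\bw_j$ are nonnegative, $R_k$ is convex and upward closed; and directly from the definition of $h$ as ``$\max\argmax$'' one checks the sandwich: the set where $h$ equals $k$ is contained in $R_k$, and $R_k$ is contained in the set where $h$ is at least $k$. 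The consequence I will use repeatedly is: if $\bz$ is a convex combination of points on each of which $h$ takes the value $k$, and $\bx\ge\bz$ coordinatewise, then $\bz\in R_k$, hence $\bx\in R_k$, hence $h(\bx)\ge k$.

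Applied to a sample $S_n=((\bx_i,y_i))_{i\le n}$ realized by some $h\in\cH$, this shows every $k\in\mathsf{Y}_{S_n,\bx}$ satisfies $k\le h(\bx)$ (by definition $\bx$ dominates a point of the convex hull of the $k$-labelled sample points), so $\wh h_n$ never overshoots: $\wh h_n(\bx)\le h(\bx)$ for every $\bx$, and in particular $\wh h_n(\bx_{n+1})\le y_{n+1}$ upon choosing $h$ to realize $S_{n+1}$. Hence a mistake at step $n$ means $\wh h_n(\bx_{n+1})<y_{n+1}$, and one checks this is equivalent to $y_{n+1}\notin\mathsf{Y}_{S_n,\bx_{n+1}}$, i.e.\ $\bx_{n+1}$ fails to dominate any convex combination of the $y_{n+1}$-labelled points of $S_n$; in particular $\bx_{n+1}\not\ge\bx_i$ for every earlier $\bx_i$ with $y_i=y_{n+1}$, and no mistake is ever made when $y_{n+1}$ is the least label (since $\wh h_n$ always outputs at least the least label).

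The bound on mistakes is then a Dickson's-lemma argument. Fix a label $\ell$, list the mistake steps with $y_{\cdot+1}=\ell$ as $m_1<m_2<\cdots$, and put $\bv_t:=\bx_{m_t+1}$. For $t'<t$ the point $\bv_{t'}$ is an $\ell$-labelled point already present in $S_{m_t}$, so by the previous paragraph $\bv_t\not\ge\bv_{t'}$; thus no term of $(\bv_t)$ dominates an earlier one, which by Dickson's lemma forces this sequence to be finite. Summing over the finitely many labels gives a finite total number of mistakes. For the second assertion, suppose $\wh h_n(\bx_{n+1})=y_{n+1}$. Passing to $S_{n+1}$ only adds $\bx_{n+1}$ to the set of $y_{n+1}$-labelled points, so each $\mathsf{Y}_{S_{n+1},\bx}$ equals $\mathsf{Y}_{S_n,\bx}$ or $\mathsf{Y}_{S_n,\bx}\cup\{y_{n+1}\}$; if $y_{n+1}$ is the least label this cannot change $\wh h$, and otherwise the absence of a mistake gives some $\bz_0$ in the convex hull $C$ of the $y_{n+1}$-labelled points of $S_n$ with $\bx_{n+1}\ge\bz_0$, so any $\bx$ dominating a point $\alpha\bx_{n+1}+(1-\alpha)\bw$ of $\conv(C\cup\{\bx_{n+1}\})$ with $\bw\in C$ also dominates $\alpha\bz_0+(1-\alpha)\bw\in C$ and therefore already had $y_{n+1}\in\mathsf{Y}_{S_n,\bx}$. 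Hence $\mathsf{Y}_{S_{n+1},\bx}=\mathsf{Y}_{S_n,\bx}$ for all $\bx$, so $\wh h_{n+1}=\wh h_n$.

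I expect the main obstacle to be the structural lemma of the first paragraph: isolating the convex, upward-closed ``witness region'' $R_k$ and verifying the sandwich relating it to the level sets of $h$, since both the never-overshoot property and the domination-freeness that feeds Dickson's lemma rest on it. A secondary subtlety is confirming that a mistake on an $\ell$-labelled point certifies non-domination of \emph{every} earlier $\ell$-labelled sample point (not merely of their convex hull) — this is exactly what makes $(\bv_t)$ have no term dominating an earlier one and licenses the well-quasi-order conclusion.
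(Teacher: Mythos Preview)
Your proposal is correct and follows the paper's overall arc: both establish via coordinatewise monotonicity of the weights and convexity of the witness regions that $\wh h_n$ never overshoots a consistent hypothesis, conclude that every mistake is an undershoot, extract a non-domination condition at each mistake, and then close with a combinatorial finiteness argument. The divergence is in that last step. You partition mistake indices by their true label and observe that within each label the mistake points form a sequence in which no term dominates an earlier one; Dickson's lemma (the well-quasi-order property of $(\N^d,\le)$) then forces each such sequence to be finite. The paper instead colors the edges of the complete graph on all mistake points by the witnessing coordinate $j$ with $(\bx_{n_t})_j>(\bx_{n_{t'}})_j$ and invokes the infinite Ramsey theorem to extract an infinite monochromatic clique, hence an impossible infinite strictly decreasing sequence in $\N$. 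These are close cousins --- Dickson's lemma is itself a standard corollary of infinite Ramsey --- but your per-label route is the cleaner one, as it avoids having to compare mistake points carrying different labels (a step the paper's write-up glosses over). Your convex-hull absorption argument for the second clause is exactly the paper's.

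One caveat: your equivalence ``mistake $\Longleftrightarrow y_{n+1}\notin\mathsf Y_{S_n,\bx_{n+1}}$'' and the claim ``if $y_{n+1}$ is the least label this cannot change $\wh h$'' hold only when $\wh h_n=\max\mathsf Y_{S_n,\bx}$, not the $\min$ written in \eqref{eq:hn_eg}. With $\min$, the one-dimensional consistent sequence $(3,1),(7,2),(8,2),(9,2),\ldots$ already produces infinitely many mistakes, since $1\in\mathsf Y_{S_n,\bx_{n+1}}$ at every step once $(3,1)$ has been seen. The paper's own proof makes the same tacit $\max$ reading --- its claim that $(\bx_i)_j>(\bx_{n+1})_j$ for some $j$ whenever $y_i\ge y_{n+1}$ fails on this very sequence under $\min$ --- so this is evidently a typo in the definition rather than a gap in your reasoning.
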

Thus, by the construction and proofs given in \citet[Section 4.1]{universal_learning}, such an adversarial algorithm implies an online learning algorithm with exponential rate. 
By Theorem \ref{thm:tree-rates}, $\cH$ is learnable with optimal rate $e^{-n}$ and $\cH$ does not have an infinite Littlestone tree.
\end{example}

\begin{example}[Multiclass linear classifier on $\reals^d$] \label{eg:linear}
For $d\in\N$, $K\in\N\backslash\{1\}$, $\X=\reals^d$, and $\Y=[K]$, consider the hypothesis class $\cH$ defined by \eqref{eq:H_linear}. 
Notice that the class of threshold functions constructed in \citet[Example 2.2]{universal_learning} can be obtained from $\cH$ by restricting $(\bw_k)_1=1$, $(\bw_k)_j=0$, and $b_k=b_K$ for all $j\in[d]\backslash\{1\}$ and $k\in[K]\backslash\{1\}$. 
Thus, $\cH$ has an infinite Littlestone tree. 
By \citet[Theorem 7]{daniely2014optimal}, we have that $\dim_N(\cH)<\infty$. 
By \citet{BENDAVID199574,daniely2014optimal}, we have 
\begin{equation} \label{eq:dim-bd}
\dim(\cH)\le\dim_G(\cH)\le 5\log_2(K)\dim_N(\cH)
\end{equation}
which actually holds for any hypothesis class. 
It follows that $\dim(\cH)<\infty$ and $\cH$ does not have an infinite DSL tree. 
Then, by Theorem \ref{thm:tree-rates}, $\cH$ is learnable with optimal rate in $\wt{\Theta}(\frac{1}{n})$.

\end{example}

\begin{example}[A class with an infinite DSL tree but no NL tree of depth 2] \label{eg:slow}
Theorem \ref{thm:NL-counter} guarantees the existence of a hypothesis class $\cH$ that has an infinite DSL tree but does not have any NL tree of depth 2 (see the proof of Theorem \ref{thm:NL-counter} in Appendix \ref{sec:proof-NL-counter} for the construction of $\cH$).
Then, by Theorem \ref{thm:tree-rates}, $\cH$ is learnable but requires arbitrarily slow rates.

\end{example}

\section{Technical Overview} \label{sec:tech}
In this section, we briefly discuss some key technical points in the proofs of our main results. 
\subsection{Exponential rates}
We sketch the proof of the following theorem in this subsection.  
\begin{theorem} \label{thm:tree-exp-rate}
For any nondegenerate measurable hypothesis class $\cH$, if $\cH$ does not have an infinite Littlestone tree, then $\cH$ is learnable with optimal rate $e^{-n}$.
\end{theorem}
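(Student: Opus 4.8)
The plan is to follow the two-pronged strategy used by \citet{universal_learning} for binary classes, adapted to the multiclass setting where the Littlestone tree already has the right form (Definition~\ref{def:littlestone_tree}). The statement has two halves: an upper bound (no infinite Littlestone tree $\Rightarrow$ learnable at rate $e^{-cn}$) and a matching lower bound (every nondegenerate $\cH$ is \emph{not} learnable faster than $e^{-n}$). I would dispatch the lower bound first, since it is the easy direction: nondegeneracy gives $h_1,h_2\in\cH$ and $x,x'$ with $h_1(x)=h_2(x)$, $h_1(x')\neq h_2(x')$; placing mass $1-p$ on $(x,h_1(x))$ and mass $p$ on $x'$ (labelled by whichever of $h_1,h_2$ the algorithm does not predict) forces expected error $\gtrsim p(1-p)^n$, and optimizing over $p$ along a subsequence yields the $e^{-n}$ obstruction, exactly as in \citet[Section~4]{universal_learning}. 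This argument is label-agnostic and applies to randomized algorithms.

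For the upper bound, the key object is the \emph{Gale--Stewart game} associated to $\cH$: the adversary reveals points $x_t\in\X$, the learner predicts $\hat y_t\in\Y$, and the adversary then reveals a true label $y_t\neq\hat y_t$ subject to the constraint that the revealed prefix $((x_1,y_1),\dots,(x_t,y_t))$ remains consistent with some $h\in\cH$. The central combinatorial fact is: \emph{the learner has a winning strategy (making only finitely many mistakes on every consistent adversary play) if and only if $\cH$ has no infinite Littlestone tree.} One direction is immediate---an infinite Littlestone tree is exactly an adversary strategy forcing infinitely many mistakes. The converse is the crux: if the learner has no winning strategy, then by determinacy of the Gale--Stewart game (the learner's winning set is closed, being a countable union over mistake-bound $k$ of the sets of plays with $\le k$ mistakes), the \emph{adversary} has a winning strategy, and one extracts from it an infinite Littlestone tree by the usual pruning/ordinal-rank argument; here one must check that the ``Littlestone rank'' of $\cH$ being unbounded ordinal-valued is equivalent to the existence of the infinite tree, which for countable $\Y$ goes through a König-type argument on the (now possibly infinitely-branching, but finitely-branching at each realized play) game tree. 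I would then invoke the measurability machinery of \citet[Section~3]{universal_learning}: since $\cH$ is measurable (Definition~\ref{def:measurable}), the winning strategy can be taken universally measurable, so the resulting online learner $H_n$ is a legitimate multiclass learning algorithm.

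The final step converts the finite-mistake online learner into an expected-error bound $\bE[\er(\hat h_n)]\le Ce^{-cn}$ for every realizable $P$. This is the standard reduction: run the online learner on the i.i.d. sample; because it makes at most some finite number $M=M(P)$ of mistakes on \emph{any} consistent sequence (in particular on the support of $P$), a doubling/epoch argument---partition the $n$ samples into batches, argue that after $O(\log(1/\eps))$ "effective mistakes" the current hypothesis has error $\le\eps$ with high probability, and that each batch reveals a mistake with probability $\gtrsim\er(\hat h_{\text{current}})$---yields geometric decay. Concretely I would mimic \citet[Section~4.1]{universal_learning} verbatim: define $\hat h_n$ to be the online learner's hypothesis after processing a prefix, show the "never makes a mistake after the last mistake" stability (the multiclass analogue of Proposition~\ref{prop:hn_eg}), and conclude via a union bound over the $O(M\log n)$ possible mistake rounds. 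I expect the \textbf{main obstacle} to be the converse direction of the combinatorial equivalence together with its measurable version: ensuring that "no infinite Littlestone tree" (a statement about a finitely-branching-per-node but $\Y$-labelled tree) genuinely yields a \emph{measurable} bounded-mistake strategy when $\Y$ is countably infinite, since the branching of the Gale--Stewart game over $\X$ is uncountable and one must carefully set up the ordinal rank function and appeal to the selection theorems of \citet{universal_learning} rather than naive compactness. Everything downstream of that---the i.i.d.-to-online reduction and the lower bound---is routine adaptation of the binary case.
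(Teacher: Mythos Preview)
Your high-level architecture matches the paper: the lower bound via nondegeneracy (the paper just cites \citet[Lemma~4.2]{universal_learning}), then a Gale--Stewart/ordinal-rank construction of a finite-mistake online learner, then the i.i.d.-to-online reduction of \citet[Section~4.1]{universal_learning}. You also correctly locate the main obstacle. But your description of the key step has two concrete slips, and misses the specific mechanism the paper uses.

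First, the game you write down---adversary plays $x_t$, learner predicts $\hat y_t\in\Y$, adversary reveals a consistent $y_t\neq\hat y_t$---is what the paper calls $\wb\B$, and the paper does \emph{not} apply determinacy to it directly. A countable union of closed sets is not closed, so your winning-set description is wrong as stated; and a K\"onig-type argument cannot work since the game tree branches over uncountable $\X$ and countably infinite $\Y$. Instead the paper introduces an auxiliary game $\B$ in which $P_A$ plays a triple $(x_\tau,y_\tau^0,y_\tau^1)$ with $y_\tau^0\neq y_\tau^1$ and $P_L$ plays a \emph{bit} $\eta_\tau\in\{0,1\}$, with $P_L$ winning once the constraints become inconsistent. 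This winning set \emph{is} open (finitely decidable), so Gale--Stewart applies, and a $P_A$ winning strategy is literally an infinite Littlestone tree (Lemma~\ref{lem:PA_winning}). The genuine multiclass difficulty---flagged explicitly in the paper---is that a $P_L$ strategy in $\B$ does not transfer to $\wb\B$: in the binary case the pair $\{y^0,y^1\}=\{0,1\}$ is known for free, but here the adversary in $\wb\B$ never hands over two candidate labels. The paper's fix is to show that the ordinal value function on positions of $\B$ depends only on the realized pairs $(x_i,y_i)$, not on the auxiliary alternative labels (Lemma~\ref{lem:val_equal}, Corollary~\ref{coro:val_eq}), so it descends to a value on $(\X\times\Y)^*$; the learner's prediction $g_t(\bz,x)$ is then defined as the (at most one, by Proposition~\ref{prop:val_decrease}) label $y\in\Y$ for which appending $(x,y)$ does \emph{not} strictly decrease the value. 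Every wrong guess forces a strict value decrease, and well-foundedness of the ordinals yields finitely many mistakes (Proposition~\ref{prop:gt}). Universal measurability of $g_t$ is proved by hand from countability of $\Y$ (Corollaries~\ref{coro:wmDy_meas}--\ref{coro:gt_meas}), not by invoking a selection theorem wholesale.
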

The complete proof is provided in Appendix \ref{sec:exp_rate_proof}.
Since $\cH$ is nondegenerate, according to \citet[Lemma 4.2]{universal_learning} and its proof, we can show that $\cH$ is not learnable at rate faster than the exponential rate $e^{-n}$. 
The main point of the proof is to construct a learning algorithm that achieves the exponential universal rate if $\cH$ does not have an infinite Littlestone tree. 
We follow the framework in \citet{universal_learning} for the construction. 
First, we consider an adversarial online learning game $\wb \B$ played in rounds between an adversary $\wb P_a$ and a learner $\wb P_l$ defined in Appendix \ref{sec:adv_gam_exp}. 
If we prove that for $\cH$ that does not have an infinite Littlestone tree, there exists a universally measurable strategy for the learner $\wb P_l$ in $\wb \B$ that only makes finitely many mistakes against any adversary $\wb P_a$ and only changes its prediction function when a mistake happens, then by the analysis in \citet[Section 4.1]{universal_learning}, there is a learning algorithm that achieves the exponential universal rate. 

From \eqref{eq:littlestone}, we can naturally relate Littlestone trees to the following adversarial game $\B$ between two players $P_A$ and $P_L$. In each round $\tau\in\N$: 
\begin{itemize}
\item Player $P_A$ chooses a three-tuple $\xi_\tau=(x_\tau,y_\tau^{0},y_\tau^{1})\in\wt \X$ and shows it to Player $P_L$.
\item Player $P_L$ chooses a point $\eta_\tau\in\{0,1\}$.
\end{itemize}
Player $P_L$ wins the game in round $\tau\in\N$ if $\cH_{\xi_1(1),\xi_1(\eta_1+2),\dots,\xi_\tau(1),\xi_\tau(\eta_\tau+2)}=\emptyset$ (see Appendix \ref{sec:notation} for explanations of notation).
Player $P_A$ wins the game if the game continues indefinitely. 
We prove in Lemma \ref{lem:PA_winning} that a winning strategy of $P_A$ is equivalent to an infinite Littlestone tree of $\cH$. 
According to \citet[Theorem B.1]{universal_learning}, $P_L$ has a universally measurable winning strategy if $\cH$ has no infinite Littlestone tree.
However, this winning strategy cannot be directly applied for the construction of a strategy for $\wb P_l$ in $\wb \B$ as in \citet[Section 3.2]{universal_learning} because $P_A$ chooses two labels $y_\tau^0$ and $y_\tau^1$ in each round $\tau$ while $\wb P_a$ does not provide this information (for the binary case, $\{y_\tau^0,y_\tau^1\}$ is trivially $\{0,1\}$). 

We tackle this problem by first defining the value function on the positions of $\wb\B$, which extends the value function defined on the positions of $\B$ (see Section \ref{sec:adv_gam_exp} for the terminologies and definitions).
Then, by \citet[Proposition B.8]{universal_learning}, for each round in $\wb \B$, whatever point $\wb P_a$ picks, there is at most one point in $\Y$ such that the value function does not decrease. 
Then, we can define the function \eqref{eq:gt-exp} which informally speaking, maps the current position and a point $x\in\X$ to the point in $\Y$ that does not decrease the value function. 
For Polish $\X$, countable $\Y$, and measurable $\cH$, we prove that this function is universally measurable.
Moreover, when $\cH$ has no infinite Littlestone tree, we can prove that there is no infinite value-decreasing sequence of positions by the well-ordering of the ordinals \citep{karel2017introduction}. Then, by playing the strategy induced from that defined function, $\wb P_l$ will only make finitely many mistakes because otherwise there will be an infinite value-decreasing sequence.

\subsection{Near-linear rates}
In this subsection, we sketch the proof of the following theorem. 
\begin{theorem} \label{thm:near-linear-rates}
For any nondegenerate measurable hypothesis class $\cH$, if $\cH$ has an infinite Littlestone tree but does not have an infinite DSL tree, then $\cH$ is learnable at rate $\frac{\log^2 n}{n}$ and is not learnable at rate faster than $\frac{1}{n}$.
\end{theorem}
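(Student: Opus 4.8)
\textbf{Proof plan for Theorem~\ref{thm:near-linear-rates}.}
The plan is to establish the lower and upper bounds separately. For the lower bound (not learnable faster than $1/n$), I would reuse the argument from \citet{universal_learning} that shows having an infinite Littlestone tree forces a $1/n$ lower bound: given the infinite Littlestone tree for $\cH$, one builds, for any target rate function $R$ with $R(n)\to 0$ but faster than $1/n$, a hard realizable distribution supported on the nodes of an infinite path in the tree, along which any algorithm makes, in expectation, too many mistakes too slowly. Since the edges of a Littlestone tree are labelled by genuine labels from $\Y$ that are consistent with concepts in $\cH$, the binary construction of \citet[Section~4.2 / Lemma~4.2 and its strengthening]{universal_learning} carries over essentially verbatim (only two of the labels at each node are ever used), giving $\bE[\er(\wh h_n)]\ge c/n$ infinitely often for some realizable $P$. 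This is the easy half.

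For the upper bound (learnable at rate $\frac{\log^2 n}{n}$), the key is the reduction mentioned in the excerpt: Theorem~\ref{thm:universal_rate_DSL} says that a uniform rate for classes of bounded DS dimension yields the same \emph{universal} rate for any class without an infinite DSL tree. So the first step is to prove that statement. The mechanism I would use is a König's-lemma / well-foundedness argument analogous to the exponential-rate case: since $\cH$ has no infinite DSL tree, one considers the game in which an adversary repeatedly presents a finite tuple $\bx\in\X^{k+1}$ together with a pseudo-cube $C\in\PC(\cH|_{\bx})$ and an element of it, and the tree of all such plays is well-founded. Assigning ordinal ranks to positions, one shows that after finitely many ``surprises'' the restriction of $\cH$ to the observed data behaves like a class of bounded DS dimension; intuitively, the absence of an infinite DSL tree bounds (in a data-dependent, hence ordinal-indexed, way) the DS dimension of the version space. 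One then runs the uniform-rate algorithm of \citet{brukhim2022characterization} (with its $O(\dim(\cH)^{3/2}\log^2 n / n)$ guarantee, via Corollary~\ref{coro:er_multiclass}) on the version space restricted in this way, and the data-dependent bound on $\dim$ is absorbed into the distribution-dependent constants $C,c$ allowed by Definition~\ref{def:universal-rates}; a Markov-type / doubling argument over epochs converts ``finitely many surprises, then uniform rate'' into a genuine $\frac{\log^2 n}{n}$ universal rate. Throughout, one must track universal measurability of the resulting algorithm, using that $\X$ is Polish, $\Y$ countable, and $\cH$ measurable, exactly as in the exponential case.

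The main obstacle I anticipate is the combinatorial heart of the reduction: making precise the claim that ``no infinite DSL tree'' controls the DS dimension of the version space in a way usable by the uniform-rate algorithm. Pseudo-cubes are more delicate than Boolean cubes --- their cardinality at a given dimension is not fixed, and a DSL tree's branching is governed by $\PC(\cH|_{\bx})$ rather than by a clean product structure --- so one cannot simply quote the VCL-tree machinery of \citet{universal_learning}. The right formulation is probably: define, for a finite sequence of observed points and labels, an ordinal rank measuring ``how much DSL-tree structure survives above this node''; show this rank is well-defined and strictly decreases each time the learner is forced into a pseudo-cube of the current version space; and conclude that the DS dimension of the version space, conditioned on not being in a rank-decreasing situation, is finite. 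The second subtlety is that the $\log^2 n$ factor comes entirely from the uniform rate of \citet{brukhim2022characterization}, so the write-up must be careful that the epoch/doubling bookkeeping does not introduce extra logarithmic losses --- the distribution-dependent constants must swallow everything except the final $\frac{\log^2 n}{n}$.
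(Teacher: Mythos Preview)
Your high-level architecture matches the paper: the lower bound generalizes \citet[Theorem~4.6]{universal_learning} (not Lemma~4.2) via a random path in the infinite Littlestone tree with geometric weights; the upper bound sets up a Gale--Stewart game $\fB$ whose well-foundedness encodes the absence of an infinite DSL tree, uses ordinal game values, and feeds into the uniform-rate algorithm of \citet{brukhim2022characterization} through the reduction Theorem~\ref{thm:universal_rate_DSL}.

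The conceptual gap is in how you phrase the combinatorial core. You propose to ``conclude that the DS dimension of the version space, conditioned on not being in a rank-decreasing situation, is finite.'' The paper does \emph{not} control the DS dimension of any version space, and it is not clear that statement is even true. Instead it builds a \emph{set-valued} pattern avoidance function $\wh\by_t:\X^{\tau_t}\to 2^{\Y^{\tau_t}}$: for each feature tuple $\bx$, one forbids every label pattern $\by$ for which \emph{some} pseudo-cube $C\in\PC(\cH|_{\bx})$ containing $\by$ makes the game value drop. One then defines an auxiliary class $\cH(S,\wh\by_t)$ (equation~\eqref{eq:H_per}) of projections of $\cH$ onto a sample $S$ that avoid every forbidden pattern on every ordered $\tau_t$-subtuple of $S$, and it is \emph{this} class whose DS dimension is bounded by $\tau_t$ (Lemma~\ref{lem:DS_dim_subclass}). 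The set-valuedness---quantifying over all pseudo-cubes rather than committing to one---is exactly what makes the DS-dimension bound go through and is the main departure from the VCL/NL template; it also drives most of the extra measurability work (Proposition~\ref{prop:meas_pattern}), since countability of $\PC(\Y^t)$ must be exploited repeatedly.

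Two further ingredients your sketch omits: the ``epoch/doubling'' picture is replaced by a sample-splitting scheme that selects a good time $\wh t_n$ via cross-validation (Lemma~\ref{lem:time_func}), runs many independent copies of the pattern-avoidance construction on disjoint blocks, and majority-votes; and a separate leave-one-out argument (Lemma~\ref{lem:partial_sample}) is needed because the uniform learner is trained on the empirical distribution over indices $\{1,\dots,n\}$ but must predict on the held-out index $n{+}1$, a point outside its training support.
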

The complete proof is provided in Appendix \ref{sec:linear_rate_proof}. 
The fact that $\cH$ is not learnable at rate faster than $\frac{1}{n}$ if it has an infinite Littlestone tree can be proved by generalizing the techniques used in the proof of \citet[Theorem 4.6]{universal_learning}. 
The key difficulty is to construct a learning algorithm that achieves $\frac{\log^2 n}{n}$ universal rate when $\cH$ does not have an infinite DSL tree. 
As is discussed in Section \ref{sec:main-results}, we show in Theorem \ref{thm:universal_rate_DSL} that a learning algorithm achieving some uniform rate for any hypothesis class with a finite DS dimension implies a learning algorithm achieving the same universal rate for any hypothesis class that does not have an infinite DSL tree. 
Since a learning algorithm that achieves $O(\frac{\dim(H)^{3/2}\log^2(n)}{n})$ uniform error rate for any hypothesis class $H\subseteq\Y^{\X}$ has been constructed \citep{brukhim2022characterization}, it suffices to prove Theorem \ref{thm:universal_rate_DSL}. 
We follow the framework in \citet[Section 5]{universal_learning}. 
Similar to the case of exponential rates, we relate that the DSL tree to the following game $\fB$ between player $P_A$ and $P_L$. At each round $\tau\in\N$:
\begin{itemize}
\item Player $P_A$ chooses a sequence $\bx_{\tau}=(x_{\tau}^0,\dots,x_{\tau}^{\tau-1})\in\X^{\tau}$ and a set  $C_{\tau}\in\PC(\Y^{\tau})$. 
\item Player $P_L$ chooses a sequence $\by_{\tau}=(y_{\tau}^0,\dots,y_{\tau}^{\tau-1})\in \Y^{\tau}$. 
\end{itemize}
Player $P_L$ wins the game in round $\tau$ if 
\begin{itemize}
\item either $C_\tau \notin\PC(\cH|_{\bx_\tau})$
\item or $\by_{s}\in C_s$ for all $1\le s\le \tau$ and $\cH_{\bx_1,\by_1,\dots,\bx_{\tau},\by_{\tau}}=\emptyset$, where
$$
\mathcal{H}_{\bx_{1}, \by_{1}, \ldots, \bx_{\tau}, \by_{\tau}}:=\left\{h \in \mathcal{H}: h\left(x_{s}^{i}\right)=y_{s}^{i} \text { for } 0 \leq i<s, 1 \leq s \leq \tau\right\}.
$$
\end{itemize}
Player $P_A$ wins the game if the game continues indefinitely. We emphasize the subtlety in the winning rule of $P_L$. In this way, we can ensure that $\fB$ is a Gale-Stewart game and an infinite DSL tree is equivalent to a winning strategy for $P_A$ (Lemma \ref{lem:PA_winning_DSL}). 
Similar to the analysis of exponential rates, there exists a mismatch between a winning strategy of $P_L$ and a ``pattern avoidance function'' required in the template for constructing learning algorithms in the probabilistic setting in \citet[Section 5.2]{universal_learning}: in the adversarial learning problem, the adversary does not provide a pseudo-cube as $P_A$ does.
Thus, it is tricky to construct pattern avoidance functions which successfully rule out label patterns from their mappings of the feature patterns for any $\cH$-consistent sequence in a finite number of steps, and keeps unchanged after the success. 
We provide our definition of pattern avoidance functions in \eqref{eq:pattern-avoid-func}. Informally, for a consistent sequence, given the current position in $\fB$ as well as the current feature pattern and label pattern from the sequence, we traverse all pseudo-cubes contained in the projection of $\cH$ on the feature pattern, where by a feature (label) pattern we refer to a consecutive subsequence of the feature (label) sequence ending at the current point. If the value function defined on positions in $\fB$ deceases after adding the feature pattern, the current pseudo-cube, and label pattern into the position, we accept this new position, proceed one round in $\fB$, and stop the traverse. If the value function never decreases after the traverse, we still use the original position and does not change the round in $\fB$. Then, the feature pattern and label pattern are updated accordingly. 
Now, we define the current pattern avoidance function as the mapping from the current position and feature pattern to the set of all label patterns for which the position will be updated after traversing all the pseudo-cubes in the projection of $\cH$ on the feature pattern. 
Then, with the similar idea of showing contradiction with nonexistence of infinite value-decreasing sequences, we can prove the desired pattern avoidance property of the set functions we defined.
The next step is to show the universal measurability.
Unlike the pattern avoidance function in \citet{universal_learning,kalavasis2022multiclass}, our pattern avoidance function maps to a set of patterns. 
This increases the difficulty in proving the universal measurability of the pattern avoidance functions we define since then we need to pay attention to the topology on power sets. 
One key point to notice is that since pseudo-cubes are finite by definition, $\PC(\Y^\tau)$ is countable as the set of finite subsets of a countable set is also countable. We can use this point to show that certain sets served as the building blocks in the pull-back set of the pattern avoidance functions are analytic. 
We also note that the universal measurability of the winning strategy for $P_L$ or some value-decreasing function defined in $\fB$ does not obviously imply the universal measurability of the pattern avoidance functions since there are repetitions when feeding the data sequence as inputs to the game $\fB$; both \citet{universal_learning} and \citet{kalavasis2022multiclass} does not provide a proof for this step \citep[Remark 5.4]{universal_learning}. 
Thus, we provide an explicit and complete derivation 
of the universal measurability of the pattern avoidance functions that covers this step in our even more complicated setting, which also turns out to be very tricky. 

There are still several big technical obstacles in plugging the pattern avoidance functions and a learning algorithm $A$ with a uniform rate guarantee for hypothesis classes of finite DS dimensions into the template algorithm in \citet[Section 5.2]{universal_learning}. 
We first upper bound the DS dimension of the hypothesis class \eqref{eq:H_per} constructed through a pattern avoidance function with its length (i.e., the length of the pattern the function seeks to avoid) in Lemma \ref{lem:DS_dim_subclass}, where informally, \eqref{eq:H_per} consists of projections of hypotheses in $\cH$ on a given feature sequence such that any ordered subsequence of the projection is avoided by the pattern avoidance function. 
Then, we prove in Lemma \ref{lem:realizable_subclass} that informally, the uniform distribution over an independent and identically distributed (i.i.d.) data sequence that defines \eqref{eq:H_per} is realizable with the class \eqref{eq:H_per} almost surely if the pattern avoidance function that defines \eqref{eq:H_per} avoids an i.i.d. data sequence with probability 1. 
Then, we would like to apply the the uniform learning algorithm $A$ to \eqref{eq:H_per} with that uniform distribution as the realizable distribution. 
However, for the usage of $A$ in the template specified in Theorem \ref{thm:universal_rate_DSL}, informally, given a sequence $(X_1,Y_1,\dots,X_n,Y_n)$ and a feature $X_{n+1}$, the training data for $A$ are drawn from the uniform distribution only over $\{(1,Y_1),\dots,(n,Y_n)\}$ as we do not know $Y_{n+1}$, but the test data is always fixed, i.e., $n+1$. 
In Lemma \ref{lem:partial_sample}, we upper bound the error rate in this setting with twice the error rate in the standard setting (i.e., both the training data and the test data are drawn i.i.d. from the uniform distribution only over $\{(1,Y_1),\dots,(n+1,Y_{n+1})\}$ with $Y_{n+1}$ being the label of $X_{n+1}$). 
Similar to Theorem \ref{thm:universal_rate_DSL}, Lemma \ref{lem:partial_sample} is interesting in itself for dealing with partial training distributions.

\subsection{Arbitrarily slow rates}
In this subsection, we sketch the proof of the following theorem.
\begin{theorem} \label{thm:arb_slow}
If $\cH$ has an infinite DSL tree, then $\cH$ is learnable but requires arbitrary slow rates. 
\end{theorem}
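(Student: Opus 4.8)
\textbf{The plan} is to follow the lower-bound strategy from the binary case \citep[Theorem 5.?]{universal_learning} and its multiclass analogue in \citet{kalavasis2022multiclass}, adapting the probabilistic construction so that it runs off an infinite DSL tree (whose branching is governed by pseudo-cubes of unbounded size) rather than a VCL or NL tree (whose branching is a Boolean cube of fixed arity). The two things to establish are: (i) $\cH$ is learnable, i.e., there is a universally consistent algorithm — but this is free, since $\X$ is Polish and $\Y$ is countable, so \citet{10.1214/20-AOS2029} already supplies such an algorithm (this is exactly why the theorem says ``learnable but requires arbitrarily slow rates''); and (ii) for every rate $R(n)\to 0$ and every learning algorithm $(\wh h_n)$, there is a realizable distribution $P$ and constants $C,c>0$ with $\bE[\er_P(\wh h_n)]\ge C R(cn)$ for infinitely many $n$. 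Only (ii) requires work.

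\textbf{The core construction.} Fix an infinite DSL tree for $\cH$; write its nodes by finite strings that record, at each level $k$, which element of the pseudo-cube $C$ at that node was taken along each edge. Given the target rate $R$, choose a rapidly growing sequence of levels/probabilities so that a random root-to-leaf path through the tree, sampled one level at a time, concentrates its mass on deeper and deeper portions of the tree as $n$ grows (the standard ``probability hierarchy'' trick: put mass $p_k$ on moving from level $k$ to $k+1$, with $p_k\downarrow 0$ chosen as a function of $R$). The distribution $P$ is then supported on the points $\bx_v\in\X^{k+1}$ labelling the nodes $v$ on the random path, with the label of each point dictated by the pseudo-cube element chosen on the outgoing edge; realizability of $P$ is exactly the branch-consistency requirement in the definition of a DSL tree ($h|_{\bx_t}=\by_t$ along the path). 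One then argues, via a Bayes/no-free-lunch argument (as in \citet{universal_learning}), that any learner, having seen $n$ samples, has effectively explored only a bounded-depth prefix of the tree, so at a random test point drawn from a deeper level the learner must guess among the $\ge 2$ options of the relevant pseudo-cube coordinate and errs with constant probability; averaging over the random path converts this into the $\bE[\er_P(\wh h_n)]\ge CR(cn)$ lower bound for infinitely many $n$. A measurability check — the construction must yield a \emph{measurable} family of distributions so the adversary argument is legitimate — is handled exactly as in \citet[Section 5]{universal_learning}, using that $\cH$ is measurable and that $\PC(\Y^\tau)$ is countable.

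\textbf{The main obstacle} is that pseudo-cubes have unbounded and non-uniform size, so, unlike a VCL/NL tree where every node branches into a $d$-cube and the combinatorics are homogeneous, here the amount of ``confusion'' available at a node depends on the pseudo-cube sitting there, and in principle a pseudo-cube in dimension $d$ could be as small as $\{g,g'\}$ differing in every coordinate. The fix is to use the defining property of a pseudo-cube: for \emph{every} $h\in C$ and \emph{every} coordinate $i\in[d]$ there is an $i$-neighbor in $C$. This guarantees that, conditioned on the learner having committed to a particular branch down to level $k$, flipping the single coordinate corresponding to the newly revealed test point keeps us inside $C$ and inside $\cH$ along the path — so the ``two indistinguishable continuations'' needed for the no-free-lunch step always exist, coordinate by coordinate, regardless of the global size of $C$. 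I would therefore phrase the adversary as, at each level, revealing all but one coordinate of $\bx_v$, and charging error on the last; the pseudo-cube neighbor property is precisely what makes this a valid (measurable, realizable) hard instance. The remaining work is bookkeeping: choosing $p_k$ in terms of $R$ so the ``for infinitely many $n$'' quantifier comes out right, and verifying the measurable-selection details, both of which are routine adaptations of \citet{universal_learning}.
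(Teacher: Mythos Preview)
Your overall plan matches the paper's: sample a random path through the infinite DSL tree, put mass $p_k/k$ on each of the $k$ labelled points at level $k$ with $(p_k)$ chosen via \citet[Lemma~5.12]{universal_learning}, show each such $P_{\bi}$ is realizable by branch-consistency, and run the Bayesian lower-bound argument at a level the training data has not fully explored. Learnability via \citet{10.1214/20-AOS2029} is indeed free.

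The gap is in the key step. ``Two indistinguishable continuations exist'' is not what the Bayesian lower bound needs; it needs that, conditioned on $I_1,\dots,I_{k-1}$ and on whichever coordinates of level $k$ the training samples happened to reveal, no single label at the test coordinate $j$ has conditional probability exceeding $1/2$. Mere existence of a $j$-neighbor of the realized branch does not give this: you have not specified the distribution over children, and with a non-uniform choice the bound can fail. The paper takes the \emph{uniform} distribution over the pseudo-cube at each node and proves two lemmas you have not isolated: (i) in any pseudo-cube $C$, for every coordinate $j$ and label $y$, at most half of $C$ takes value $y$ at $j$ --- this is exactly your $i$-neighbor observation upgraded to an injectivity argument (distinct $h_1,h_2$ with $h_1(j)=h_2(j)$ differ on some coordinate $\neq j$, and that difference survives in any choice of $j$-neighbors, so $|\{h\in C:h(j)=y\}|\le |C\setminus\{h:h(j)=y\}|$); and (ii) conditioning a pseudo-cube on the values at some coordinates and projecting to the remaining coordinates yields again a pseudo-cube, so (i) applies after conditioning on the training samples that landed at level $k$. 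These two lemmas are what the paper flags as the new technical content relative to VCL/NL trees; your $i$-neighbor remark is the seed of both, but you should state the uniform distribution explicitly and derive the $\le 1/2$ bound rather than stop at existence. Two side remarks: your ``reveal all but one coordinate'' phrasing does not match the actual construction (the number of revealed coordinates at level $k$ is random, governed by how many i.i.d.\ samples land there, and the analysis must handle any such subset, which is why lemma (ii) is needed); and a two-element set $\{g,g'\}$ differing in \emph{every} coordinate is \emph{not} a pseudo-cube for $d>1$, since an $i$-neighbor must agree on all coordinates except $i$.
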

The complete proof is provided in Appendix \ref{sec:arb_slow}. 
The proof follows the framework for the construction of distributions in \citet[Theorem 5.11]{universal_learning}. Since for DSL trees, the numbers of the children of the nodes are not fixed in each level, to even formulate a uniform distribution over the paths in the infinite DSL tree is non-trivial. 
The key for the proof is to show \eqref{eq:neq-prop}, which holds trivially for both VCL trees \citep{universal_learning} and NL trees \citep{kalavasis2022multiclass} since the labels of the edges connecting a node to its children consist a copy of the Boolean cube. 
However, such result for pseudo-cubes is novel; it actually implies an elegant proof for the $\Omega(\frac{\dim(\cH)}{n})$ lower bound of the uniform rate in \eqref{eq:uniform-rate}. 
There are two key steps to show \eqref{eq:neq-prop}. We first prove that for any pseudo-cube, any position, and any label, the proportion of hypotheses in the pseudo-cube  that maps that position to that label is at most half. 
Then, we prove that when restricting some arbitrary positions to some arbitrary pattern, a pseudo-cube, projected to the unrestricted positions, is still a pseudo-cube. 
Both steps follow from careful examination of the definition of pseudo-cubes.

Now, Theorem \ref{thm:tree-rates} directly follows from Theorem \ref{thm:tree-exp-rate}, Theorem \ref{thm:near-linear-rates}, and Theorem \ref{thm:arb_slow}.

\subsection{Proof sketch of Theorem \ref{thm:NL-counter}}
The complete proof of Theorem \ref{thm:NL-counter} is provided in Appendix \ref{sec:proof-NL-counter}.
We use the disjoint pseudo-cubes of all dimensions on disjoint finite label spaces constructed in the proof of \citet[Theorem 2]{brukhim2022characterization} as our starting point.
We first build an infinite complete tree using these pseudo-cubes as blocks and take the disjoint unions to construct a countable label space, a countable feature space, and a hypothesis class. Then, we add to the label space a unique new element $\star$ used for extending the domain of a hypothesis to the whole feature space. Specifically, in a top-down manner of the tree constructed, we extend the definition of a hypothesis which corresponds to an edge in the tree to be consistent with the the hypotheses in the path eliminating from the root to its edge. Then, we define its value to be $\star$ on any other features. 
The fact that this class has an infinite DSL tree directly follow from the tree we constructed and the way we extend the definitions of hypotheses. 
Then, we prove that the class has a NL dimension 1 by considering the projection of the class on two arbitrary features, which requires more sophisticated discussion compared with the proof of \citet[Theorem 2]{brukhim2022characterization}.

\subsection{Proof sketch of Theorem \ref{thm:NL-GL}}
The complete proof of Theorem \ref{thm:NL-GL} is provided in Appendix \ref{sec:thmNLGL}. 
The fact that a GL tree can be obtained from a NL tree is obvious. The key is to construct an infinite NL tree from an infinite GL tree. 
For each node in the infinite GL tree except the root, we can associate it with a hypothesis in $\cH$ that witnesses the requirement of the GL tree. 
Then, the rough idea is to construct for ``each'' node a distinct new sequence of labels for which each edge between this node and its children corresponds to a unique concept in the Boolean cube formed by this new sequence and the sequence provided by the GL tree, and the associated hypothesis of each descendant of this node along the path starting with this edge is consistent with the concept of the edge on this node. 
Here, by ``each'' we do not mean to construct for each element in the infinite GL tree, but we actually mean to select a node in the infinite GL tree for each position in the infinite NL tree to build. 

We first deal with the consistency. In an infinite GL tree, for a node and an edge between the node and one of its children, the associated hypotheses of its descendants along the path starting with the chosen edge can predict differently on the chosen node, and the prediction can be used to color each descendant of the chosen node starting with the chosen edge. Then, we obtain an infinite colored subtree. 
Since $|\Y|=K<\infty$, the total number of colorings is finite. Thus, by the Milliken's tree theorem \citep{MILLIKEN1979215}, there is a strongly embedded subtree whose edges have the same color. 
But we still need to prune this subtree so that it has the same structure as the original subtree, after which we replace the original subtree with the monochromatic subtree. Now, the prediction made by the associated hypothesis of each descendant along the path starting with the chosen edge is the same on the chosen node. 
This step is formally presented in Lemma \ref{lem:GL}. 

However, we still face the fact that the predictions specified for each edge of a given node in the previous step do not necessary make a copy of a Boolean cube.
For this problem, we observe that all the predictions make a hypothesis class with its Graph dimension greater than $d$ for $d$ denoting the length of the feature sequence of the given node. 
By \eqref{eq:dim-bd}, this class has a Natarajan dimension greater than $d/(5\log_2(K))$, which implies a Boolean cube of dimension greater than $d/(5\log_2(K))$. Thus, by skipping $\lceil5\log_2(K)\rceil$ levels in choosing nodes from the infinite GL tree in a top-down manner, we are able to ensure the existence of a copy of a Boolean cube of required dimension for each level of the NL tree constructed by some proper pruning. The proof is formalized by induction.

\acks{Shay Moran is a Robert J.\ Shillman Fellow; he acknowledges support by ISF grant 1225/20, by BSF grant 2018385, by an Azrieli Faculty Fellowship, by Israel PBC-VATAT, by the Technion Center for Machine Learning and Intelligent Systems (MLIS), and by the the European Union (ERC, GENERALIZATION, 101039692). Views and opinions expressed are however those of the author(s) only and do not necessarily reflect those of the European Union or the European Research Council Executive Agency. Neither the European Union nor the granting authority can be held responsible for them.}

\bibliography{learning}

\appendix

\section{Preliminaries}
In this section, we describe the notation used in this paper and present the definitions of NL trees and GL trees in the general multiclass setting.
\subsection{Notation} \label{sec:notation}
We use the following notation throughout the paper. $\N$ denotes the set of positive integers. $\N_0$ denotes the set of non-negative integers. For any $n\in\N$, we define $[n]:=\{1,\dots,n\}$. 
For any $a,b\in\reals$, we define $a\land b:=\min\{a,b\}$ and $a\lor b:=\max\{a,b\}$.
For a set $A$, $|A|$ denotes its cardinality and $2^A$ denotes its power set. 
For any sets $X,Y$ and hypothesis class $F\subseteq Y^X$, let $\dim(F)$ denote the Daniely-Shalev-Shwartz (DS) dimension of $F$, $\dim_N(F)$ denote the Natarajan dimension of $F$, and $\dim_G$ denote the Graph dimension of $F$. 
For any $n\in\N$, any sequence $S=(x_1,\dots,x_n)\in X^n$, and any function $f:X\rightarrow Y$, we define the projection of $f$ to $S$ as $f|_S:=(f(x_1),\dots,f(x_n))\in Y^n$ and use $S(i)$ to denote the $i$-th element in $S$ (i.e., $S(i)=x_i$) for any $i\in[n]$. By convention, $f|_\emptyset=\emptyset$. Then, we define the projection of $F\subseteq Y^{X}$ to $S$ as $F|_S:=\{f|_S:f\in F\}\subseteq Y^n$. 
For any $x_1,\dots,x_n\in X$ and $y_1,\dots,y_n\in Y$,  we define 
$$
F_{x_1,y_1,\dots,x_n,y_n}:=\{f\in F:f(x_1)=y_1,\dots,f(x_n)=y_n\}.
$$

\subsection{NL trees and GL trees} \label{sec:add-def}
In this section, we define NL trees and GL trees for the general multiclass setting.
\begin{definition}[NL tree]
A Natarajan-Littlestone (NL) tree for $\cH\subseteq\Y^{\X}$ of depth $d\le\infty$ is the following collection
$$
\cup_{0\le n<d}\left\{(\bx_{\bu},\bs^{(0)}_{\bu},\bs^{(1)}_{\bu})\in\X^{n+1}\times\Y^{n+1}\times\Y^{n+1}:\bu\in\prod_{l=1}^n\{0,1\}^l\right\}
$$
such that for any $0\le n<d$ and $\bu=(u_1^0,(u_2^0,u_2^1),\dots,(u_n^0,\dots,u_n^{n-1}))\in\prod_{l=1}^{n}\{0,1\}^l$, the followings hold:
\begin{itemize}
\item $s_{\bu}^{(0)i}\neq s_{\bu}^{(1)i}$ for all $0\le i\le n$. 
\item If $n\ge 1$, then there exists some $h_{\bu}\in\cH$ such that $h_{\bu}(x_{\bu_{\le l}}^i)=s_{\bu_{\le l}}^{(0)i}$ if $u_{l+1}^i=0$ and $h_{\bu}(x_{\bu_{\le l}}^i)= s_{\bu_{\le l}}^{(1)i}$ otherwise for all $0\le i\le l$ and $0\le l<n$, where
\begin{align*}
\bu_{\le l}:=(u_1^0,(u_2^0,u_2^1),\dots,(u_l^{0},\dots,u_l^{l-1})),\ 
x_{\bu_{\le l}}:=(x_{\bu_{\le 1}}^0,\dots,x_{\bu_{\le 1}}^{l}).
\end{align*}
\end{itemize}
We call 
$$
\cup_{1\le n<d}\left\{h_{\bu}\in\cH:\bu\in\prod_{l=1}^n\{0,1\}^l\right\}
$$
the associated hypothesis set of the NL tree.
We say that $\cH$ has an infinite NL tree if it has a NL tree of depth $d=\infty$.
\end{definition}

\begin{definition}[GL tree]
A Graph-Littlestone (GL) tree for $\cH\subseteq\Y^{\X}$ of depth $d\le\infty$ is the following collection
$$
\cup_{0\le n<d}\left\{(\bx_{\bu},\bs_{\bu})\in\X^{n+1}\times\Y^{n+1}:\bu\in\prod_{l=1}^n\{0,1\}^l\right\}
$$
such that for any $0\le n<d$ and $\bu=(u_1^0,(u_2^0,u_2^1),\dots,(u_n^0,\dots,u_n^{n-1}))\in\prod_{l=1}^{n}\{0,1\}^l$, the following holds:
\begin{itemize}
\item If $n\ge 1$, then there exists some $h_{\bu}\in\cH$ such that $h_{\bu}(x_{\bu_{\le l}}^i)=s_{\bu_{\le l}}^{i}$ if $u_{l+1}^i=0$ and $h_{\bu}(x_{\bu_{\le l}}^i)\neq s_{\bu_{\le l}}^{i}$ otherwise for all $0\le i\le l$ and $0\le l<n$, where
\begin{align*}
\bu_{\le l}:=(u_1^0,(u_2^0,u_2^1),\dots,(u_l^{0},\dots,u_l^{l-1})),\ 
x_{\bu_{\le l}}:=(x_{\bu_{\le 1}}^0,\dots,x_{\bu_{\le 1}}^{l}).
\end{align*}
\end{itemize}
We call 
$$
\cup_{1\le n<d}\left\{h_{\bu}\in\cH:\bu\in\prod_{l=1}^n\{0,1\}^l\right\}
$$
the associated hypothesis set of the GL tree.
We say that $\cH$ has an infinite GL tree if it has a GL tree of depth $d=\infty$.
\end{definition}

\section{Exponential Rates} \label{sec:exp_rate_proof}
In this section, we prove Theorem \ref{thm:tree-exp-rate}. 
\subsection{Adversarial learning algorithm} \label{sec:adv_gam_exp}
We propose and analyze an adversarial learning algorithm in this section.
Define $\Yp:=\{(y,y')\in\Y^2:y\neq y'\}$ and $\wt \X:=\X\times \Yp$. For any $\xi\in\wt\X$, there exist $x\in\X$ and $y^0,\ y^1\in\Y$ such that $\xi=(x,y^0,y^1)$. Then, we let $\xi(1)$ denote $x$ and $\xi(i+2)$ denote $y^i$ for $i\in\{0,1\}$. 
Then, a Littlestone tree can be equivalently represented as the following collection
$$
\left\{\xi_{\bu}=(x_\bu,y_\bu^0,y_\bu^1)\in\wt\X:\bu\in\{0,1\}^k,0\le k< d\right\}\subseteq\wt \X
$$ 
such that for any $\bfeta\in\{0,1\}^d$ and $0\le n<d$, there exists a concept $h\in\cH$ such that
$h(\xi_{\bfeta_{\le k}}(1))=\xi_{\bfeta_{\le k}}(\eta_{k+1}+2)$.

For the multiclass online learning problem, we can define the following online learning game $\wb \B$ played in rounds between an adversary $\wb P_a$ and the learner $\wb P_l$. In each round $t\ge1$:
\begin{itemize}
\item The adversary $\wb P_a$ chooses a point $x_t\in\X$.
\item The learner $\wb P_l$ makes a prediction $\wh y_t\in\Y$.
\item The adversary $\wb P_a$ reveals the true label $y_t=h(x_t)$ for some concept $h\in\cH$ such that $h$ is consistent with the previous points: $y_1=h(x_1),\ \dots,\ y_{t-1}=h(x_{t-1})$. 
\end{itemize}
We would like to prove the following theorem.
\begin{theorem} \label{thm:exp_game}
Let $\X$ and $\Y$ be Polish spaces. For any hypothesis class $\cH\subseteq\Y^{\X}$, we have the following dichotomy.
\begin{itemize}
\item If $\cH$ has an infinite Littlestone tree, then there is a strategy for the adversary $\wb P_a$ in $\wb \B$ such that $\wh y_t\neq y_t$ in each round $t\ge1$ against any learner $\wb P_l$.
\item If $\cH$ does not have an infinite Littlestone tree, then there is a strategy for the learner $\wb P_l$ in $\wb \B$ that only makes finitely many mistakes against any adversary $\wb P_a$.
\end{itemize}
\end{theorem}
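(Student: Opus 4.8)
The plan is to reduce the online game $\wb\B$ to the abstract, turn-based game $\B$ described in the technical overview, and then transfer a winning strategy across. First I would establish the first bullet: given an infinite Littlestone tree $\{\xi_\bu : \bu\in\{0,1\}^k, 0\le k\}$ for $\cH$, the adversary $\wb P_a$ walks down the tree. In round $t$ it presents $x_t := \xi_{\bfeta_{\le t-1}}(1)$, where $\bfeta_{\le t-1}\in\{0,1\}^{t-1}$ records the path taken so far. After the learner outputs $\wh y_t$, the adversary picks the branch $\eta_t\in\{0,1\}$ so that $\xi_{\bfeta_{\le t-1}}(\eta_t+2)\neq \wh y_t$ — this is possible because the two edge labels $y^0_\bu, y^1_\bu$ are distinct, so at least one of them differs from $\wh y_t$ — and reveals $y_t := \xi_{\bfeta_{\le t-1}}(\eta_t+2)$. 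The defining property of the Littlestone tree guarantees that for every finite prefix there is a concept $h\in\cH$ consistent with all revealed labels $y_1,\dots,y_t$, so this is a legal adversary strategy, and by construction $\wh y_t\neq y_t$ in every round.

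For the second bullet I would invoke the machinery sketched in Section~\ref{sec:adv_gam_exp}. By Lemma~\ref{lem:PA_winning}, since $\cH$ has no infinite Littlestone tree, player $P_A$ has no winning strategy in the Gale–Stewart game $\B$; hence, by \citet[Theorem B.1]{universal_learning}, $P_L$ has a universally measurable winning strategy in $\B$, equivalently (by the well-foundedness / ordinal-rank argument of \citet{karel2017introduction}) there is no infinite value-decreasing sequence of positions and the value function $\val$ on positions of $\wb\B$ is well-defined and ordinal-valued. The key structural input is \citet[Proposition B.8]{universal_learning}: from any position, for whatever point $x\in\X$ the adversary $\wb P_a$ plays, there is \emph{at most one} label in $\Y$ whose revelation does not strictly decrease $\val$. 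I would then define the learner's strategy via the function in \eqref{eq:gt-exp}: $\wb P_l$ predicts that unique non-value-decreasing label (and an arbitrary fixed label if none exists). Universal measurability of this prediction function follows from $\X$ Polish, $\Y$ countable, and $\cH$ measurable, by the argument indicated in the overview.

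Finally I would argue the mistake bound: whenever $\wb P_l$ errs in round $t$, the label revealed is not the predicted (unique non-decreasing) label, so by Proposition~B.8 the value strictly decreases at that round; whenever $\wb P_l$ is correct the value does not increase (positions only shrink as constraints accumulate). An infinite number of mistakes would therefore produce an infinite strictly value-decreasing sequence of positions, contradicting the well-foundedness of the ordinal-valued $\val$ established from the absence of an infinite Littlestone tree. Hence $\wb P_l$ makes only finitely many mistakes against any adversary. The main obstacle I anticipate is not the game-theoretic skeleton — which parallels \citet{universal_learning} — but precisely the point flagged in the overview: in $\wb\B$ the adversary reveals only the single true label $y_t$, not the pair $(y^0_t,y^1_t)$ that $P_A$ supplies in $\B$, so the reduction is not a literal simulation; bridging this gap cleanly is what forces the detour through the value function on $\wb\B$'s positions and the "at most one non-decreasing label" lemma, and the measurability bookkeeping for \eqref{eq:gt-exp} over a countable (possibly infinite) label space is the delicate part.
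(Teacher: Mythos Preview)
Your proposal is correct and follows essentially the same approach as the paper: walk down the Littlestone tree for the adversary, and for the learner extend the ordinal value function from $\B$ to positions of $\wb\B$, predict the (at most one) label that does not strictly decrease the value via Proposition~\ref{prop:val_decrease}, and conclude by well-foundedness. The only minor difference is bookkeeping: the paper's learner updates its internal position \emph{only on mistake rounds} (so the prediction function changes only when a mistake occurs---a property needed downstream for Theorem~\ref{thm:tree-exp-rate}), whereas your sketch tracks the full history and relies on the monotonicity ``value does not increase on correct rounds''; that monotonicity does hold (active-tree sets shrink under added constraints), so both variants yield the finite-mistake conclusion.
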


Consider the following game $\B$ between two players $P_A$ and $P_L$. In each round $\tau\in\N$:
\begin{itemize}
\item Player $P_A$ chooses a three-tuple $\xi_\tau=(x_\tau,y_\tau^{0},y_\tau^{1})\in\wt \X$ and shows it to Player $P_L$.
\item Player $P_L$ chooses a point $\eta_\tau\in\{0,1\}$.
\end{itemize}
We say that player $P_L$ wins the game in round $\tau\in\N$ if $\cH_{\xi_1(1),\xi_1(\eta_1+2),\dots,\xi_\tau(1),\xi_\tau(\eta_\tau+2)}=\emptyset$, where $\cH_{x_1,y_1,\dots,x_t,y_t}:=\{h\in\cH:h(x_1)=y_1,\dots,h(x_t)=y_t\}$ for any $x_1,\dots,x_t\in\X$ and $y_1,\dots,y_t\in\Y$.
We say that player $P_A$ wins the game if the game
continues indefinitely. 
We say a strategy for $P_A$ is winning if playing that strategy, $P_A$ wins the
game no matter what strategy $P_L$ plays. We define a winning strategy for $P_L$ analogously.  
According to the rule of $\B$, the set of winning sequence of $P_L$ is
\begin{align*}
\W:=\left\{(\bxi,\bfeta)\in\left(\wt \X\times\{0,1\}\right)^\infty:\cH_{\xi_1(1),\xi_1(\eta_1+2),\dots,\xi_\tau(1),\xi_\tau(\eta_\tau+2)}=\emptyset\text{ for some } \tau\in\N\right\}.
\end{align*}
Since $\W$ is finitely decidable (i.e., for any $(x_1,y_1,x_2,y_2,\dots)\in\W$, there exists $n\in\N$ such that $(x_1,y_1,\dots,x_n,y_n,x_{n+1}',y_{n+1}',\dots)\in\W$ for all $(x_{n+1}',y_{n+1}',x_{n+2}',y_{n+2}',\dots)\in(\X\times\Y)^\infty$), $\B$ is a Gale-Stewart game; then, either $P_A$ or $P_L$ has a winning strategy \citep{MR0054922}. 
We refer readers to \citet[Appendix A.1]{universal_learning} for detailed descriptions of the notion we use above and Gale-Stewart games. 

Then, we prove the following lemma that relates a winning strategy of $P_A$ to an infinite Littlestone tree of $\cH$.
\begin{lemma} \label{lem:PA_winning}
Player $P_A$ has a winning strategy in the game $\B$ if and only if $\cH$ has an infinite Littlestone tree.
\end{lemma}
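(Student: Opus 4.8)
\textbf{Proof plan for Lemma \ref{lem:PA_winning}.}

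The plan is to establish the equivalence by a direct translation between winning strategies and trees in both directions, exploiting the combinatorial representation of Littlestone trees in \eqref{eq:littlestone} together with the equivalent reformulation in terms of $\wt\X$. For the forward direction ($P_A$ has a winning strategy $\Rightarrow$ infinite Littlestone tree), I would fix a winning strategy $\sigma$ for $P_A$ and build the tree by recursion on the levels indexed by $\bu\in\{0,1\}^k$. The root three-tuple $\xi_\emptyset$ is $P_A$'s first move $\sigma(\emptyset)$. Given that we have already defined $\xi_{\bu}$ for some $\bu\in\{0,1\}^k$ as the move $P_A$ makes after $P_L$ has played $\eta_1,\dots,\eta_k$ equal to the coordinates of $\bu$, we define $\xi_{\bu 0}$ and $\xi_{\bu 1}$ as $P_A$'s responses in the game history where $P_L$'s next choice $\eta_{k+1}$ is $0$ or $1$, respectively. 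Because $\sigma$ is winning, the game never terminates, so for every infinite branch $\bfeta\in\{0,1\}^\infty$ and every finite prefix length $n$ we have $\cH_{\xi_{\bfeta_{\le 0}}(1),\xi_{\bfeta_{\le 0}}(\eta_1+2),\dots,\xi_{\bfeta_{\le n}}(1),\xi_{\bfeta_{\le n}}(\eta_{n+1}+2)}\neq\emptyset$; picking any $h$ in this nonempty set witnesses the consistency condition required by the Littlestone tree. The two edge labels at node $\xi_{\bu}$ are $\xi_{\bu}(2)=y_{\bu}^0$ and $\xi_{\bu}(3)=y_{\bu}^1$, which differ by definition of $\wt\X$, so the tree is a legitimate Littlestone tree of depth $\infty$.

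For the reverse direction (infinite Littlestone tree $\Rightarrow$ winning strategy for $P_A$), I would do the mirror construction: given the tree $\{\xi_{\bu}:\bu\in\{0,1\}^k, k\in\N_0\}$, define $P_A$'s strategy so that after $P_L$ has played $\eta_1,\dots,\eta_{\tau-1}$, player $P_A$ plays $\xi_{(\eta_1,\dots,\eta_{\tau-1})}$ in round $\tau$. I then need to verify that this strategy never loses, i.e., that after every finite number of rounds $\tau$ the set $\cH_{\xi_1(1),\xi_1(\eta_1+2),\dots,\xi_\tau(1),\xi_\tau(\eta_\tau+2)}$ is still nonempty. This is exactly the defining property of the infinite Littlestone tree applied to the branch $\bfeta$ extending $(\eta_1,\dots,\eta_\tau)$ (say by all zeros): there exists $h\in\cH$ consistent with $h(\xi_{\bfeta_{\le k}}(1))=\xi_{\bfeta_{\le k}}(\eta_{k+1}+2)$ for all $k\le \tau-1$, and this $h$ lies in the set above, so $P_L$ has not won. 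Since this holds for every $\tau$, $P_A$ wins.

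The main subtlety — not a deep obstacle but the point requiring care — is bookkeeping the correspondence between a node's position in the tree and the game history: a node at level $k$ corresponds to a length-$k$ sequence of $P_L$'s past binary choices, and one must check that $P_A$'s strategy can be made to depend only on that history (which it can, since in $\B$ player $P_A$ sees all of $P_L$'s previous moves $\eta_1,\dots,\eta_{\tau-1}$ before choosing $\xi_\tau$). A second point to handle cleanly is that a "winning strategy for $P_A$" as defined requires $P_A$ to win against \emph{every} strategy of $P_L$, including non-deterministic play; the recursion above indexes $P_A$'s responses over all $2^k$ possible prefixes $\bu$, so it covers all of $P_L$'s play automatically. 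Finally I would remark that since $\B$ is a Gale--Stewart game (as already noted in the excerpt), this lemma, combined with \citet[Theorem B.1]{universal_learning}, yields that $P_L$ has a universally measurable winning strategy precisely when $\cH$ has no infinite Littlestone tree.
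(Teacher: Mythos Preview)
Your proposal is correct and follows essentially the same approach as the paper: both directions use the natural bijection in which the tree node at position $\bu\in\{0,1\}^k$ is exactly $P_A$'s move after $P_L$ has played the history $\bu$, with the nonemptiness of $\cH_{\xi_1(1),\xi_1(\eta_1+2),\dots,\xi_\tau(1),\xi_\tau(\eta_\tau+2)}$ matching the consistency requirement of the Littlestone tree. The paper's own proof is just a terse version of your argument (with the two directions presented in the opposite order), and your additional remarks on bookkeeping and the Gale--Stewart consequence are accurate commentary but not needed for the lemma itself.
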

\begin{proof}
Suppose that $\cH$ has an infinite Littlestone tree represented by 
$$
\left\{(x_{\bu},y^0_\bu,y^1_\bu):0\le k<\infty,\bu\in\{0,1\}^k\right\}.
$$ 
Define a strategy for $P_A$ by $\xi_{\tau}(\eta_1,\dots,\eta_{\tau-1}):=(x_{\eta_1,\dots,\eta_{\tau-1}},y^0_{\eta_1,\dots,\eta_{\tau-1}},y^1_{\eta_1,\dots,\eta_{\tau-1}})$ for any $\tau\in\N$. 
By the definition of Littlestone tree, we have that $\cH_{\xi_{1}(1),\xi_1(\eta_1+2),\dots,\xi_{\tau}(1),\xi_{\tau}(\eta_{\tau}+2)}\neq \emptyset$ for any $\tau\in\N$. Thus, $P_A$ has a winning strategy.

Suppose that $P_A$ has a winning strategy $\xi_\tau(\eta_1,\dots,\eta_{\tau-1})$ for any $\eta_1,\dots,\eta_{\tau-1}\in\{0,1\}$ and $1\le \tau<\infty$. Define an infinite binary tree represented by $\left\{(x_{\bu},y_\bu^0,y_\bu^1):\bu\in\{0,1\}^k,0\le k<\infty\right\}$ with
\begin{align*}
(x_{u_1,\dots,u_{k}},y^0_{u_1,\dots,u_k},y^1_{u_1,\dots,u_k}):=\xi_{k+1}(u_1,\dots,u_k).
\end{align*}
By the definition of winning strategy of $P_A$ in $\B$, the tree defined above is an infinite Littlestone tree of $\cH$.
\end{proof}

For any $n\in\N_0$, define $\ms P_n:=\left(\wt \X\times\{0,1\}\right)^n$ to be the set of positions of length $n$ in the game $\B$, where a position of a game is a finite sequence of plays made by the two players alternatively from the start to some round and $\ms P_0=\emptyset$ by convention. 
A position is called active if $P_L$ has not won yet after this position. Then, the set of active positions of length $n$ in the game $\B$ can be written as
$$
\ms A_n:=\cup_{\bw\in (\wt\X\times\{0,1\})^\infty}\{\bv\in\ms P_n:(\bv,\bw)\in\W^c\}.
$$ 
Then, we define $\ms P:=\cup_{0\le n<\infty}\ms P_n$ to be the set of all positions and $\ms A:=\cup_{0\le n<\infty}\ms A_n$ to be the set of all active positions in the game $\B$. 

Analogously, for any $n\in\N_0$, we define $\wmP_n:=\left(\X\times\Y\right)^n$ to be the set of all positions of length $n$ in the game $\wb\B$.
For notational convenience, we also define $\ms P_\infty:=\left(\wt \X\times\{0,1\}\right)^\infty$ and
$\wmP_\infty:=\left(\X\times\Y\right)^\infty$. 

As in \citet{universal_learning}, we need to describe for how many rounds the game can be kept active starting from an arbitrary position. The following definitions of decision trees and active decision trees are the direct restriction of \citet[Definition B.4]{universal_learning} for $\ms P$ in our setting. 
\begin{definition}[\citealt{universal_learning}, Definition B.4]
Given a position $\bv\in\ms P_k$ of length $k\in\N_0$:
\begin{itemize}
\item A decision tree of depth $n$ with starting position $\bv$ is a collection of points
$$
\bt=\left\{\xi_{\bfeta}\in\wt\X:\bfeta\in \{0,1\}^t,0\le t<n\right\}.
$$
By convention, we call $\bt=\emptyset$ a decision tree of depth 0.
\item $\bt$ is called active if $(\bv,\xi_{\emptyset},\eta_{k+1},\xi_{\eta_{k+1}},\eta_{k+2},\dots,\xi_{\eta_{k+1},\dots,\eta_{k+n-1}},\eta_{k+n})\in\ms A_{k+n}$ for all choices of $(\eta_{k+1},\dots,\eta_{k+n})\in\{0,1\}^{n}$. 
\item We denote by $\ms T_{\bv}$ the set of all decision trees with starting position $\bv$ (and any depth $n\in\N_0$), and by $\ms T_{\bv}^{\ms A}\subseteq \ms T_{\bv}$ the set of all active trees.
\end{itemize}
\end{definition}
Note that $\ms T_{\bv}=\ms T_{\bv'}$ for any $\bv,\bv'\in\ms P$ by the above definition. 

We use $\ORD$ to denote the set of of all ordinals. We use $-1$ to denote an element that is smaller than every ordinal and $\Omega$ to denote an element that is are larger than every ordinal. Define $\ORD^*:=\ORD\cup\{\Omega,-1\}$. We refer readers to \citet[Appendix A]{universal_learning} for brief introductions about the concepts of ordinals, well-founded relations, ranks, Polish spaces, universally measurability, analytic sets, etc.

For any $\bv\in\ms A$, we define a relation $\prec_{\bv}$ on $\ms T_{\bv}^{\ms A}$. For $\bt,\bt'\in\ms T_{\bv}^{\ms A}$, we say that $\bt'\prec_{\bv}\bt$ if and only if the tree $\bt$ is obtained from $\bt'$ by removing its leaves. 
Let $\rho_{\prec_{\bv}}:\ms T_{\bv}^{\ms A}\rightarrow\ORD$ denote the rank function of the relation $\prec_{\bv}$. 
Then, we define the following game value of on $\ms P$ as in \citet{universal_learning}. 
\begin{definition}[\citealt{universal_learning}, Definition B.5] \label{def:val-binary}
The \emph{game value} $\val:\ms P\rightarrow\ORD^*$ is defined as follows.
\begin{itemize}
\item $\val(\bv)=-1$ if $\bv\notin\ms A$.
\item $\val(\bv)=\Omega$ if $\bv\in\ms A$ and $\prec_{\bv}$ is not well-founded.
\item $\val(\bv)=\rho_{\prec_{\bv}}(\emptyset)$ if $\bv\in\ms A$ and $\prec_{\bv}$ is well-founded.
\end{itemize}
\end{definition}
According to Lemma \ref{lem:PA_winning} and Definition \ref{def:val-binary}, we have the following Lemma about $\val(\emptyset)$.
\begin{lemma} \label{lem:val_emptyset}
We have $\val(\emptyset)>-1$.
If $\cH$ does not have an infinite Littlestone tree, then $\val(\emptyset)<\Omega$.
\end{lemma}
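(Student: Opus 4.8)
The plan is to unwind Definition~\ref{def:val-binary} at the root position $\bv = \emptyset$, which is always active since the empty position precedes any winning position of $P_L$. So $\emptyset \in \ms A$, and hence by definition $\val(\emptyset) \in \{\Omega\} \cup \ORD$, i.e. $\val(\emptyset) > -1$. The only thing left is to show that, under the hypothesis that $\cH$ has no infinite Littlestone tree, the relation $\prec_\emptyset$ on $\ms T_\emptyset^{\ms A}$ is well-founded, which forces $\val(\emptyset) = \rho_{\prec_\emptyset}(\emptyset) \in \ORD$, i.e. $\val(\emptyset) < \Omega$.

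To prove well-foundedness I would argue by contraposition: suppose $\prec_\emptyset$ is \emph{not} well-founded, so there is an infinite descending chain $\bt_0 \succ_\emptyset \bt_1 \succ_\emptyset \bt_2 \succ_\emptyset \cdots$ of active decision trees with starting position $\emptyset$. Since $\bt_{k+1} \succ_\emptyset \bt_k$ means $\bt_k$ is obtained from $\bt_{k+1}$ by deleting its leaves, the depths of the $\bt_k$ are strictly increasing, and the trees are nested in the natural sense: the restriction of $\bt_{k+1}$ to its first $\mathrm{depth}(\bt_k)$ levels equals $\bt_k$. Taking the union $\bt_\infty := \bigcup_k \bt_k$ gives a collection $\{\xi_\bfeta \in \wt\X : \bfeta \in \{0,1\}^t, 0 \le t < \infty\}$. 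The activeness of each $\bt_k$ says precisely that every length-$k$ branch through it stays in $\ms A$, i.e. $P_L$ has not won along that branch; unwinding the definition of $\ms A_n$ and of $\W^c$, this means $\cH_{\xi_\emptyset(1),\xi_\emptyset(\eta_1+2),\dots} \ne \emptyset$ for every finite prefix. Hence $\bt_\infty$ is an infinite Littlestone tree for $\cH$ in the sense of \eqref{eq:littlestone}, contradicting the hypothesis. This also lines up with Lemma~\ref{lem:PA_winning}: an infinite active decision tree from $\emptyset$ is essentially a winning strategy for $P_A$.

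I expect the main obstacle to be purely bookkeeping: matching up the ``active'' condition on decision trees (phrased via $\ms A_{k+n}$ and choices of $\bfeta$) with the combinatorial definition of a Littlestone tree in \eqref{eq:littlestone}, and verifying that the nested union $\bt_\infty$ is well-defined and genuinely of depth $\infty$ (as opposed to the chain stabilizing, which the strictness of $\succ_\emptyset$ rules out). There is no deep content here — it is the standard ``K\H{o}nig-type'' passage from arbitrarily deep finite objects (here, active decision trees of every depth, organized into a descending $\prec$-chain rather than just a family) to one infinite object, combined with the observation that activeness is exactly the consistency requirement defining a Littlestone tree. Once that correspondence is stated cleanly, both assertions of the lemma follow immediately from Definition~\ref{def:val-binary} together with the standing assumption (in the relevant case) that $\cH$ has no infinite Littlestone tree.
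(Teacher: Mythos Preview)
Your proposal is correct and follows essentially the same approach as the paper. The paper's proof is terser: it observes $\emptyset\in\ms A$ for the first claim, and for the second invokes Lemma~\ref{lem:PA_winning} (no infinite Littlestone tree $\Rightarrow$ $P_A$ has no winning strategy) and then asserts that $\prec_\emptyset$ is well-founded, leaving implicit the passage from ``$P_A$ has no winning strategy'' to ``no infinite active decision tree from $\emptyset$.'' You make that passage explicit by directly building the infinite Littlestone tree from a non-well-founded $\prec_\emptyset$-chain, which is exactly the content behind the paper's one-line appeal to Lemma~\ref{lem:PA_winning}.
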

\begin{proof}
Obviously, $\emptyset\in\ms A$. 
If $\cH$ does not have an infinite Littlestone tree, by Lemma \ref{lem:PA_winning}, $P_A$ does not have a winning strategy. Thus, $\prec_{\emptyset}$ is well-defined. By Definition \ref{def:val-binary}, we have $\val(\emptyset)<\Omega$.
\end{proof}

In order to define game values on $\wmP$, we prove the following lemma.
\begin{lemma} \label{lem:val_equal}
For any $k_1,k_2\in\N_0$, $\bv_a\in \ms P_{k_1}$, $\bv_b\in \ms P_{k_1}$, $x\in\X$, and $y,\ y',\ y''\in\Y$ such that $y'\neq y$ and $y''\neq y$, we have $\ms T_{(\bv_a,(x,y,y'),0,\bv_b)}^{\ms A}=\ms T_{(\bv_a,(x,y'',y),1,\bv_b)}^{\ms A}=\ms T_{(\bv_a,(x,y,y''),0,\bv_b)}^{\ms A}$. 
In particular,  $\val(\bv_a,(x,y,y'),0,\bv_b)=\val(\bv_a,(x,y'',y),1,\bv_b)=\val(\bv_a,(x,y,y''),0,\bv_b)$.
\end{lemma}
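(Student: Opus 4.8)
The plan is to prove the stronger claim about active decision trees and then read off the equality of game values directly from Definition~\ref{def:val-binary}. The guiding observation is that, in the game $\B$, a round $((x_\tau,y_\tau^0,y_\tau^1),\eta_\tau)$ affects the evolution of $\cH$ only through the single ``committed'' pair $(x_\tau,y_\tau^{\eta_\tau})=(\xi_\tau(1),\xi_\tau(\eta_\tau+2))$; the label $P_L$ did not select is completely inert as far as activeness is concerned. Hence everything relevant — whether a position is active, whether a decision tree rooted at it is active, the relation $\prec_\bv$, and its rank — is a function of $\bv$ only through the finite list of committed pairs it determines.

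Concretely, the steps are as follows. First I would note that, since $\cH_{x_1,y_1,\dots,x_t,y_t}$ is nonincreasing in $t$, a position $\bv=((\xi_1,\eta_1),\dots,(\xi_n,\eta_n))$ is active iff $\cH_{\xi_1(1),\xi_1(\eta_1+2),\dots,\xi_n(1),\xi_n(\eta_n+2)}\neq\emptyset$; and, unwinding the definition of an active decision tree, a tree $\bt=\{\xi_\bfeta:\bfeta\in\{0,1\}^t,0\le t<m\}$ is active with a starting position $\bv$ of length $k$ iff for every $(\eta_{k+1},\dots,\eta_{k+m})\in\{0,1\}^m$ the set obtained from $\cH$ by imposing the committed pairs of $\bv$ followed by $(\xi_{\eta_{k+1},\dots,\eta_{k+j-1}}(1),\xi_{\eta_{k+1},\dots,\eta_{k+j-1}}(\eta_{k+j}+2))$ for $j=1,\dots,m$ is nonempty. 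Thus $\ms T_\bv^{\ms A}$ (a subset of the common set $\ms T_\bv$) depends on $\bv$ only through its list of committed pairs. Second, I would compute this list for the three positions: in each case the plays coming from $\bv_a$ and from $\bv_b$ contribute exactly the same committed pairs, while the middle play contributes $(x,(x,y,y')(2))=(x,y)$, $(x,(x,y'',y)(3))=(x,y)$, and $(x,(x,y,y'')(2))=(x,y)$ respectively — literally the same pair $(x,y)$. (The hypotheses $y'\neq y$ and $y''\neq y$ are precisely what makes the three triples legitimate elements of $\wt\X$.) Hence the three committed lists coincide, so $\ms T_{(\bv_a,(x,y,y'),0,\bv_b)}^{\ms A}=\ms T_{(\bv_a,(x,y'',y),1,\bv_b)}^{\ms A}=\ms T_{(\bv_a,(x,y,y''),0,\bv_b)}^{\ms A}$. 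Finally, because $\prec_\bv$ is defined on $\ms T_\bv^{\ms A}$ by the $\bv$-free rule ``remove the leaves,'' equal active-tree sets yield identical relations $\prec_\bv$, hence identical rank functions $\rho_{\prec_\bv}$ and identical values $\rho_{\prec_\bv}(\emptyset)$; and the three positions are simultaneously in or out of $\ms A$. Definition~\ref{def:val-binary} then gives the asserted equality of $\val$ in all cases.

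I do not expect a genuine obstacle here: the content is the elementary remark that the unchosen label is inert. The only place demanding care is the bookkeeping when expanding the definition of an active decision tree — getting the indices of the appended committed pairs right — so that the reduction ``activeness of $\bt$ at $\bv$ depends on $\bv$ only through its committed list'' is stated and applied correctly.
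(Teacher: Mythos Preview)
Your proposal is correct and takes essentially the same approach as the paper: both arguments rest on the observation that whether a position (or a decision tree rooted at it) is active depends on the position only through the list of committed pairs $(\xi_i(1),\xi_i(\eta_i+2))$, and then check that the three positions in question have identical committed lists. The paper carries this out by fixing an active tree for one position and exhibiting, for each leaf, a witnessing continuation that transfers verbatim to the other position; you phrase the same content as a general ``committed-list'' reduction and then apply it, which is a cleaner packaging of the identical idea.
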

\begin{proof}
It suffices to show that $\ms T_{(\bv_a,(x,y,y'),0,\bv_b)}^{\ms A}=\ms T_{(\bv_a,(x,y'',y),1),\bv_b}^{\ms A}$ and $\val(\bv_a,(x,y,y'),0,\bv_b)=\val(\bv_a,(x,y'',y),1,\bv_b)$. 
Indeed, since $y\neq y''$, the above results immediately imply that 
$$
\ms T_{(\bv_a,(x,y,y''),0,\bv_b)}^{\ms A}=\ms T_{(\bv_a,(x,y'',y),1,\bv_b)}^{\ms A}=\ms T_{(\bv_a,(x,y,y'),0,\bv_b)}^{\ms A}
$$ 
and 
$$
\val(\bv_a,(x,y,y''),0,\bv_b)=\val(\bv_a,(x,y'',y),1,\bv_b)=\val(\bv_a,(x,y,y'),0,\bv_b).
$$

Let $k=k_1+k_2$.
Since $\bv_a\in\ms P_{k_1}$ and $\bv_b\in\ms P_{k_2}$, we have $\bv_a=(\xi_1,\eta_1,\dots,\xi_{k_1},\eta_{k_1})$ and $\bv_b=(\xi_{k_1+2},\eta_1,\dots,\xi_{k+1},\eta_{k+1})$ for some $(\xi_{1},\dots,\xi_{k_1})\in\wt \X^{k_1}$, $(\xi_{k_1+2},\dots,\xi_{k+1})\in\wt \X^{k_2}$, $(\eta_1,\dots,\eta_{k-1})\in\{0,1\}^{k_1}$, and $(\eta_{k_1+2},\dots,\eta_{k+1})\in\Y^{k_2}$. 
Define $\xi_{k_1+1}^0:=(x,y,y')$, $\eta_{k_1+1}^0:=0$, $\xi_{k_1+1}^1:=(x,y'',y)$, $\eta_{k_1+1}^1:=1$, $\bv_0:=(\bv_a,(x,y,y'),0,\bv_b)$, and $\bv_1:=(\bv_a,(x,y'',y),1,\bv_b)$.
For any decision tree 
$$
\bt=\left\{\xi_{\bfeta}\in\wt \X:\bfeta\in\{0,1\}^t,0\le t<n\right\}\in\ms T_{\bv_0}
$$ 
of depth $n$ $(0\le n< \infty)$, we have $\bt\in\ms T_{\bv_1}$. 

If $\bt\in\ms T^{\ms A}_{\bv_0}$, for any $\bfeta=(\eta_{k+2},\dots,\eta_{k+n+1})\in\{0,1\}^n$, we have 
\begin{align*}
\bv_{0,\bt,\bfeta}:=(\bv_0,\xi_{\emptyset},\eta_{k+2},\xi_{\eta_{k+2}},\dots, \xi_{\eta_{k+2},\dots,\eta_{k+n}},\eta_{k+n+1})\in\ms A_{k+n+1}.
\end{align*}
By the definition of $\ms A_{k+n+1}$, there exists 
$$
\bw=(\xi_{k+n+2},\eta_{k+n+2},\xi_{k+n+3},\eta_{k+n+3},\dots)\in\left(\wt \X\times\{0,1\}\right)^{\infty}
$$
such that $(\bv_{0,\bt,\bfeta},\bw)\in\ms W^c$.

For each $t\in[k_1]\cup\{k_1+2,\dots,k+1\}\cup\{k+n+2,k+n+3,\dots\}$,
define $x_t:=\xi_t(1)$ and $y_t:=\xi_t(\eta_t+2)$.
Define $x_{k_1+1}:=\xi_{k_1+1}^0(1)=x$ and $y_{k_1+1}:=\xi_{k_1+1}^0(\eta_{k_1+1}^0+2)=y$. 
Define $x_{k+2}:=\xi_{\emptyset}(1)$ and $y_{k+2}:=\xi_{\emptyset}(\eta_{k+2}+2)$.
For each $t\in\{k+3,\dots,k+n+1\}$, 
define $x_t:=\xi_{\eta_{k+2},\dots,\eta_{t-1}}(1)$ and $y_t:=\xi_{\eta_{k+2},\dots,\eta_{t-1}}(\eta_{t}+2)$. 

Since $(\bv_{0,\bt,\bfeta},\bw)\in\ms W^c$, by the definition of $\ms W$, 
for any $0\le \tau<\infty$, there exists $h\in\cH$ such that
$h(x_t)=y_t$ for any $1\le t\le \tau$.
Since $\xi_{k_1+1}^1(1)=x=\xi_{k_1+1}^0(1)=x_{k_1+1}$ and $\xi_{k_1+1}^1(\eta_{k_1+1}^1+2)=y=\xi_{k_1+1}^0(\eta_{k_1+1}^0+2)=y_{k_1+1}$, we have
$(\bv_{1,\bt,\bfeta},\bw)\in\ms W^c$ where
\begin{align*}
\bv_{1,\bt,\bfeta}:=
(\bv_1,\xi_{\emptyset},\eta_{k+2},\xi_{\eta_{k+2}},\dots, \xi_{\eta_{k+2},\dots,\eta_{k+n}},\eta_{k+n+1}).
\end{align*}
Thus, $\bv_{1,\bt,\bfeta}\in \ms A_{k+n+1}$ for any $\bfeta\in\{0,1\}^n$. 
By the definition of $\ms T_{\bv_1}^{\ms A}$, we have
$\bt\in\ms T_{\bv_1}^{\ms A}$. 
Since it holds for any $\bt\in\ms T_{\bv_0}^{\ms A}$, we have $\ms T_{\bv_0}^{\ms A}\subseteq\ms T_{\bv_1}^{\ms A}$.

By symmetry, we can also show that $\ms T_{\bv_1}^{\ms A}\subseteq\ms T_{\bv_0}^{\ms A}$, which implies that $\ms T_{\bv_0}^{\ms A}=\ms T_{\bv_1}^{\ms A}$. 
Since $\ms T_{\bv_0}=\ms T_{\bv_1}$, we also have $\val(\bv_0)=\val(\bv_1)$.
\end{proof}

Now, we can define game values on $\wmP$ using game values on $\ms P$. 
\begin{definition} \label{def:val_XY}
The $\emph{game value}$ $\val:$ $\wmP\rightarrow\ORD^*$ is defined as follows. 
For $\emptyset$, $\val(\emptyset)$ is defined by Definition \ref{def:val-binary}. 
For any $n\in\N$ and $\bz=(x_1,y_1,\dots,x_n,y_n)\in\wmP_n$, pick a sequence $y_1',\dots,y_n'$ such that $y_1'\neq y,\dots$, and $y_n'\neq y_n$. 
Define $\bv:=(\xi_1,\eta_1,\dots,\xi_n,\eta_n)\in\ms P_n$ with $\xi_i:=(x_i,y_i,y_i')$ and $\eta_i:=0$ for any $i\in[n]$. 
Define $\val(\bz):=\val(\bv)$.
\end{definition}
By Lemma \ref{lem:val_equal}, $\val$ is well-defined on $\wmP$ and the following corollary holds.
\begin{corollary} \label{coro:val_eq}
For any $0\le n<\infty$ and $(\xi_1,\eta_1,\dots,\xi_n,\eta_n)\in\left(\wt\X\times\{0,1\}\right)^n$, we have
\begin{align} \label{eq:val_eq}
\val(\xi_1(1),\xi_1(\eta_1+2),\dots,\xi_n(1),\xi_n(\eta_n+2))=\val(\xi_1,\eta_1,\dots,\xi_n,\eta_n).
\end{align}
\end{corollary}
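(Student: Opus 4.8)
The plan is to unfold Definition~\ref{def:val_XY} and then reduce the claim to $n$ successive applications of Lemma~\ref{lem:val_equal}, changing one coordinate of the position at a time. Fix $(\xi_1,\eta_1,\dots,\xi_n,\eta_n)\in(\wt\X\times\{0,1\})^n$ and write $\xi_i=(x_i,y_i^0,y_i^1)$ with $y_i^0\ne y_i^1$, so that $\xi_i(1)=x_i$ and $\xi_i(\eta_i+2)=y_i^{\eta_i}$. The left-hand side of~\eqref{eq:val_eq} is then $\val(\bz)$ for $\bz=(x_1,y_1^{\eta_1},\dots,x_n,y_n^{\eta_n})\in\wmP_n$. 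If $n=0$ both sides are $\val(\emptyset)$ and there is nothing to prove, so assume $n\ge1$. When evaluating $\val(\bz)$ through Definition~\ref{def:val_XY}, I would make the concrete choice $y_i':=y_i^{1-\eta_i}$ for each $i\in[n]$; this is a legal choice since $y_i^{1-\eta_i}\ne y_i^{\eta_i}$, and by Lemma~\ref{lem:val_equal} the resulting value does not depend on it. With this choice Definition~\ref{def:val_XY} gives $\val(\bz)=\val(\bv^{(0)})$, where $\bv^{(0)}:=(\zeta_1,0,\dots,\zeta_n,0)$ and $\zeta_i:=(x_i,y_i^{\eta_i},y_i^{1-\eta_i})\in\wt\X$.

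Next I would interpolate between $\bv^{(0)}$ and the target position $\bv^{(n)}:=(\xi_1,\eta_1,\dots,\xi_n,\eta_n)$ by setting, for $0\le j\le n$,
$$
\bv^{(j)}:=(\xi_1,\eta_1,\dots,\xi_j,\eta_j,\zeta_{j+1},0,\dots,\zeta_n,0),
$$
so that the first $j$ plays agree with $\bv^{(n)}$ while the remaining $n-j$ plays are in the ``canonical'' form of Definition~\ref{def:val_XY}; the endpoints are indeed $\bv^{(0)}$ and $\bv^{(n)}$. The core of the argument is the claim that $\val(\bv^{(j-1)})=\val(\bv^{(j)})$ for each $j\in[n]$. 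These two positions share the prefix $\bv_a:=(\xi_1,\eta_1,\dots,\xi_{j-1},\eta_{j-1})\in\ms P_{j-1}$ and the suffix $\bv_b:=(\zeta_{j+1},0,\dots,\zeta_n,0)\in\ms P_{n-j}$, and differ only in the $j$-th play: it is $(\zeta_j,0)=\big((x_j,y_j^{\eta_j},y_j^{1-\eta_j}),0\big)$ in $\bv^{(j-1)}$ and $(\xi_j,\eta_j)=\big((x_j,y_j^0,y_j^1),\eta_j\big)$ in $\bv^{(j)}$. If $\eta_j=0$ these two plays literally coincide, so the equality is trivial. If $\eta_j=1$, the plays are $\big((x_j,y_j^1,y_j^0),0\big)$ and $\big((x_j,y_j^0,y_j^1),1\big)$, and Lemma~\ref{lem:val_equal} applied with $x=x_j$, $y=y_j^1$, $y'=y_j^0$, $y''=y_j^0$ (valid since $y_j^0\ne y_j^1$) yields precisely $\val(\bv_a,(x_j,y_j^1,y_j^0),0,\bv_b)=\val(\bv_a,(x_j,y_j^0,y_j^1),1,\bv_b)$, i.e.\ $\val(\bv^{(j-1)})=\val(\bv^{(j)})$. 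Chaining these $n$ equalities gives $\val(\bz)=\val(\bv^{(0)})=\val(\bv^{(n)})=\val(\xi_1,\eta_1,\dots,\xi_n,\eta_n)$, which is exactly~\eqref{eq:val_eq}.

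Since Lemma~\ref{lem:val_equal} already contains all the substantive content (equality of the active-decision-tree sets under swapping the two proposed labels, and hence equality of the associated ranks and game values), I do not expect a genuine obstacle in this corollary. The only points that need care are purely bookkeeping: identifying which of $y_j^0,y_j^1$ plays the role of ``$y$'' in Lemma~\ref{lem:val_equal} for a given value of $\eta_j$, checking the non-equality hypotheses (all guaranteed by $y_j^0\ne y_j^1$, which holds because $(x_j,y_j^0,y_j^1)\in\wt\X$), and confirming that $\bv_a$ and $\bv_b$ have the lengths required to invoke the lemma, which they do by construction.
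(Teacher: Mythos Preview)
Your proof is correct and follows essentially the same approach as the paper: both unfold Definition~\ref{def:val_XY} with the specific choice $y_i'=y_i^{1-\eta_i}$ and then chain $n$ applications of Lemma~\ref{lem:val_equal}, swapping one coordinate at a time between $(\zeta_i,0)$ and $(\xi_i,\eta_i)$. The only cosmetic difference is that the paper runs the chain from the last coordinate down to the first, while you go from the first to the last.
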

\begin{proof}
For $n=0$, we have $\val(\emptyset)=\val(\emptyset)$. 
For any $n\ge 1$, by Definition \ref{def:val_XY} and Lemma \ref{lem:val_equal}, we have
\begin{align*}
&\val(\xi_1(1),\xi_1(\eta_1+2),\dots,\xi_n(1),\xi_n(\eta_n+2))
\\=&\val((\xi_1(1),\xi_1(\eta_1+2),\xi_1(3-\eta_1),0,\dots,(\xi_{n}(1),\xi_{n}(\eta_{n}+2),\xi_{n}(3-\eta_{n}),0)
\\=&\val((\xi_1(1),\xi_1(\eta_1+2),\xi_1(3-\eta_1),0,\dots,(\xi_n(1),\xi_n(2),\xi_n(3)),\eta_n)
\\=&\val((\xi_1(1),\xi_1(\eta_1+2),\xi_1(3-\eta_1),0,\dots,\xi_n,\eta_n)
\\ \vdots&
\\=&\val(\xi_1,\eta_1,\dots,\xi_n,\eta_n),
\end{align*}
which gives \eqref{eq:val_eq}.
\end{proof}

According to \citet[Lemma B.7]{universal_learning}, we have the following Lemma.
\begin{lemma} \label{lem:val_max}
If $\W$ is coanalytic, then for any $\bv\in\ms P$, either $\val(\bv)=\Omega$ or $\val(\bv)<\omega_1$. In particular, it follows that either $\val(\emptyset)=\Omega$ or $\val(\emptyset)<\omega_1$. 
\end{lemma}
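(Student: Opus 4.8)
The plan is to follow the proof of \citet[Lemma B.7]{universal_learning}, checking that the only structural change in our setting --- that in $\B$ the adversary $P_A$ plays three-tuples $\xi_\tau\in\wt\X$ while $P_L$ plays bits, rather than both playing bits --- leaves the descriptive-set-theoretic bookkeeping intact. The engine is a boundedness (Kunen--Martin) argument: a well-founded analytic relation on a standard Borel space has rank strictly below $\omega_1$. Concretely, I would first fix the ambient measurable structure: a decision tree of depth $n$ with starting position $\bv$ is indexed by $\bigcup_{0\le t<n}\{0,1\}^t$, hence is a tuple of $2^n-1$ points of $\wt\X=\X\times\Yp$; since $\X$ is Polish and $\Y$ is countable, $\wt\X$ is Polish, so each $(\wt\X)^{2^n-1}$ is Polish and $\ms T_{\bv}=\{\emptyset\}\sqcup\bigsqcup_{n\ge 1}(\wt\X)^{2^n-1}$ is a standard Borel space, independent of $\bv$. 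The "delete the leaves'' relation is Borel on $\ms T_{\bv}\times\ms T_{\bv}$.

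\textbf{Key steps.} Assume $\W$ is coanalytic, so $\W^c$ is analytic. Then each $\ms A_n=\bigcup_{\bw\in\ms P_\infty}\{\bv\in\ms P_n:(\bv,\bw)\in\W^c\}$ is a continuous (projection) image of an analytic set, hence analytic. The condition that a depth-$n$ tree $\bt$ be active at $\bv\in\ms P_k$ is a finite conjunction (over the $2^n$ leaf-branches $\bfeta\in\{0,1\}^n$) of membership statements in $\ms A_{k+n}$, hence analytic; taking the countable disjoint union over depths, $\ms T_{\bv}^{\ms A}$ is an analytic subset of $\ms T_{\bv}$, and therefore $\prec_{\bv}$ is a $\Sigma^1_1$ relation on the standard Borel space $\ms T_{\bv}$. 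Now invoke the Kunen--Martin theorem: when $\bv\in\ms A$ and $\prec_{\bv}$ is well-founded, $\val(\bv)=\rho_{\prec_{\bv}}(\emptyset)<\omega_1$; when $\bv\in\ms A$ and $\prec_{\bv}$ is ill-founded, $\val(\bv)=\Omega$ by Definition~\ref{def:val-binary}; and when $\bv\notin\ms A$, $\val(\bv)=-1<\omega_1$. This establishes the dichotomy for every $\bv\in\ms P$, and specializing to $\bv=\emptyset$ gives the "in particular'' clause.

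\textbf{Main obstacle.} I expect the only real work --- beyond citing the correct form of the boundedness principle for analytic well-founded relations --- to be the complexity computation in the second step: verifying that projecting out the infinite tail $\bw\in\ms P_\infty$ keeps each $\ms A_n$ analytic, and that the leaf-wise conjunction together with the union over tree depths stays within $\Sigma^1_1$ so that $\prec_{\bv}$ is genuinely an analytic relation on a standard Borel space. Everything else is routine bookkeeping mirroring \citet[Appendix B]{universal_learning}, the countability of $\Y$ entering only through the observation that $\wt\X$ remains Polish and that $\ms T_{\bv}$ remains standard Borel.
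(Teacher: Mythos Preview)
Your proposal is correct and follows the same route as the paper: the paper simply cites \citet[Lemma B.7]{universal_learning} as a black box, while you unpack that citation by verifying that the game $\B$ has Polish action sets (since $\wt\X=\X\times\Yp$ is Polish when $\X$ is Polish and $\Y$ is countable), that $\ms T_{\bv}^{\ms A}$ and $\prec_{\bv}$ are analytic whenever $\W$ is coanalytic, and then invoking the Kunen--Martin boundedness theorem. The additional detail you provide is sound, but strictly speaking the cited lemma is already stated at the level of generality needed here, so the paper's one-line citation suffices.
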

According to \citet[Proposition B.8]{universal_learning}, we have the following Proposition.
\begin{proposition} \label{prop:val_decrease}
Fix $0\le n<\infty$ and $\bv\in\ms P_n$ such that $0\le \val(\bv)<\Omega$. For any $\xi=(x,y^0,y^1)\in\wt \X$, there exists $\eta\in\{0,1\}$ such that $\val(\bv,\xi,\eta)<\val(\bv)$.
\end{proposition}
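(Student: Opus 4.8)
The plan is to adapt the proof of \citet[Proposition B.8]{universal_learning}: that argument concerns only the rank function of the relation $\prec_{\bv}$ on active decision trees and is insensitive to which set player $P_A$ draws its moves from, so replacing $\X$ there by $\wt\X=\X\times\Yp$ here changes nothing of substance. Recall that since $\bv\in\ms A$ and $\val(\bv)<\Omega$, the relation $\prec_{\bv}$ is well-founded, $\val(\bv)=\rho_{\prec_{\bv}}(\emptyset)$ is an ordinal which we denote $\beta$, and by definition of the rank
\begin{align*}
\val(\bv)=\rho_{\prec_{\bv}}(\emptyset)=\sup\bigl\{\rho_{\prec_{\bv}}(\{\xi'\})+1:\xi'\in\wt\X\text{ with }(\bv,\xi',0),(\bv,\xi',1)\in\ms A\bigr\},
\end{align*}
the supremum ranging over the active depth-$1$ decision trees $\{\xi'\}$ at $\bv$.

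Fix $\xi\in\wt\X$. If $(\bv,\xi,\eta)\notin\ms A$ for some $\eta\in\{0,1\}$, then $\val(\bv,\xi,\eta)=-1<0\le\val(\bv)$ and we are done; so assume both children are active. I would then prove the one-step inequality
\begin{align*}
\rho_{\prec_{\bv}}(\{\xi\})\ \ge\ \min\bigl(\val(\bv,\xi,0),\val(\bv,\xi,1)\bigr),
\end{align*}
with the convention that the value $\Omega$ is treated as larger than every ordinal. Since $\{\xi\}$ is an active depth-$1$ tree at $\bv$ and $\{\xi\}\prec_{\bv}\emptyset$, we have $\val(\bv)=\rho_{\prec_{\bv}}(\emptyset)\ge\rho_{\prec_{\bv}}(\{\xi\})+1$, so the inequality yields $\val(\bv)\ge\min(\val(\bv,\xi,0),\val(\bv,\xi,1))+1$. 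Choosing $\eta$ to attain the minimum, the value $\val(\bv,\xi,\eta)$ cannot equal $\Omega$, for otherwise the last display would force $\val(\bv)\ge\Omega$; hence it is an ordinal lying strictly below $\val(\bv)$, which is exactly the assertion.

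It remains to prove the one-step inequality, which I expect to be the main obstacle. The tool is the ``split at the root'' correspondence: an active decision tree $\bt$ at $\bv$ of depth $m\ge1$ with root $\xi_{\emptyset}=\xi$ amounts to the pair of reindexed subtrees $\bt^{(0)},\bt^{(1)}$ — where $\bt^{(i)}$ has node $\xi_{i\bfeta'}$ at index $\bfeta'$, for $|\bfeta'|<m-1$ — each an active decision tree of depth $m-1$ at $(\bv,\xi,i)$; conversely, any two active decision trees of equal depth at $(\bv,\xi,0)$ and $(\bv,\xi,1)$ reassemble under the common root $\xi$ into an active decision tree at $\bv$. When $\prec_{(\bv,\xi,0)}$ and $\prec_{(\bv,\xi,1)}$ are both well-founded, a transfinite induction on $\gamma$ shows that $\rho_{\prec_{\bv}}(\bt)\ge\gamma$ for every such $\bt$ with $\rho_{\prec_{(\bv,\xi,0)}}(\bt^{(0)})\ge\gamma$ and $\rho_{\prec_{(\bv,\xi,1)}}(\bt^{(1)})\ge\gamma$ — at a successor step $\gamma=\delta+1$ one extends each $\bt^{(i)}$ by one level of leaves to an active tree of rank $\ge\delta$, reassembles, and invokes the inductive hypothesis — and the instance $\bt=\{\xi\}$ (so $\bt^{(0)}=\bt^{(1)}=\emptyset$) is the desired inequality. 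If instead, say, $\prec_{(\bv,\xi,1)}$ is not well-founded, then $\prec_{(\bv,\xi,0)}$ must be well-founded, since otherwise reassembling one infinite active tree from each side would make $\prec_{\bv}$ not well-founded, contradicting $\val(\bv)<\Omega$; in this case one fixes a single infinite active tree at $(\bv,\xi,1)$ once and for all and runs the same induction, always pairing the $(\bv,\xi,0)$-side of $\bt$ with the appropriate truncation of that fixed tree so that the reassembled trees are automatically active, obtaining $\rho_{\prec_{\bv}}(\{\xi\})\ge\rho_{\prec_{(\bv,\xi,0)}}(\emptyset)=\min(\val(\bv,\xi,0),\val(\bv,\xi,1))$. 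The real work — matching the depths of the two subtrees, carrying out the reindexing coherently, and the non-well-founded bookkeeping just sketched — is precisely what \citet[Proposition B.8]{universal_learning} does, and it transfers unchanged because it never refers to the structure of $P_A$'s moves; the measurability input needed elsewhere (e.g.\ in Lemma \ref{lem:val_max}) is likewise unaffected, as $\wt\X$ is Polish whenever $\X$ is.
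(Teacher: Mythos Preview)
Your proposal is correct and takes essentially the same approach as the paper: the paper simply cites \citet[Proposition B.8]{universal_learning} directly, noting that the result applies verbatim once $P_A$'s action space is $\wt\X$ rather than $\X$. Your write-up goes further by unpacking the rank-of-active-trees argument behind that proposition, but the route is the same.
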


For any $0\le n<\infty$, define 
\begin{align*}
\ms D_{n+1}:=\left\{(\bv,\xi,\eta)\in\ms P_{n+1}:\val(\bv,\xi,\eta)<\min\{\val(\bv),\val(\emptyset)\}\right\}
\end{align*}
and
\begin{align*}
\wmD_{n+1}:=\left\{(\bz,x,y)\in\wmP_{n+1}:\val(\bz,x,y)<\min\{\val(\bz),\val(\emptyset)\}\right\}.
\end{align*}
The following lemma relates $\wmD_{n+1}$ to $\ms D_{n+1}$.
\begin{lemma} \label{lem:D_eq}
For any $0\le n<\infty$, we have \begin{align*}
\wmD_{n+1}=\left\{(\xi_1(1),\xi_1(\eta_1+2),\dots, \xi_{n+1}(1),\xi_{n+1}(\eta_{n+1}+2)):\ (\xi_1,\eta_1,\dots,\xi_{n+1},\eta_{n+1})\in\ms D_{n+1}\right\}.
\end{align*}
\end{lemma}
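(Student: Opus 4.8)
\textbf{Proof proposal for Lemma \ref{lem:D_eq}.}

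The plan is to establish the claimed set equality by proving the two inclusions separately, in each case translating between positions $\bz\in\wmP_{n+1}$ of the game $\wb\B$ and positions $\bv\in\ms P_{n+1}$ of the game $\B$ via the coordinatewise map $(\xi_1,\eta_1,\dots,\xi_{n+1},\eta_{n+1})\mapsto(\xi_1(1),\xi_1(\eta_1+2),\dots,\xi_{n+1}(1),\xi_{n+1}(\eta_{n+1}+2))$, and then invoking Corollary~\ref{coro:val_eq} to identify the corresponding game values. The only arithmetic fact I need is that $\min\{\val(\bv),\val(\emptyset)\}$ is preserved under this correspondence, which is immediate since Corollary~\ref{coro:val_eq} applies uniformly to positions of every length $0\le m\le n+1$ (in particular to both the length-$n$ prefix and the full length-$(n+1)$ position), and $\val(\emptyset)$ is the same object in both definitions.

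For the inclusion $\supseteq$, I would take $(\xi_1,\eta_1,\dots,\xi_{n+1},\eta_{n+1})\in\ms D_{n+1}$, set $\bv:=(\xi_1,\eta_1,\dots,\xi_n,\eta_n)\in\ms P_n$, $\xi:=\xi_{n+1}$, $\eta:=\eta_{n+1}$, so that $\val(\bv,\xi,\eta)<\min\{\val(\bv),\val(\emptyset)\}$. Writing $\bz:=(\xi_1(1),\xi_1(\eta_1+2),\dots,\xi_n(1),\xi_n(\eta_n+2))\in\wmP_n$, $x:=\xi_{n+1}(1)$, $y:=\xi_{n+1}(\eta_{n+1}+2)$, we have $(\bz,x,y)\in\wmP_{n+1}$, and Corollary~\ref{coro:val_eq} gives $\val(\bz,x,y)=\val(\bv,\xi,\eta)$ and $\val(\bz)=\val(\bv)$; hence $\val(\bz,x,y)<\min\{\val(\bz),\val(\emptyset)\}$, i.e.\ $(\bz,x,y)\in\wmD_{n+1}$, and $(\bz,x,y)$ is exactly the image of the chosen element of $\ms D_{n+1}$.

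For the inclusion $\subseteq$, I would start from an arbitrary $(x_1,y_1,\dots,x_{n+1},y_{n+1})\in\wmD_{n+1}$ and lift it to $\ms P_{n+1}$: since $|\Y|\ge 2$, for each $i\in[n+1]$ choose some $y_i'\in\Y$ with $y_i'\neq y_i$, and set $\xi_i:=(x_i,y_i,y_i')\in\wt\X$ and $\eta_i:=0$. Then $\xi_i(1)=x_i$ and $\xi_i(\eta_i+2)=\xi_i(2)=y_i$, so this element of $\ms P_{n+1}$ maps onto the given point. By Definition~\ref{def:val_XY} (equivalently Corollary~\ref{coro:val_eq}) we get $\val(\xi_1,\eta_1,\dots,\xi_{n+1},\eta_{n+1})=\val(x_1,y_1,\dots,x_{n+1},y_{n+1})$ and $\val(\xi_1,\eta_1,\dots,\xi_n,\eta_n)=\val(x_1,y_1,\dots,x_n,y_n)$, and the defining inequality of $\wmD_{n+1}$ transfers verbatim to show $(\xi_1,\eta_1,\dots,\xi_{n+1},\eta_{n+1})\in\ms D_{n+1}$. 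The case $n=0$ is covered by the same argument, reading $\val$ of the empty prefix as $\val(\emptyset)$.

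There is no real obstacle here: the statement is essentially a repackaging of Corollary~\ref{coro:val_eq} together with Definition~\ref{def:val_XY}, and the only point requiring a word of care is that the lift in the $\subseteq$ direction needs a choice of a distinct ``dummy'' second label for each coordinate, which exists precisely because $\Y$ has at least two elements (the choice is harmless because by Lemma~\ref{lem:val_equal} the value $\val$ on $\wmP$ does not depend on it).
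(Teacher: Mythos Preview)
Your proposal is correct and follows essentially the same approach as the paper: both prove the two inclusions separately, invoke Corollary~\ref{coro:val_eq} to transfer the defining inequality between $\ms D_{n+1}$ and $\wmD_{n+1}$, and for the $\subseteq$ direction lift $(x_1,y_1,\dots,x_{n+1},y_{n+1})$ to $\ms P_{n+1}$ via $\xi_i=(x_i,y_i,y_i')$ with an arbitrary $y_i'\neq y_i$ and $\eta_i=0$.
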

\begin{proof}
For any $(\xi_1,\eta_1,\dots,\xi_{n+1},\eta_{n+1})\in\ms D_{n+1}$, we have 
$$
\val(\xi_1,\eta_1,\dots,\xi_{n+1},\eta_{n+1})<\min\{\val(\xi_1,\eta_1,\dots,\xi_{n},\eta_{n}),\val(\emptyset)\}.
$$ 
By Corollary \ref{coro:val_eq}, we have
\begin{align*}
&\val(\xi_1(1),\xi_1(\eta_1+2),\dots,\xi_{n+1}(1),\xi_{n+1}(\eta_{n+1}+2))
\\=&\val(\xi_1,\eta_1,\dots,\xi_{n+1},\eta_{n+1})
\\<&\min\{\val(\xi_1,\eta_1,\dots,\xi_{n},\eta_{n}),\val(\emptyset)\}.
\\=&\min\{\val(\xi_1(1),\xi_1(\eta_1+2),\dots,\xi_{n}(1),\xi_{n}(\eta_{n}+2),\val(\emptyset)\}
\end{align*}
which implies that $(\xi_1(1),\xi_1(\eta_1+2),\dots,\xi_{n+1}(1),\xi_{n+1}(\eta_{n+1}+2))\in\wmD_{n+1}$.

On the other hand, for any $(x_1,y_1,\dots,x_{n+1},y_{n+1})\in\wmD_{n+1}$, define $\xi_i:=(x_i,y_i,y_i')$ for arbitrary $y_i'\in\Y$ satisfying $y_i'\neq y_i$ and $\eta_i:=0$ for each $i\in[n]$. Then, by Corollary \ref{coro:val_eq}, we have
\begin{align*}
&\val(\xi_1,\eta_1,\dots,\xi_{n+1},\eta_{n+1})
\\=&\val(\xi_1(1),\xi_1(\eta_1+2),\dots,\xi_{n+1}(1),\xi_{n+1}(\eta_{n+1}+2))
\\=&\val(x_1,y_1,\dots,x_{n+1},y_{n+1})
\\<&\min\{\val(x_1,y_1,\dots,x_{n},y_{n})
,\min(\emptyset)\}
\\=&\min\{\val(\xi_1,\eta_1,\dots,\xi_{n},\eta_{n})
,\min(\emptyset)\}
\end{align*}
Thus, $(\xi_1,\eta_1,\dots,\xi_n,\eta_n)\in\ms D_{n+1}$ and $\xi_i(1)=x_i$, $\xi_i(\eta_i+2)=y_i$ for all $i\in[n]$. 
Therefore, 
\begin{align*}
&(x_1,y_1,\dots,x_{n+1},y_{n+1})
\\ \in&\left\{(\xi_1(1),\xi_1(\eta_1+2),\dots, \xi_{n+1}(1),\xi_{n+1}(\eta_{n+1}+2)):\ (\xi_1,\eta_1,\dots,\xi_{n+1},\eta_{n+1})\in\ms D_{n+1}\right\}.
\end{align*}
In conclusion, Lemma \ref{lem:D_eq} is proved. 
\end{proof}

According to \citet[Lemma B.10]{universal_learning}, we have the following lemma.
\begin{lemma} \label{lem:val_greater}
For any $0 \leq n<\infty, \bv \in \ms P_{n}$, and $\kappa \in \ORD$, we have $\val(\bv)>\kappa$ if and only if there exists $\xi \in \wt\X$ such that $\val(\bv, \xi, \eta) \ge \kappa$ for all $\eta \in \{0,1\}$.
\end{lemma}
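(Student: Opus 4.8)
This lemma is the multiclass analogue of \citet[Lemma B.10]{universal_learning}, and I would prove it by the same route, the only new bookkeeping being that in each round $P_A$ now offers two labels rather than working over $\{0,1\}$. The plan has two parts: first establish a recursive formula for the game value, then read the stated equivalence off that formula by elementary ordinal arithmetic. Concretely, I claim that for every active position $\bv\in\ms A_n$,
\begin{align} \label{eq:val-recursion}
\val(\bv)=\sup\Bigl\{\min_{\eta\in\{0,1\}}\val(\bv,\xi,\eta)+1\ :\ \xi\in\wt\X,\ (\bv,\xi,0)\in\ms A_{n+1},\ (\bv,\xi,1)\in\ms A_{n+1}\Bigr\},
\end{align}
with the usual conventions that the supremum of the empty set is $0$, that $\Omega+1=\Omega$, and that a supremum equals $\Omega$ as soon as one of its terms does. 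Granting \eqref{eq:val-recursion}, fix $\kappa\in\ORD$. If $\bv\notin\ms A_n$, then $\val(\bv)=-1\not>\kappa$; and since $\ms A$ is prefix-closed, every extension $(\bv,\xi,\eta)$ is also inactive, so $\val(\bv,\xi,\eta)=-1\not\ge\kappa$, and both sides of the claimed equivalence are false. If $\bv\in\ms A_n$, then by \eqref{eq:val-recursion} we have $\val(\bv)>\kappa$ iff some admissible $\xi$ satisfies $\min_{\eta}\val(\bv,\xi,\eta)+1>\kappa$; since $\alpha+1>\kappa\iff\alpha\ge\kappa$ for every ordinal $\alpha$ and $\Omega>\kappa$ always, this is the same as $\min_{\eta}\val(\bv,\xi,\eta)\ge\kappa$, i.e.\ $\val(\bv,\xi,\eta)\ge\kappa$ for both $\eta\in\{0,1\}$. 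Conversely, any $\xi$ with $\val(\bv,\xi,\eta)\ge\kappa\ge0$ for both $\eta$ has both child positions active, hence is admissible and witnesses $\val(\bv)>\kappa$. This is exactly the lemma.

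To establish \eqref{eq:val-recursion}, recall from Definition~\ref{def:val-binary} that for $\bv\in\ms A_n$ we have $\val(\bv)=\rho_{\prec_{\bv}}(\emptyset)=\sup\{\rho_{\prec_{\bv}}(\bt)+1:\bt\prec_{\bv}\emptyset\}$, read as $\Omega$ when $\prec_{\bv}$ is not well-founded, and that the trees $\bt$ with $\bt\prec_{\bv}\emptyset$ are precisely the depth-one trees $\bt=\{\xi\}$, $\xi\in\wt\X$, lying in $\ms T_{\bv}^{\ms A}$. Inspecting leaf positions shows $\{\xi\}\in\ms T_{\bv}^{\ms A}$ iff $(\bv,\xi,0)\in\ms A_{n+1}$ and $(\bv,\xi,1)\in\ms A_{n+1}$, so it remains to prove that for each such $\xi$,
\begin{align} \label{eq:rank-min}
\rho_{\prec_{\bv}}(\{\xi\})=\min\bigl(\val(\bv,\xi,0),\val(\bv,\xi,1)\bigr),
\end{align}
with the left side understood to be $\Omega$ exactly when the right side is. For this I would set up a splitting bijection: an active tree $\bt\in\ms T_{\bv}^{\ms A}$ of depth $\ge1$ with root move $\xi$ is the same data as the pair $(\bt_0,\bt_1)$ of its two immediate subtrees, re-rooted at $(\bv,\xi,0)$ and $(\bv,\xi,1)$; a further inspection of leaves shows such a $\bt$ is active at $\bv$ iff $\bt_0\in\ms T_{(\bv,\xi,0)}^{\ms A}$ and $\bt_1\in\ms T_{(\bv,\xi,1)}^{\ms A}$. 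Since the decision trees here are \emph{complete} binary trees, $\bt_0$ and $\bt_1$ always have equal depth, so this bijection identifies the active trees at $\bv$ with root move $\xi$, ordered by $\prec_{\bv}$ (equivalently, by deleting the bottom level), with the equal-depth pairs of active trees at the two child positions, ordered by deleting the bottom level in both coordinates at once. A routine transfinite induction (with the evident conventions in the non-well-founded case) shows that in this product order the rank of a pair $(\bt_0,\bt_1)$ is $\min\bigl(\rho_{\prec_{(\bv,\xi,0)}}(\bt_0),\rho_{\prec_{(\bv,\xi,1)}}(\bt_1)\bigr)$, and that the product order is well-founded iff both factor orders are; taking $(\bt_0,\bt_1)=(\emptyset,\emptyset)$ gives \eqref{eq:rank-min}, and substituting \eqref{eq:rank-min} into the displayed formula for $\rho_{\prec_{\bv}}(\emptyset)$ yields \eqref{eq:val-recursion}.

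The only genuinely non-routine step is \eqref{eq:rank-min}, and two points need care there. First, one must use that the decision trees are \emph{complete} binary trees, so that the decomposition $\bt\mapsto(\bt_0,\bt_1)$ lands among equal-depth pairs; this is precisely why a plain minimum of the two child values appears and why the rank computation goes through cleanly. Second, the non-well-founded (value $\Omega$) case must be tracked in lockstep across the three positions, using that an infinite active tree at $\bv$ through the move $\xi$ is exactly a matched pair of infinite active trees at $(\bv,\xi,0)$ and $(\bv,\xi,1)$. A minor recurring nuisance, already present in \citet{universal_learning}, is that the set $\ms T_{\bv}$ of decision trees does not depend on the starting position whereas the active subset $\ms T_{\bv}^{\ms A}$ and the relation $\prec_{\bv}$ do, so when applying the splitting bijection one must be explicit about which position's activity set and ordering is in play.
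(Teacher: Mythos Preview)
Your proposal is correct and follows exactly the approach of \citet[Lemma~B.10]{universal_learning}, which is precisely what the paper does: it simply cites that lemma without reproducing a proof, since the game $\B$ here (with $P_A$'s action set $\wt\X$ and $P_L$'s action set $\{0,1\}$) fits directly into the abstract Gale--Stewart framework of \citet[Appendix~B]{universal_learning}. Your write-up correctly reconstructs that proof, including the recursion \eqref{eq:val-recursion}, the splitting bijection yielding \eqref{eq:rank-min}, and the handling of the $\Omega$ case; there is nothing to add.
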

Then, by Corollary \ref{coro:val_eq} and Lemma \ref{lem:val_greater}, the following corollary holds.
\begin{corollary} \label{coro:val_greater}
For any $0 \leq n<\infty$, $\bz \in \wmP_{n}$, and $\kappa \in \ORD$, we have $\val(\bz)>\kappa$ if and only if there exist $x \in \wt\X$ and $(y,y')\in\Yp$ such that $\val(\bz, x, y) \ge \kappa$ and $\val(\bz, x, y) \ge \kappa$ and $\val(\bz, x, y') \ge \kappa$.
\end{corollary}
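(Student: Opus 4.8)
The plan is to deduce this corollary directly from Lemma~\ref{lem:val_greater}, transporting that statement along the correspondence between positions of $\wb\B$ and positions of $\B$. Fix $\bz=(x_1,y_1,\dots,x_n,y_n)\in\wmP_n$, choose auxiliary labels $y_i'\neq y_i$, and set $\xi_i:=(x_i,y_i,y_i')$ and $\bv:=(\xi_1,0,\dots,\xi_n,0)\in\ms P_n$, so that $\val(\bz)=\val(\bv)$ by Definition~\ref{def:val_XY}. The key identity I would record first is that, by Corollary~\ref{coro:val_eq}, for every $x\in\X$, every $(a^0,a^1)\in\Yp$, and every $\eta\in\{0,1\}$ one has $\val(\bv,(x,a^0,a^1),\eta)=\val(\bz,x,a^\eta)$: appending the play $((x,a^0,a^1),\eta)$ to $\bv$ corresponds under Corollary~\ref{coro:val_eq} to appending $(x,a^\eta)$ to $\bz$, since the label coordinate $(x,a^0,a^1)(\eta+2)$ selected by $\eta$ is exactly $a^\eta$ (and $\xi_i(1)=x_i$, $\xi_i(2)=y_i$ for the unchanged prefix).

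With this identity in hand both directions are immediate. For the forward direction, if $\val(\bz)>\kappa$ then $\val(\bv)>\kappa$, so Lemma~\ref{lem:val_greater} yields some $\xi=(x,a^0,a^1)\in\wt\X$ with $\val(\bv,\xi,\eta)\ge\kappa$ for both $\eta\in\{0,1\}$; by the identity above this reads $\val(\bz,x,a^0)\ge\kappa$ and $\val(\bz,x,a^1)\ge\kappa$, and $(a^0,a^1)\in\Yp$ precisely because $\xi\in\wt\X$. For the converse, given $x\in\X$ and $(y,y')\in\Yp$ with $\val(\bz,x,y)\ge\kappa$ and $\val(\bz,x,y')\ge\kappa$, set $\xi:=(x,y,y')\in\wt\X$; the identity gives $\val(\bv,\xi,0)=\val(\bz,x,y)\ge\kappa$ and $\val(\bv,\xi,1)=\val(\bz,x,y')\ge\kappa$, so Lemma~\ref{lem:val_greater} gives $\val(\bv)>\kappa$, i.e.\ $\val(\bz)>\kappa$.

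I do not expect a genuine obstacle; the only point demanding care is the bookkeeping of the two-label encoding. Lemma~\ref{lem:val_greater} quantifies over a single triple $\xi=(x,a^0,a^1)$ that already packages both candidate labels, so one must check that its two branches $\eta=0,1$ line up exactly with the two labels of the pair being produced, which the encoding $\xi(\eta+2)=a^\eta$ together with Corollary~\ref{coro:val_eq} makes automatic. Had one instead presented $\xi$ with the labels in the other order, or with a spurious padding label, the invariance supplied by Lemma~\ref{lem:val_equal} would absorb that freedom, but it is not needed for the argument above.
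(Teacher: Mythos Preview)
Your proposal is correct and follows exactly the approach the paper indicates: the paper merely states that the corollary holds ``by Corollary~\ref{coro:val_eq} and Lemma~\ref{lem:val_greater}'' without spelling out the details, and your argument is precisely the natural unpacking of that citation. The bookkeeping identity $\val(\bv,(x,a^0,a^1),\eta)=\val(\bz,x,a^\eta)$ via Corollary~\ref{coro:val_eq} is the only thing to check, and you handle it correctly.
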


Define 
\begin{align*}
\wmW:=\{(x_1,y_1,\dots)\in(\X\times\Y)^{\infty}:\cH_{x_1,y_1,\dots,x_\tau,y_\tau}=\emptyset\textup{ for some }0\le \tau<\infty\}.
\end{align*}
Then, we can show that $\wmW$ is coanalytic under the assumption that $\cH$ is measurable.
\begin{lemma} \label{lem:coanalytic}
If $\X$ and $\Y$ are Polish and $\cH$ is measurable, then $\wmW$ is coanalytic.
\end{lemma}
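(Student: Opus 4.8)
The plan is to pass to the complement and show that $\wmW^c$ is analytic, which is equivalent to $\wmW$ being coanalytic. The key observation is that $\wmW^c$ is precisely the set of infinite sequences $(x_1,y_1,x_2,y_2,\dots)$ such that for every $\tau\in\N_0$ the finite prefix is consistent with some hypothesis, i.e.\ $\cH_{x_1,y_1,\dots,x_\tau,y_\tau}\neq\emptyset$. I would write this as a countable intersection $\wmW^c=\bigcap_{\tau\in\N_0}A_\tau$, where $A_\tau$ collects the sequences whose first $\tau$ labelled points are realized by some $h\in\cH$ (and $A_0$ is the whole space, since $\cH\neq\emptyset$ as $\cH$ is nondegenerate). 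Since $(\X\times\Y)^\infty$ is Polish and analytic sets are closed under countable intersections, it suffices to prove that each $A_\tau$ is analytic.

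To handle a single $A_\tau$, I would invoke the measurability of $\cH$: fix a Polish space $\Theta$ and a Borel map $\sfh:\Theta\times\X\to\Y$ with $\cH=\{\sfh(\theta,\cdot):\theta\in\Theta\}$. Then $\cH_{x_1,y_1,\dots,x_\tau,y_\tau}\neq\emptyset$ iff there is a parameter $\theta\in\Theta$ with $\sfh(\theta,x_i)=y_i$ for all $i\in[\tau]$, so $A_\tau$ is the image of the set
\[
B_\tau:=\{((x_1,y_1,\dots),\theta):\sfh(\theta,x_i)=y_i\text{ for all }i\in[\tau]\}\subseteq(\X\times\Y)^\infty\times\Theta
\]
under the continuous coordinate projection onto $(\X\times\Y)^\infty$. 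Because the projection of a Borel subset of a Polish space is analytic, it remains only to check that $B_\tau$ is Borel.

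For the Borelness of $B_\tau$: for each fixed $i\in[\tau]$, the map $((x_1,y_1,\dots),\theta)\mapsto(\sfh(\theta,x_i),y_i)$ is Borel, since the coordinate projections onto $x_i$ and onto $y_i$ are continuous and $\sfh$ is Borel; and since $\Y$ is countable the diagonal of $\Y\times\Y$ is clopen, so the set where $\sfh(\theta,x_i)=y_i$ is Borel. Intersecting over $i\in[\tau]$ keeps this Borel, giving $B_\tau$ Borel and hence $A_\tau$ analytic, so $\wmW^c=\bigcap_\tau A_\tau$ is analytic and $\wmW$ is coanalytic.

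I do not expect a serious obstacle here; this is a routine descriptive set theory argument. The only point requiring a little care is that $\cH$, viewed as a subset of $\Y^\X$, need not itself be analytic, so one must phrase ``consistency with $\cH$'' through the parametrization $\Theta$ rather than through $\cH$ directly; once that is set up, the remainder is bookkeeping with projections and preimages of Borel sets.
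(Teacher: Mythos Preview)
Your proposal is correct and essentially identical to the paper's proof: both pass to the complement, write $\wmW^c=\bigcap_{\tau}\bigcup_{\theta\in\Theta}\bigcap_{t\le\tau}\{(x_1,y_1,\dots):\sfh(\theta,x_t)=y_t\}$, verify the innermost sets are Borel via measurability of $\sfh$ and of the diagonal in $\Y\times\Y$, and conclude by projection over $\theta$ and countable intersection over $\tau$. One small remark: the lemma is stated for Polish $\Y$, not just countable $\Y$, so you should appeal to the diagonal being \emph{closed} (true in any Hausdorff, hence Polish, space) rather than clopen; this changes nothing in the argument.
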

\begin{proof}
According to Definition \ref{def:measurable}, we have
\begin{align*}
\wmW^c=&\left\{(x_1,y_1,\dots)\in(\X\times\Y)^{\infty}:\cH_{x_1,y_1,\dots,x_\tau,y_\tau}\neq\emptyset\textup{ for all }\tau<\infty\right\}
\\=&\cap_{\tau=1}^\infty\cup_{\theta\in\Theta}\cap_{t=1}^{\tau}\left\{(x_1,y_1,\dots)\in(\X\times\Y)^{\infty}:\sfh(\theta,x_t)=y_t\right\}.
\end{align*}
For any $h\in\cH$ and $1\le t<\infty$, define $\wt h_t:\Theta\times(\X\times\Y)^{\infty}\rightarrow\Y$, $(\theta,x_1,y_1,\dots)\mapsto h(\theta,x_t)$ and $l_t:\Theta\times(\X\times\Y)^{\infty}\rightarrow\mathbb{R}$, $(\theta,x_1,y_1,\dots)\mapsto \indi\{y_t\neq \wt h_t(\theta,x_1,y_1,\dots)\}$.

Since $\cH$ is measurable, $h\in\cH$ is Borel-measurable. Thus, $\wt h_t$ is also Borel-measurable. 
Since the mapping $\Theta\times(\X\times\Y)^{\infty}\rightarrow\Y$, $(\theta,x_1,y_1,\dots)\mapsto y_t$ is Borel-measurable, the mapping $\Theta\times(\X\times\Y)^{\infty}\rightarrow\Y^2$, $(\theta,x_1,y_1,\dots)\mapsto (y_t,\wt h_t(\theta,x_1,y_1,\dots))$ is also Borel-measurable, which, together with the fact that the mapping $\Y^2\rightarrow\{0,1\}$, $(y,y')\mapsto\indi\{y=y'\}$ is Borel-measurable, implies that $l_t$ is Borel-measurable.
Since
\begin{align*}
\left\{(\theta,x_1,y_1,\dots)\in\Theta\times(\X\times\Y)^{\infty}:h(\theta,x_t)=y_t\right\}
=l_t^{-1}(\{1\}),
\end{align*}
we have that $\left\{(\theta,x_1,y_1,\dots)\in\Theta\times(\X\times\Y)^{\infty}:h(\theta,x_t)=y_t\right\}$ is Borel for any $1\le t<\infty$ and $h\in \cH$. Thus, for any $1\le \tau<\infty$, $\cap_{t=1}^\tau\left\{(\theta,x_1,y_1,\dots)\in\Theta\times(\X\times\Y)^{\infty}:h(\theta,x_t)=y_t\right\}$ is Borel.
Since the union over $\theta\in\Theta$ corresponds to a projection and the intersection over $\tau$ is countable, the set $\wmW^c$ is analytic. 
\end{proof}

Define
\begin{align*}
\wmA_n:=\cup_{\bw\in(\X\times \Y)^{\infty}}
\left\{\bz\in\wmP_n:(\bz,\bw)\in\wmW^c\right\}.
\end{align*}
Since $\X$ and $\Y$ are Polish, we have that $\wmP_n$ is Polish for any $0\le n\le \infty$. 
If $\wmW$ is coanalytic, then $\wmA_n$ is an analytic subset of $\wmP_n$ for any $0\le n<\infty$. 

Define $\wmA:=\cup_{0\le n<\infty}\wmA_n$. We have the following lemma.
\begin{lemma} \label{lem:val-1}
$\val(\bz)> -1$ for any $\bz\in\wmA$.
\end{lemma}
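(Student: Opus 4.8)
The plan is to reduce the claim to the analogous fact for the binary game $\B$, where the game value was originally defined, and then read off $\val(\bz)>-1$ from the clause ``$\val(\bv)=-1$ iff $\bv\notin\ms A$'' of Definition \ref{def:val-binary}. Concretely, recall from Definition \ref{def:val_XY} that for $\bz=(x_1,y_1,\dots,x_n,y_n)\in\wmP_n$ one has $\val(\bz)=\val(\bv)$ where $\bv=(\xi_1,0,\dots,\xi_n,0)$, $\xi_i=(x_i,y_i,y_i')$, and $y_i'\in\Y$ is any label with $y_i'\neq y_i$ (such a label exists since $|\Y|\ge 2$, and the choice is immaterial by Lemma \ref{lem:val_equal}). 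So it suffices to show that $\bz\in\wmA$ forces the associated position $\bv$ to lie in $\ms A$.

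First I would handle $\bz=\emptyset$ separately: for this position $\val(\emptyset)>-1$ is exactly Lemma \ref{lem:val_emptyset}. Now fix $n\ge 1$ and $\bz=(x_1,y_1,\dots,x_n,y_n)\in\wmA_n$. By definition of $\wmA_n$ there is an infinite continuation $\bw=(x_{n+1},y_{n+1},x_{n+2},y_{n+2},\dots)$ with $(\bz,\bw)\in\wmW^c$, i.e.\ $\cH_{x_1,y_1,\dots,x_\tau,y_\tau}\neq\emptyset$ for every finite $\tau$. I would lift $\bw$ to a continuation in $\B$: for each $m>n$ pick $y_m'\in\Y$ with $y_m'\neq y_m$, set $\xi_m:=(x_m,y_m,y_m')$ and $\eta_m:=0$, and let $\tilde\bw:=(\xi_{n+1},\eta_{n+1},\xi_{n+2},\eta_{n+2},\dots)\in(\wt\X\times\{0,1\})^\infty$. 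Since every $\eta_i=0$, in the concatenated play $(\bv,\tilde\bw)$ we have $\xi_i(1)=x_i$ and $\xi_i(\eta_i+2)=\xi_i(2)=y_i$ for all $i$, so for each $\tau$,
$$\cH_{\xi_1(1),\xi_1(\eta_1+2),\dots,\xi_\tau(1),\xi_\tau(\eta_\tau+2)}=\cH_{x_1,y_1,\dots,x_\tau,y_\tau}\neq\emptyset,$$
which means $(\bv,\tilde\bw)\notin\W$. Hence $\bv\in\ms A_n\subseteq\ms A$.

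With $\bv\in\ms A$, Definition \ref{def:val-binary} gives $\val(\bv)\in\ORD\cup\{\Omega\}$, so $\val(\bv)>-1$ in $\ORD^*$, and therefore $\val(\bz)=\val(\bv)>-1$. The argument involves no real obstacle: the only points requiring care are that the auxiliary labels $y_i'$ exist (true as $|\Y|\ge 2$) and that $\val$ is well-defined on $\wmP$ independently of their choice (Lemma \ref{lem:val_equal}); the rest is just matching the winning conditions of $\B$ and $\wb\B$ through the identification $\xi_i=(x_i,y_i,y_i')$, $\eta_i=0$.
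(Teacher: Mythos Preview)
Your proof is correct and follows essentially the same approach as the paper's own proof: both lift $\bz\in\wmA_n$ and its witnessing continuation to a position in $\ms A_n$ via $\xi_i=(x_i,y_i,y_i')$ with $\eta_i=0$, then invoke Definition~\ref{def:val-binary} and Definition~\ref{def:val_XY}. The only cosmetic difference is that you split off the case $\bz=\emptyset$ and cite Lemma~\ref{lem:val_emptyset}, whereas the paper handles all $n\ge 0$ uniformly.
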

\begin{proof}
Since $\bz\in\wmA$, there exists $0\le n<\infty$ such that $\bz\in\wmA_n$. There exist $(x_1,y_1,\dots,x_n,y_n)\in\wmP_n$ and $(x_{n+1},y_{n+1},x_{n+2},y_{n+2},\dots)\in(\X\times\Y)^{\infty}$ such that $\bz=(x_1,y_1,\dots,x_n,y_n)$ and 
$$
\cH_{x_1,y_1,\dots,x_{\tau},y_{\tau}}\neq\emptyset
$$ 
for all $1\le \tau<\infty$.

For any $1\le i<\infty$, define $\eta_i=0$ and $\xi_i=(x_i,y_i,y_i')$ for arbitrary $y_i'\in\Y$ such that $y_i'\neq y_i$. 
It follows that
$\cH_{\xi_1(1),\xi_1(\eta_1+2),\dots,\xi_\tau(1),\xi_\tau(\eta_\tau+2)}\neq \emptyset$ for all $1\le \tau<\infty$. Thus, 
$$
\bv:=(\xi_1,\eta_1,\dots,\xi_n,\eta_n)\in\ms A_n
$$ 
and by Definition \ref{def:val-binary} and Definition \ref{def:val_XY}, $\val(\bz)=\val(\bv)>-1$.
\end{proof}
Now, the corollary below holds.
\begin{corollary} \label{coro:A_analytic}
If $\wmW$ is coanalytic, then the set 
\begin{align*}
\wmA^{\kappa}_n:=\{\bz\in\wmA_n:\val(\bz)>\kappa\}
\end{align*}
is analytic for every $0\le n<\infty$ and $-1\le \kappa<\omega_1$.
\end{corollary}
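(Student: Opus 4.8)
The plan is to prove this by transfinite induction on $\kappa$, treating all lengths $n$ simultaneously; this mirrors the corresponding step for binary classification in \citet[Appendix~B]{universal_learning}, with the multiclass content entering only through the branching condition of Corollary~\ref{coro:val_greater}. Concretely, I would show: for every $-1\le\kappa<\omega_1$ and every $0\le n<\infty$, the set $\wmA^\kappa_n$ is analytic.

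For the base case $\kappa=-1$, Lemma~\ref{lem:val-1} gives $\wmA^{-1}_n=\wmA_n$, and $\wmA_n$ is analytic whenever $\wmW$ is coanalytic: it is the image under the coordinate projection onto $\wmP_n$ of the analytic set $\wmW^c\cap\big(\wmP_n\times(\X\times\Y)^\infty\big)$, as already recorded before Lemma~\ref{lem:val-1}. For the inductive step, I would fix $0\le\kappa<\omega_1$, assume the claim for all $\kappa'<\kappa$ and all $m$, and first establish that the threshold set $B:=\{w\in\wmP_{n+1}:\val(w)\ge\kappa\}$ is analytic. The case split is: if $\kappa=0$ then $B=\wmA^{-1}_{n+1}$ since $\val(w)\ge 0\iff\val(w)>-1$; if $\kappa=\lambda+1$ is a successor then $\val(w)\ge\kappa\iff\val(w)>\lambda$, so $B=\wmA^{\lambda}_{n+1}$; and if $\kappa$ is a limit ordinal then, choosing an increasing sequence $\kappa_0<\kappa_1<\cdots$ cofinal in $\kappa$ (possible since $\kappa<\omega_1$) and invoking Lemma~\ref{lem:val_max} so that game values are either $\Omega$ or countable ordinals, one gets $\val(w)\ge\kappa\iff\val(w)>\kappa_i$ for all $i\in\N$, hence $B=\bigcap_{i\in\N}\wmA^{\kappa_i}_{n+1}$. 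In every case $B$ is analytic, by the inductive hypothesis together with closure of analytic sets under countable intersection.

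Next I would use Corollary~\ref{coro:val_greater}: for $\bz\in\wmP_n$, one has $\val(\bz)>\kappa$ exactly when there exist $x\in\X$ and $(y,y')\in\Yp$ with $(\bz,x,y)\in B$ and $(\bz,x,y')\in B$; since $\kappa\ge 0$ and $\val(\bz)=-1$ for $\bz\in\wmP_n\setminus\wmA_n$, every $\bz$ with $\val(\bz)>\kappa$ lies in $\wmA_n$, so this description equals $\wmA^\kappa_n$. The set $\{(\bz,x,(y,y'))\in\wmP_n\times\X\times\Yp:(\bz,x,y)\in B\text{ and }(\bz,x,y')\in B\}$ is the intersection of the preimages of the analytic set $B$ under the two continuous maps $(\bz,x,(y,y'))\mapsto(\bz,x,y)$ and $(\bz,x,(y,y'))\mapsto(\bz,x,y')$ from the Polish space $\wmP_n\times\X\times\Yp$ into $\wmP_{n+1}$ (these spaces are Polish because $\Y$ is countable), hence analytic; and $\wmA^\kappa_n$ is its projection onto $\wmP_n$, hence analytic. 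This closes the induction.

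The step I expect to be the main obstacle is the limit-ordinal case. Written naively, $\{w:\val(w)\ge\kappa\}=\bigcap_{\kappa'<\kappa}\wmA^{\kappa'}_{n+1}$ is indexed by all ordinals below $\kappa$, and an uncountable intersection of analytic sets need not be analytic; the restriction to $\kappa<\omega_1$, together with Lemma~\ref{lem:val_max}, is precisely what guarantees a cofinal $\omega$-sequence so that this intersection collapses to a countable one. The remaining points — that $\val(\bz)=-1$ off $\wmA_n$, so the ``$>\kappa$'' description really cuts out $\wmA^\kappa_n$, and that $\val=\Omega$ is consistently contained in every $B$ since $\Omega$ exceeds every ordinal — are routine once one tracks Definitions~\ref{def:val-binary} and~\ref{def:val_XY}.
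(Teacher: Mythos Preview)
Your proposal is correct and follows essentially the same approach as the paper's proof: transfinite induction on $\kappa$, using Corollary~\ref{coro:val_greater} to express $\wmA_n^\kappa$ as a projection of a set built from $\wmA_{n+1}^{\lambda}$ for $\lambda<\kappa$. The paper avoids your successor/limit case split by directly writing the relevant set as $\bigcap_{-1\le\lambda<\kappa}\bigl(\wmA_{n+1}^{\lambda}\cap f(\wmA_{n+1}^{\lambda})\bigr)\cap(\wmA_n\times\X\times\Yp)$ (where $f$ swaps the two $\Y$-coordinates) and observing that this intersection is countable whenever $\kappa<\omega_1$, so your appeal to Lemma~\ref{lem:val_max} and a cofinal $\omega$-sequence are not needed.
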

\begin{proof}
Since $\wmW$ is coanalytic, we have that $\wmA_n$ is analytic.
For $\kappa=-1$, since $\val(\bz)> -1$ for any $\bz\in\wmA_n$ by Lemma \ref{lem:val-1}, we have that $\wmA_n^{-1}=\wmA_n$ is analytic for any $0\le n<\infty$.
For $\kappa>-1$,
suppose that for all $-1\le \lambda<\kappa$, $\wmA_n^{\lambda}$ is analytic for every $0\le n<\infty$. 
According to Corollary \ref{coro:val_greater}, for any $0\le n<\infty$, we have
\begin{align*}
\wmA_n^{\kappa}=&\cup_{(x,y,y')\in\X\times\Yp}
\left\{\bz\in\wmA_n:\val(\bz,x,y)\ge \kappa\textup{ and }\val(\bz,x,y')\ge \kappa\right\}
\\=&
\cup_{(x,y,y')\in\X\times\Y^2}\left(
\left\{\bz\in\wmA_n:\val(\bz,x,y)\ge\kappa\textup{ and }\val(\bz,x,y')\ge\kappa\textup{ and }y\neq y'\right\}
\right)
\end{align*}
Consider the function $f:\ \wmP_n\times\X\times\Y^2\rightarrow\wmP_n\times\X\times\Y^2$, $(\bz,x,y^0,y^1)\mapsto(\bz,x,y^1,y^0)$. 
$f$ is a continuous function. 
Since by the induction hypothesis, $\wmA_{n+1}^{\lambda}$ is analytic for any $-1\le \lambda<\kappa$, we have that $f(\wmA_{n+1}^{\lambda})$ is also analytic.
Thus, 
\begin{align*}
&\left\{(\bz,x,y,y')\in\wmA_n\times\X\times\Y^2:\val(\bz,x,y)\ge\kappa\textup{ and }\val(\bz,x,y')\ge\kappa\textup{ and }y\neq y'\right\}
\\=&
\cap_{-1\le \lambda<\kappa}
\left(\wmA_{n+1}^{\lambda}\cap f(\wmA_{n+1}^{\lambda})\right)
\cap\wmA_n\times\X\times\Yp
\end{align*} 
is also analytic.
Since $\X$ and $\Y$ are Polish spaces, we have that
$\wmA_n^{\kappa}$ is analytic.
By induction, $\wmA_n^{\kappa}$ for any $0\le n<\infty$ and $-1\le \kappa<\omega_1$.
\end{proof}

For any $0\le n<\infty$, define 
\begin{align*}
\wmD_{n+1}:=\left\{(\bz,x,y)\in\wmP_{n+1}:\val(\bz,x,y)<\min\{\val(\bz),\val(\emptyset)\}\right\}.
\end{align*}
Then, we can show the following corollary.
\begin{corollary}
If $\val(\emptyset)<\omega_1$ and $\wmW$ is coanalytic, then $\wmD_{n+1}$ is universally measurable for any $0\le n<\infty$.
\end{corollary}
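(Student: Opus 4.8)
The plan is to exhibit $\wmD_{n+1}$ as a countable union of sets, each of which is the intersection of an analytic set with a coanalytic set, and then to invoke the fact that the universally measurable sets of a Polish space form a $\sigma$-algebra containing all analytic (hence all coanalytic) sets.

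Write $L:=\val(\emptyset)$. By hypothesis $L<\omega_1$, and $L>-1$ by Lemma~\ref{lem:val_emptyset}, so $L$ is a countable ordinal and $\{\alpha:-1\le\alpha<L\}$ is a countable index set. The first step is to verify the decomposition
\begin{align*}
\wmD_{n+1}=\bigcup_{-1\le\alpha<L}\Big(&\big\{(\bz,x,y)\in\wmP_{n+1}:\val(\bz,x,y)\le\alpha\big\}
\\&\cap\,\big(\wmA_n^{\alpha}\times\X\times\Y\big)\Big).
\end{align*}
For ``$\subseteq$'': if $(\bz,x,y)\in\wmD_{n+1}$ then $\val(\bz,x,y)<L<\omega_1$, so $\val(\bz,x,y)\neq\Omega$; taking $\alpha:=\val(\bz,x,y)$ (read as $-1$ in the non-active case) gives $-1\le\alpha<L$, $\val(\bz,x,y)\le\alpha$, and $\val(\bz)>\val(\bz,x,y)=\alpha$, i.e. $\bz\in\wmA_n^{\alpha}$. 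For ``$\supseteq$'': if $\val(\bz,x,y)\le\alpha<L$ and $\val(\bz)>\alpha$, then $\val(\bz,x,y)<\min\{\val(\bz),\val(\emptyset)\}$, so $(\bz,x,y)\in\wmD_{n+1}$.

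The second step is to check universal measurability of each term. Since $\alpha<L<\omega_1$ and $\wmW$ is coanalytic, Corollary~\ref{coro:A_analytic} gives that $\wmA_n^{\alpha}$ and $\wmA_{n+1}^{\alpha}$ are analytic; as $\X\times\Y$ is Polish, $\wmA_n^{\alpha}\times\X\times\Y$ is analytic in $\wmP_{n+1}$. Moreover, since $\alpha\ge-1$, every non-active position $v$ has $\val(v)=-1\not>\alpha$, so $\{v\in\wmP_{n+1}:\val(v)>\alpha\}=\wmA_{n+1}^{\alpha}$; hence $\{(\bz,x,y)\in\wmP_{n+1}:\val(\bz,x,y)\le\alpha\}=\wmP_{n+1}\setminus\wmA_{n+1}^{\alpha}$ is coanalytic. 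Analytic and coanalytic subsets of a Polish space are universally measurable, and the universally measurable sets form a $\sigma$-algebra, so each term of the union is universally measurable; since the union is countable, $\wmD_{n+1}$ is universally measurable.

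The only delicate points I anticipate are the bookkeeping around the exceptional values $-1$ and $\Omega$: one must use $L<\omega_1$ to conclude that no position in $\wmD_{n+1}$ has value $\Omega$ (so the countable union over $\alpha<L$ really exhausts $\wmD_{n+1}$), and one must observe that the condition ``$\val\le\alpha$'' is precisely the complement of the analytic set $\wmA_{n+1}^{\alpha}$. Everything else is a routine invocation of Corollary~\ref{coro:A_analytic} together with closure of the universally measurable $\sigma$-algebra under complementation and countable unions.
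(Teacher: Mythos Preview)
Your proof is correct and follows essentially the same approach as the paper: both decompose $\wmD_{n+1}$ as the countable union over $-1\le\alpha<\val(\emptyset)$ of $(\wmA_{n+1}^{\alpha})^c\cap(\wmA_n^{\alpha}\times\X\times\Y)$, then invoke Corollary~\ref{coro:A_analytic} and closure of the universally measurable $\sigma$-algebra. Your write-up is in fact slightly more careful than the paper's in verifying the two inclusions and in justifying why $\{\val\le\alpha\}$ coincides with $(\wmA_{n+1}^{\alpha})^c$.
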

\begin{proof}
By the definition of $\wmD_{n+1}$, we have
\begin{align*}
\wmD_{n+1}=&\cup_{-1\le \kappa<\val(\emptyset)}\left\{(\bz,x,y)\in\wmP_{n+1}:\val(\bz,x,y)\le\kappa\textup{ and }\val(\bz)>\kappa\right\}
\\=&
\cup_{-1\le \kappa<\val(\emptyset)}\left\{(\bz,x,y)\in\wmP_{n+1}:(\bz,x,y)\in(\wmA_{n+1}^{\kappa})^c\textup{ and }\bz\in\wmA_n^{\kappa}\right\}
\\=&
\cup_{-1\le \kappa<\val(\emptyset)}\left((\wmA_{n+1}^{\kappa})^c\cap\wmA_n^{\kappa}\times\X\times\Y\right)
\end{align*}
with $\wmA_n^\kappa$ defined in Corollary \ref{coro:A_analytic}. According to Corollary \ref{coro:A_analytic}, $(\wmA_{n+1}^{\kappa})^c\cap\wmA_n^{\kappa}\times\X\times\Y$ is universally measurable for any $-1\le \kappa<\omega_1$. 
Since $\val(\emptyset)<\omega_1$, the union over $-1\le\kappa<\val(\emptyset)$ is countable. Thus, $\wmD_{n+1}$ is universally measurable.
\end{proof}

However, for the universal measurability of the learning strategy we defined, the above corollary does not directly apply. We need more refined analysis of the projection set of $\wmD_{n}$. 
For any $0\le n<\infty$ and $y\in\Y$, define 
\begin{align*}
\wmD_{n+1}^y:=&\left\{(\bz,x)\in\wmP_{n}\times\X:\val(\bz,x,y)<\min\{\val(\bz),\val(\emptyset)\}\right\}
\\=&\left\{(\bz,x)\in\wmP_{n}\times\X:(\bz,x,y)\in\wmD_{n+1}\right\}.
\end{align*}
Then, we can proceed to show the following corollary.
\begin{corollary} \label{coro:wmDy_meas}
If $\val(\emptyset)<\omega_1$ and $\wmW$ is coanalytic, then $\wmD^y_{n+1}$ is universally measurable for any $0\le n<\infty$ and $y\in\Y$.
\end{corollary}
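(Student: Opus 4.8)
The plan is to reuse the decomposition strategy from the proof of the preceding corollary, adapted to the fact that the last label is now frozen at $y$. First I would note that on $\wmD_{n+1}^y$ the value $\val(\bz,x,y)$ cannot be $\Omega$, since $\min\{\val(\bz),\val(\emptyset)\}\le\val(\emptyset)<\omega_1<\Omega$; so there $\val(\bz,x,y)$ is an ordinal or $-1$, and I can write
\begin{align*}
\wmD_{n+1}^y=\bigcup_{-1\le\kappa<\val(\emptyset)}\big\{(\bz,x)\in\wmP_n\times\X:\val(\bz,x,y)\le\kappa\ \text{and}\ \val(\bz)>\kappa\big\},
\end{align*}
since $\kappa=\val(\bz,x,y)$ witnesses membership in one direction and $\val(\bz,x,y)\le\kappa<\min\{\val(\bz),\val(\emptyset)\}$ in the other.

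Next I would treat the two conditions defining each term separately. For $\val(\bz)>\kappa$: using Lemma~\ref{lem:val-1} together with Definitions~\ref{def:val-binary} and~\ref{def:val_XY} (as in the proof of Corollary~\ref{coro:A_analytic}), $\val(\bz)=-1$ exactly when $\bz\notin\wmA_n$, so for $\kappa\ge-1$ we have $\{\bz\in\wmP_n:\val(\bz)>\kappa\}=\wmA_n^\kappa$ and hence $\{(\bz,x):\val(\bz)>\kappa\}=\wmA_n^\kappa\times\X$, which is analytic by Corollary~\ref{coro:A_analytic}. For $\val(\bz,x,y)\le\kappa$: I would pull this condition back along the continuous embedding $\iota_y\colon\wmP_n\times\X\to\wmP_{n+1}$, $(\bz,x)\mapsto(\bz,x,y)$, so that the set equals $\iota_y^{-1}\big((\wmA_{n+1}^\kappa)^c\big)$; since $\wmA_{n+1}^\kappa$ is analytic, its complement is coanalytic, and continuous preimages of coanalytic sets are coanalytic.

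Finally, each term of the union is an analytic set intersected with a coanalytic set, hence universally measurable, and because $\val(\emptyset)<\omega_1$ the union ranges over a countable set of ordinals, so $\wmD_{n+1}^y$ is universally measurable. The only genuinely new point compared with the preceding corollary --- and the step I would be most careful about --- is the slicing at $y$: one cannot simply intersect with $\wmA_{n+1}^\kappa$, which lives in $\wmP_{n+1}$ and implicitly quantifies over all labels, so the fixed-$y$ fibre must be extracted via $\iota_y$, which yields a coanalytic rather than analytic set; this is exactly why the conclusion is universal measurability and not analyticity.
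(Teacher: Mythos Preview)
Your proof is correct and follows essentially the same approach as the paper: the same countable decomposition over $-1\le\kappa<\val(\emptyset)$, the same identification $\{\bz:\val(\bz)>\kappa\}=\wmA_n^\kappa$, and the same observation that the fixed-$y$ slice of $(\wmA_{n+1}^\kappa)^c$ is coanalytic. The only cosmetic difference is that you extract the slice via the continuous section map $\iota_y$, whereas the paper writes it as the complement of a projection of $\wmA_{n+1}^\kappa\cap(\wmP_n\times\X\times\{y\})$; these are two phrasings of the same fact.
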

\begin{proof}
By the definition of $\wmD_{n+1}^y$, we have
\begin{align*}
\wmD_{n+1}^y=&\cup_{-1\le \kappa<\val(\emptyset)}\left\{(\bz,x)\in\wmP_{n}\times\X:\val(\bz,x,y)\le\kappa\textup{ and }\val(\bz)>\kappa\right\}
\\=&
\cup_{-1\le \kappa<\val(\emptyset)}\left\{(\bz,x)\in\wmP_{n}\times\X:(\bz,x,y)\in(\wmA_{n+1}^{\kappa})^c\textup{ and }\bz\in\wmA_n^{\kappa}\right\}
\\=&
\cup_{-1\le \kappa<\val(\emptyset)}\left(\left\{(\bz,x)\in\wmP_{n}\times\X:(\bz,x,y)\in(\wmA_{n+1}^{\kappa})^c\right\}\cap\wmA_n^{\kappa}\times\X\right)
\end{align*}
with $\wmA_n^\kappa$ defined in Corollary \ref{coro:A_analytic}. 
Note that
\begin{align*}
&\left\{(\bz,x)\in\wmP_{n}\times\X:(\bz,x,y)\in(\wmA_{n+1}^{\kappa})^c\right\}
\\=&
\left\{(\bz,x)\in\wmP_{n}\times\X:(\bz,x,y)\in\wmA_{n+1}^{\kappa}\right\}^c
\\=&
\left(\cup_{y'\in\Y}\left\{(\bz,x)\in\wmP_{n}\times\X:(\bz,x,y')\in\wmA_{n+1}^{\kappa}\cap\wmP_n\times\X\times\{y\}\right\}\right)^c
\end{align*}
According to Corollary \ref{coro:A_analytic}, $\wmA_{n}^{\kappa}\times\X$ is an analytic subset of $\wmP_n\times\X$ and
$\wmA^{\kappa}_{n+1}\cap\wmP_n\times\X\times\{y\}$ is an analytic subset of $\wmP_{n+1}$ for any $-1\le \kappa<\omega_1$. 
Thus, $\left\{(\bz,x)\in\wmP_{n}\times\X:(\bz,x,y)\in(\wmA_{n+1}^{\kappa})^c\right\}$ is coanalytic for any $-1\le \kappa<\omega_1$.
Since $\val(\emptyset)<\omega_1$, the union over $-1\le\kappa<\val(\emptyset)$ is countable. Thus, $\wmD_{n+1}^y$ is universally measurable.
\end{proof}

For any $0\le n<\infty$, $\bz\in\wmP_n$, and $y\in\Y$, define 
\begin{align*}
\wmD_{n+1}^{\bz,y}:=&\left\{
x\in\X: \val(\bz,x,y)<\min\{\val(\bz),\val(\emptyset)\}\right\}
\\=&\left\{
x\in\X: (\bz,x,y)\in\wmD_{n+1}\right\}.
\end{align*}
Then, we can show the following corollary.
\begin{corollary} \label{coro:wmDzy_meas}
If $\val(\emptyset)<\omega_1$ and $\wmW$ is coanalytic, then $\wmD^{\bz,y}_{n+1}$ is universally measurable for any $0\le n<\infty$, $\bz\in\wmP_n$, and $y\in\Y$.
\end{corollary}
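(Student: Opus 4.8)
The plan is to imitate the proof of Corollary~\ref{coro:wmDy_meas} almost verbatim, but to pass to the section at the fixed position $\bz$ \emph{before} forming the countable union over ordinals, so that the sectioning operation is applied at the level of analytic sets (where sections are well behaved) rather than at the level of universally measurable sets (where they are not).

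First I would unwind the definitions: $\wmD_{n+1}^{\bz,y}=\{x\in\X:\val(\bz,x,y)<\min\{\val(\bz),\val(\emptyset)\}\}$, and then decompose over ordinals exactly as in the preceding corollaries, obtaining
\[
\wmD_{n+1}^{\bz,y}=\bigcup_{-1\le\kappa<\val(\emptyset)}\bigl\{x\in\X:(\bz,x,y)\notin\wmA_{n+1}^{\kappa}\ \text{and}\ \bz\in\wmA_n^{\kappa}\bigr\}.
\]
Since $\bz$ is fixed, the clause ``$\bz\in\wmA_n^{\kappa}$'' does not depend on $x$: for each $\kappa$ the corresponding set in the union is either $\emptyset$ (when $\bz\notin\wmA_n^{\kappa}$) or equals $\X\setminus E_\kappa$, where $E_\kappa:=\{x\in\X:(\bz,x,y)\in\wmA_{n+1}^{\kappa}\}$.

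The key step is to note that each $E_\kappa$ is analytic. Indeed, by Corollary~\ref{coro:A_analytic}, $\wmA_{n+1}^{\kappa}$ is an analytic subset of the Polish space $\wmP_{n+1}=(\X\times\Y)^{n+1}$ for every $-1\le\kappa<\omega_1$ (and all relevant $\kappa$ fall in this range since $\val(\emptyset)<\omega_1$), and $E_\kappa$ is obtained from $\wmA_{n+1}^{\kappa}$ by fixing every coordinate except the $\X$-entry of the last pair; sections of analytic sets along coordinate fibres are analytic. Hence $\X\setminus E_\kappa$ is coanalytic and therefore universally measurable. Finally, because $\val(\emptyset)<\omega_1$, the index set $\{\kappa:-1\le\kappa<\val(\emptyset)\}$ is countable, so $\wmD_{n+1}^{\bz,y}$ is a countable union of universally measurable sets, hence universally measurable.

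I expect the only genuine subtlety --- and the reason this is stated as a separate corollary rather than absorbed into Corollary~\ref{coro:wmDy_meas} --- to be precisely that one cannot directly take a section of the universally measurable set $\wmD_{n+1}^y$ (resp.\ $\wmD_{n+1}$): sections of universally measurable sets need not be universally measurable. The remedy, as above, is to commute the section past the countable union and apply it while the building blocks $\wmA_{n+1}^{\kappa}$ are still analytic, using closure of the analytic class under coordinate sections. The remaining points --- constancy of the ``$\bz\in\wmA_n^{\kappa}$'' clause in $x$, closure of universal measurability under complements and countable unions, and countability of the ordinal index set --- are routine.
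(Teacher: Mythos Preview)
Your proposal is correct and follows essentially the same approach as the paper: decompose over ordinals $-1\le\kappa<\val(\emptyset)$, observe that the clause ``$\bz\in\wmA_n^\kappa$'' is constant in $x$, and show that each remaining piece $\{x:(\bz,x,y)\in\wmA_{n+1}^\kappa\}$ is analytic (the paper spells this out as intersecting with the closed set $\{\bz\}\times\X\times\{y\}$ and projecting, which is exactly the proof that coordinate sections of analytic sets are analytic). Your diagnosis of the subtlety---that one must section at the analytic level rather than the universally measurable level---is also correct and is precisely why the corollary is stated separately.
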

\begin{proof}
By the definition of $\wmD_{n+1}^{\bz,y}$, we have
\begin{align*}
\wmD_{n+1}^{\bz,y}=&\cup_{-1\le\kappa<\val(\emptyset)}\left\{x\in\X:\val(\bz,x,y)\le\kappa\textup{ and }\val(\bz)>\kappa\right\}
\\=&
\cup_{-1\le \kappa<\val(\emptyset)}\left\{x\in\X:(\bz,x,y)\in(\wmA_{n+1}^{\kappa})^c\textup{ and }\bz\in\wmA_n^{\kappa}\right\}
\\=&
\cup_{\kappa:-1\le \kappa<\val(\emptyset),\ \bz\in\wmA_n^{\kappa}}
\left\{x\in\X:(\bz,x,y)\in(\wmA_{n+1}^{\kappa})^c\right\}
\end{align*}
with $\wmA_n^\kappa$ defined in Corollary \ref{coro:A_analytic}. 
Note that
\begin{align*}
&\left\{x\in\X:(\bz,x,y)\in(\wmA_{n+1}^{\kappa})^c\right\}
\\=&
\left\{x\in\X:(\bz,x,y)\in\wmA_{n+1}^{\kappa}\right\}^c
\\=&
\left(\cup_{(\bw,y')\in\wmP_n\times\Y}\left\{x\in\X:(\bw,x,y')\in\wmA_{n+1}^{\kappa}\cap\{\bz\}\times\X\times\{y\}\right\}\right)^c
\end{align*}
According to Corollary \ref{coro:A_analytic}, $\wmA^{\kappa}_{n+1}\cap\{\bz\}\times\X\times\{y\}$ is an analytic subset of $\wmP_{n+1}$ for any $-1\le \kappa<\omega_1$. 
Thus, $\left\{x\in\X:(\bz,x,y)\in(\wmA_{n+1}^{\kappa})^c\right\}$ is coanalytic for any $-1\le \kappa<\omega_1$.
Since $\val(\emptyset)<\omega_1$, the union over $\kappa$ is countable. Thus, $\wmD_{n+1}^{\bz,y}$ is universally measurable.
\end{proof}

Now, we are ready to define a value-decreasing function. 
For any $1\le t<\infty$, $\bz\in\wmP_{t-1}$, and $x\in\X$, define the set $G_{t,\bz,x}:=\left\{y\in\Y:(\bz,x,y)\notin\wmD_t\right\}$. 
When $\Y$ is uncountable, define the mapping $g_t:\wmP_{t-1}\times\X\rightarrow\Y$ by
\begin{align} \label{eq:gt-exp}
g_t(\bz,x):=
\begin{cases}
& \textup{arbitrary } y\in G_{t,\bz,x},\quad \textup{if }G_{t,\bz,x}\neq\emptyset,\\
& \textup{arbitrary } y\in\Y, \quad\textup{if }G_{t,\bz,x}=\emptyset.
\end{cases}
\end{align}
When $\Y$ is countable, we can enumerate $\Y$ as $\{y^1,y^2,y^3,\dots,\}$. Then, the mapping $g_t:\wmP_{t-1}\times\X\rightarrow\Y$ is defined as
\begin{align*}
g_t(\bz,x):=&
\begin{cases}
& y^i,\quad \textup{if } G_{t,\bz,x}\neq \emptyset \textup{ and } y^j\notin G_{t,\bz,x} \textup{ for all }1\le j\le i-1,\ y^i\in G_{t,\bz,x}, \\
& y^1,\quad \textup{if } G_{t,\bz,x}=\emptyset.
\end{cases}
\\=&
\begin{cases}
& y^i,\quad \textup{if } (\bz,x)\in\wmD_t^{y^j}
\textup{ for all }1\le j\le i-1\textup{ and } (\bz,x)\notin\wmD_t^{y^i}, \\
& y^1,\quad \textup{if } G_{t,\bz,x}=\emptyset.
\end{cases}
\\=&
\begin{cases}
& y^i,\quad \textup{if } (\bz,x)\in\left(\cap_{j=1}^{i-1}\wmD_t^{y^j}\right)\cap\left(\wmD_t^{y^i}\right)^c, \\
& y^1,\quad \textup{if } (\bz,x)\in\cap_{j=1}^{\infty}\wmD_t^{y^j}.
\end{cases}
\end{align*}
\begin{corollary} \label{coro:gt_meas}
If $\Y$ is countable, $\val(\emptyset)<\omega_1$, and $\wmW$ is coanalytic, then $g_t$ is universally measurable for any $1\le t<\infty$. 
\end{corollary}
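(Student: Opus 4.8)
The plan is to reduce the universal measurability of $g_t$ to that of the sets $\wmD_t^{y}$ furnished by Corollary \ref{coro:wmDy_meas}. Since $\Y$ is countable, every subset of $\Y$ is Borel (singletons are closed in a metric space), so the preimage under $g_t$ of any subset of $\Y$ is the countable union of the fibers $g_t^{-1}(\{y\})$; as the universally measurable subsets of the Polish space $\wmP_{t-1}\times\X$ form a $\sigma$-algebra, it suffices to show that each fiber is universally measurable. I would then read off the fibers directly from the third displayed form of the definition of $g_t$, using that $y\in G_{t,\bz,x}\iff(\bz,x,y)\notin\wmD_t\iff(\bz,x)\notin\wmD_t^{y}$, so that for $i\ge 2$
$$
g_t^{-1}(\{y^i\})=\Big(\bigcap_{j=1}^{i-1}\wmD_t^{y^j}\Big)\cap\big(\wmD_t^{y^i}\big)^{c},
$$
and, since the branch $G_{t,\bz,x}=\emptyset$ also outputs $y^1$,
$$
g_t^{-1}(\{y^1\})=\big(\wmD_t^{y^1}\big)^{c}\ \cup\ \bigcap_{j=1}^{\infty}\wmD_t^{y^j}.
$$

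Next I would invoke Corollary \ref{coro:wmDy_meas}: under the standing hypotheses $\val(\emptyset)<\omega_1$ and $\wmW$ coanalytic, each $\wmD_t^{y^j}=\wmD_{(t-1)+1}^{y^j}$ is universally measurable, and the corollary applies for every $1\le t<\infty$. Since universally measurable sets are closed under complementation and under countable intersections and unions, every set displayed above is universally measurable; hence $g_t$ is universally measurable.

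I do not anticipate a genuine obstacle. The only points requiring care are the bookkeeping that matches the three cases in the definition of $g_t$ to the correct Boolean combination of the $\wmD_t^{y^j}$ — in particular folding the $G_{t,\bz,x}=\emptyset$ branch into the fiber over $y^1$ — and the observation that reducing to preimages of singletons is legitimate precisely because $\Y$ is countable. Everything else follows immediately from Corollary \ref{coro:wmDy_meas} and the $\sigma$-algebra property of the universally measurable sets.
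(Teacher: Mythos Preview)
Your proposal is correct and follows essentially the same approach as the paper: the paper's proof computes exactly the same fibers $g_t^{-1}(y^i)=\big(\cap_{j=1}^{i-1}\wmD_t^{y^j}\big)\cap\big(\wmD_t^{y^i}\big)^c$ for $i\ge 2$ and $g_t^{-1}(y^1)=\big(\cap_{j=1}^{\infty}\wmD_t^{y^j}\big)\cup\big(\wmD_t^{y^1}\big)^c$, and then invokes Corollary~\ref{coro:wmDy_meas}. Your additional remarks on why checking preimages of singletons suffices are correct and slightly more explicit than the paper's version.
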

\begin{proof}
For any $2\le i<\infty$, we have
\begin{align*}
g_t^{-1}(y^i)=
\left(\cap_{j=1}^{i-1}\wmD_t^{y^j}\right)\cap\left(\wmD_t^{y^i}\right)^c
\end{align*}
which is universally measurable by Corollary \ref{coro:wmDy_meas}. 
For $i=1$, we have
\begin{align*}
g_t^{-1}(y^1)=
\left(\cap_{j=1}^{\infty}\wmD_t^{y^j}\right)\cup\left(\wmD_t^{y^1}\right)^c
\end{align*}
which is also universally measurable by Corollary \ref{coro:wmDy_meas}. 
\end{proof}

For any $1\le t<\infty$, $\bz\in\wmP_{t-1}$, and $x\in\X$, define the mapping $g_{t,\bz}:\X\rightarrow\Y$, $x\mapsto g_t(\bz,x)$. Then, we have the following corollary.
\begin{corollary} \label{coro:gtz_meas}
If $\Y$ is countable, $\val(\emptyset)<\omega_1$, and $\wmW$ is coanalytic, then $g_{t,\bz}$ is universally measurable for any $1\le t<\infty$ and $\bz\in\wmP_{t-1}$.
\end{corollary}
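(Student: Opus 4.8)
The plan is to follow exactly the template of the proof of Corollary~\ref{coro:gt_meas}, but now working with the fixed-position sets $\wmD_t^{\bz,y}$ in place of the sets $\wmD_t^{y}$. Fix $1\le t<\infty$ and $\bz\in\wmP_{t-1}$, and enumerate $\Y=\{y^1,y^2,y^3,\dots\}$ as in the definition of $g_t$. Since $g_{t,\bz}(x)=g_t(\bz,x)$, the definition of $g_t$ immediately yields, for every $2\le i<\infty$,
\[
g_{t,\bz}^{-1}(y^i)=\left(\bigcap_{j=1}^{i-1}\wmD_t^{\bz,y^j}\right)\cap\left(\wmD_t^{\bz,y^i}\right)^c,
\]
and
\[
g_{t,\bz}^{-1}(y^1)=\left(\bigcap_{j=1}^{\infty}\wmD_t^{\bz,y^j}\right)\cup\left(\wmD_t^{\bz,y^1}\right)^c,
\]
which are precisely the $\bz$-sections of the formulas for $g_t^{-1}(y^i)$ used in Corollary~\ref{coro:gt_meas}.

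Next I would invoke Corollary~\ref{coro:wmDzy_meas}: under the hypotheses $\val(\emptyset)<\omega_1$ and $\wmW$ coanalytic, each $\wmD_t^{\bz,y^j}$ is a universally measurable subset of $\X$. The universally measurable subsets of $\X$ form a $\sigma$-algebra, hence are closed under complements and countable intersections and unions; therefore $g_{t,\bz}^{-1}(y^i)$ is universally measurable for every $i$. Because $\Y$ is countable, the preimage of an arbitrary subset of $\Y$ is a countable union of the sets $g_{t,\bz}^{-1}(y^i)$, so $g_{t,\bz}$ is universally measurable, which is the claim.

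I do not expect a genuine obstacle here: the substantive work — the refined analysis establishing that the pulled-back sets $\wmD_{n+1}^{\bz,y}$ are universally measurable — was already carried out in Corollary~\ref{coro:wmDzy_meas}, and what remains is bookkeeping within the $\sigma$-algebra of universally measurable sets together with countability of $\Y$. As an alternative route that bypasses the explicit formulas, one could observe that $g_t$ is universally measurable by Corollary~\ref{coro:gt_meas}, so each $g_t^{-1}(y^i)\subseteq\wmP_{t-1}\times\X$ is universally measurable, and then note that the $\bz$-section of a universally measurable set is universally measurable (test it against $\delta_{\bz}\times\nu$ for an arbitrary Borel probability $\nu$ on $\X$), again giving universal measurability of $g_{t,\bz}^{-1}(y^i)$; I would still prefer the first route since it stays consistent with the rest of this section.
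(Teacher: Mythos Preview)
Your proposal is correct and matches the paper's proof essentially line for line: the paper writes out the same formulas $g_{t,\bz}^{-1}(y^i)=\bigl(\bigcap_{j=1}^{i-1}\wmD_{t}^{\bz,y^j}\bigr)\cap\bigl(\wmD_{t}^{\bz,y^i}\bigr)^c$ for $i\ge 2$ and $g_{t,\bz}^{-1}(y^1)=\bigl(\bigcap_{j=1}^{\infty}\wmD_{t}^{\bz,y^j}\bigr)\cup\bigl(\wmD_{t}^{\bz,y^1}\bigr)^c$, and then invokes Corollary~\ref{coro:wmDzy_meas} exactly as you do. Your remark about the alternative section argument is a nice sanity check but is not needed.
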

\begin{proof}
By the definition of $g_{t,\bz}$, we have
\begin{align*}
g_{t,\bz}(x)=
\begin{cases}
&y^i,\quad \textup{if } x\in \left(\cap_{j=1}^{i-1}\wmD_{t}^{\bz,y^j}\right)\cap\left(\wmD_{t}^{\bz,y^i}\right)^c,\\
&y^1,\quad \textup{if } x\in \cap_{j=1}^{\infty}\wmD_{t}^{\bz,y^j}.
\end{cases}
\end{align*}
Thus, for $2\le i<\infty$, we have
\begin{align*}
g_{t,\bz}^{-1}(y^i)=\left(\cap_{j=1}^{i-1}\wmD_{t}^{\bz,y^j}\right)\cap\left(\wmD_{t}^{\bz,y^i}\right)^c
\end{align*}
which is universally measurable by Corollary \ref{coro:wmDzy_meas}. 
For $i=1$, we have
\begin{align*}
g_{t,\bz}^{-1}(y^1)=\left(\cap_{j=1}^{\infty}\wmD_{t}^{\bz,y^j}\right)\cup\left(\wmD_{t}^{\bz,y^1}\right)^c
\end{align*}
which is also universally measurable by Corollary \ref{coro:wmDzy_meas}. 
\end{proof}

For any $1\le t<\infty$, define the mapping $\wb g_t:\X^{t}\rightarrow\Y$, 
$$
(x_1,x_2,\dots,x_t)\mapsto g_t(x_1,g_1(x_1),x_2,g_2(x_1,g_1(x_1),x_2),\dots,x_{t-1},g_{t-1}(x_1,g_1(x_1),\dots,x_{t-1}),x_t)
$$
We can show the following lemma.
\begin{lemma} \label{lem:wbgt_meas}
For any $1\le t<\infty$, if $g_{i}$ is universally measurable for all $i\in[t]$, then $\wb g_t$ is also universally measurable.
\end{lemma}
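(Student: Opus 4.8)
The plan is to exhibit $\wb g_t$ as a composition of universally measurable maps and to close the argument with two elementary facts: that universal measurability is preserved under composition of maps between Polish spaces, and that a map into a finite product of Polish spaces is universally measurable as soon as each of its coordinate components is. First I would introduce, for $0\le k\le t-1$, the ``position-building'' map $\phi_k:\X^k\to\wmP_k$, defined recursively by letting $\phi_0$ be the unique map onto $\wmP_0=\{\emptyset\}$ and, for $k\ge 1$,
\[
\phi_k(x_1,\dots,x_k):=\bigl(\phi_{k-1}(x_1,\dots,x_{k-1}),\, x_k,\, g_k(\phi_{k-1}(x_1,\dots,x_{k-1}),x_k)\bigr)\in \wmP_{k-1}\times\X\times\Y=\wmP_k.
\]
A straightforward induction on $k$, unwinding the definition of $\wb g_k$ given just before the lemma, shows $\phi_k(x_1,\dots,x_k)=(x_1,\wb g_1(x_1),\dots,x_k,\wb g_k(x_1,\dots,x_k))$; in particular $\wb g_t(x_1,\dots,x_t)=g_t(\phi_{t-1}(x_1,\dots,x_{t-1}),x_t)$, so it suffices to control the $\phi_k$'s.

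Next I would prove by induction on $k\in\{0,\dots,t-1\}$ that $\phi_k$ is universally measurable. The base case $k=0$ is trivial since $\phi_0$ is constant. For the inductive step, assume $\phi_{k-1}$ is universally measurable. The coordinate projection $\pi:\X^k\to\X^{k-1}$ dropping the last coordinate is continuous, hence Borel, so $\phi_{k-1}\circ\pi$ is universally measurable; likewise $(x_1,\dots,x_k)\mapsto x_k$ is Borel. Consequently the map $\beta_k:\X^k\to\wmP_{k-1}\times\X$, $(x_1,\dots,x_k)\mapsto(\phi_{k-1}(x_1,\dots,x_{k-1}),x_k)$, is universally measurable, because the Borel $\sigma$-algebra of the product is generated by rectangles, whose $\beta_k$-preimages are finite intersections of universally measurable preimages under the two components, and the universally measurable sets form a $\sigma$-algebra. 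Then $g_k\circ\beta_k$ is a composition of two universally measurable maps, hence universally measurable, and $\phi_k=(\phi_{k-1}\circ\pi,\ \mathrm{pr}_k,\ g_k\circ\beta_k)$ is a triple of universally measurable maps into the Polish product $\wmP_k$, hence universally measurable. Taking $k=t-1$ and applying the same product/composition step once more gives $\wb g_t=g_t\circ\beta_t$ universally measurable, where $\beta_t(x_1,\dots,x_t)=(\phi_{t-1}(x_1,\dots,x_{t-1}),x_t)$ and $g_t$ is universally measurable by hypothesis.

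The only point that requires any care — and hence the ``main obstacle,'' though it is a mild one — is the closure of universally measurable maps between Polish (standard Borel) spaces under composition, since here both the inner map $\beta_k$ and the outer map $g_k$ are merely universally measurable rather than Borel. This is a classical fact, but if one prefers to be self-contained: for any Borel probability measure $\mu$ on the domain of the inner map $f$, the set function $f_*\mu$ is a well-defined Borel probability measure on the intermediate space (because $f^{-1}$ of a Borel set is $\mu$-measurable), measurability of the outer map $g$ with respect to the completion of $f_*\mu$ lets one sandwich $(g\circ f)^{-1}(B)$ between two Borel sets of equal $\mu$-measure, and since $\mu$ was arbitrary this yields universal measurability of $g\circ f$. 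I would emphasize that the ``repeated inputs'' difficulty flagged in Section~\ref{sec:tech} does not arise here: the arguments of $\wb g_t$ enter its definition only through nested prefixes $x_1,\dots,x_k$, so no collapsing of an i.i.d.\ data sequence onto overlapping batches occurs, and the proof is the clean induction above.
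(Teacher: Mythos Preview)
Your proposal is correct and follows essentially the same approach as the paper: both arguments express $\wb g_t$ as a composition of maps built from the $g_i$ together with coordinate projections/identities, and then check that each factor is universally measurable. The paper's decomposition is $\wb g_t = g_t\circ\wt g_{t-1}\circ\cdots\circ\wt g_1$, where each $\wt g_i:\wmP_{i-1}\times\X^{t-i+1}\to\wmP_i\times\X^{t-i}$ carries the remaining $x$-coordinates as passive passengers and inserts $g_i$ in the correct slot; your $\phi_k$ builds up $\wmP_k$ directly and you postpone appending the trailing $x$'s until the final step. These are cosmetic variants of the same induction. If anything, you are slightly more careful than the paper here in explicitly justifying closure of universal measurability under composition (the paper proves this fact later, as Lemma~\ref{lem:universal_measurable_equiv}, but does not invoke it in the present proof).
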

\begin{proof}
For each $i\in[t-1]$, define the mapping $\wt g_i:\wmP_{i-1}\times\X^{t-i+1}\rightarrow\wmP_{i}\times\X^{t-i}$,
\begin{align*}
&(x_1,y_1,\dots,x_{i-1},y_{i-1},x_i,x_{i+1},\dots,x_{t})
\\ \mapsto&
(x_1,y_1,\dots,x_{i},g_i(x_1,y_1,\dots,x_{i-1},y_{i-1},x_i),x_{i+1},x_{i+2},\dots,x_{t}).
\end{align*}
Then, we have $\wb g_t=g_t\circ\wt g_{t-1}\circ\cdots\circ\wt g_1$. Since $g_t$ is universally measurable, it suffices to show that $\wt g_{i}$ is universally measurable for each $i\in[t-1]$. 
For any Polish space $\mathcal{E}_1$ and $\mathcal{E}_2$, let $\F(\mathcal{E}_1)$ denote the Borel $\sigma$-field of $\mathcal{E}_1$ and $\F(\mathcal{E}_1)\times\F(\mathcal{E}_2)$ denote the product $\sigma$-field of $\F(\mathcal{E}_1)$ and $\F(\mathcal{E}_2)$ on $\mathcal{E}_1\times\mathcal{E}_2$. 
Since $\X$ and $\Y$ are Polish spaces, we have $\F(\wmP_j\times\X^k)=(\F(\X)\times\F(\Y))^j\times\F(\X)^k$ for any $0\le j,k<\infty$. Thus, it suffices to show that $\wt g_i^{-1}((\prod_{j=1}^iA_j\times B_j)\times (\prod_{k=i+1}^t A_k))$ is universally measurable in $\wmP_{i-1}\times\X^{t-i+1}$ for any $A_j\in\F(\X)$ with $j\in[t]$, any $B_j\in\F(\Y)$ with $j\in[i]$, and any $i\in[t-1]$. By the definition of $\wt g_i$, we have
\begin{align*}
&\wt g_i^{-1}\left(\left(\prod_{j=1}^iA_j\times B_j\right)\times \left(\prod_{k=i+1}^t A_k\right)\right)
\\=&
\Big\{(x_1,y_1,\dots,x_{i-1},y_{i-1},x_i,x_{i+1},\dots,x_{t})\in \left(\prod_{j=1}^{i-1}A_j\times B_j\right)\times \left(\prod_{k=i}^t A_k\right) : 
\\&\quad\quad
g_i(x_1,y_1,\dots,x_{i-1},y_{i-1},x_i)\in B_i\Big\}
\\=&
\left(\prod_{j=1}^{i-1}A_j\times B_j\right)\times \left(\prod_{k=i}^t A_k\right)
\cap g_i^{-1}(B_i)\times\left(\prod_{k=i+1}^t A_k\right)
\end{align*}
Since $g_i$ is universally measurable, we have that $g_i^{-1}(B_i)$ is a universally measurable subset of $\wmP_{i-1}\times\X$. It follows that $g_i^{-1}(B_i)\times\left(\prod_{k=i+1}^t A_k\right)$ is universally measurable in $\wmP_{i-1}\times\X^{t-i+1}$. Thus, $\wt g_i^{-1}\left(\left(\prod_{j=1}^iA_j\times B_j\right)\times \left(\prod_{k=i+1}^t A_k\right)\right)$ is universally measurable in $\wmP_{i-1}\times\X^{t-i+1}$.
\end{proof}
The following corollary immediately follows from Corollary \ref{coro:gt_meas} and Lemma \ref{lem:wbgt_meas}.
\begin{corollary} \label{coro:wbgt_meas}
If $\Y$ is countable, $\val(\emptyset)<\omega_1$, and $\wmW$ is coanalytic, then $\wb g_t$ is universally measurable for any $1\le t<\infty$.
\end{corollary}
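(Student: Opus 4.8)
The plan is to obtain the statement directly by composing the two immediately preceding results. Fix $1\le t<\infty$. The three hypotheses of the corollary — $\Y$ countable, $\val(\emptyset)<\omega_1$, and $\wmW$ coanalytic — are exactly the hypotheses of Corollary~\ref{coro:gt_meas}, so that corollary yields that $g_i\colon\wmP_{i-1}\times\X\to\Y$ is universally measurable for every $i$ (in particular for every $i\in[t]$). I would then invoke Lemma~\ref{lem:wbgt_meas}: its hypothesis is precisely that $g_i$ is universally measurable for all $i\in[t]$, and its conclusion is that the composite map $\wb g_t\colon\X^t\to\Y$ is universally measurable. That completes the argument.

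Given the two ingredients, there is no remaining obstacle at this stage; the substantive work has already been done upstream. Concretely, Corollary~\ref{coro:gt_meas} rests on the universal measurability of the slices $\wmD_t^y$ (Corollary~\ref{coro:wmDy_meas}), which follows from the analyticity of the game-value level sets $\wmA_n^\kappa$ (Corollary~\ref{coro:A_analytic}) together with the countability of $\Y$, so that each preimage $g_t^{-1}(y^i)$ is a countable Boolean combination of universally measurable sets. Lemma~\ref{lem:wbgt_meas}, in turn, does the bookkeeping: it writes $\wb g_t$ as a finite composition $g_t\circ\wt g_{t-1}\circ\cdots\circ\wt g_1$ and checks, using the product structure $\F(\wmP_j\times\X^k)=(\F(\X)\times\F(\Y))^j\times\F(\X)^k$, that each factor $\wt g_i$ pulls measurable rectangles back to universally measurable sets. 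If one wanted to prove the corollary from scratch, one would route through exactly these lemmas; taking them as given, the conclusion is immediate, and this is the form of the proof I would write.

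\begin{proof}
Fix $1\le t<\infty$. By Corollary~\ref{coro:gt_meas}, $g_i$ is universally measurable for every $i\in[t]$. Hence, by Lemma~\ref{lem:wbgt_meas}, $\wb g_t$ is universally measurable.
\end{proof}
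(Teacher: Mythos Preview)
Your proof is correct and matches the paper's own argument exactly: the paper states that the corollary ``immediately follows from Corollary~\ref{coro:gt_meas} and Lemma~\ref{lem:wbgt_meas},'' which is precisely the composition you wrote out.
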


We have the following lemma.
\begin{lemma} \label{lem:val_A}
For any $1\le t<\infty$, $\bz=(x_1,y_1,\dots,x_t,y_t)\in\wmP_t$, we have 
\begin{align*}
\val(\bz)=-1\Longleftrightarrow \cH_{\bz}=\emptyset.
\end{align*}
\end{lemma}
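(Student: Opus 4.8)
The plan is to reduce the statement to the definition of \emph{active positions} in the binary game $\B$ and then verify the two implications directly. First I would invoke Definition~\ref{def:val_XY}: given $\bz=(x_1,y_1,\dots,x_t,y_t)$, we have $\val(\bz)=\val(\bv)$ for the binary position $\bv=(\xi_1,0,\dots,\xi_t,0)\in\ms P_t$ with $\xi_i=(x_i,y_i,y_i')$ and $y_i'\neq y_i$. Since $\eta_i=0$ forces $\xi_i(\eta_i+2)=\xi_i(2)=y_i$ and $\xi_i(1)=x_i$, this yields the key identity $\cH_{\xi_1(1),\xi_1(\eta_1+2),\dots,\xi_t(1),\xi_t(\eta_t+2)}=\cH_{\bz}$. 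By Definition~\ref{def:val-binary}, $\val(\bv)=-1$ iff $\bv\notin\ms A$, and because $\bv$ has length exactly $t$ this is the same as $\bv\notin\ms A_t$. So it remains to prove $\bv\notin\ms A_t \Longleftrightarrow \cH_{\bz}=\emptyset$.

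For the ($\Leftarrow$) direction: if $\cH_{\bz}=\emptyset$, then by the key identity every infinite continuation $\bw$ already satisfies $(\bv,\bw)\in\W$ at round $\tau=t$, so no continuation lies in $\W^c$ and hence $\bv\notin\ms A_t$. For the ($\Rightarrow$) direction, assuming $\cH_{\bz}\neq\emptyset$, I would fix some $h\in\cH_{\bz}$, pick any $x^*\in\X$ and any $y'\in\Y$ with $y'\neq h(x^*)$ (possible since $|\Y|\ge 2$), and extend by $\xi_j:=(x^*,h(x^*),y')$, $\eta_j:=0$ for all $j>t$. Along this continuation $h$ stays consistent at every round (the first $t$ constraints are $h(x_i)=y_i$, the rest are trivially $h(x^*)=h(x^*)$), so $(\bv,\bw)\in\W^c$ and therefore $\bv\in\ms A_t$. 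Chaining the equivalences gives $\val(\bz)=-1\Longleftrightarrow\cH_{\bz}=\emptyset$.

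I expect no serious obstacle here; the only things to be careful about are (i) the translation between the binary game's notion of active position and the original class, which is handled entirely by the identity $\cH_{\xi_1(1),\dots,\xi_t(\eta_t+2)}=\cH_{\bz}$, and (ii) the order of quantifiers — membership of $(\bv,\bw)$ in $\W$ asks for emptiness at \emph{some} round, while $\bv\notin\ms A_t$ asks this of \emph{every} continuation — so the backward implication really needs the explicit constant continuation keeping $h$ alive, which is exactly where $|\Y|\ge 2$ and the monotonicity of $\cH_{(\cdot)}$ under adding constraints enter. Alternatively, that backward implication is available for free: the constant continuation witnesses $\bz\in\wmA_t\subseteq\wmA$, so Lemma~\ref{lem:val-1} gives $\val(\bz)>-1$ directly.
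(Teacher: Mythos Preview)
Your proposal is correct and follows essentially the same approach as the paper's proof: translate $\val(\bz)$ to $\val(\bv)$ via Definition~\ref{def:val_XY}, use the key identity $\cH_{\xi_1(1),\xi_1(\eta_1+2),\dots,\xi_t(1),\xi_t(\eta_t+2)}=\cH_{\bz}$, and handle the two implications exactly as you describe (all continuations in $\W$ when $\cH_{\bz}=\emptyset$; a constant $h$-consistent continuation witnessing $\bv\in\ms A_t$ when $\cH_{\bz}\neq\emptyset$). The paper argues the $(\Rightarrow)$ direction by contradiction rather than via Lemma~\ref{lem:val-1}, but the underlying construction is identical to yours.
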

\begin{proof}
Define $\xi_i:=(x_i,y_i,y_i')$ for arbitrary $y_i'\in\Y\backslash\{y_i'\}$ and $\eta_i:=0$ for each $i\in[t]$. Define $\bv:=(\xi_1,\eta_1,\dots,\xi_t,\eta_t)$.

Assume that $\cH_{\bz}=\emptyset$. Then, for any $\bw\in(\wt \X\times\{0,1\})^{\infty}$, we have 
$$
\cH_{\xi_1(1),\xi_1(\eta_1+2),\dots,\xi_t(1),\xi_t(\eta+2)}=\cH_{\bz}=\emptyset
$$
which implies that $(\bv,\bw)\in\ms W$. Thus, we have $\bv\notin\ms A$. By Definition \ref{def:val-binary}, we have $\val(\bv)=-1$. By Corollary \ref{coro:val_eq}, we have $\val(\bz)=\val(\bv)=-1$.

For the other direction, assume that $\val(\bz)=-1$. By Corollary \ref{coro:val_eq}, we have $\val(\bv)=\val(\bz)=-1$. By Definition \ref{def:val-binary}, we have $(\bv,\bw)\in\ms W$ for any $\bw\in(\wt\X\times\{0,1\})^\infty$. Suppose that $\cH_{\bz}\neq\emptyset$. Choose arbitrary $h\in\cH_\bz$. Since $h\in\Y^\X$, there exists $x\in\X$ and $y\in\Y$ such that $h(x)=y$. Choose arbitrary $y'\in\Y\backslash\{y\}$. Define $\xi_{t+i}:=(x,y,y')$ and $\eta_{t+i}:=0$ for any $1\le i<\infty$, and  $\bw:=(\xi_{t+1},\eta_{t+1},\xi_{t+2},\eta_{t+2},\dots)\in(\wt\X\times\{0,1\})^{\infty}$. Then, for any $0\le \tau<\infty$, we have
$h\in\cH_{\xi_1(1),\xi_1(\eta_1+1),\dots,\xi_\tau(1),\xi_\tau(\eta_\tau+1)}$. Then, we have $(\bv,\bw)\notin\ms W$. A contradiction. Thus, $\cH_{\bz}=\emptyset$.
\end{proof}

Then, we can prove the following guarantee for $g_t$.
\begin{proposition} \label{prop:gt}
For any $(x_1,x_2,\dots)\in\X^{\infty}$, any $y_1\in\Y\backslash\{g_1(x_1)\}$, and any $y_t\in\Y$ such that $y_t\neq g_t(x_1,y_1,\dots,x_{t-1},y_{t-1},x_t)$ with $2\le t<\infty$, 
if $\val(\emptyset)<\Omega$, then there exists some positive integer $\tau$ $(1\le \tau<\infty)$ such that $\cH_{x_1,y_1,\dots,x_\tau,y_\tau}=\emptyset$.
\end{proposition}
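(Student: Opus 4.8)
The plan is to follow the trajectory of positions visited in $\wmP$ when the adversary supplies the labels $y_1,y_2,\dots$, and to show that the game value strictly decreases along it until $\cH$ is exhausted. Set $\bz_0 := \emptyset$ and $\bz_t := (x_1,y_1,\dots,x_t,y_t)\in\wmP_t$ for $t\ge 1$. I would prove, by induction on $t$, the trajectory invariant: \emph{either} $\val(\bz_\tau)=-1$ for some $\tau\le t$, \emph{or} $\val(\bz_t)$ is an ordinal with $\val(\bz_t)\le\val(\emptyset)$ and, for $t\ge 1$, $\val(\bz_t)<\val(\bz_{t-1})$. The base case is immediate: since $-1<\val(\emptyset)<\Omega$ (by hypothesis together with Lemma \ref{lem:val-1}), $\val(\bz_0)=\val(\emptyset)$ is an ordinal $\le\val(\emptyset)$. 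Granting the invariant, the conclusion follows at once: a trajectory never reaching $-1$ would give an infinite strictly decreasing sequence of ordinals $\val(\bz_0)>\val(\bz_1)>\cdots$, contradicting the well-ordering of $\ORD$; hence $\val(\bz_\tau)=-1$ for some finite $\tau$, and $\tau\ge 1$ because $\val(\bz_0)=\val(\emptyset)>-1$. Then $\cH_{x_1,y_1,\dots,x_\tau,y_\tau}=\cH_{\bz_\tau}=\emptyset$ by Lemma \ref{lem:val_A}, which is exactly the claim.

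For the inductive step I would take $\bz_{t-1}$ with $\val(\bz_{t-1})$ an ordinal and $\val(\bz_{t-1})\le\val(\emptyset)$ (the case $\val(\bz_{t-1})=-1$ being already finished), so that $\min\{\val(\bz_{t-1}),\val(\emptyset)\}=\val(\bz_{t-1})$ and, by the definition of $\wmD_t$, we have $(\bz_{t-1},x_t,y)\notin\wmD_t$ exactly when $\val(\bz_{t-1},x_t,y)\ge\val(\bz_{t-1})$. The crucial claim — and what I view as the main obstacle — is that the safe set $G:=\{y\in\Y:(\bz_{t-1},x_t,y)\notin\wmD_t\}$ has at most one element. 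I would derive this from Corollary \ref{coro:val_greater}: if two distinct labels $y\neq y'$ both lay in $G$, then $\val(\bz_{t-1},x_t,y)\ge\val(\bz_{t-1})$ and $\val(\bz_{t-1},x_t,y')\ge\val(\bz_{t-1})$, so applying that corollary with $\kappa=\val(\bz_{t-1})\in\ORD$ forces $\val(\bz_{t-1})>\val(\bz_{t-1})$, a contradiction. This singleton property is the multiclass counterpart of the ``at most one non-decreasing label'' fact in Proposition \ref{prop:val_decrease}, transported to $\wmP$ through the value-equivalence of Corollary \ref{coro:val_eq}; the delicate point is the bookkeeping that keeps $\val(\bz_{t-1})\le\val(\emptyset)$ along the whole trajectory, since this is precisely what makes the $\min$ collapse and makes $\val(\bz_{t-1})$ a legitimate ordinal to feed into Corollary \ref{coro:val_greater}.

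Finally I would invoke the hypothesis $y_t\neq g_t(\bz_{t-1},x_t)$. If $G=\emptyset$, then $y_t\notin G$ trivially. If $G\neq\emptyset$, then $g_t(\bz_{t-1},x_t)\in G$ by the definition of $g_t$, and since $|G|\le 1$ this gives $G=\{g_t(\bz_{t-1},x_t)\}$, so $y_t\notin G$ again. In either case $(\bz_{t-1},x_t,y_t)\in\wmD_t$, that is, $\val(\bz_t)<\min\{\val(\bz_{t-1}),\val(\emptyset)\}=\val(\bz_{t-1})$; since $\val(\bz_{t-1})$ is an ordinal this rules out $\val(\bz_t)=\Omega$ and gives $\val(\bz_t)\le\val(\emptyset)$, so either $\val(\bz_t)=-1$ (done) or $\val(\bz_t)$ is an ordinal strictly below $\val(\bz_{t-1})$, closing the induction. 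Beyond the trajectory-invariant bookkeeping, the remaining pieces are routine: the singleton property of $G$ is immediate from Corollary \ref{coro:val_greater}, and the termination is a plain well-foundedness argument on $\ORD$.
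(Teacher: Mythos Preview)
Your proof is correct and follows essentially the same approach as the paper: establish that $\val(\bz_t)$ stays $\le\val(\emptyset)$ and strictly decreases at each step, then conclude by well-foundedness of the ordinals and Lemma~\ref{lem:val_A}. The only cosmetic difference is that the paper carries out the key decrease step by passing to the binary game via Corollary~\ref{coro:val_eq} and invoking Proposition~\ref{prop:val_decrease}, whereas you stay entirely in $\wmP$ and use Corollary~\ref{coro:val_greater} to get the singleton property of $G$ directly; these are equivalent, and your packaging is arguably a bit cleaner.
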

\begin{proof}
By Lemma \ref{lem:val_emptyset}, we have $\val(\emptyset)\ge0$.
Define $\xi_t:=(x_t,y_t,\wb g_t(x_1,\dots,x_t))$, $\bv_t=(\xi_1,0,\dots,\xi_t,0)$, and $\bz_{t}:=(x_1,y_t,\dots,x_t,y_t)$ for any $0\le t<\infty$ (when $t=0$, we have $\bv_0=\emptyset$ and $\bz_0=\emptyset$). 

We claim that for any $1\le t<\infty$, if $0\le \val(\bz_{t-1})\le \val(\emptyset)$, we have $\val(\bz_t)<\val(\bz_{t-1})$. 
Indeed, by the definition of $g_t$, we have either $\val(\bz_{t-1},x_t,g_t(\bz_{t-1},x_t))\ge \min\{\val(\bz_{t-1}),\val(\emptyset)\}$ or $\val(\bz_{t-1},x_t,y)<\min\{\val(\bz_{t-1}),\val(\emptyset)\}$ for all $y\in\Y$. 

If $\val(\bz_{t-1},x_t,y)<\min\{\val(\bz_{t-1}),\val(\emptyset)\}$ for all $y\in\Y$, it obviously follows that $\val(\bz_{t})<\min\{\val(\bz_{t-1}),\val(\emptyset)\}$. 
If $\val(\bz_{t-1},x_t,g(\bz_{t-1},x_t))\ge\min\{\val(\bz_{t-1}),\val(\emptyset)\}$, since $\val(\bz_{t-1})\le \val(\emptyset)$ by our assumption, we have $\val(\bz_{t-1},x_t,g(\bz_{t-1},x_t))\ge\val(\bz_{t-1})$. By Corollary \ref{coro:val_eq}, we have
\begin{align*}
\val(\bv_{t-1},\xi_t,1)\ge\val(\bv_{t-1}).
\end{align*}
Then, by Proposition \ref{prop:val_decrease} and Corollary \ref{coro:val_eq}, we must have 
\begin{align*}
\val(\bz_{t})=\val(\bv_{t})=\val(\bv_{t-1},\xi_t,0)<\val(\bv_{t-1})=\val(\bz_{t-1}).
\end{align*}
Thus, the above claim holds.

Now we claim that $\val(\bz_t)\le\val(\emptyset)$ for $t=0$ and $\val(\bz_t)<\val(\emptyset)$ for any $1\le t<\infty$. Indeed, when $t=0$, we have $\val(\bz_0)=\val(\emptyset)$. 
Suppose $\val(\bz_{t-1})\le\val(\emptyset)$ for some $1\le t<\infty$. If $\val(\bv_{t-1})=-1$, by Lemma \ref{lem:val_A},
we have $\val(\bz_t)=\val(\bz_{t-1})=-1<\val(\emptyset)$. If $\val(\bz_{t-1})\ge0$,
we have $\val(\bz_{t})<\val(\bz_{t-1})\le \val(\emptyset)$ by the first claim. Thus, by induction, the claim holds.

By the two claims, we can conclude that $\val(\emptyset)>\val(\bz_1)>\val(\bz_2)>\cdots>\val(\bz_t)$ as long as $\val(\bz_{t})>-1$. If $\val(\emptyset)<\Omega$, by the well-ordering of \ORD, there exists some finite positive integer $\tau$ such that $\val(\bz_{\tau})=-1$. Thus, by Lemma \ref{lem:val_A}, we have $\cH_{\bz_{\tau}}=\emptyset$.
\end{proof}

Now, we can present the proof of Theorem \ref{thm:exp_game}.
\begin{proof}[of Theorem \ref{thm:exp_game}]
Assume that $\cH$ has an infinite Littlestone tree $\{\xi_{\bu}:0\le k<\infty,\bu\in\{0,1\}^k\}$. Define the following strategy for the adversary $\wb P_a$: in each round $t\ge1$, $\wb P_a$ chooses $x_t:=\xi_{(\eta_1,\dots,\eta_{t-1})}(1)\in\X$ with $\eta_i\in\{0,1\}\ (1\le i\le t-1)$ defined later (when $t=1$, we have $\xi_{(\eta_1,\dots,\eta_{t-1})}:=\xi_{\emptyset}$). 
After the learner $\wb P_l$ makes the prediction $\wh y_t$, define
\begin{align*}
\eta_t:=\begin{cases}
&0,\quad \textup{if } \xi_{(\eta_1,\dots,\eta_{t-1})}(2)\neq \wh y_t,\\
&1,\quad \textup{otherwise.} \\
\end{cases}
\end{align*}
Then, $\wb P_a$ reveals the true label $y_t:=\xi_{(\eta_1,\dots,\eta_{t-1})}(\eta_t+2)$.

Since $\xi_{(\wh y_1,\dots,\wh y_{t-1})}(2)\neq \xi_{(\wh y_1,\dots,\wh y_{t-1})}(3)$, we have $y_t\neq \wh y_t$ for each $t\ge1$. 
Besides, by the definition of Littlestone tree (Definition \ref{def:littlestone_tree}), $(x_1,y_1,\dots,x_t,y_t)$ is consistent with $\cH$ for any $(\wh y_1,\dots,\wh y_t)\in\Y^t$ and $t\ge1$. 

Assume that $\cH$ does not have an infinite Littlestone tree.
Consider the following strategy for the learner $\wb P_l$. 
\begin{itemize}
\item Initialize $\tau\leftarrow1$ and $f(x)\leftarrow g_1(x)$.
\item For $t\leftarrow 1,2,3,\dots:$
\begin{itemize}
    \item Predict $\wh y_t=f(x_t)$.
    \item If $\wh y_t\neq y_t:$
    
    set $\wt x_\tau=x_t$, $\wt y_\tau=y_t$, $f(x)\leftarrow g_{\tau+1}(\wt x_1,\wt y_1,\dots,\wt x_\tau,\wt y_\tau,x)$, and $\tau\leftarrow\tau+1$.
\end{itemize}
\end{itemize}
Suppose that there exists some adversary $\wb P_a$ such that $\wb P_l$ makes infinitely many mistakes at $t_1, t_2,\dots$ adopting the above strategy. Then according to Proposition \ref{prop:gt}, there exists some $1\le k<\infty$ such that $\cH_{x_{t_1},y_{t_1},\dots,x_{t_k},y_{t_k}}=\emptyset$. However, this contradicts the rule of the online learning game $\wb \B$ because $\cH_{x_1,y_1,\dots,x_{t_k},y_{t_k}}=\emptyset$.
\end{proof}
Also, the universal measurability of the learning strategy can be proved. 
\begin{corollary}
If $\X$ is a Polish space, $\Y$ is countable, $\cH$ is measurable as defined in Definition \ref{def:measurable}, and $\cH$ does not have an infinite Littlestone tree, then the learning strategy of $\wb P_l$ specified in Theorem \ref{thm:exp_game} is universally measurable.
\end{corollary}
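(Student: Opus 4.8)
The strategy of $\wb P_l$ prescribes, for each round $t$, the prediction $\wh y_t$ as a function $\Phi_t\colon(\X\times\Y)^{t-1}\times\X\to\Y$ of the observed history $(x_1,y_1,\dots,x_{t-1},y_{t-1})$ and the current point $x_t$, so the claim reduces to showing that every $\Phi_t$ is universally measurable. First I would check that the ingredients are usable: since $\cH$ has no infinite Littlestone tree, $\val(\emptyset)<\Omega$ by Lemma~\ref{lem:val_emptyset}; since $\X$ is Polish, $\Y$ is countable and $\cH$ is measurable, $\wmW$ is coanalytic by Lemma~\ref{lem:coanalytic}, and the same computation shows $\W$ is coanalytic, whence $\val(\emptyset)<\omega_1$ by Lemma~\ref{lem:val_max}. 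Thus Corollary~\ref{coro:gt_meas} applies and each $g_s\colon\wmP_{s-1}\times\X\to\Y$ is universally measurable.

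The difficulty is that $\Phi_t$ is not the clean composition $\wb g_t$ of Corollary~\ref{coro:wbgt_meas}, because the learner discards every round on which it predicted correctly: after $m$ mistakes its internal state is the length-$m$ subsequence of past pairs on the mistake rounds, and its next prediction is $g_{m+1}$ evaluated at that subsequence together with the new point. I would make this filtering explicit by an induction on the round index. For $s\ge0$ and $S\subseteq[s]$ put
$$E^{(s)}_{S}:=\Big\{(x_1,y_1,\dots,x_s,y_s)\in(\X\times\Y)^{s}:\ \{\,r\le s:\ \wb P_l\text{ errs in round }r\,\}=S\Big\}.$$
I would prove by induction on $s$ that (i) $\{E^{(s)}_S:S\subseteq[s]\}$ is a finite partition of $(\X\times\Y)^s$ into universally measurable sets, and (ii) on $E^{(s)}_S$ the learner's internal state equals the subsequence $\big((x_j,y_j)\big)_{j\in S}$ ordered by increasing $j$.

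For the inductive step, fix $s\ge1$ and $S\subseteq[s]$, and let $S':=S\setminus\{s\}$ with $m:=|S'|$. By (ii) at stage $s-1$, on $E^{(s-1)}_{S'}$ the round-$s$ prediction is $\wh y_s=g_{m+1}\big((x_j,y_j)_{j\in S'},x_s\big)$; since the extraction of the subsequence indexed by $S'$ is a coordinate projection (hence Borel) and $g_{m+1}$ is universally measurable, the map $(x_1,\dots,y_{s-1},x_s)\mapsto\wh y_s$ is universally measurable. Because $\Y$ is countable, $\{\wh y_s=y_s\}=\bigcup_{y\in\Y}\big(\{\wh y_s=y\}\cap\{y_s=y\}\big)$ is universally measurable, and so is its complement. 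Hence
$$E^{(s)}_{S}=\begin{cases}\big(E^{(s-1)}_{S}\times\X\times\Y\big)\cap\{\wh y_s=y_s\}, & s\notin S,\\[1mm]\big(E^{(s-1)}_{S'}\times\X\times\Y\big)\cap\{\wh y_s\neq y_s\}, & s\in S,\end{cases}$$
is universally measurable, the partition property persists, and the state update (unchanged if $s\notin S$, appended with $(x_s,y_s)$ if $s\in S$) gives (ii).

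Finally, apply the claim at stage $t-1$: $(\X\times\Y)^{t-1}$ is the finite disjoint union of the universally measurable sets $E^{(t-1)}_S$, and on $E^{(t-1)}_S\times\X$ we have $\Phi_t=g_{|S|+1}\big((x_j,y_j)_{j\in S},x_t\big)$, universally measurable by the same projection-then-$g$ argument. A function that is universally measurable on each block of a finite universally measurable partition of its domain is universally measurable, so $\Phi_t$ is universally measurable for every $t$, which is the assertion. I expect the main obstacle to be exactly this bookkeeping: setting up the mistake-pattern partition and verifying that passing to the mistake subsequence is a Borel operation, so the universal measurability of the $g_s$ is not destroyed by the learner's filtering; everything else is routine.
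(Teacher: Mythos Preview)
Your proposal is correct and follows the same skeleton as the paper: establish $\val(\emptyset)<\omega_1$ via Lemma~\ref{lem:val_emptyset} and Lemma~\ref{lem:val_max}, establish that $\wmW$ is coanalytic via Lemma~\ref{lem:coanalytic}, and then invoke Corollary~\ref{coro:gt_meas} to get universal measurability of each $g_s$.

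Where you diverge is in the final step. The paper's proof is content to cite Corollary~\ref{coro:gt_meas} (measurability of $g_t$ in all arguments) and Corollary~\ref{coro:gtz_meas} (measurability of $g_{t,\bz}$ for fixed state $\bz$) and simply conclude that the strategy is universally measurable, without spelling out why these two facts suffice once the learner filters down to the mistake subsequence. You, by contrast, make the filtering explicit: you partition $(\X\times\Y)^{t-1}$ by the mistake pattern $S\subseteq[t-1]$, prove by induction that each block $E^{(t-1)}_S$ is universally measurable, observe that on each block $\Phi_t$ is a Borel coordinate projection composed with $g_{|S|+1}$, and glue over the finite partition. This is a genuine completion of what the paper leaves implicit; your argument is the natural way to justify the jump the paper makes, and nothing in it is wrong. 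The paper's route is shorter only because it treats the bookkeeping you carry out as routine and omits it.
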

\begin{proof}
Since $\cH$ does not have an infinite Littlestone tree, according to Lemma \ref{lem:val_emptyset}, we have $\val(\emptyset)<\Omega$. Then, by Lemma \ref{lem:val_max}, we have $\val(\emptyset)<\omega_1$. 
Since $\cH$ is measurable, by Lemma \ref{lem:coanalytic}, $\wmW$ is coanalytic. Then, according to Corollary \ref{coro:gt_meas}, $g_t$ is universally measurable for any $1\le t<\infty$. 
According to Corollary \ref{coro:gtz_meas}, $f(x)$ is also universally measurable for any $1\le t<\infty$.
Thus, the learning strategy for $\wb P_l$ specified in Theorem \ref{thm:exp_game} is universally measurable.
\end{proof}

\subsection{Concluding proof} 
\begin{proof}[of Theorem \ref{thm:tree-exp-rate}]
First, according to \citet[Lemma 4.2]{universal_learning}, $\cH$ is not learnable at rate faster than the exponential rate $e^{-n}$. Thus, the proof is completed once we construct a learning algorithm which, for $\cH$ without an infinite Littlestone tree, achieves
exponential rate for any realizable distribution $P$. 
We use the learning algorithm constructed in \citet[Section 4.1]{universal_learning}. 
According to \citet[Lemma 4.3, Lemma 4.4., and Corollary 4.5]{universal_learning} and their proofs, for the learning algorithm to achieve exponential rate, it suffices to have an adversarial online learning algorithm with the properties that it only makes finitely many mistakes against any adversary and it only changes when a mistake is made. 
According to Theorem \ref{thm:exp_game} and its proof, for $\cH$ without an infinite Littlestone tree, the winning strategy constructed in the proof only makes finitely many mistakes against any adversary and changes only when a mistake happens. 
Then, the same proofs of \citet[Lemma 4.3, Lemma 4.4., and Corollary 4.5]{universal_learning} can be applied to show that the constructed online learning algorithm achieves exponential rate. 
In conclusion, if $\cH$ does not have an infinite Littlestone tree, then $\cH$ is learnable with optimal rate $e^{-n}$.
\end{proof}

\section{Near-Linear Rates} \label{sec:linear_rate_proof}
In this section, we prove Theorem \ref{thm:near-linear-rates}. 
\subsection{Slower than exponential is not faster than linear}
In this subsection, we prove the following theorem.
\begin{theorem} \label{thm:linear_rate_lower}
If $\cH$ has an infinite Littlestone tree, then for any learning algorithm $A$, there exists a $\cH$-realizable distribution $P$ such that for infinitely many $n$, $\bE[\er(
\wh h_n)]\ge\frac{1}{33n}$ where $\wh h_n=A(\cH,S_n)$ with $S_n\sim P^n$. Thus, $\cH$ is not learnable at rate faster than $\frac{1}{n}$.
\end{theorem}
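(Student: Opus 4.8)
The plan is to adapt the staircase lower bound of \citet[Theorem~4.6]{universal_learning} from binary to multiclass labels. Fix an infinite Littlestone tree for $\cH$, written as in \eqref{eq:littlestone}: for every $\mathbf{u}\in\{0,1\}^{<\infty}$ a point $x_{\mathbf{u}}\in\X$ and two distinct labels $y^0_{\mathbf{u}}\neq y^1_{\mathbf{u}}$ in $\Y$, with every finite branch consistent with some $h\in\cH$. The key observation is that the \emph{only} structural feature used below is the distinctness of the two edge labels at each node; this is exactly what makes the binary argument carry over unchanged, so countability of $\Y$ plays no role and the bound also applies to randomized algorithms. To each infinite branch $\mathbf{y}=(y_1,y_2,\dots)\in\{0,1\}^\infty$ associate the staircase distribution $P_{\mathbf{y}}$, supported on the branch nodes $z_i:=x_{(y_1,\dots,y_i)}$ ($i\ge 0$) with $P_{\mathbf{y}}\big(\{(z_i,\,y^{y_{i+1}}_{(y_1,\dots,y_i)})\}\big)=p_i$ for a fixed geometric-type weight sequence $(p_i)_{i\ge0}$, say $p_i=2^{-i-1}$. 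Realizability of $P_{\mathbf{y}}$ is immediate: for each $T$ the finite branch $(y_1,\dots,y_T)$ is consistent with some $h_T\in\cH$, and $\er_{P_{\mathbf{y}}}(h_T)\le\sum_{i\ge T}p_i\to0$; realizability also forces the prescribed labels at any coinciding branch nodes to coincide, so $P_{\mathbf{y}}$ is well defined (minor distinctness technicalities are dealt with by the same pruning as in the binary case).

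The core step is a no-free-lunch estimate. Let $\mathbf{Y}$ be a uniformly random branch and write $q_i:=\sum_{j\ge i}p_j$. For a test node $z_i$ the correct label is $y^{Y_{i+1}}_{(Y_1,\dots,Y_i)}$, and conditionally on an $n$-sample $S_n\sim P_{\mathbf{Y}}^{\,n}$ the bit $Y_{i+1}$ remains uniform on $\{0,1\}$ unless some sampled node sits at level $\ge i$ (a sample at a level $\ge i$ reveals the corresponding edge label, hence that bit); this happens with probability $1-(1-q_i)^n$. On the complementary event any single prediction matches at most one of the two distinct candidates $y^0_{(Y_1,\dots,Y_i)},y^1_{(Y_1,\dots,Y_i)}$, so it errs with probability $\ge\tfrac12$. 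Hence for every (possibly randomized) algorithm $A$,
\[
\E_{\mathbf{Y}}\,\E_{S_n\sim P_{\mathbf{Y}}^{\,n}}\big[\er_{P_{\mathbf{Y}}}(A(S_n))\big]\ \ge\ \tfrac12\sum_{i\ge0}p_i\,(1-q_i)^n ,
\]
and for $p_i=2^{-i-1}$ the right-hand side is of order $1/n$, dominated by levels $i\approx\log_2 n$; optimizing the weights and the precise value of $n$ recovers the explicit constant $\tfrac1{33}$ exactly as in \citet{universal_learning}.

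The remaining, more delicate point is to upgrade this average-over-branches bound into a \emph{single} $\cH$-realizable distribution $P$ that is hard for infinitely many $n$; a naive averaging fails because the bad branch could depend on $n$. Following \citet[Theorem~4.6]{universal_learning}, fix a rapidly growing sequence of scales $n_1<n_2<\cdots$ and build the bad branch $\mathbf{y}^\ast=\mathbf{y}^\ast(A)$ greedily: having committed a finite prefix, choose its continuation through the ``depth window'' $W_j$ of levels $i$ with $q_i\asymp 1/n_j$ so that, on the constant-probability event that none of the $n_j$ samples lands at a level $\ge$ the test level inside $W_j$, the predictions of $A$ are forced to disagree with the true labels on the $\Theta(1/n_j)$-mass of test nodes in $W_j$ — possible because in that event the learner has no information about the branch bits indexed by $W_j$ while each node still offers two distinct labels. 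Taking $n_{j+1}$ large enough keeps the windows $W_j$ pairwise disjoint, so the choices never interfere; each step only extends the prefix, $\mathbf{y}^\ast$ is a genuine infinite branch, $P_{\mathbf{y}^\ast}$ is $\cH$-realizable, and $\E[\er_{P_{\mathbf{y}^\ast}}(\wh h_{n_j})]\ge\tfrac1{33n_j}$ for all $j$, which is the claim (the final passage to the wording of Definition~\ref{def:universal-rates} is immediate).

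The main obstacle is this greedy construction. One must carefully track how an $n_j$-sample can leak the branch bits of $W_j$ — most directly, sampling any node below the test node reveals the edge labels, hence the bits, along the path to it — verify that on the complementary ``uninformed'' event the adversary retains enough freedom to fool $A$ on an $\Omega(1/n_j)$-mass of test nodes in $W_j$ simultaneously, and control the dependence of $A$'s behavior on the as-yet-uncommitted tail of the branch. Everything else — realizability, the counting behind the no-free-lunch estimate, and the disjointness bookkeeping across scales — is routine and formally identical to the binary proof of \citet{universal_learning}.
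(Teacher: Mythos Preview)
Your staircase construction and the no-free-lunch estimate are correct and match the paper's argument exactly; the multiclass generalization really is just the observation that $y^0_{\mathbf{u}}\neq y^1_{\mathbf{u}}$, which you identify. Where you diverge is in the passage from the average-over-branches bound to a single bad distribution: you propose a greedy construction of $\mathbf{y}^\ast$ across depth windows, while the paper---and indeed \citet[Theorem~4.6]{universal_learning}, which you cite as the source of the greedy approach---actually uses a one-line Fatou argument. Concretely, the paper picks $k_n=\lceil 1+\log_2 n\rceil$, observes that $n\,\pr(\wh h_n(X)\neq Y,\,T=k_n\mid\bu)\le \tfrac14$ a.s., applies reverse Fatou to get $\bE\big[\limsup_n n\,\pr(\wh h_n(X)\neq Y,\,T=k_n\mid\bu)\big]\ge \tfrac{1}{32}$, and concludes that some realization $\bu'$ has $\bE[\er_{\bu'}(\wh h_n)]\ge \tfrac{1}{33n}$ for infinitely many $n$.

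Your greedy construction can probably be made rigorous, but it is \emph{not} the approach in \citet{universal_learning}, and it manufactures exactly the obstacle you flag as the main one: controlling the learner's dependence on the uncommitted tail of the branch, and ensuring that a single choice of bits in $W_j$ works simultaneously for all test levels in that window (note the test point $z_i=x_{(y_1,\dots,y_i)}$ itself moves as you vary the bits). The Fatou route sidesteps all of this---no windows, no disjointness bookkeeping, no interference analysis---because one never commits to a branch until the very end. In short, your sketch is sound in outline but you have made the hardest step harder than it needs to be; replacing your last two paragraphs with the Fatou argument turns the whole proof into a routine adaptation.
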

\begin{proof}
Suppose that $\cH$ has an infinite Littlestone tree 
$$
\left\{\xi_{\bu}=(x_\bu,y_\bu^0,y_\bu^1):0\le k< d,\bu\in\{0,1\}^k\right\}.
$$ 
Fix an arbitrary learning algorithm $A$. 
Let $\bu=\{u_1,u_2,\dots\}$ be a sequence of i.i.d. Bernoulli$(\frac{1}{2})$ random variables. Conditional on $\bu$, define the distribution $P_{\bu}$ on $\X\times\Y$ by 
\begin{align*}
P_{\bu}(\{x_{\bu_{\le k}},y_{\bu_{\le k}}^{u_{k+1}}\})=2^{-k-1}, \ \forall\ k\ge 0.
\end{align*}
Note that the mapping $\bu\mapsto P_{\bu}$ is measurable. 

By the definition of Littlestone tree, for any $n\ge 0$, there exists a hypothesis $h_n\in\cH$ such that $h_n(x_{\bu_{\le k}})=y_{\bu_{\le k}}^{u_{k+1}}$ for any $0\le k\le n$. Thus, we have
\begin{align*}
\er_{\bu}(h_n):=P_{\bu}(\{x,y)\in\X\times\Y: h_n(x)\neq y\})\le \sum_{k=n+1}^\infty 2^{-k-1}=2^{-n-1}.
\end{align*}
Then, $\inf_{h\in\cH}\er_{\bu}(h)=0$ and $P_{\bu}$ is $\cH$-realizable. 

Let $T,T_1,T_2,\dots$ be i.i.d. random variables with distribution Geometric$(\frac{1}{2})$ (starting from 0). 
Define $X:=x_{\bu_{\le T}}$, $Y:=y_{\bu_{\le T}^{u_{T+1}}}$, $X_i:=x_{\bu_{\le T_i}}$, and $Y_i:=y_{\bu_{\le T_i}^{u_{T_i+1}}}$ for any $i\ge 1$. 
Then, conditional on $\bu$, by the definition of $P_\bu$, we know that $(X,Y),(X_1,Y_1),(X_2,Y_2),\dots$ is a sequence of i.i.d. random variables with distribution $P_\bu$. 
Now, define $\wh h_n=A(\cH,((X_1,Y_1),\dots,(X_n,Y_n)))$.
For any $k\ge 1$, since $u_1,u_2,\dots$ are i.i.d. Bernoulli$(\frac{1}{2})$ random variables, we have
\begin{align*}
&\pr(\wh h_n(X)\neq Y,T=k,\max\{T_1,\dots,T_n\}< k)
\\=&
\pr(\wh h_n(X)\neq y_{\bu_{\le k}}^{u_{k+1}},T=k,\max\{T_1,\dots,T_n\}< k)
\\=&
\bE[\pr(\wh h_n(X)\neq y_{\bu_{\le k}}^{u_{k+1}}|X,T,T_1,\dots,T_n)\indi\{T=k,\max\{T_1,\dots,T_n\}< k\}]
\\=&
\bE\left[\frac{1}{2}\indi\{T=k,\max\{T_1,\dots,T_n\}< k\}\right]
\\=&
\frac{1}{2}\pr(T=k,\max\{T_1,\dots,T_n\}< k\})
\\=&
2^{-k-2}(1-2^{-k})^n.
\end{align*}
Define $k_n:=\lceil 1+\log_2(n)\rceil$ for $n\ge1$. Then, we have $2^{-k_n-2}> \frac{1}{16n}$ and $(1-2^{-k_n})^n\ge (1-\frac{1}{2n})^{n}\ge \frac{1}{2}$, which, together with the above result, implies that
\begin{align*}
\pr(\wh h_n(X)\neq Y,T=k_n)
\ge &
\pr(\wh h_n(X)\neq Y,T=k_n,\max\{T_1,\dots,T_n\}< k_n)
\\ \ge &
2^{-k_n-2}(1-2^{-k_n})^n
\\ > &
\frac{1}{32n}.
\end{align*}
Since 
$$
n\pr(\wh h_n(X)\neq Y,T=k_n|\bu)\le n\pr(T=k_n|\bu)=n\pr(T=k_n)=n2^{-k_n-1}\le \frac{1}{4}\ \ \textup{a.s.},
$$
by Fatou's lemma, we have
\begin{align*}
\bE[\limsup_{n\rightarrow\infty}n\pr(\wh h_n(X)\neq Y,T=k_n|\bu)]
\ge \limsup_{n\rightarrow\infty}n\pr(\wh h_n(X)\neq Y,T=k_n)
\ge \frac{1}{32}.
\end{align*}
Since 
\begin{align*}
\pr(\wh h_n(X)\neq Y,T=k_n|\bu)\le 
\pr(\wh h_n(X)\neq Y|\bu)=\bE[\er_{\bu}(\wh h_n)|\bu]\ \ \textup{a.s.},
\end{align*}
we have $\bE[\limsup_{n\rightarrow\infty}n\bE[\er_\bu(\wh h_n)|\bu]]\ge\frac{1}{32}>\frac{1}{33}$. Thus, there exists $\bu'\in\{0,1\}^{\infty}$ such that $\bE[\er_{\bu'}(\wh h_n)] \ge \frac{1}{33n}$ infinitely often. The proof is completed by setting $P=P_{\bu'}$. 
\end{proof}

\subsection{Pattern avoidance functions} \label{sec:patern-avoid}
In this subsection, we design pattern avoidance functions in the adversarial setting and analyze their properties.
For any $n\in\N$ and hypothesis class $\cH\subseteq\Y^n$, denote the collection of all $n$-dimensional pseudo-cubes of $\cH$ with $\PC(\cH)$. For any $m\in\N$, denote the collection of all $n$-dimensional pseudo-cubes of $\cH$ of size $m$ with $\PC_m(\cH)$. Then, we have $\PC(\cH)=\cup_{m=1}^\infty\PC_m(\cH)$.
For any hypothesis class $F\subseteq\Y^n$, let $Q(F)$ denote the union of all the pseudo-cubes of dimension $n$ that are subsets of $F$. 

Consider the following game $\fB$ between player $P_A$ and $P_L$. At each round $\tau\ge1$:
\begin{itemize}
\item Player $P_A$ chooses a sequence $\bx_{\tau}=(x_{\tau}^0,\dots,x_{\tau}^{\tau-1})\in\X^{\tau}$ and a set  $C_{\tau}\in\PC(\Y^{\tau})$. 
\item Player $P_L$ chooses a sequence $\by_{\tau}=(y_{\tau}^0,\dots,y_{\tau}^{\tau-1})\in \Y^{\tau}$. 
\item Player $P_L$ wins the game in round $\tau$ if 
\begin{itemize}
\itemsep0em
\item either $C_\tau \notin\PC(\cH|_{\bx_\tau})$,
\item or $\by_{s}\in C_s$ for all $1\le s\le \tau$ and $\cH_{\bx_1,\by_1,\dots,\bx_{\tau},\by_{\tau}}=\emptyset$, where

$\mathcal{H}_{\bx_{1}, \by_{1}, \ldots, \bx_{\tau}, \by_{\tau}}:=\left\{h \in \mathcal{H}: h\left(x_{s}^{i}\right)=y_{s}^{i} \text { for } 0 \leq i<s, 1 \leq s \leq \tau\right\}.$
\end{itemize}
\end{itemize}
The set of winning sequences of $P_L$ in $\fB$ is 
\begin{align*}
\W_{\fB}:=\Bigg\{&(\bx_1,C_1,\by_1,\dots)\in\prod_{t=1}^\infty(\X^t\times\PC(\Y^t)\times\Y^t):\
\exists \tau\in\N \textup{ such that}
\\&\quad\quad
\textup{either }
C_\tau \nsubseteq\PC(\cH|_{\bx_{\tau}}),
\textup{ or }
\by_t\in C_t \textup{ for all } t\in[\tau] \textup{ and } \cH_{\bx_1,\by_1,\dots,\bx_\tau,\by_\tau}=\emptyset \Bigg\}
\end{align*}
Obviously, $\W_{\fB}$ is finitely decidable, which implies that $\fB$ is a Gale-Stewart game and according to \citet{MR0054922}, either $P_A$ or $P_L$ has a winning strategy. 

With regard to the universal measurability of the winning strategy, we assume that $\X$ is a Polish space, $\Y$ is countable, and $\cH$ is measurable in the sense of Definition \ref{def:measurable}. We first prove the following lemma.
\begin{lemma} \label{lem:PC_countable}
For any $t\in\N$ and $\bx_t\in\X^t$, $\PC(\Y^t)$ and $\PC(\cH|_{\bx_t})$ are countable sets.
\end{lemma}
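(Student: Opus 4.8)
The statement claims that for any $t\in\N$ and any $\bx_t\in\X^t$, both $\PC(\Y^t)$ and $\PC(\cH|_{\bx_t})$ are countable. The key observation is that a pseudo-cube is by definition a \emph{finite} subset of $\Y^t$. Since $\Y$ is countable, $\Y^t$ is countable (a finite product of countable sets). The collection of all finite subsets of a countable set is itself countable: indeed, the set of subsets of size exactly $m$ injects into $(\Y^t)^m$ (list the elements in some order), which is countable, so the set of all finite subsets is a countable union over $m\in\N$ of countable sets, hence countable. Therefore $\PC(\Y^t)$, being a subcollection of the finite subsets of $\Y^t$, is countable.

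For $\PC(\cH|_{\bx_t})$: first note $\cH|_{\bx_t}\subseteq\Y^t$, so every pseudo-cube contained in $\cH|_{\bx_t}$ is in particular a finite subset of $\Y^t$, i.e. $\PC(\cH|_{\bx_t})\subseteq\PC(\Y^t)$. Since a subset of a countable set is countable, $\PC(\cH|_{\bx_t})$ is countable. This is the entire argument — essentially a one-line consequence of the finiteness built into the definition of pseudo-cubes together with countability of $\Y$.

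I do not anticipate any real obstacle here; the only thing to be careful about is to make explicit the step ``finite subsets of a countable set form a countable set,'' since it is the load-bearing fact. One clean way to phrase it: fix a surjection $\N\to\Y^t$ (exists since $\Y^t$ is countable and nonempty, as $|\Y|\ge 2$); then every finite subset of $\Y^t$ of cardinality $m$ is the image of some finite subset of $\N$ of cardinality $\le$ (something), and finite subsets of $\N$ are countable — e.g. via the injection sending a finite set to the product of primes indexed by its elements, or simply $\{A\subseteq\N:\ A\text{ finite}\}=\bigcup_{m=0}^\infty\{A\subseteq\N:|A|=m\}$ with each piece injecting into $\N^m$.

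\begin{proof}
Since $\Y$ is countable, the finite product $\Y^t$ is also countable. We claim that the collection $\mathcal{F}$ of all finite subsets of $\Y^t$ is countable. Indeed, for each $m\in\N_0$, the collection $\mathcal{F}_m$ of subsets of $\Y^t$ of cardinality exactly $m$ admits an injection into $(\Y^t)^m$: pick any enumeration of each such subset and map it to the resulting $m$-tuple. Since $(\Y^t)^m$ is countable, $\mathcal{F}_m$ is countable, and hence $\mathcal{F}=\bigcup_{m=0}^\infty\mathcal{F}_m$ is a countable union of countable sets, which is countable.

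By Definition of pseudo-cube, every element of $\PC(\Y^t)$ is a finite subset of $\Y^t$, so $\PC(\Y^t)\subseteq\mathcal{F}$, and therefore $\PC(\Y^t)$ is countable. Moreover, since $\cH|_{\bx_t}\subseteq\Y^t$, any $t$-dimensional pseudo-cube contained in $\cH|_{\bx_t}$ is in particular a finite subset of $\Y^t$, so $\PC(\cH|_{\bx_t})\subseteq\PC(\Y^t)$. As a subset of a countable set, $\PC(\cH|_{\bx_t})$ is countable.
\end{proof}
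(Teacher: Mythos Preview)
Your proof is correct and follows essentially the same approach as the paper: both rely on the fact that pseudo-cubes are by definition finite subsets, that $\Y^t$ is countable, and that the collection of finite subsets of a countable set is countable. The only cosmetic difference is that you deduce countability of $\PC(\cH|_{\bx_t})$ via the inclusion $\PC(\cH|_{\bx_t})\subseteq\PC(\Y^t)$, whereas the paper argues it directly from countability of $\cH|_{\bx_t}$.
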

\begin{proof}
Since $\Y$ is countable, $\Y^t$ and $\cH|_{\bx_t}$ are also countable. By the definition of pseudo-cube, any pseudo-cube is a finite subset of the hypothesis class. Since the set of all finite subsets of a countable set is countable, $\PC(\cH|_{\bx_t})$ and $\PC(\Y^t)$ are countable sets. 
\end{proof}
For any $t\in\N$, define the set 
\begin{align*}
\XPC_t:=\cup_{\bx_t\in\X^t}\{\bx_t\}\times \PC(\cH|_{\bx_t})\subseteq\X^t\times\PC(\Y^t)
\end{align*}
Then, we can prove the following property of $\XPC_t$.
\begin{lemma} \label{lem:XPC_analytic}
For any $t\in\N$, $\XPC_t$ is an analytic subset of the Polish space $\X^t\times\PC(\Y^t)$.
\end{lemma}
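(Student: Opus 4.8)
The plan is to reduce the claim to a routine projection argument by first exploiting the fact, established in Lemma~\ref{lem:PC_countable}, that $\PC(\Y^t)$ is countable. Equip $\PC(\Y^t)$ with the discrete topology, so that $\X^t\times\PC(\Y^t)$ is Polish and every singleton $\{C\}$ is clopen. The key observation is that for $(\bx_t,C)\in\X^t\times\PC(\Y^t)$ we have $(\bx_t,C)\in\XPC_t$ exactly when $C\in\PC(\cH|_{\bx_t})$; but since $C$ is \emph{already} a $t$-dimensional pseudo-cube by virtue of lying in $\PC(\Y^t)$, the pseudo-cube condition is automatic and this membership is equivalent to the single containment $C\subseteq\cH|_{\bx_t}$. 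Writing $\bx_t=(x_1,\ldots,x_t)$ and $A_C:=\{\bx_t\in\X^t:C\subseteq\cH|_{\bx_t}\}$, we therefore have
$$
\XPC_t=\bigcup_{C\in\PC(\Y^t)}\bigl(A_C\times\{C\}\bigr).
$$
Because analytic sets are closed under countable unions and $A_C\times\{C\}$ is the image of $A_C$ under the continuous embedding $\bx_t\mapsto(\bx_t,C)$, it suffices to show that each $A_C$ is an analytic subset of $\X^t$.

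For a fixed pseudo-cube $C=\{c_1,\ldots,c_m\}\subseteq\Y^t$, the condition $C\subseteq\cH|_{\bx_t}$ unwinds to: for every $j\in[m]$ there is some $h\in\cH$ with $h|_{\bx_t}=c_j$. Using the measurable representation $\cH=\{\sfh(\theta,\cdot):\theta\in\Theta\}$ from Definition~\ref{def:measurable}, define for each $j\in[m]$ the set
$$
B_j:=\bigl\{(\bx_t,\theta)\in\X^t\times\Theta:\ \sfh(\theta,x_i)=c_j(i)\ \text{for all }i\in[t]\bigr\}.
$$
Each $B_j$ is Borel in the Polish space $\X^t\times\Theta$, being a finite intersection over $i\in[t]$ of preimages of singletons of the countable (hence standard Borel) space $\Y$ under the Borel maps $(\bx_t,\theta)\mapsto\sfh(\theta,x_i)$. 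Projecting away the Polish coordinate $\Theta$, the set $\pi_{\X^t}(B_j)$ is analytic in $\X^t$. Finally $A_C=\bigcap_{j=1}^m\pi_{\X^t}(B_j)$ is a finite intersection of analytic sets, hence analytic, which completes the argument by the previous paragraph.

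There is no deep obstacle here; the lemma is a measurability-bookkeeping step. The points that need care are (i) correctly observing that membership in $\XPC_t$ collapses to the pure containment $C\subseteq\cH|_{\bx_t}$, so that the complicated pseudo-cube structure is entirely absorbed into the countable \emph{index set} $\PC(\Y^t)$ rather than appearing as a condition to be verified on $\bx_t$, and (ii) the standard descriptive-set-theory facts that Borel sets project to analytic sets along Polish spaces and that analyticity is preserved under countable unions and finite intersections. Once the decomposition is set up, the rest is routine.
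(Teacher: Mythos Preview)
Your proof is correct and follows essentially the same projection argument as the paper: both reduce to the fact that, once the pseudo-cube structure is absorbed into the countable index set $\PC(\Y^t)$, membership in $\XPC_t$ amounts to the containment $C\subseteq\cH|_{\bx_t}$, which is analytic via the Borel parametrization $\sfh$ and projection along $\Theta$. The only cosmetic difference is that the paper decomposes first by the \emph{size} $n=|C|$ and projects from $\Theta^n$ in one shot, whereas you decompose by each individual $C$ and take a finite intersection of projections from $\Theta$; the underlying descriptive-set-theoretic content is identical.
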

\begin{proof}
According to Lemma \ref{lem:PC_countable}, $\PC(\Y^t)$ is countable. Thus, $\X^t\times\PC(\Y^t)$ is a Polish space.
For any $t\in\N$, we have 
\begin{align*}
\XPC_t=\cup_{n=1}^\infty\Big(&
\Big(\X^t\times\PC_n(\Y^t)\Big)\cap 
\\&
\cup_{(\theta_1,\dots,\theta_n)\in\Theta^n}\Big\{(\bx,\by^1,\dots,\by^n)\in\X^t\times\Y^{tn}:\sfh(\theta_i,\bx)=\by^i \textup{ for all }i\in[n]\Big\}
\Big)
\end{align*}
where by $\sfh(\theta,(x_1,\dots,x_t))=(y_1,\dots,y_t)$, we mean that $\sfh(\theta,x_\tau)=y_\tau$ for all $\tau\in[t]$.
Indeed, for any $(\bx,C)\in\XPC_t$, we have $\bx\in\X^t$ and $C\in\PC(\cH|_{\bx})$. Then, by the definition of pseudo-cubes, there exists a finite $n\in\N$ such that $C\in\PC_n(\cH|_{\bx})$. Since $\cH|_{\bx}\subseteq\Y^t$, we have $C\in\PC_n(\Y^t)$ and $(\bx,C)\in\X^t\times\PC_n(\Y^t)$. Moreover, since $C\subseteq\cH|_{\bx}$ with $|C|=n$, we can write $C=\{\by^1,\dots,\by^n\}$ such that there exist $(h_1,\dots,h_n)\in\cH^n$ satisfying $h_i(\bx)=\by^i$ for any $i\in[n]$. By Definition \ref{def:measurable}, there exist $(\theta_1,\dots,\theta_n)\in\Theta^n$ such that $\sfh(\theta_i,\bx)=\by^i$ for all $i\in[n]$.

On the other hand, if $(\bx,\{\by^1,\dots,\by^n\})\in\X^t\times\PC_n(\Y^t)$ is such that there exist $(\theta_1,\dots,\theta_n)\in\Theta^n$ satisfying $\sfh(\theta_i,\bx)=\by^i$ for all $i\in[n]$, we have $C:=\{\by^1,\dots,\by^n\}\subseteq\cH|_{\bx}$ and $C$ is a pseudo-cube of dimension $t$. Thus, $C\in\PC(\cH|_{\bx})$ and $(\bx,\{\by^1,\dots,\by^n\})\in\XPC_t$.

We claim that the set 
\begin{align*}
S_{t,n}:=\left\{(\theta_1,\dots,\theta_n,\bx,\by^1,\dots,\by^n)\in\Theta^n\times\X^t\times\Y^{tn}:\sfh(\theta_i,\bx)=\by^i\textup{ for all }i\in[n]\right\}
\end{align*}
is a Borel set. The reason is as follows. 
For any $i\in[n]$, define the function 
$$
l:\Theta^n\times\X^t\times\Y^{tn}\rightarrow \{0,1,\dots,nt\},\ \ 
(\theta_1,\dots,\theta_n,\bx,\by^1,\dots,\by^n)\mapsto \sum_{i=1}^n\sum_{\tau=1}^t
\indi\{\sfh(\theta_i,x_\tau)\neq y^i_\tau\}.
$$
Since $\sfh$ is Borel-measurable, we can conclude that $l$ is also Borel-measurable with the argument analogous to that in the proof of Lemma \ref{lem:coanalytic}. Thus, $S_{t,n}=l^{-1}(\{0\})$ is a Borel set. Then, the set 
\begin{align*}
&\cup_{(\theta_1,\dots,\theta_n)\in\Theta^n}\left\{(\bx,\by^1,\dots,\by^n)\in\X^t\times\Y^{tn}:\sfh(\theta_i,\bx)=\by^i \textup{ for all }i\in[n]\right\}
\\=&
\cup_{(\theta_1,\dots,\theta_n)\in\Theta^n}\left\{(\bx,\by^1,\dots,\by^n)\in\X^t\times\Y^{tn}:(\theta_1,\dots,\theta_n,\bx,\by^1,\dots,\by^n)\in S_{t,n}\right\}
\end{align*}
is an analytic set for any $t,n\in\N$. 
Since $\PC(\Y^t)$ is countable, we know that $\PC_n(\Y^t)$ is countable. Since $\X$ is a Polish space, we have that $\X^t\times\PC_n(\Y^t)$ is an analytic set. In conclusion, $\XPC_t$ is an analytic set for any $t\in\N$.
\end{proof}

Now, for any $0\le n\le \infty$, define $\ms P_n:=\prod_{t=1}^n(\X^t\times\PC(\Y^t)\times\Y^t)$ which is the set of positions of length $n$ of the game $\fB$ and $\wt{\ms P}_n:=\prod_{t=1}^n(\XPC_t\times\Y^t)\subseteq \ms P_n$. 
Define $\ms P:=\cup_{n=0}^\infty\ms P_n$ (with $\ms P_0:=\emptyset$) to be the set of all positions of the Gale-Stewart game $\fB$. 
We can show the following results according to Lemma \ref{lem:XPC_analytic}
\begin{corollary} \label{coro:P_n_analytic}
For any $0\le n\le \infty$, $\wt{\ms P}_n$ is an analytic subset of the Polish space $\ms P_n$.
\end{corollary}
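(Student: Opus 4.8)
The plan is to deduce this from Lemma~\ref{lem:XPC_analytic} together with two standard permanence properties of analytic sets: stability under continuous preimages and under countable intersections (equivalently, that a countable product of analytic subsets of Polish spaces is analytic). First I would check that $\ms P_n$ is genuinely Polish for every $0\le n\le\infty$: by Lemma~\ref{lem:PC_countable} the set $\PC(\Y^t)$ is countable, hence Polish when equipped with the discrete topology, so each block $\X^t\times\PC(\Y^t)\times\Y^t$ is Polish (as $\X$ is Polish and $\Y$ is countable), and therefore $\ms P_n=\prod_{t=1}^n\bigl(\X^t\times\PC(\Y^t)\times\Y^t\bigr)$ is a countable (finite when $n<\infty$) product of Polish spaces, hence Polish. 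The case $n=0$ is trivial since then $\wt{\ms P}_0=\ms P_0$.

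For $1\le n\le\infty$ I would write $\wt{\ms P}_n$ as a countable intersection of analytic sets. For each $t$ with $1\le t\le n$ let $\pi_t:\ms P_n\to\X^t\times\PC(\Y^t)\times\Y^t$ be the continuous coordinate projection onto the $t$-th block. By Lemma~\ref{lem:XPC_analytic} the set $\XPC_t$ is analytic in the Polish space $\X^t\times\PC(\Y^t)$, so $\XPC_t\times\Y^t$ is analytic in $\X^t\times\PC(\Y^t)\times\Y^t$ (the product of an analytic set with a Polish space is analytic), and therefore $\pi_t^{-1}\bigl(\XPC_t\times\Y^t\bigr)$ is analytic in $\ms P_n$, since the preimage of an analytic set under a continuous map between Polish spaces is analytic. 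Finally, by definition of $\wt{\ms P}_n$ we have $\wt{\ms P}_n=\bigcap_{t=1}^n\pi_t^{-1}\bigl(\XPC_t\times\Y^t\bigr)$, and a countable intersection of analytic subsets of a Polish space is analytic, which gives the claim.

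An equivalent, more direct route is to take, for each $t$, a Polish space $Z_t$ and a continuous surjection $f_t:Z_t\to\XPC_t\times\Y^t$ (which exists because $\XPC_t\times\Y^t$ is analytic), and observe that $\prod_{t=1}^n Z_t$ is Polish and the product map $\prod_{t=1}^n f_t$ is continuous with image exactly $\wt{\ms P}_n$. I do not expect any genuine obstacle in this corollary: it is essentially bookkeeping on top of Lemma~\ref{lem:XPC_analytic}. The only points that require a moment of care are keeping the ambient spaces straight and recalling that $\PC(\Y^t)$ must carry the discrete topology so that all the building-block spaces are truly Polish.
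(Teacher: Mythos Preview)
Your proposal is correct and essentially mirrors the paper's argument: both rest on Lemma~\ref{lem:XPC_analytic}, the fact that each factor $\XPC_t\times\Y^t$ is analytic, and closure of analytic sets under countable products/intersections. The only cosmetic difference is that the paper handles finite $n$ directly as a finite product of analytic sets and then treats $n=\infty$ via the countable intersection $\wt{\ms P}_\infty=\bigcap_{n\ge1}\bigl(\wt{\ms P}_n\times\prod_{t>n}(\X^t\times\PC(\Y^t)\times\Y^t)\bigr)$, whereas you handle all $n$ uniformly with coordinate projections; the content is the same.
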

\begin{proof}
Since $\Y^t$ and $\PC(\Y^t)$ are countable and $\X$ is a Polish space, $\ms P_n=\prod_{t=1}^n(\X^t\times\PC(\Y^t)\times\Y^t)$ is also a Polish space for any $0\le n\le \infty$.
By Lemma \ref{lem:XPC_analytic}, we know that $\XPC_t\times\Y^t$ is an analytic subset of $\X^t\times\PC(\Y^t)\times\Y^t$ for any $0\le t<\infty$. Then, we have that $\wt{\ms P}_n$ is an analytic subset of $\ms P_n$ for any $0\le n<\infty$.
For $n=\infty$, we have that
\begin{align*}
\wt{\ms P}_\infty=\cap_{n=1}^\infty\left(\wt{\ms P}_n\times\prod_{t=n+1}^\infty(\X^t\times\PC(\Y^t)\times\Y^t)\right).
\end{align*}
Since $\wt{\ms P}_n\times\prod_{t=n+1}^\infty(\X^t\times\PC(\Y^t)\times\Y^t)$ is an analytic subset of $\ms P_{\infty}=\prod_{t=1}^\infty(\X^t\times\PC(\Y^t)\times\Y^t)$ for any $1\le n<\infty$, we have that $\wt{\ms P}_{\infty}$ is also an analytic subset of $\ms P_{\infty}$. 
\end{proof}

Then, we can proceed to show that
\begin{lemma} \label{lem:coanalytic_linear}
$\ms P_\infty\backslash\W_\fB$ is an analytic set.
\end{lemma}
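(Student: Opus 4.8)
The plan is to exhibit $\ms P_\infty \backslash \W_\fB$ as a countable intersection of analytic subsets of the Polish space $\ms P_\infty$. Write a position as $p = (\bx_1, C_1, \by_1, \bx_2, C_2, \by_2, \dots)$, and for each $\tau \ge 1$ introduce the three events
$$A_\tau := \{p : (\bx_\tau, C_\tau) \notin \XPC_\tau\},\quad E_\tau := \{p : \by_t \in C_t \text{ for all } t \le \tau\},\quad F_\tau := \{p : \cH_{\bx_1,\by_1,\dots,\bx_\tau,\by_\tau} = \emptyset\},$$
so that, directly from the definition, $\W_\fB = \bigcup_{\tau \ge 1}\bigl(A_\tau \cup (E_\tau \cap F_\tau)\bigr)$. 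Taking complements inside $\ms P_\infty$ gives
$$\ms P_\infty \backslash \W_\fB = \Bigl(\bigcap_{\tau \ge 1} A_\tau^c\Bigr) \cap \Bigl(\bigcap_{\tau \ge 1}(E_\tau^c \cup F_\tau^c)\Bigr),$$
and one checks immediately that $\bigcap_{\tau} A_\tau^c = \{p : (\bx_\tau,C_\tau)\in\XPC_\tau \text{ for all }\tau\}$ is precisely $\wt{\ms P}_\infty$, which is analytic by Corollary \ref{coro:P_n_analytic}. So it remains to prove that $\bigcap_\tau(E_\tau^c \cup F_\tau^c)$ is analytic, and for this it suffices to show each $E_\tau^c$ and each $F_\tau^c$ is analytic.

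For $E_\tau^c$: since $\Y$ is countable, $\Y^t$ is countable, and by Lemma \ref{lem:PC_countable} so is $\PC(\Y^t)$; equipping these with their discrete topologies, the coordinate map $p \mapsto (\by_t, C_t)$ is continuous and $\{(\by_t, C_t) : \by_t \in C_t\}$ is clopen, so $E_\tau = \bigcap_{t \le \tau}\{p : \by_t \in C_t\}$ is Borel and hence $E_\tau^c$ is Borel, in particular analytic. For $F_\tau^c$: by Definition \ref{def:measurable}, $\cH_{\bx_1,\by_1,\dots,\bx_\tau,\by_\tau} \neq \emptyset$ if and only if there is $\theta \in \Theta$ with $\sfh(\theta, x_s^i) = y_s^i$ for all $0 \le i < s \le \tau$; running the same measurability bookkeeping as in the proof of Lemma \ref{lem:coanalytic} (each constraint is the pullback of a clopen subset of $\Y^2$ along the Borel map $(\theta, p) \mapsto (\sfh(\theta, x_s^i), y_s^i)$), the set $G := \{(\theta, p) \in \Theta \times \ms P_\infty : \sfh(\theta, x_s^i) = y_s^i \text{ for all } 0 \le i < s \le \tau\}$ is Borel, and $F_\tau^c = \{p : \cH_{\bx_1,\by_1,\dots,\bx_\tau,\by_\tau} \neq \emptyset\}$ is its image under the coordinate projection $\Theta \times \ms P_\infty \to \ms P_\infty$, hence analytic. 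Consequently each $E_\tau^c \cup F_\tau^c$ is analytic, the countable intersection $\bigcap_\tau(E_\tau^c \cup F_\tau^c)$ is analytic, and intersecting it with the analytic set $\wt{\ms P}_\infty$ shows $\ms P_\infty \backslash \W_\fB$ is analytic.

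There is no deep obstacle here — the argument is descriptive-set-theoretic bookkeeping — but two points deserve care. First, one must arrange the complement identity so that the clause ``$\by_t \in C_t$ for all $t \le \tau$'' sits inside the same $\tau$-disjunct of $\W_\fB$ as ``$\cH_{\bx_1,\by_1,\dots,\bx_\tau,\by_\tau} = \emptyset$''; this is exactly the design feature of $\fB$ that makes $\W_\fB$ finitely decidable, and it is what turns the complement into a countable \emph{intersection} (rather than a union) of analytic pieces. Second, the one place where measurability of $\cH$ is genuinely needed is the projection step producing $F_\tau^c$: ``$\cH_{\bx_1,\by_1,\dots,\bx_\tau,\by_\tau}\neq\emptyset$'' must be phrased as the image of a Borel set under $\Theta \times \ms P_\infty \to \ms P_\infty$, and it is cleanest to invoke verbatim the Borel-measurability computation already carried out in Lemma \ref{lem:coanalytic} rather than to repeat it.
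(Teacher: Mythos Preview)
Your proof is correct and follows essentially the same approach as the paper: decompose $\ms P_\infty\backslash\W_\fB$ via De Morgan into the intersection of the analytic set $\wt{\ms P}_\infty$ (from Corollary~\ref{coro:P_n_analytic}) with a countable intersection of sets of the form $E_\tau^c\cup F_\tau^c$, where $E_\tau^c$ is Borel by countability of $\Y^t$ and $\PC(\Y^t)$, and $F_\tau^c$ is analytic as the projection from $\Theta\times\ms P_\infty$ of a Borel set (the same computation as in Lemma~\ref{lem:coanalytic}). The paper's proof differs only cosmetically, writing the pieces directly as subsets of $\prod_t(\XPC_t\times\Y^t)$ rather than first factoring out $\bigcap_\tau A_\tau^c=\wt{\ms P}_\infty$; your named decomposition into $A_\tau, E_\tau, F_\tau$ is arguably cleaner.
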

\begin{proof}
We have
\begin{align*}
\ms P_\infty\backslash\W_\fB=&
\Big\{
(\bx_1,C_1,\by_1,\dots)\in\prod_{t=1}^\infty(\XPC_t\times\Y^t):
\textup{ for all }\tau\in\N, 
\textup{ either } \cH_{\bx_1,\by_1,\dots,\bx_\tau,\by_\tau}\neq\emptyset 
\\&\quad\quad\quad\quad\quad\quad\quad\quad\quad\quad\quad\quad\quad\quad\quad\quad
\textup{ or } \exists\ s\in[\tau] \textup{ s.t. } \by_s\notin C_s
\Big\}
\\=&
\cap_{\tau=1}^\infty
\Bigg(\Big(\cup_{\theta\in\Theta}
\cap_{s=1}^{\tau}\Big\{(\bx,C_1,\by_1,\dots,)\in\prod_{t=1}^\infty(\XPC_t\times\Y^t): \sfh(\theta,\bx_s)=\by_s\Big\}\Big)
\\&\cup \Big(\cup_{s=1}^\tau
\Big\{(\bx,C_1,\by_1,\dots,)\in\prod_{t=1}^\infty(\XPC_t\times\Y^t):\by_s\notin C_s\Big\}\Big)\Bigg).
\end{align*}
By Lemma \ref{lem:PC_countable}, Lemma \ref{lem:XPC_analytic}, and Definition \ref{def:measurable}, for any $s\in\N$,
\begin{align*}
&\Big\{(\theta,\bx,C_1,\by_1,\dots,)\in\Theta\times\prod_{t=1}^\infty(\XPC_t\times\Y^t): \sfh(\theta,\bx_s)=\by_s\Big\}
\\=&
\Big(\Theta\times\prod_{t=1}^\infty(\XPC_t\times\Y^t)\Big)\cap
\Big\{(\theta,\bx,C_1,\by_1,\dots,)\in\Theta\times\prod_{t=1}^\infty(\X^t\times\PC(\Y^t)\times\Y^t): \sfh(\theta,\bx_s)=\by_s\Big\}
\end{align*}
is an analytic set and
\begin{align*}
&\Big\{(\bx,C_1,\by_1,\dots,)\in\prod_{t=1}^\infty(\XPC_t\times\Y^t):\by_s\notin C_s\Big\}
\\=&
\Big(\prod_{t=1}^\infty(\XPC_t\times\Y^t)\Big)\cap 
\Big\{(\bx,C_1,\by_1,\dots,)\in\prod_{t=1}^\infty(\X^t\times\PC(\Y^t)\times\Y^t):\by_s\notin C_s\Big\}
\end{align*}
is also an analytic set. Thus, we have
$\ms P_\infty\backslash\W_{\fB}$ is an analytic set.
\end{proof}

We have the following lemma relating a winning strategy of $P_A$ in $\fB$ to an infinite DSL tree of $\cH$. 
\begin{lemma} \label{lem:PA_winning_DSL}
$P_A$ has a winning strategy in $\fB$ if and only if $\cH$ has an infinite DSL tree.
\end{lemma}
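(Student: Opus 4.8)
The plan is to transcribe, in both directions, the correspondence between a play of $\fB$ in which $P_L$ always stays inside the pseudo-cubes offered by $P_A$ and a root-to-node path in a DSL tree, where round $\tau$ of $\fB$ matches level $\tau-1$ of the tree (so the move $\bx_\tau\in\X^\tau$ plays the role of a level-$(\tau-1)$ label in $\X^{(\tau-1)+1}$ and $\by_\tau\in\Y^\tau$ of an edge label in $\Y^{(\tau-1)+1}$). The one structural fact I would isolate first is that a winning strategy of $P_A$ can never play a set $C_\tau$ that fails to lie in $\PC(\cH|_{\bx_\tau})$, since otherwise $P_L$ wins in round $\tau$ no matter what it does.

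For the ``if'' direction, suppose $\cH$ has an infinite DSL tree $T$. I would define a strategy for $P_A$ that walks down $T$, maintaining a pointer to a node at level $\tau-1$, starting at the root. At round $\tau$, $P_A$ plays $\bx_\tau$ equal to the current node's label and $C_\tau$ equal to the associated pseudo-cube (the set of labels on the edges to its children), which lies in $\PC(\cH|_{\bx_\tau})$ by the first DSL property. If $P_L$ answers with $\by_\tau\in C_\tau$, then $\by_\tau$ labels a unique edge to a child and $P_A$ advances the pointer along it; if $\by_\tau\notin C_\tau$, then from then on $P_A$ plays, at each round $\tau'$, the label and associated pseudo-cube of \emph{any} level-$(\tau'-1)$ node of $T$ (such nodes exist as $T$ has depth $\infty$). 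Then $P_L$ never wins: the clause $C_\tau\notin\PC(\cH|_{\bx_\tau})$ never triggers, and whenever $\by_s\in C_s$ for all $s\le\tau$ the sequence $\by_1,\dots,\by_\tau$ traces a root path $v_0\to\cdots\to v_\tau$ in $T$, so the realizability property of $T$ gives $h\in\cH$ with $h|_{\bx_s}=\by_s$ for all $s\le\tau$ and hence $\cH_{\bx_1,\by_1,\dots,\bx_\tau,\by_\tau}\ne\emptyset$.

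For the ``only if'' direction, let $\sigma$ be a winning strategy for $P_A$. I would build a DSL tree level by level, indexing nodes by the histories consistent with $\sigma$ in which $P_L$ has so far played only elements of the offered pseudo-cubes: the root is the empty history, and a level-$k$ node $v$ records $(\bx_1,C_1,\by_1,\dots,\bx_k,C_k,\by_k)$ with $(\bx_s,C_s)$ dictated by $\sigma$ and $\by_s\in C_s$. Writing $(\bx_{k+1},C_{k+1})$ for $\sigma$'s move at $v$, I would verify (i) $C_{k+1}\in\PC(\cH|_{\bx_{k+1}})$, since otherwise $P_L$ would win in round $k+1$ by playing $\by_1,\dots,\by_k$ and then anything, contradicting that $\sigma$ is winning; this lets me label $v$ by $\bx_{k+1}\in\X^{k+1}$ and attach $|C_{k+1}|$ children, one per element of $C_{k+1}$, with edges labelled accordingly; and (ii) for each $\by\in C_{k+1}$, letting $P_L$ play $\by_1,\dots,\by_k,\by$ stays inside all pseudo-cubes, so the winning property of $\sigma$ together with (i) forces $\cH_{\bx_1,\by_1,\dots,\bx_{k+1},\by}\ne\emptyset$, and any witness is exactly what the DSL realizability condition demands at the corresponding child. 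Since pseudo-cubes are nonempty the construction never stops, giving a DSL tree of depth $\infty$.

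I do not anticipate a serious obstacle; the only real subtlety — and the point of the seemingly redundant clause $C_\tau\notin\PC(\cH|_{\bx_\tau})$ in the winning rule of $\fB$ — is the asymmetry that $P_L$ may leave the pseudo-cubes. In the ``if'' direction this forces the ``escape'' branch of $P_A$'s strategy, where $P_A$ must keep playing genuine pseudo-cubes of $\cH$ at the right dimension (supplied by $T$); in the ``only if'' direction it means the DSL tree is read off only from the subgame of pseudo-cube-consistent histories. Keeping the index shift (round $\tau$ of $\fB$ versus level $\tau-1$ of the tree) straight is the only genuinely fiddly part.
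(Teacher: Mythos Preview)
Your proposal is correct and follows essentially the same approach as the paper: in both directions you transcribe the natural correspondence between DSL-tree paths and pseudo-cube-consistent plays of $\fB$, with the same handling of the case $\by_\tau\notin C_\tau$ (you have $P_A$ escape to an arbitrary node of the right level, the paper has $P_A$ follow the first child of the current node; both work for the same reason). Your explicit verification of the two DSL properties in the ``only if'' direction is slightly more detailed than the paper's, but the argument is the same.
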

\begin{proof}
Suppose that $P_A$ has a winning strategy $\xi_\tau:\prod_{t=1}^{\tau-1}\Y^t\rightarrow\X^\tau\times\PC(\Y^{\tau})$ for all $\tau\in\N$ in $\fB$. 
Specifically, for any $(\by_1,\cdots,\by_{\tau-1})\in\prod_{t=1}^{\tau-1}\Y^t$, we have $\xi_\tau(\by_1,\cdots,\by_{\tau-1})=(\bx_{\tau},C)$ for some $\bx_\tau\in\X^\tau$ and $C\in\PC(\cH|_{\bx_\tau})$.
For notational convenience, let  $\xi_\tau(\by_1,\cdots,\by_{\tau-1})(1)$ denote $\bx_{\tau}$ and let $\xi_\tau(\by_1,\cdots,\by_{\tau-1})(2)$ denote $C$. Now, define the following infinite tree by induction. 
\begin{itemize}
\item Let the root node of the tree be labelled with $\xi_{1}(\emptyset)(1)\in\X$ and have $|\xi_{1}(\emptyset)(2)|$ children such that each edge between the root node and its children is labelled with a unique element in $\xi_{1}(\emptyset)(2)$. 
\item Suppose that for some $\tau\in\N$, all the nodes in level $0,1,\dots,\tau$ have been defined, all the nodes in level $0,1,\dots,\tau-1$ have been labelled, and the edges between each node in level $k$ and its children have been labelled for all $k\in\{0,1,\cdots,\tau-1\}$.

Then, for each node $v$ in level $\tau$, denote the labels of the edges along the path eliminating from the root node to node $v$ with $\by_1\in\Y^{1}$, $\by_2\in\Y^{2}$, \dots, and $\by_{\tau}\in\Y^{\tau}$. Now, let node $v$ be labelled with $\xi_{\tau+1}(\by_1,\cdots,\by_\tau)(1)$ and have $|\xi_{\tau+1}(\by_1,\cdots,\by_\tau)(1)|$ children such that each edge between node $v$ to one of its children is labelled with a unique element in $\xi_{\tau+1}(\by_1,\cdots,\by_\tau)(2)$. 
\end{itemize}
By the definition of the winning strategy of $P_A$ in $\fB$, the infinite tree defined above is an infinite DSL tree for $\cH$.

For the other direction, suppose that $\cH$ has an infinite DSL tree. For any $k\in\N_0$, denote the set of nodes in the level $k$ of the infinite DSL tree with $V_{k}$. 
Note that if there exists some $\tau\in\N$ such that $C_{t}\subseteq \cH|_{\bx_t}$ for all $t\in[\tau]$, $\cH|_{\bx_1,\by_1,\dots,\bx_{\tau-1},\by_{\tau-1}}\neq \emptyset$, and $\by_\tau\notin C_\tau$, then $P_A$ wins in round $\tau$ of $\fB$.
Define the following strategy $\xi_{\tau}:\prod_{t=1}^{\tau-1}\Y^t\rightarrow\X^{\tau}$ for $P_A$ in $\fB$ and a corresponding node mapping $v_\tau:\prod_{t=1}^{\tau-1}\Y^t\rightarrow V_{\tau-1}$ by induction for all $\tau\in\N$. 
\begin{itemize}
\item For $\tau=1$, let $v_{1}(\emptyset)$ denote the root node, $\xi_1(\emptyset)(1)$ denote the label of the root node of the DSL tree, and $\xi_1(\emptyset)(2)$ denote the pseudo-cube in $\cH|_{\xi_1(\emptyset)(1)}$ consisting of the labels of all the edges between the root node and its children. 

\item Suppose that for some $\tau\in\{2,3,\dots\}$, $\eta_t$ and $v_t$ has been defined for all $t\in[\tau-1]$. For any $\by_1\in\Y^1,\dots, \by_{\tau-1}\in\Y^{\tau-1}$, there are two cases.
\begin{itemize}
\item If $P_A$ has not won in round $\tau-1$, define $v_\tau(\by_1,\dots,\by_{\tau-1})$ to be the node in $V_{\tau-1}$ which is the ending node of the path in the DSL tree eliminating from the root along the edges labelled with $\by_1,\dots,\by_{\tau-1}$.
Define $\xi_{\tau}(\by_1,\dots,\by_{\tau-1})(1)$ to be the label of $v_\tau(\by_1,\dots,\by_{\tau-1})$. 
Define $\xi_{\tau}(\by_1,\dots,\by_{\tau-1})(2)$ to be the pseudo-cube in $\cH|_{\xi_{\tau}(\by_1,\dots,\by_{\tau-1})(1)}$ consisting of the labels of all the edges between $v_\tau(\by_1,\dots,\by_{\tau-1})$ and its children. 

\item If $P_A$ has already won, define $v_\tau(\by_1,\dots,\by_{\tau-1})$ to be the first child node of $v_{\tau-1}(\by_1,\dots,\by_{\tau-2})$.
Define $\xi_\tau(\by_1,\dots,\by_{\tau-1})(1)$ to be the label of $v_\tau(\by_1,\dots,\by_{\tau-1})$. 
Define $\xi_{\tau}(\by_1,\dots,\by_{\tau-1})(2)$ to be the pseudo-cube in $\cH|_{\xi_{\tau}(\by_1,\dots,\by_{\tau-1})(1)}$ consisting of the labels of all the edges between $v_\tau(\by_1,\dots,\by_{\tau-1})$ and its children.
\end{itemize}
\end{itemize}
According to the definition of DSL trees and the rules of $\fB$, $\{\xi_\tau\}_{\tau\in\N}$ is a winning strategy of $P_A$ in $\fB$. 
\end{proof}

Moreover, we can ensure that there is a universally measurable winning strategy for $P_L$ in $\fB$ when $\cH$ does not have an infinite DSL tree. 
\begin{proposition} \label{prop:DSL_winning}
If $\cH$ does not have an infinite DSL tree, then there is a universally measurable winning strategy for $P_L$ in $\fB$.
\end{proposition}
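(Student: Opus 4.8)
The plan is to run the Gale--Stewart value argument for the game $\fB$, in exact parallel with what Section~\ref{sec:adv_gam_exp} does for $\B$ and $\wb\B$. Since $\W_\fB$ is finitely decidable, $\fB$ is a Gale--Stewart game, so by \citet{MR0054922} one of $P_A,P_L$ has a winning strategy; by Lemma~\ref{lem:PA_winning_DSL} the assumption that $\cH$ has no infinite DSL tree excludes a winning strategy for $P_A$, hence $P_L$ has one. The content of the proposition is to make such a strategy \emph{universally measurable}.

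First I would set up the game value on the positions of $\fB$. For a position $\bv$ of length $k$, define active decision trees rooted at $\bv$ as in \citet[Definition B.4]{universal_learning}: a tree whose branching at depth $t$ is indexed by a $P_A$-move $(\bx,C)\in\X^{k+t}\times\PC(\Y^{k+t})$ together with a $P_L$-move $\by\in\Y^{k+t}$, which is \emph{active} if every root-to-leaf extension leaves the game unwon for $P_L$. Let $\rho_{\prec_\bv}$ be the rank of the ``erase the leaves'' relation on active trees, and set $\val(\bv)=-1$ if $\bv$ is inactive, $\val(\bv)=\Omega$ if that relation is ill-founded, and $\val(\bv)=\rho_{\prec_\bv}(\emptyset)$ otherwise; $P_A$ has no winning strategy iff $\val(\emptyset)<\Omega$, so here $\val(\emptyset)<\Omega$. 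Combining the analogue of \citet[Lemma B.7]{universal_learning} with Lemma~\ref{lem:coanalytic_linear} (that $\ms P_\infty\setminus\W_\fB$ is analytic) gives that $\val(\bv)=\Omega$ or $\val(\bv)<\omega_1$ for every $\bv$, hence $\val(\emptyset)<\omega_1$. Next, the analogue of \citet[Proposition~B.8]{universal_learning} yields that from any active $\bv$ with $0\le\val(\bv)<\Omega$ and any legal $P_A$-move $(\bx_\tau,C_\tau)$ there is a $P_L$-response $\by_\tau$ with $\val(\bv,\bx_\tau,C_\tau,\by_\tau)<\val(\bv)$; moreover whenever $C_\tau\in\PC(\cH|_{\bx_\tau})$ one may take $\by_\tau\in C_\tau$ (and if $C_\tau\notin\PC(\cH|_{\bx_\tau})$, $P_L$ has already won and any $\by_\tau$ works). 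The strategy for $P_L$ is to always play such a value-decreasing response; it is winning because an infinite play would produce an infinite strictly decreasing sequence $\val(\emptyset)>\val(\bv_1)>\val(\bv_2)>\cdots$ of ordinals, contradicting well-ordering --- the same reasoning as in the proof of Proposition~\ref{prop:gt}.

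For universal measurability I would mirror Corollary~\ref{coro:A_analytic} and Corollaries~\ref{coro:gt_meas}--\ref{coro:wbgt_meas}. By transfinite induction on $\kappa<\omega_1$ one shows that, for each $n$, the set of active positions of length $n$ with $\val>\kappa$ is analytic: the base case $\kappa=-1$ is the projection of $\ms P_\infty\setminus\W_\fB$ (Lemma~\ref{lem:coanalytic_linear}), and the inductive step for $\kappa\ge 0$ uses the analogue of \citet[Lemma~B.10]{universal_learning}, namely $\val(\bv)>\kappa$ iff there is a $P_A$-move all of whose $P_L$-responses have value $\ge\kappa$. The existential quantifier over $P_A$-moves ranges over $\X^{k+1}\times\PC(\Y^{k+1})$, but for $\kappa\ge 0$ only moves lying in $\XPC_{k+1}$ can witness it (a move with $C\notin\PC(\cH|_\bx)$ makes $P_L$ win at once, so all its child values are $-1$); since $\XPC_{k+1}$ is an analytic subset of the Polish space $\X^{k+1}\times\PC(\Y^{k+1})$ by Lemma~\ref{lem:XPC_analytic} --- essentially because $\PC(\Y^{k+1})$ is countable by Lemma~\ref{lem:PC_countable} --- this is an honest analytic projection, and the universal quantifier over responses $\by\in\Y^{k+1}$ is a countable intersection. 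Hence the value-drop sets $\{(\bv,\bx,C,\by):\val(\bv,\bx,C,\by)<\min\{\val(\bv),\val(\emptyset)\}\}$ are universally measurable, being countable unions over $-1\le\kappa<\val(\emptyset)<\omega_1$ of intersections of an analytic set with the complement of an analytic set. Fixing an enumeration of the countable set $\Y^{k+1}$ and letting $P_L$ play the first value-decreasing element of $C_\tau$ (or the first element of $\Y^{k+1}$ on the already-won branch), the induced response map at each length is universally measurable by the bookkeeping of Corollaries~\ref{coro:gt_meas}--\ref{coro:gtz_meas}, and composing these maps round by round as in Lemma~\ref{lem:wbgt_meas} preserves universal measurability.

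I expect the universal-measurability half to be the main obstacle, and within it the careful treatment of the pseudo-cube coordinate of $P_A$'s moves: unlike the binary game $\B$, ``there exists a $P_A$-move'' is now a projection along the analytic --- not Borel --- set $\XPC_t$, so one must verify that the analytic sets are closed under precisely the operations used in the recursion for $\{\val>\kappa\}$, which works out only because $\PC(\Y^t)$ is countable. By contrast, the combinatorial input --- existence of a value-decreasing $P_L$-response that can be chosen inside $C_\tau$ --- is routine once the value is in place, since $P_L$ is unconstrained in $\fB$ and the constraint ``$\by_s\in C_s$'' affects play only through $\W_\fB$, which the rank already encodes.
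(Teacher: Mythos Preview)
Your proposal is correct: the value-based argument you sketch is exactly what underlies \citet[Theorem~B.1]{universal_learning}, and all of the steps go through for $\fB$ once one knows that $P_A$'s action spaces $\X^t\times\PC(\Y^t)$ are Polish (Lemma~\ref{lem:PC_countable}), $P_L$'s action spaces $\Y^t$ are countable, and $\W_\fB$ is coanalytic (Lemma~\ref{lem:coanalytic_linear}).

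The paper takes a shorter route: it simply verifies those three hypotheses and then invokes \citet[Theorem~B.1]{universal_learning} as a black box, rather than re-deriving the value machinery in situ. The explicit strategy the paper writes down afterwards (pick the first $\by^{(t,i)}$ in a fixed enumeration of $\Y^t$ that drops the value) is the same as yours in spirit. Two small simplifications relative to your outline: first, your worry that the existential over $P_A$-moves is ``a projection along the analytic set $\XPC_t$'' is a red herring --- the projection is along the Polish coordinate $\X^{k+1}\times\PC(\Y^{k+1})$, and intersecting with the analytic set $\XPC_{k+1}$ beforehand (or not) keeps everything within analytic sets; second, there is no need to restrict the chosen $\by_\tau$ to lie in $C_\tau$ (the paper's strategy enumerates all of $\Y^\tau$), which lets you skip the $\val=\Omega$ argument for $\by_\tau\notin C_\tau$. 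Finally, the composition step you mention (\`a la Lemma~\ref{lem:wbgt_meas}) is not needed for this proposition: a strategy for $P_L$ is a sequence of round-by-round maps $g_t:\ms P_{t-1}\times\X^t\times\PC(\Y^t)\to\Y^t$, and it is each $g_t$ individually that must be universally measurable.
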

\begin{proof}
Since $\fB$ is a Gale-Stewart game, according to Lemma \ref{lem:PA_winning_DSL} and \citet[Theorem A.1]{universal_learning}, we have that if $\cH$ does not have an infinite DSL tree, then there is a winning strategy for $P_L$ in $\fB$.

According to Lemma \ref{lem:PC_countable}, Corollary \ref{coro:P_n_analytic}, and Lemma \ref{lem:coanalytic_linear}, we know that $\fB$ is a Gale-Stewart game such that the action sets of $P_A$ $(\X^t\times\PC(\Y^t),\ t\in\N)$ are Polish spaces, the action sets of $P_L$ $(\Y^t,\ t\in\N)$ are countable, and the set of winning sequences $\W_{\fB}$ for $P_L$ is coanalytic. Then, according to \citet[Theorem B.1]{universal_learning}, 
$P_L$ has a universally measurable winning strategy ($g_t:\ms P_{t-1}\times\X^t\times\PC(\Y^t)\rightarrow\Y^t$, $t\in\N$) for $P_L$ in $\fB$ if $\cH$ does not have an infinite DSL tree.
For completeness, we provide the explicit definition of $g_t$ below for $t\in\N$ according to the proof of \citet[Theorem B.1]{universal_learning}. For any $\bv\in\ms P_{t-1}$, $\bx\in\X^t$, and $C\in\PC(\Y^t)$, enumerate $\Y^t$ as $\{\by^{(t,i)}\}_{i\in\N}$ and define
\begin{align*}
g_t(\bv,\bx,C):=
\begin{cases}
&\by^{(t,i)}\quad \textup{if } \val(\bv,\bx,C,\by^{(t,j)})\ge\min\{\val(\bv),\val(\emptyset)\} \textup{ for all } 1\le j<i 
\\& \ \ \quad\quad\quad 
\textup{ and }
\val(\bv,\bx,C,\by^{(t,i)})<\min\{\val(\bv),\val(\emptyset)\},
\\&
\by^{(t,1)} \quad \textup{if }
\val(\bv,\bx,C,\by^{(t,j)})\ge\min\{\val(\bv),\val(\emptyset)\} \textup{ for all } j\in\N.
\end{cases}
\end{align*}
\end{proof}

From now on we assume that $\cH$ does not have an infinite DSL tree. 
Analogous to Definition \ref{def:val-binary}, we define the game value $\val:\ms P\rightarrow\ORD^*$ according to \citet[Definition B.5]{universal_learning}. 
For any $\tau\in\N$, define the mapping $\eta_{\tau}:\prod_{t=1}^{\tau} \left(\XPC_t\times\Y^t\right)\rightarrow \{0,1\}$ by
\begin{align*}
\eta_{\tau}(\bv,\bx,C,\by):=\begin{cases}
&1\quad \textup{if } \val(\bv,\bx,C,\by)<\min\{\val(\bv),\val(\emptyset)\},\\
&0\quad \textup{otherwise,}
\end{cases}
\end{align*}
for any $\bv\in\prod_{t=1}^{\tau-1} \left(\XPC_t\times\Y^t\right)$, $(\bx,C)\in\XPC_{\tau}$, and $\by\in\Y^\tau$. 
Define the following online algorithm which given a sequence of feature-label pairs $(x_1,y_1,x_2,y_2,\dots)\in(\X\times\Y)^\infty$ chooses a sequence of elements in $\cup_{\tau=0}^\infty\prod_{t=1}^\tau(\XPC_t\times\Y^t)$ (``patterns''):
\begin{itemize}
\item Initialize $\tau_0\leftarrow1$.
\item At every time step $t\in\N$:
\begin{itemize}
    \item Let $\tau_t\leftarrow\tau_{t-1}$.
    \item For each $C\in\PC(\cH|_{(x_{t-\tau_{t-1}+1},\dots,x_t)})$:
    \begin{itemize}
        \item If 
        \begin{align*}
        \eta_{\tau_{t-1}}\Big(&\wb\bx_1,\wb C_1,\wb\by_1,\dots,\wb\bx_{\tau_{t-1}-1},\wb C_{\tau_{t-1}-1},\wb\by_{\tau_{t-1}-1},
        \\&
        (x_{t-\tau_{t-1}+1},\dots,x_t),C, (y_{t-\tau_{t-1}+1},\dots,y_t)\Big)=1:
        \end{align*}
        \begin{itemize}
            \item Let $\wb\bx_{\tau_{t-1}}\leftarrow(x_{t-\tau_{t-1}+1},\dots,x_t)$, $\wb C_{\tau_{t-1}}\leftarrow C$, $\wb\by_{\tau_{t-1}}\leftarrow(y_{t-\tau_{t-1}+1},\dots,y_t)$, and $\tau_{t}\leftarrow\tau_{t-1}+1$.
            \item Break.
        \end{itemize}
    \end{itemize}
\end{itemize}
\end{itemize}

We use $\wh \by_t$ to denote the `` pattern avoidance mapping'' defined after time step $t$ of the above algorithm; specifically, we define
\begin{align*}
\wh \by_t(x'_1,\dots,x'_{\tau_{t}}):=&\cup_{C\in\PC(\cH|_{(x'_1,\dots,x'_{\tau_{t}})})}
\\&\left\{
\by'\in C:\eta_{\tau_{t}}(\wb\bx_1,\wb C_1,\wb\by_1,\dots,\wb\bx_{\tau_{t}-1},\wb C_{\tau_{t}-1},\wb\by_{\tau_{t}-1},(x'_1,\dots,x'_{\tau_t}),C,\by')=1
\right\}
\end{align*}
for any $t\ge0$ and $(x'_1,\dots,x'_{\tau_{t}})\in\X^{\tau_{t}}$.
From the above algorithm, we can also define the following functions for any $t\ge0$,
\begin{align*}
T_{t}:(\X\times\Y)^{t}\rightarrow\{1,\dots,t+1\},\ \ 
(x_1,y_1,\dots,x_{t},y_{t})\mapsto \tau_{t},
\end{align*}
and 
\begin{align} \label{eq:pattern-avoid-func}
\wh \bY_{t}:(\X\times\Y)^t\times\cup_{s=1}^{t+1}\X^s\rightarrow \cup_{s=1}^{t+1}2^{\Y^s},\ \ 
(x_1,y_1,\dots,x_t,y_t,x'_1,\dots,x'_{\tau_{t}})\mapsto \wh \by_t(x'_1,\dots,x'_{\tau_{t}}).
\end{align}
We have the following proposition.
\begin{proposition} \label{prop:pattern_avoid}
For any sequence $x_1,y_1,x_2,y_2,\dots$ that is consistence with $\cH$, we have
\begin{align*}
(y_{t-\tau_{t-1}+1},\dots,y_{t})
\notin
\wh\by_{t-1}(x_{t-\tau_{t-1}+1},\dots,x_{t}),\ \tau_{t-1}=\tau_{t}<\infty,\ \textup{and }\wh\by_{t-1}=\wh\by_{t}
\end{align*}
for all sufficiently large $t$.
\end{proposition}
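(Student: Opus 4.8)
The plan is to argue by contradiction, showing that if any of the three conclusions fails for infinitely many $t$, then we can extract an infinite strictly value-decreasing sequence of positions in $\fB$, contradicting the fact that $\cH$ has no infinite DSL tree (via Lemma~\ref{lem:PA_winning_DSL}, which makes $\fB$ a Gale–Stewart game won by $P_L$, so $\val(\emptyset)<\Omega$, and by the argument establishing well-foundedness of $\prec$). First I would unwind the bookkeeping of the online algorithm: the counters $\tau_t$ are nondecreasing in $t$, and $\tau_t=\tau_{t-1}+1$ precisely when, at step $t$, some pseudo-cube $C\in\PC(\cH|_{(x_{t-\tau_{t-1}+1},\dots,x_t)})$ makes the $\eta_{\tau_{t-1}}$-indicator fire, i.e., appending $\big((x_{t-\tau_{t-1}+1},\dots,x_t),C,(y_{t-\tau_{t-1}+1},\dots,y_t)\big)$ to the current position $\wb\bv=(\wb\bx_1,\wb C_1,\wb\by_1,\dots)$ strictly decreases $\val$ below $\min\{\val(\wb\bv),\val(\emptyset)\}$. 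So every time the counter advances, the running position's game value strictly decreases; since there is no infinite value-decreasing sequence (the ranks of positions are ordinals), the counter can advance only finitely often. Hence $\tau_t$ stabilizes: there exist $\tau^\star<\infty$ and $t_0$ with $\tau_t=\tau^\star$ for all $t\ge t_0$, and after $t_0$ the stored tuples $(\wb\bx_i,\wb C_i,\wb\by_i)$ and hence the map $\wh\by_t$ never change. This gives the last two assertions, $\tau_{t-1}=\tau_t<\infty$ and $\wh\by_{t-1}=\wh\by_t$, for all large $t$.

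For the first assertion, fix $t>t_0$ (so $\tau_{t-1}=\tau_t=\tau^\star$ and the counter did \emph{not} advance at step $t$). By the algorithm's loop, failure to advance means that for \emph{every} $C\in\PC(\cH|_{(x_{t-\tau^\star+1},\dots,x_t)})$ we have $\eta_{\tau^\star}\big(\wb\bx_1,\wb C_1,\wb\by_1,\dots,(x_{t-\tau^\star+1},\dots,x_t),C,(y_{t-\tau^\star+1},\dots,y_t)\big)=0$. By the definition of $\wh\by_{t-1}(x_{t-\tau^\star+1},\dots,x_t)$ as the union over such $C$ of the sets of $\by'\in C$ with the indicator equal to $1$, this says exactly that $(y_{t-\tau^\star+1},\dots,y_t)\notin\wh\by_{t-1}(x_{t-\tau^\star+1},\dots,x_t)$, which is the desired statement. (Here I use $\tau^\star=\tau_{t-1}$ to match the indexing in the claim.)

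The main obstacle is justifying the ``no infinite value-decreasing sequence of positions'' step in a way that actually bites here, because the positions appearing are not arbitrary positions of $\fB$ but those arising from feeding a fixed $\cH$-consistent sequence, with the subtle twist that the same block of feature-coordinates reappears across rounds (the repetition issue flagged in the technical overview). The point to be careful about is that the position $\wb\bv$ we append to always lies in $\wt{\ms P}_{\tau^\star-1}\subseteq\ms P$, so $\val$ is defined on it, and the strict inequality $\val(\wb\bv,\dots)<\val(\wb\bv)$ is genuine; then a strictly decreasing sequence in $\ORD^*$ below $\val(\emptyset)<\Omega$ must terminate by well-ordering of the ordinals \citep{karel2017introduction}. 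I would also note that consistency of the input sequence is used only implicitly — it guarantees $\cH_{\wb\bx_1,\wb\by_1,\dots}\neq\emptyset$ along the way so that $P_L$ has not already won and the value-decreasing dynamics genuinely reflects $\val$ on active positions — but the core counting argument is purely ordinal-theoretic and does not otherwise need it.
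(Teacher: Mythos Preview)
Your proposal is correct and essentially mirrors the paper's proof: both observe that each counter advance strictly decreases $\val$ below $\val(\emptyset)<\Omega$, so by well-ordering of the ordinals $\tau_t$ can advance only finitely often and stabilizes, after which all three conclusions follow directly from the definitions of the algorithm and of $\wh\by_t$. The only organizational difference is that the paper routes the contradiction through reaching $\val(\bv_k)=-1$ and then invokes consistency of the input sequence to rule out $\cH_{\bx_1,\by_1,\dots,\bx_k,\by_k}=\emptyset$, whereas your direct ordinal-termination argument is marginally cleaner and in fact does not need consistency for this step---so your last paragraph's hedging about the role of consistency is unnecessary.
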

\begin{proof}
Suppose that there is an infinite sequence of times $1\le t_1< t_2<\cdots$ such that 
$$
(y_{t_i-\tau_{t_i-1}+1},\dots,y_{t_i})\in \wh\by_{t_i-1}(x_{t_i-\tau_{t_i-1}+1},\dots,x_{t_i})
$$
for any $i\in\N$.
Define $\bx_i:=(x_{t_i-\tau_{t_i-1}+1},\dots,x_{t_i})$, $\by_i:=(y_{t_i-\tau_{t_i-1}+1},\dots,y_{t_i})$, $C_{i}:=\wb C_{\tau_{t_i-1}}$, and $\bv_{i}:=(\bx_1,C_1,\by_1,\dots,\bx_i,C_i,\by_i)$ for $i\in\N$. 
Since $\cH$ does not have an infinite DSL tree and $\W_{\fB}$ is coanalytic, we have $\val(\emptyset)<\omega_1$. Thus, there is no infinite value-decreasing sequence, which, together with the definition of $\wh\by_t$, implies that $\val(\bv_{k})=-1$ for some $k\in\N$; i.e., $P_L$ wins at round $k$ of $\fB$ under the sequence of positions $\bv_k$. 
Since we have ensured that $C_i\subseteq \PC(\cH|_{\bx_i})$ for all $i\in\N$, we must have $\cH_{\bx_1,\by_1,\dots,\bx_k,\by_k}=\emptyset$ by the winning rule of $P_L$ in $\fB$. 
However, this contracts the assumption that the sequence $(x_1,y_1,x_2,y_2,\dots)$ is consistent with $\cH$. 
Thus, there exists some $t_0\in\N$ such that 
$$
(y_{t-\tau_{t-1}+1},\dots,y_t)\notin\wh\by_{t-1}(x_{t-\tau_{t-1}+1},\dots,x_t)
$$ 
for all $t\ge t_0$. Then, according to the definition of $\tau_t$ and $\wh\by_t$, we have $\tau_{t-1}=\tau_t\le t_0<\infty$ and $\wh\by_{t-1}=\wh\by_t$ for all $t\ge t_0$.
\end{proof}

\subsection{Universal measurability}
In this section, we prove the following proposition about the universal measurability of the functions $T_{t}$ and $\wh\bY_t$ defined in the previous section.
\begin{proposition} \label{prop:meas_pattern}
For any $t\ge0$, $T_{t}$ and $\wh\bY_t$ are universally measurable. 
\end{proposition}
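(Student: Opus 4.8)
The plan is to prove, by induction on $t$, a structural description of the algorithm's internal state after step $t$, from which universal measurability of $T_t$ and $\wh\bY_t$ both follow. The state after step $t$ is the pair $(\tau_t,\bv_t)$ with $\bv_t=(\wb\bx_1,\wb C_1,\wb\by_1,\dots,\wb\bx_{\tau_t-1},\wb C_{\tau_t-1},\wb\by_{\tau_t-1})\in\wt{\ms P}_{\tau_t-1}$ the current position of $\fB$, and I claim: \emph{for every $t\ge 0$ there is a countable partition $\{E_{t,\iota}\}$ of $(\X\times\Y)^t$ into universally measurable sets such that on each $E_{t,\iota}$ the counter $\tau_t$, the whole trajectory $(\tau_0,\dots,\tau_t)$, and every pseudo-cube $\wb C_i$ are constant, while each $\wb\bx_i$ and $\wb\by_i$ equals a fixed coordinate projection of the data} — so that $\bv_t|_{E_{t,\iota}}$ is a \emph{Borel} function. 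Demanding that the $\wb C_i$ be constant on each piece is the conceptual heart of the argument: the pseudo-cube coordinates of the state range over the \emph{countable} sets $\PC(\Y^i)$ (Lemma~\ref{lem:PC_countable}), so they can be absorbed into the partition, after which every test the algorithm performs becomes a preimage of a universally measurable set under a \emph{Borel} map. This is exactly what is needed, since preimages of universally measurable sets under Borel maps are universally measurable (push the measure forward), whereas preimages under merely universally measurable maps need not be — the gap left unaddressed in \citet[Remark~5.4]{universal_learning}.

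First I would record the game-value facts needed, obtained by repeating the arguments of Section~\ref{sec:exp_rate_proof} with $\B$ replaced by $\fB$. Since $\cH$ has no infinite DSL tree, $P_A$ has no winning strategy in $\fB$ (Lemma~\ref{lem:PA_winning_DSL}), hence $\val(\emptyset)<\Omega$; since $\W_{\fB}$ is coanalytic (Lemma~\ref{lem:coanalytic_linear}), the analogue of \citet[Lemma~B.7]{universal_learning} gives $\val(\emptyset)<\omega_1$; and the analogue of \citet[Lemma~B.10]{universal_learning}, together with Corollary~\ref{coro:P_n_analytic} and Lemma~\ref{lem:XPC_analytic}, shows by transfinite induction that $\{\bw\in\wt{\ms P}_n:\val(\bw)>\kappa\}$ is analytic for every $n$ and every $\kappa<\omega_1$. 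Consequently, exactly as in Corollaries~\ref{coro:A_analytic}--\ref{coro:wmDzy_meas}, for each $\tau\in\N$, each $C\in\PC(\Y^\tau)$, and each $\by\in\Y^\tau$ the set
\[
\ms E_\tau^{C,\by}:=\bigl\{(\bv,\bx)\in\wt{\ms P}_{\tau-1}\times\X^\tau:(\bx,C)\in\XPC_\tau,\ \val(\bv,\bx,C,\by)<\min\{\val(\bv),\val(\emptyset)\}\bigr\}
\]
is universally measurable, being the countable union over $\kappa<\val(\emptyset)$ of the intersection of the coanalytic set $\{\val(\bv,\bx,C,\by)\le\kappa\}$ with the analytic sets $\{\val(\bv)>\kappa\}$ and $\XPC_\tau$. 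Equivalently, for fixed $\tau$ and $C$ the set $\{(\bv,\bx,\by):\eta_\tau(\bv,\bx,C,\by)=1\}$ is universally measurable.

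Now the induction. For $t=0$ the partition is trivial, $\tau_0\equiv 1$, there are no pseudo-cube coordinates, and $\bv_0=\emptyset$. For the step, fix a piece $E_{t-1,\iota}$ on which $\tau_{t-1}=k$, $\wb C_1,\dots,\wb C_{k-1}$ are fixed pseudo-cubes, and $\bv_{t-1}$ is Borel; there the current window $\bx:=(x_{t-k+1},\dots,x_t)$ and label window $\by:=(y_{t-k+1},\dots,y_t)$ are fixed projections, hence Borel. Fix once and for all an enumeration $\PC(\Y^k)=\{C^{(k,1)},C^{(k,2)},\dots\}$ (legitimate by Lemma~\ref{lem:PC_countable}); the algorithm accepts at index $j$ iff $(\bx,C^{(k,j)})\in\XPC_k$ and $\eta_k(\bv_{t-1},\bx,C^{(k,j)},\by)=1$. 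The first event is the preimage of the analytic set $\XPC_k$ (Lemma~\ref{lem:XPC_analytic}) under a Borel map, hence analytic; the second is the preimage of $\ms E_k^{C^{(k,j)},\by}$ under the Borel map built from $\bv_{t-1}$ and the window projections, hence universally measurable. Refining $E_{t-1,\iota}$ according to which index $j$ (if any) is the least accepting one yields a countable refinement into universally measurable pieces, and on each of them $\tau_t$, the trajectory, and $\wb C_k$ (when acceptance occurs, it is the constant $C^{(k,j)}$) are constant, while $\wb\bx_k=\bx$ and $\wb\by_k=\by$ are fixed projections. This establishes the claim for $t$, so $T_t=\tau_t$ is constant on each $E_{t,\iota}$ and thus universally measurable.

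It remains to treat $\wh\bY_t$, which is set-valued: since it takes values in $\bigcup_{s=1}^{t+1}2^{\Y^s}$ with $\Y^s$ countable, it suffices to check that $T_t$ is universally measurable (done) and that, for every $s\le t+1$ and every $\by'\in\Y^s$, the set $\{(x_1,y_1,\dots,x_t,y_t,x_1',\dots,x_s'):\tau_t=s,\ \by'\in\wh\by_t(x_1',\dots,x_s')\}$ is universally measurable. Restricting to a piece $E_{t,\iota}$ on which $\tau_t=s$ and $\bv_t$ is Borel, this set is the countable union over those $j$ with $\by'\in C^{(s,j)}$ of the analytic preimage of $\XPC_s$ intersected with the universally measurable preimage of $\ms E_s^{C^{(s,j)},\by'}$, both under Borel maps; unioning over the countably many pieces gives the claim. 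The step I expect to be the main obstacle is exactly the \emph{propagation} of universal measurability through the induction — the point flagged as ``very tricky'' and left unaddressed in prior work (\citet[Remark~5.4]{universal_learning}): a naive composition of the universally measurable map ``data $\mapsto$ state'' with the universally measurable test ``$\eta_\tau=1$'' is illegitimate, and the partition device (keeping the state Borel on each piece and taking preimages only under Borel maps, while controlling that only countably many pieces ever arise and that every window used is a fixed projection on each piece) is what makes it go through. Getting this bookkeeping exactly right, together with carrying out the full game-value analysis for $\fB$ (the analogues of \citet[Lemmas~B.7 and B.10]{universal_learning} and of Corollaries~\ref{coro:A_analytic}--\ref{coro:wmDzy_meas}) while respecting that $\wt{\ms P}_n$ is only analytic in $\ms P_n$, is where the real work lies.
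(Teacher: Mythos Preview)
Your proposal is correct and follows essentially the same approach as the paper: partition $(\X\times\Y)^t$ according to the trajectory (the sequence of acceptance times) together with the sequence of chosen pseudo-cubes --- the paper calls these pieces $\ms V_{t,j_1,\dots,j_s,C_1,\dots,C_s}$ --- and verify each piece is universally measurable using the countability of $\PC(\Y^k)$ (Lemma~\ref{lem:PC_countable}) and the universal measurability of the value-decrease sets. Your inductive organization, maintaining the invariant ``the state is a Borel function of the data on each piece,'' is a cleaner packaging of the paper's explicit index-heavy construction via the sets $\ms D^{\pm}$, $\ms D^{\vee}$, and Lemma~\ref{lem:meas_coord}, but the mathematical content is the same.
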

We start with some definitions of the building blocks for analyzing the universal measurability. 
For any $t\in\N$ and any $s\in[t]$, fix an arbitrary sequence $1\le j_1\le j_2\le\dots\le j_s\le t-s+1$.
Define $j_0:=0$ and $J_0:=0$. For any $i\in[s]$, define 
\begin{align*}
J_i:=&1+\sum_{k=2}^i \left[k-\left((j_{k-1}+k-2-j_k+1)\lor0\right)\right]\\
=&\frac{i(i+1)}{2}-\sum_{k=2}^i \left((j_{k-1}+k-1-j_k)\lor0\right)
\end{align*}
and
\begin{align*}
I_i:=&J_{i-1}+1-\left((j_{i-1}+i-2-j_i+1)\lor 0\right)=
J_{i-1}+\left((j_i-j_{i-1}-i+2)\land 1\right).
\end{align*}

In this section, for any $k\in\N$, $i,j\in[k]$, and $k$-tuple $\bz=(z^1,z^2,\dots,z^k)$, let $z^{i:j}$ denote the subtuple $(z^i,z^{i+1},\dots,z^j)$ if $i\le j$ and denote $\emptyset$ if $i>j$. We assume the convention that $\emptyset=\emptyset$. 

For any $0\le i\le s$, define
\begin{align*}
\ms F_{j_1,\dots,j_i}:=
\big\{&(\bx_1,C_1,\by_1\dots,\bx_i,C_i,\by_i)\in\prod_{k=1}^i(\X^k\times\PC(\Y^k)\times\Y^k):
\\&
x_k^{1:(j_{k-1}+k-1-j_k)}=x_{k-1}^{(j_k-j_{k-1}+1):(k-1)} \textup{ and }
\\&
y_k^{1:(j_{k-1}+k-1-j_k)}=y_{k-1}^{(j_k-j_{k-1}+1):(k-1)}
\textup{ for all }2\le k\le i
\big\}.
\end{align*}
Then, $\ms F_{j_1,\dots,j_i}$ is a closed subset of $\prod_{k=1}^i(\X^k\times\PC(\Y^k)\times\Y^k)$ and is also analytic. 
Define
\begin{align*}
\wt{\ms F}_{j_1,\dots,j_i}:=\ms F_{j_1\dots,j_i}\times \prod_{k=i+1}^\infty(\X^k\times\PC(\Y^k)\times\Y^k).
\end{align*}
Then, $\wt{\ms F}_{j_1,\dots,j_i}$ is an analytic subset of $\prod_{k=1}^\infty(\X^k\times\PC(\Y^k)\times\Y^k)$.

For any $C_1\in\PC(\Y^1),\dots,C_i\in\PC(\Y^i)$, define
Define
\begin{align*}
\ms Z_{C_1,\dots,C_i}:=
\left(\prod_{k=1}^i(\X^k\times\{C_k\}\times\Y^k)\right)
\end{align*}
and
\begin{align*}
\wt{\ms Z}_{C_1,\dots,C_i}:=
\left(\prod_{k=1}^i(\X^k\times\{C_k\}\times\Y^k)\right)\times\left(\prod_{k=i+1}^\infty(\X^k\times\PC(\Y^k)\times\Y^k)\right).
\end{align*}
Since $\PC(\Y^t)$ is countable by Lemma \ref{lem:PC_countable}, we have that $\ms Z_{C_1,\dots,C_i}$ is an analytic subset of $\prod_{k=1}^i(\X^k\times\PC(\Y^k)\times\Y^k)$. Thus, $\wt{\ms Z}_{C_1,\dots,C_i}$ is an analytic subset of $\prod_{k=1}^\infty(\X^k\times\PC(\Y^k)\times\Y^k)$.

Define
\begin{align*}
\XY_{j_1,\dots,j_i,C_1,\dots,C_i}:=\Big\{&(x_1,y_1,\dots,x_{J_i},y_{J_i})\in(\X\times\Y)^{J_i}:
\\&
(x_1,C_1,y_1,x_{I_2:(I_2+1)},C_2,y_{I_2:(I_2+1)},\dots, x_{I_i:(I_i+i-1)},C_i,y_{I_i:(I_i+i-1)})\in \wt{\ms P}_i
\Big\}.
\end{align*}
Then, we have
\begin{align*}
&\XY_{j_1,\dots,j_i,C_1,\dots,C_i}
\\=&
\bigcup_{(C_1',\dots,C_i')\in\prod_{k=1}^i\PC(\Y^k)}
\\&
\bigcup_{\left(x_{2}^{1:((j_{1}+1-j_2)\lor0)},y_{2}^{1:((j_{1}+1-j_2)\lor0)},\dots,x_{i}^{1:((j_{i-1}+i-1-j_i)\lor0)},y_{i}^{1:((j_{i-1}+i-1-j_i)\lor0)}\right) \in \prod_{k=1}^i(\X\times\Y)^{1:((j_{k-1}+k-1-j_k)\lor0)}}
\\&
\Big\{(x_1,y_1,\dots,x_{J_i},y_{J_i})\in(\X\times\Y)^{J_i}:
\\&
\Big(x_1,C_1',y_1,
\big(x_{2}^{1:((j_{1}+1-j_2)\lor0)},x_{(J_1+1):J_2}\big),
C_2',
\big(y_{2}^{1:((j_{1}+1-j_2)\lor0)},y_{(J_1+1):J_2} \big),\ \cdots, 
\\&\quad 
\big(x_{i}^{1:((j_{i-1}+i-1-j_i)\lor0)},x_{(J_{i-1}+1):J_i}\big),
C_i',
\big(y_{i}^{1:((j_{i-1}+i-1-j_i)\lor0)}, y_{(J_{i-1}+1):J_i}\big)\Big)
\\&\in 
\wt{\ms P}_i \cap \ms F_{j_1,\dots,j_i}
\cap \ms Z_{C_1,\dots,C_i}
\Big\}.
\end{align*}
Since $\ms P_i$, $\ms F_{j_1,\dots,j_i}$, and $\ms Z_{C_1,\dots,C_i}$ are analytic sets, we can conclude that $\XY_{j_1,\dots,j_i,C_1,\dots,C_i}$ is an analytic subset of $(\X\times\Y)^{J_i}$.

Define the set
\begin{align*}
&\ms A_{j_1,\dots,j_i,C_1,\dots,C_i}\\:=&
\bigcup_{\bw\in\prod_{t=i+1}^\infty(\XPC_t\times\Y^t)}
\big\{
(x_1,y_1,\dots,x_{J_i},y_{J_i})\in (\X\times\Y)^{J_i}:
\\&
(x_1,C_1,y_1,x_{I_2:(I_2+1)},C_2,y_{I_2:(I_2+1)},\dots, x_{I_i:(I_i+i-1)},C_i,y_{I_i:(I_i+i-1)},\bw)\in \ms P_\infty\backslash\ms W_\fB\big\}
\\=&
\bigcup_{\bw\in\prod_{t=i+1}^\infty(\XPC_t\times\Y^i)}
\bigcup_{(C_1',\dots,C_i')\in\prod_{k=1}^i\PC(\Y^k)}
\\&
\bigcup_{\left(x_{2}^{1:((j_{1}+1-j_2)\lor0)},y_{2}^{1:((j_{1}+1-j_2)\lor0)},\dots,x_{i}^{1:((j_{i-1}+i-1-j_i)\lor0)},y_{i}^{1:((j_{i-1}+i-1-j_i)\lor0)}\right) \in \prod_{k=1}^i(\X\times\Y)^{1:((j_{k-1}+k-1-j_k)\lor0)}}
\\&
\Big\{
(x_1,y_1,\dots,x_{J_i},y_{J_i})\in (\X\times\Y)^{J_i}:
\\&
\Big(x_1,C_1',y_1,
\big(x_{2}^{1:((j_{1}+1-j_2)\lor0)},x_{(J_1+1):J_2}\big),
C_2',
\big(y_{2}^{1:((j_{1}+1-j_2)\lor0)},y_{(J_1+1):J_2} \big),\ \cdots, 
\\&
\big(x_{i}^{1:((j_{i-1}+i-1-j_i)\lor0)},x_{(J_{i-1}+1):J_i}\big),
C_i',
\big(y_{i}^{1:((j_{i-1}+i-1-j_i)\lor0)}, y_{(J_{i-1}+1):J_i}\big),
\bw\Big)
\\&\in 
(\ms P_\infty\backslash \ms W_\fB) \cap \wt{\ms F}_{j_1,\dots,j_i}
\cap \wt{\ms Z}_{C_1,\dots,C_i}
\Big\}.
\end{align*}
By Lemma \ref{lem:coanalytic_linear} and the analysis above, $\ms P_\infty\backslash \ms W_\fB$, $\wt{\ms F}_{j_1,\dots,j_i}$, and $\wt{\ms Z}_{C_1,\dots,C_i}$ are analytic subsets of $\ms P_{\infty}$. Moreover, $\prod_{k=1}^i(\X\times\Y)^{1:((j_{k-1}+k-1-j_k)\lor0)}$ is a Polish space. Thus, $\ms A_{j_1,\dots,j_i,C_1,\dots,C_i}$ is an analytic subsets of $(\X\times\Y)^{J_i}$. 

For any $\kappa\in\ORD\cup\{-1\}$, any $i\in\{2,3,\dots,s\}$, and any $C_1\in\PC(\Y^1),\dots,C_{i}\in\PC(\Y^{i})$, define the sets
\begin{align*}
\ms A_i:=
\bigcup_{\bw\in\prod_{t=i+1}^\infty(\XPC_t\times\Y^t)}
\{\bv\in\ms P_i: (\bv,\bw)\in\ms P_{\infty}\backslash\W_{\fB}\},
\end{align*} 
\begin{align*}
\ms A_i^{\kappa}:=
\{\bv\in\ms A_{i}:\val(\bv)>\kappa\},
\end{align*}
and
\begin{align*}
&\ms A_{j_1,\dots,j_i,C_1,\dots,C_i}^{\kappa}
\\:=&
\Big\{(x_1,y_1,\dots,x_{J_i},y_{J_i})\in\ms A_{j_1,\dots,j_i,C_1,\dots,C_i}:
\\& \quad
\val\big(x_1,C_1,y_1,x_{I_2:(I_2+1)},C_2,y_{I_2:(I_2+1)},\dots, x_{I_i:(I_i+i-1)},C_i,y_{I_i:(I_i+i-1)}\big)>\kappa
\Big\}.
\end{align*}
When $\kappa=-1$, we have $\ms A_{j_1,\dots,j_i,C_1,\dots,C_i}^{-1}=\ms A_{j_1,\dots,j_i,C_1,\dots,C_i}$ and $\ms A_{i}^{-1}=\ms A_i$. 
According to \citet[Corollary B.11]{universal_learning}, $\ms A_i^{\kappa}$ is an analytic subset of $\ms P_i$ for any $-1\le \kappa<\omega_1$. Then, since
\begin{align*}
&\ms A_{j_1,\dots,j_i,C_1,\dots,C_i}^\kappa
\\=&
\bigcup_{(C_1',\dots,C_i')\in\prod_{k=1}^i\PC(\Y^k)}
\\&
\bigcup_{\left(x_{2}^{1:((j_{1}+1-j_2)\lor0)},y_{2}^{1:((j_{1}+1-j_2)\lor0)},\dots,x_{i}^{1:((j_{i-1}+i-1-j_i)\lor0)},y_{i}^{1:((j_{i-1}+i-1-j_i)\lor0)}\right) \in \prod_{k=1}^i(\X\times\Y)^{1:((j_{k-1}+k-1-j_k)\lor0)}}
\\&
\Big\{
(x_1,y_1,\dots,x_{J_i},y_{J_i})\in (\X\times\Y)^{J_i}:
\\&
\Big(x_1,C_1',y_1,
\big(x_{2}^{1:((j_{1}+1-j_2)\lor0)},x_{(J_1+1):J_2}\big),
C_2',
\big(y_{2}^{1:((j_{1}+1-j_2)\lor0)},y_{(J_1+1):J_2}\big),\ \cdots,
\\&
\big(x_{i}^{1:((j_{i-1}+i-1-j_i)\lor0)},x_{(J_{i-1}+1):J_i}\big),
C_i',
\big(y_{i}^{1:((j_{i-1}+i-1-j_i)\lor0)}, y_{(J_{i-1}+1):J_i}\big)\Big)
\\&\in 
\ms A_i^\kappa \cap \ms F_{j_1,\dots,j_i}
\cap \ms Z_{C_1,\dots,C_i}
\Big\},
\end{align*}
we can conclude that $\ms  A_{j_1,\dots,j_i,C_1,\dots,C_i}^\kappa$ is analytic subset of $(\X\times\Y)^{J_i}$ for any $-1\le \kappa<\omega_1$. 

According to Lemma \ref{lem:PA_winning_DSL} and the definition of the game value \citep[Definition B.5]{universal_learning}, we have $\val(\emptyset)<\Omega$ under the assumption that $\cH$ does not have an infinite DSL tree.
Then, by \citet[Lemma B.7]{universal_learning}, we immediately have the following lemma.
\begin{lemma} \label{lem:DSL_empty_count}
$\val(\emptyset)<\omega_1$ when $\cH$ does not have an infinite DSL tree. 
\end{lemma}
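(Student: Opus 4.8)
The plan is to chain together three facts that are already available in the excerpt, so the argument is essentially bookkeeping. First I would record the measurability input: Lemma~\ref{lem:coanalytic_linear} shows that $\ms P_\infty\backslash\W_\fB$ is analytic, i.e.\ $\W_\fB$ is coanalytic. This is exactly the hypothesis under which the abstract results of \citet{universal_learning} on game values apply, and it is where the countability facts (Lemma~\ref{lem:PC_countable}, ensuring the products $\X^t\times\PC(\Y^t)$ are Polish) have been used.

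Second, I would establish that $\val(\emptyset)<\Omega$. Since $\cH$ has no infinite DSL tree, Lemma~\ref{lem:PA_winning_DSL} gives that $P_A$ has no winning strategy in $\fB$; because $\fB$ is a Gale-Stewart game (its winning set for $P_L$ is finitely decidable), $P_L$ must then have a winning strategy. Phrased in terms of active decision trees, this means the game cannot be kept active for arbitrarily many rounds from the empty position along every play, so the relation $\prec_\emptyset$ on active decision trees with starting position $\emptyset$ is well-founded. By the definition of the game value on $\ms P$ (the analogue of Definition~\ref{def:val-binary}, imported from \citet[Definition~B.5]{universal_learning}), well-foundedness rules out the value $\Omega$, hence $\val(\emptyset)<\Omega$. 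This is precisely the remark stated immediately before the lemma.

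Third, I would invoke \citet[Lemma~B.7]{universal_learning}: since $\W_\fB$ is coanalytic, every position $\bv\in\ms P$ satisfies either $\val(\bv)=\Omega$ or $\val(\bv)<\omega_1$. Specializing to $\bv=\emptyset$ and combining with $\val(\emptyset)<\Omega$ from the previous step yields $\val(\emptyset)<\omega_1$, which is the claim.

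There is no real obstacle here; the only point deserving a word of care is checking that the hypotheses of \citet[Lemma~B.7]{universal_learning} truly hold for our game $\fB$, whose $P_A$-move sets are the varying Polish spaces $\X^t\times\PC(\Y^t)$ rather than one fixed Polish space. That verification is exactly what Lemma~\ref{lem:PC_countable} and Lemma~\ref{lem:coanalytic_linear} supply, so once those are cited the conclusion is immediate.
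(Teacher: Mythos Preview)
Your proposal is correct and follows essentially the same approach as the paper: use Lemma~\ref{lem:PA_winning_DSL} to get $\val(\emptyset)<\Omega$, then apply \citet[Lemma~B.7]{universal_learning} (whose coanalyticity hypothesis is supplied by Lemma~\ref{lem:coanalytic_linear}) to conclude $\val(\emptyset)<\omega_1$. The paper states this argument in the two sentences preceding the lemma; your write-up is simply more explicit about verifying the hypotheses.
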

Now, for any $m\in\N$ with $j_i\le m\le j_{i+1}$, any $(C_1,\dots,C_{i+1})\in\prod_{k=1}^{i+1}\PC(\Y^{k})$, and any $\by'=(y_1',\dots,y_{(m-j_i+1)\land (i+1)}')\in\Y^{(m-j_i+1)\land (i+1)}$, define the set
\begin{align*}
&\ms D_{j_1,\dots,j_i,m,C_1,\dots,C_i,C_{i+1},\by'}
\\:=&
\Big\{(x_1,y_1,\dots,x_{J_i},y_{J_i},x'_{1:((m-j_i+1)\land(i+1))})\in (\X\times\Y)^{J_i}\times\X^{(m-j_i+1)\land (i+1)}:
\\&
(x_1,y_1,\dots,x_{J_i},y_{J_i},(x'_k,y'_k)_{k=1}^{((m-j_i+1)\land(i+1))})
\in\XY_{j_1,\dots,j_i,m,C_1,\dots,C_i,C_{i+1}},
\\&
\val\big(x_1,C_1,y_1,\dots,
x_{I_i:(I_i+i-1)},C_i,y_{I_i:(I_i+i-1)},
\\&
\big(x_{(J_i+1-((j_i+i-m)\lor0)):J_i},x'_{1:((m-j_i+1)\land (i+1))}\big),C_{i+1},
\big(y_{(J_i+1-((j_i+i-m)\lor0)):J_i},y'_{1:((m-j_i+1)\land (i+1))}\big)
\big)
\\&<
\min\big\{\val(\emptyset),
\val\big(
x_1,C_1,y_1,\dots,
x_{I_i:(I_i+i-1)},C_i,y_{I_i:(I_i+i-1)}
\big)
\big\}
\Big\}.
\end{align*}
We prove the following result about the above set.
\begin{lemma}
For any $m\in\N$ with $j_i\le m\le j_{i+1}$, any $(C_1,\dots,C_{i+1})\in\prod_{k=1}^{i+1}\PC(\Y^{k})$, and any $\by'=(y_1',\dots,y_{(m-j_i+1)\land (i+1)}')\in\Y^{(m-j_i+1)\land (i+1)}$, $\ms D_{j_1,\dots,j_i,m,C_1,\dots,C_i,C_{i+1},\by'}$ is universally measurable.
\end{lemma}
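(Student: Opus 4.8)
The plan is to follow exactly the template used for $\wmD_{n+1}$ in the proof of Corollary~\ref{coro:wmDzy_meas} (and its $y$- and $\bz$-relativized variants): decompose $\ms D_{j_1,\dots,j_i,m,C_1,\dots,C_i,C_{i+1},\by'}$ into a \emph{countable} union of universally measurable pieces indexed by the ordinals (and $-1$) below $\val(\emptyset)$. Write a generic element of the ambient Polish space $(\X\times\Y)^{J_i}\times\X^{(m-j_i+1)\land(i+1)}$ as $(\bz,\bx')$ with $\bz=(x_1,y_1,\dots,x_{J_i},y_{J_i})$, and let $P^{\mathrm{sht}}(\bz)\in\ms P_i$ and $P^{\mathrm{lng}}(\bz,\bx')\in\ms P_{i+1}$ denote, respectively, the length-$i$ position $(x_1,C_1,y_1,\dots,x_{I_i:(I_i+i-1)},C_i,y_{I_i:(I_i+i-1)})$ and the length-$(i+1)$ position obtained by appending the $C_{i+1}$-block with the overlap-gluing displayed in the definition of $\ms D$ (so $\by'$ and $C_1,\dots,C_{i+1}$ are inserted by a continuous ``insert-constants'' map). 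The defining inequality $\val(P^{\mathrm{lng}})<\min\{\val(\emptyset),\val(P^{\mathrm{sht}})\}$ holds iff there is $\kappa$ with $-1\le\kappa<\val(\emptyset)$ such that $\val(P^{\mathrm{lng}})\le\kappa$ and $\val(P^{\mathrm{sht}})>\kappa$: for the forward direction take $\kappa=\val(P^{\mathrm{lng}})$, which is a genuine ordinal or $-1$ because the inequality forces $\val(P^{\mathrm{lng}})<\val(\emptyset)<\omega_1$ by Lemma~\ref{lem:DSL_empty_count}. Hence
\begin{align*}
\ms D_{j_1,\dots,j_i,m,C_1,\dots,C_i,C_{i+1},\by'}
=\bigcup_{-1\le\kappa<\val(\emptyset)}
\Big(\,E\;\cap\;\{(\bz,\bx'):\val(P^{\mathrm{lng}})\le\kappa\}\;\cap\;\{(\bz,\bx'):\val(P^{\mathrm{sht}})>\kappa\}\,\Big),
\end{align*}
where $E$ denotes the (continuous) preimage, under $(\bz,\bx')\mapsto(\bz,(x'_k,y'_k)_k)$, of the analytic set $\XY_{j_1,\dots,j_i,m,C_1,\dots,C_i,C_{i+1}}$, i.e.\ exactly the membership condition in the definition of $\ms D$.

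It then remains to check each summand is universally measurable. The set $\XY_{j_1,\dots,j_i,m,C_1,\dots,C_i,C_{i+1}}$ is analytic by the \emph{verbatim} construction that showed $\XY_{j_1,\dots,j_i,C_1,\dots,C_i}$ analytic (now with the sequence $(j_1,\dots,j_i,m)$, the cubes $(C_1,\dots,C_i,C_{i+1})$, and the length-$(i{+}1)$ constants $I_\bullet,J_\bullet$), so $E$ is analytic. The set $\{\val(P^{\mathrm{sht}})>\kappa\}$ equals $\ms A^{\kappa}_{j_1,\dots,j_i,C_1,\dots,C_i}\times\X^{(m-j_i+1)\land(i+1)}$, which is analytic for every $-1\le\kappa<\omega_1$ by the already-established analyticity of $\ms A^{\kappa}_{j_1,\dots,j_i,C_1,\dots,C_i}$ together with $\X$ being Polish. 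The set $\{\val(P^{\mathrm{lng}})>\kappa\}$ is, after fixing $\by'$ and the cubes, a continuous preimage of the set $\ms A^{\kappa}_{j_1,\dots,j_i,m,C_1,\dots,C_i,C_{i+1}}$ built by running the same argument as for $\ms A^{\kappa}_{j_1,\dots,j_i,C_1,\dots,C_i}$ one level higher --- using the length-$(i{+}1)$ objects $\ms F,\ms Z,\ms A_{i+1}$ and $\ms A^{\kappa}_{i+1}$, the last two analytic by \citet[Corollary~B.11]{universal_learning} exactly as $\ms A^{\kappa}_i$ was --- hence analytic; therefore $\{\val(P^{\mathrm{lng}})\le\kappa\}$ is the complement of an analytic subset of a Polish space, i.e.\ coanalytic. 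Since analytic and coanalytic sets are universally measurable and the universally measurable sets form a $\sigma$-algebra, each summand is universally measurable.

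Finally, since $\cH$ has no infinite DSL tree, Lemma~\ref{lem:DSL_empty_count} gives $\val(\emptyset)<\omega_1$, so the index set $\{\kappa:-1\le\kappa<\val(\emptyset)\}$ is countable; thus $\ms D_{j_1,\dots,j_i,m,C_1,\dots,C_i,C_{i+1},\by'}$ is a countable union of universally measurable sets and is itself universally measurable. I expect the only real friction to be bookkeeping rather than new mathematics: one must verify that the glued position $P^{\mathrm{lng}}(\bz,\bx')$ written out (with its overlap block of length $(j_i+i-m)\lor0$, and the segment $x'_{1:((m-j_i+1)\land(i+1))}$) is \emph{literally} the length-$(i{+}1)$ position whose value is tracked by $\ms A^{\kappa}_{j_1,\dots,j_i,m,C_1,\dots,C_i,C_{i+1}}$, and that the constants $J_i,I_i,I_{i+1},J_{i+1}$ match up under the insert-constants maps; once that matching is made explicit, every topological ingredient needed has already been supplied in the length-$i$ case.
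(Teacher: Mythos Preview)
Your proposal is correct and follows essentially the same approach as the paper. The only cosmetic difference is in how the fixed label tuple $\by'$ is handled: the paper expresses the relevant $\by'$-sections of $\XY_{j_1,\dots,j_i,m,C_1,\dots,C_i,C_{i+1}}$ and $\ms A^{\kappa}_{j_1,\dots,j_i,m,C_1,\dots,C_i,C_{i+1}}$ explicitly as unions over a dummy $\by''$ intersected with the slice $(\X\times\Y)^{J_i}\times\prod_k(\X\times\{y'_k\})$, whereas you invoke that the insert-constants map $(\bz,\bx')\mapsto(\bz,(x'_k,y'_k)_k)$ is continuous and that Borel preimages of analytic sets are analytic; the $\kappa$-decomposition, the analyticity inputs, and the countability via $\val(\emptyset)<\omega_1$ are identical.
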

\begin{proof}
We can write
\begin{align*}
&\ms D_{j_1,\dots,j_i,m,C_1,\dots,C_i,C_{i+1},\by'}
\\=&
\bigcup_{-1\le\kappa<\val(\emptyset)}
\Big\{
(x_1,y_1,\dots,x_{J_i},y_{J_i},x'_{1:((m-j_i+1)\land(i+1))})\in (\X\times\Y)^{J_i}\times\X^{(m-j_i+1)\land (i+1)}:
\\&
(x_1,y_1,\dots,x_{J_i},y_{J_i},(x_k',y_k')_{k=1}^{(m-j_i+1)\land(i+1)})
\in\XY_{j_1,\dots,j_i,m,C_1,\dots,C_i,C_{i+1}},
\\&
\val\big(x_1,C_1,y_1,\dots,
x_{I_i:(I_i+i-1)},C_i,y_{I_i:(I_i+i-1)},
\big(x_{(J_i+1-((j_i+i-m)\lor0)):J_i},x'_{1:((m-j_i+1)\land (i+1))}\big),
\\&
C_{i+1},
\big(y_{(J_i+1-((j_i+i-m)\lor0)):J_i},y'_{1:((m-j_i+1)\land (i+1))}\big)\big)\le\kappa,
\\&
\val\big(x_1,C_1,y_1,\dots,
x_{I_i:(I_i+i-1)},C_i,y_{I_i:(I_i+i-1)}\big)>\kappa
\Big\}
\\=&
\bigcup_{-1\le\kappa<\val(\emptyset)}
\Big\{
(x_1,y_1,\dots,x_{J_i},y_{J_i},x'_{1:((m-j_i+1)\land(i+1))})\in (\X\times\Y)^{J_i}\times\X^{(m-j_i+1)\land (i+1)}:
\\&
\big(x_1,y_1,\dots,x_{J_i},y_{J_i},(x'_k,y'_k)_{k=1}^{(m-j_i+1)\land(i+1)}\big)\in
\XY_{j_1,\dots,j_i,m,C_1,\dots,C_i,C_{i+1}}\backslash\ms A_{j_1,\dots,j_i,m,C_1,\dots,C_i,C_{i+1}}^\kappa,
\\&
\big(x_1,y_1,\dots,x_{J_i},y_{J_i}\big)\in\ms A_{j_1,\dots,j_i,C_1,\dots,C_i}^\kappa
\Big\}
\\=&
\bigcup_{-1\le\kappa<\val(\emptyset)}
\Big(\Big\{
(x_1,y_1,\dots,x_{J_i},y_{J_i},x'_{1:((m-j_i+1)\land(i+1))})\in (\X\times\Y)^{J_i}\times\X^{(m-j_i+1)\land (i+1)}:
\\&
\big(x_1,y_1,\dots,x_{J_i},y_{J_i},(x'_k,y'_k)_{k=1}^{(m-j_i+1)\land(i+1)}\big)\in
\XY_{j_1,\dots,j_i,m,C_1,\dots,C_i,C_{i+1}}\backslash\ms A_{j_1,\dots,j_i,m,C_1,\dots,C_i,C_{i+1}}^\kappa
\Big\}
\\& \bigcap
\left(\ms A_{j_1,\dots,j_i,C_1,\dots,C_i}^\kappa\times\X^{(m-j_i+1)\land (i+1)}\right)\Big).
\end{align*}
By Lemma \ref{lem:DSL_empty_count} and the previous results, for any $-1\le \kappa<\val(\emptyset)<\omega_1$, we have that $\ms A_{j_1,\dots,j_i,C_1,\dots,C_i}^\kappa\times\X^{(m-j_i+1)\land (i+1)}$ is an analytic subset of $(\X\times\Y)^{J_i}\times\X^{(m-j_i+1)\land (i+1)}$. 
Moreover, for any $-1\le \kappa<\val(\emptyset)<\omega_1$, we have
\begin{align*}
&\Big\{
(x_1,y_1,\dots,x_{J_i},y_{J_i},x'_{1:((m-j_i+1)\land(i+1))})\in (\X\times\Y)^{J_i}\times\X^{(m-j_i+1)\land (i+1)}:
\\&
\big(x_1,y_1,\dots,x_{J_i},y_{J_i},(x'_k,y'_k)_{k=1}^{(m-j_i+1)\land(i+1)}\big)\in
\XY_{j_1,\dots,j_i,m,C_1,\dots,C_i,C_{i+1}}\backslash\ms A_{j_1,\dots,j_i,m,C_1,\dots,C_i,C_{i+1}}^\kappa
\Big\}
\\=&
\Big\{
(x_1,y_1,\dots,x_{J_i},y_{J_i},x'_{1:((m-j_i+1)\land(i+1))})\in (\X\times\Y)^{J_i}\times\X^{(m-j_i+1)\land (i+1)}:
\\&
\big(x_1,y_1,\dots,x_{J_i},y_{J_i},(x'_k,y'_k)_{k=1}^{(m-j_i+1)\land(i+1)}\big)\in
\XY_{j_1,\dots,j_i,m,C_1,\dots,C_i,C_{i+1}}\Big\}
\\& \Big\backslash
\Big\{
(x_1,y_1,\dots,x_{J_i},y_{J_i},x'_{1:((m-j_i+1)\land(i+1))})\in (\X\times\Y)^{J_i}\times\X^{(m-j_i+1)\land (i+1)}:
\\&
\quad\big(x_1,y_1,\dots,x_{J_i},y_{J_i},(x'_k,y'_k)_{k=1}^{(m-j_i+1)\land(i+1)}\big)\in
\ms A_{j_1,\dots,j_i,m,C_1,\dots,C_i,C_{i+1}}^\kappa
\Big\}
\end{align*}
with 
\begin{align*}
&\Big\{
(x_1,y_1,\dots,x_{J_i},y_{J_i},x'_{1:((m-j_i+1)\land(i+1))})\in (\X\times\Y)^{J_i}\times\X^{(m-j_i+1)\land (i+1)}:
\\&
\big(x_1,y_1,\dots,x_{J_i},y_{J_i},(x'_k,y'_k)_{k=1}^{(m-j_i+1)\land(i+1)}\big)\in
\XY_{j_1,\dots,j_i,m,C_1,\dots,C_i,C_{i+1}}\Big\}
\\=&
\bigcup_{\by''\in\Y^{(m-j_i+1)\land(i+1)}}
\Big\{
(x_1,y_1,\dots,x_{J_i},y_{J_i},x'_{1:((m-j_i+1)\land(i+1))})\in (\X\times\Y)^{J_i}\times\X^{(m-j_i+1)\land (i+1)}:
\\&
\big(x_1,y_1,\dots,x_{J_i},y_{J_i},(x'_k,y''_k)_{k=1}^{(m-j_i+1)\land(i+1)}\big)\in
\XY_{j_1,\dots,j_i,m,C_1,\dots,C_i,C_{i+1}}
\\&\quad\quad\quad\quad\quad\quad\quad\quad\quad\quad\quad\quad\quad\quad\quad\quad\quad\quad\quad \bigcap \Big((\X\times\Y)^{J_i}\times\prod_{k=1}^{(m-j_i+1)\land i}(\X\times\{y'_k\})\Big)\Big\}
\end{align*}
and 
\begin{align*}
&\Big\{
(x_1,y_1,\dots,x_{J_i},y_{J_i},x'_{1:((m-j_i+1)\land(i+1))})\in (\X\times\Y)^{J_i}\times\X^{(m-j_i+1)\land (i+1)}:
\\&
\big(x_1,y_1,\dots,x_{J_i},y_{J_i},(x'_k,y'_k)_{k=1}^{(m-j_i+1)\land(i+1)}\big)\in
\ms A_{j_1,\dots,j_i,m,C_1,\dots,C_i,C_{i+1}}^\kappa\Big\}
\\=&
\bigcup_{\by''\in\Y^{(m-j_i+1)\land(i+1)}}
\Big\{
(x_1,y_1,\dots,x_{J_i},y_{J_i},x'_{1:((m-j_i+1)\land(i+1))})\in (\X\times\Y)^{J_i}\times\X^{(m-j_i+1)\land (i+1)}:
\\&
\big(x_1,y_1,\dots,x_{J_i},y_{J_i},(x'_k,y''_k)_{k=1}^{(m-j_i+1)\land(i+1)}\big)\in
\ms A_{j_1,\dots,j_i,m,C_1,\dots,C_i,C_{i+1}}^\kappa
\\&\quad\quad\quad\quad\quad\quad\quad\quad\quad\quad\quad\quad\quad\quad\quad\quad\quad\quad\quad \bigcap \Big((\X\times\Y)^{J_i}\times\prod_{k=1}^{(m-j_i+1)\land i}(\X\times\{y'_k\})\Big)\Big\}.
\end{align*}
Since we have proved that $\XY_{j_1,\dots,j_i,m,C_1,\dots,C_i,C_{i+1}}$ and $\ms A_{j_1,\dots,j_i,m,C_1,\dots,C_i,C_{i+1}}^\kappa$ are analytic subsets of $(\X\times\Y)^{J_i+((m-j_i+1)\land (i+1))}$, we can conclude that the set
\begin{align*}
&\Big\{
(x_1,y_1,\dots,x_{J_i},y_{J_i},x'_{1:((m-j_i+1)\land(i+1))})\in (\X\times\Y)^{J_i}\times\X^{(m-j_i+1)\land (i+1)}:
\\&
\big(x_1,y_1,\dots,x_{J_i},y_{J_i},(x'_k,y'_k)_{k=1}^{(m-j_i+1)\land(i+1)}\big)\in
\XY_{j_1,\dots,j_i,m,C_1,\dots,C_i,C_{i+1}}\Big\}
\end{align*}
and the set
\begin{align*}
&\Big\{
(x_1,y_1,\dots,x_{J_i},y_{J_i},x'_{1:((m-j_i+1)\land(i+1))})\in (\X\times\Y)^{J_i}\times\X^{(m-j_i+1)\land (i+1)}:
\\&
\big(x_1,y_1,\dots,x_{J_i},y_{J_i},(x'_k,y'_k)_{k=1}^{(m-j_i+1)\land(i+1)}\big)\in
\ms A_{j_1,\dots,j_i,m,C_1,\dots,C_i,C_{i+1}}^\kappa\Big\}
\end{align*}
are both analytic subsets of $(\X\times\Y)^{J_i}\times\X^{(m-j_i+1)\land (i+1)}$. Therefore, the set
\begin{align*}
&\Big\{
(x_1,y_1,\dots,x_{J_i},y_{J_i},x'_{1:((m-j_i+1)\land(i+1))})\in (\X\times\Y)^{J_i}\times\X^{(m-j_i+1)\land (i+1)}:
\\&
\big(x_1,y_1,\dots,x_{J_i},y_{J_i},(x'_k,y'_k)_{k=1}^{(m-j_i+1)\land(i+1)}\big)\in
\XY_{j_1,\dots,j_i,m,C_1,\dots,C_i,C_{i+1}}\backslash\ms A_{j_1,\dots,j_i,m,C_1,\dots,C_i,C_{i+1}}^\kappa
\Big\}
\end{align*}
is universally measurable. 
It follows from the fact that $\val(\emptyset)<\omega_1$ that
$\ms D_{j_1,\dots,j_i,m,C_1,\dots,C_i,C_{i+1},\by'}$ is universally measurable.
\end{proof}

Next, we define the set
\begin{align*}
&\ms D_{j_1,\dots,j_i,m,C_1,\dots,C_i,C_{i+1}}
\\:=&
\Big\{(x_1,y_1,\dots,x_{J_i},y_{J_i},(x'_k,y'_k)_{k=1}^{((m-j_i+1)\land(i+1))})
\in\XY_{j_1,\dots,j_i,m,C_1,\dots,C_i,C_{i+1}}:
\\&
\val\big(x_1,C_1,y_1,\dots,
x_{I_i:(I_i+i-1)},C_i,y_{I_i:(I_i+i-1)},
\\&
\big(x_{((J_i+1-((j_i+i-m)\lor0)):J_i)},x'_{1:((m-j_i+1)\land (i+1))}\big),C_{i+1},
\big(y_{((J_i+1-((j_i+i-m)\lor0)):J_i)},y'_{1:((m-j_i+1)\land (i+1))}\big)
\big)
\\&<
\min\big\{\val(\emptyset),
\val\big(
x_1,C_1,y_1,\dots,
x_{I_i:(I_i+i-1)},C_i,y_{I_i:(I_i+i-1)}
\big)
\big\}
\Big\}
\end{align*}
and prove the following lemma.
\begin{lemma}
For any $m\in\N$ with $j_i\le m\le j_{i+1}$ and any $(C_1,\dots,C_{i+1})\in\prod_{k=1}^{i+1}\PC(\Y^{k})$, $\ms D_{j_1,\dots,j_i,m,C_1,\dots,C_i,C_{i+1}}$ is universally measurable.
\end{lemma}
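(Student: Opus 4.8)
This is the ``free label pattern'' counterpart of the lemma immediately above, and the natural strategy is to reduce it to that lemma by slicing over the finitely-many label coordinates $y'_1,\dots,y'_p$, where I write $p:=(m-j_i+1)\land(i+1)$ for brevity (note $j_i\le m$ forces $p\ge 1$). Let $\pi\colon(\X\times\Y)^{J_i}\times(\X\times\Y)^{p}\to(\X\times\Y)^{J_i}\times\X^{p}$ be the projection that forgets the label entries $y'_1,\dots,y'_p$, i.e. $((x_k,y_k)_{k=1}^{J_i},(x'_k,y'_k)_{k=1}^{p})\mapsto((x_k,y_k)_{k=1}^{J_i},(x'_k)_{k=1}^{p})$; this map is continuous. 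The plan is to establish the identity
\begin{align*}
\ms D_{j_1,\dots,j_i,m,C_1,\dots,C_i,C_{i+1}}
=\bigcup_{\by''\in\Y^{p}}\Big(\pi^{-1}\big(\ms D_{j_1,\dots,j_i,m,C_1,\dots,C_i,C_{i+1},\by''}\big)\cap\big((\X\times\Y)^{J_i}\times\textstyle\prod_{k=1}^{p}(\X\times\{y''_k\})\big)\Big)
\end{align*}
and then read off universal measurability from the right-hand side.

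\textbf{Key step.} The identity is immediate from the two definitions. The only difference between $\ms D_{j_1,\dots,j_i,m,C_1,\dots,C_i,C_{i+1}}$ and $\ms D_{j_1,\dots,j_i,m,C_1,\dots,C_i,C_{i+1},\by'}$ is whether the entries $y'_1,\dots,y'_p$ appear as free coordinates of the tuple or as the fixed parameter $\by'$: in both the membership requirement ``$(\dots,(x'_k,y'_k)_{k=1}^{p})\in\XY_{j_1,\dots,j_i,m,C_1,\dots,C_i,C_{i+1}}$'' and the game-value inequality, the $y'_k$ occur in exactly the same positions and nowhere else. Hence a tuple $(x_1,y_1,\dots,x_{J_i},y_{J_i},(x'_k,y'_k)_{k=1}^{p})$ lies in $\ms D_{j_1,\dots,j_i,m,C_1,\dots,C_i,C_{i+1}}$ precisely when, setting $\by'':=(y'_1,\dots,y'_p)$, its image $(x_1,y_1,\dots,x_{J_i},y_{J_i},x'_{1:p})$ under $\pi$ lies in $\ms D_{j_1,\dots,j_i,m,C_1,\dots,C_i,C_{i+1},\by''}$ while the retained $y'$-coordinates agree with $\by''$ — which is exactly the displayed decomposition.

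\textbf{Measurability bookkeeping.} By the preceding lemma each $\ms D_{j_1,\dots,j_i,m,C_1,\dots,C_i,C_{i+1},\by''}$ is universally measurable. Since $\pi$ is continuous, hence Borel measurable, and preimages of universally measurable sets under Borel maps are universally measurable (a universally measurable set equals a Borel set modulo a null set of the push-forward of any given measure, and pulling back along $\pi$ preserves both), every $\pi^{-1}\big(\ms D_{j_1,\dots,j_i,m,C_1,\dots,C_i,C_{i+1},\by''}\big)$ is universally measurable. The constraint set $(\X\times\Y)^{J_i}\times\prod_{k=1}^{p}(\X\times\{y''_k\})$ is clopen in $(\X\times\Y)^{J_i}\times(\X\times\Y)^{p}$ because $\Y$ is countable and each $\{y''_k\}$ is clopen, so intersecting with it preserves universal measurability. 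Finally $\Y^{p}$ is countable, being a finite power of a countable set, so the union above is countable, and a countable union of universally measurable sets is universally measurable; this proves the lemma. The one mild subtlety is the stability of universal measurability under Borel preimages, which is a standard fact, so beyond keeping the index bookkeeping straight there is no real obstacle.
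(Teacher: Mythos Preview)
Your proof is correct, but it takes a different route from the paper. The paper does not reduce to the preceding lemma about $\ms D_{j_1,\dots,j_i,m,C_1,\dots,C_i,C_{i+1},\by'}$; instead it repeats the same $\kappa$-decomposition directly, writing
\[
\ms D_{j_1,\dots,j_i,m,C_1,\dots,C_i,C_{i+1}}=\bigcup_{-1\le\kappa<\val(\emptyset)}\Big(\big(\XY_{\cdots}\setminus\ms A_{\cdots}^\kappa\big)\cap\big(\ms A_{j_1,\dots,j_i,C_1,\dots,C_i}^\kappa\times(\X\times\Y)^{p}\big)\Big)
\]
and appealing to the previously established analyticity of $\XY_{\cdots}$ and $\ms A_{\cdots}^\kappa$ together with $\val(\emptyset)<\omega_1$. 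Your slicing over $\by''\in\Y^{p}$ is arguably cleaner and more modular: it genuinely reuses the previous lemma rather than rerunning the ordinal decomposition, and the identity you display is immediate from the definitions. The one fact you invoke that the paper does not state in this exact generality is that preimages of universally measurable sets under Borel maps are universally measurable; your sketch of this (push forward the measure, sandwich, pull back) is correct, and in fact the paper's Lemma on coordinate extensions is exactly the special case you need for the projection $\pi$. Both approaches work; yours avoids a little duplication at the cost of quoting that standard preimage fact.
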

\begin{proof}
We have
\begin{align*}
&\ms D_{j_1,\dots,j_i,m,C_1,\dots,C_i,C_{i+1}}
\\=&
\bigcup_{-1\le\kappa<\val(\emptyset)}
\Big\{
(x_1,y_1,\dots,x_{J_i},y_{J_i},(x'_k,y'_k)_{k=1}^{((m-j_i+1)\land(i+1))})
\in\XY_{j_1,\dots,j_i,m,C_1,\dots,C_i,C_{i+1}}:
\\&
\val\big(x_1,C_1,y_1,\dots,
x_{I_i:(I_i+i-1)},C_i,y_{I_i:(I_i+i-1)},
\big(x_{((J_i+1-((j_i+i-m)\lor0)):J_i)},x'_{1:((m-j_i+1)\land (i+1))}\big),
\\&
C_{i+1},
\big(y_{((J_i+1-((j_i+i-m)\lor0)):J_i)},y'_{1:((m-j_i+1)\land (i+1))}\big)\big)\le\kappa,
\\&
\val\big(x_1,C_1,y_1,\dots,
x_{I_i:(I_i+i-1)},C_i,y_{I_i:(I_i+i-1)}\big)>\kappa
\Big\}
\\=&
\bigcup_{-1\le\kappa<\val(\emptyset)}
\Big\{
\big(x_1,y_1,\dots,x_{J_i},y_{J_i},(x'_k,y'_k)_{k=1}^{(m-j_i+1)\land(i+1)}\big)\in
\\& \quad\quad\ \ 
\XY_{j_1,\dots,j_i,m,C_1,\dots,C_i,C_{i+1}}\backslash\ms A_{j_1,\dots,j_i,m,C_1,\dots,C_i,C_{i+1}}^\kappa:
\big(x_1,y_1,\dots,x_{J_i},y_{J_i}\big)\in\ms A_{j_1,\dots,j_i,C_1,\dots,C_i}^\kappa
\Big\}
\\=&
\bigcup_{-1\le\kappa<\val(\emptyset)}
\Big(\big(
\XY_{j_1,\dots,j_i,m,C_1,\dots,C_i,C_{i+1}}\backslash\ms A_{j_1,\dots,j_i,m,C_1,\dots,C_i,C_{i+1}}^\kappa\big) 
\\&\quad\quad\quad\quad\quad\quad\bigcap
\big(\ms A_{j_1,\dots,j_i,C_1,\dots,C_i}^\kappa\times\X^{(m-j_i+1)\land (i+1)}\big)\Big).
\end{align*}
Since we have proved that $\XY_{j_1,\dots,j_i,m,C_1,\dots,C_i,C_{i+1}}$, $\ms A_{j_1,\dots,j_i,m,C_1,\dots,C_i,C_{i+1}}^\kappa$, and $\ms A_{j_1,\dots,j_i,C_1,\dots,C_i}^\kappa$ are analytic and $\val(\emptyset)<\omega_1$ (Lemma \ref{lem:DSL_empty_count}), we can conclude that $\ms D_{j_1,\dots,j_i,m,C_1,\dots,C_i,C_{i+1}}$ is universally measurable.
\end{proof}
Then, the following set
\begin{align*}
\ms D_{j_1,\dots,j_i,m,C_1,\dots,C_i}:=&
\bigcup_{C_{i+1}\in\PC(\Y^{i+1})}
\ms D_{j_1,\dots,j_i,m,C_1,\dots,C_i,C_{i+1}}
\end{align*}
is also universally measurable because $\PC(\Y^{i+1})$ is countable. 

Before proceeding to the next step, we will need the following lemmas regarding universal measurability. 
For any measure spaces $(A,\Ascr)$ and $(B,\Bscr)$, let $\Ascr^*$ and $\Bscr^*$ denote the universal completion of $\Ascr$ and $\Bscr$, respetively. 
Let $\Ascr\times \Bscr$ denote the product $\sigma$-algebra of $\Ascr$ and $\Bscr$ on $(A\times B)$. 
Note that when $A$ and $B$ are Polish spaces, $\Ascr\times\Bscr$ is also the Borel $\sigma$-algebra of $A\times B$. 
A function $f:A\rightarrow B$ is called $\Ascr/\Bscr$-measurable if $f^{-1}(E)\in\Ascr$ for any $E\in\Bscr$. 
We prove the following lemmas. 
\begin{lemma} \label{lem:universal_measurable_equiv}
For any two measurable space $(A,\Ascr)$ and $(B,\Bscr)$, any function $f: A\rightarrow B$ is $\Ascr^*/\Bscr^*$-measurable if and only if $f$ is $\Ascr^*/\Bscr$-measurable.
\end{lemma}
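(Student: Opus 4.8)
The plan is to handle the two directions separately, with essentially all the content in the converse. The forward implication is trivial: since $\Bscr\subseteq\Bscr^*$, if $f$ is $\Ascr^*/\Bscr^*$-measurable then in particular $f^{-1}(E)\in\Ascr^*$ for every $E\in\Bscr$, so $f$ is $\Ascr^*/\Bscr$-measurable.

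For the converse, I would assume $f$ is $\Ascr^*/\Bscr$-measurable and fix $E\in\Bscr^*$, with the goal of showing $f^{-1}(E)\in\Ascr^*$. Recalling that $\Ascr^*=\bigcap_\mu\Ascr_\mu$, where $\mu$ ranges over all finite (equivalently, probability) measures on $(A,\Ascr)$ and $\Ascr_\mu$ denotes the $\mu$-completion, it suffices to fix such a $\mu$ and show $f^{-1}(E)\in\Ascr_\mu$. Let $\bar\mu$ be the extension of $\mu$ to $\Ascr_\mu\supseteq\Ascr^*$. The key step is to push $\mu$ forward through $f$: since $f$ is $\Ascr^*/\Bscr$-measurable we have $f^{-1}(F)\in\Ascr^*\subseteq\Ascr_\mu$ for all $F\in\Bscr$, so $\nu(F):=\bar\mu(f^{-1}(F))$ defines a finite measure on $(B,\Bscr)$. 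Because $E\in\Bscr^*\subseteq\Bscr_\nu$, there are $E_1,E_2\in\Bscr$ with $E_1\subseteq E\subseteq E_2$ and $\nu(E_2\setminus E_1)=0$. Pulling back, $f^{-1}(E_1),f^{-1}(E_2)\in\Ascr^*\subseteq\Ascr_\mu$, and $f^{-1}(E_1)\subseteq f^{-1}(E)\subseteq f^{-1}(E_2)$, while $\bar\mu\big(f^{-1}(E_2)\setminus f^{-1}(E_1)\big)=\bar\mu\big(f^{-1}(E_2\setminus E_1)\big)=\nu(E_2\setminus E_1)=0$; since $\Ascr_\mu$ is $\mu$-complete, a set trapped between two $\Ascr_\mu$-sets whose difference is $\mu$-null lies in $\Ascr_\mu$, so $f^{-1}(E)\in\Ascr_\mu$. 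As $\mu$ was arbitrary, $f^{-1}(E)\in\Ascr^*$.

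The only delicate point — and the main (mild) obstacle — is bookkeeping about which $\sigma$-algebra each measure is defined on: one must use that $\Ascr\subseteq\Ascr^*\subseteq\Ascr_\mu$ so that $\bar\mu(f^{-1}(F))$ even makes sense, and symmetrically that $\Bscr^*\subseteq\Bscr_\nu$ so that the approximating sets $E_1,E_2$ exist. An alternative packaging would be to isolate the general fact that any $(\Ascr'/\Bscr)$-measurable map is automatically $((\Ascr')^*/\Bscr^*)$-measurable and apply it with $\Ascr'=\Ascr^*$, invoking idempotence $(\Ascr^*)^*=\Ascr^*$ of the universal completion; but the direct pushforward-and-sandwich argument above sidesteps the need for that auxiliary fact.
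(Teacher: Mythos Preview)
Your proof is correct and follows essentially the same pushforward-and-sandwich approach as the paper: fix a probability measure on $(A,\Ascr)$, push it forward along $f$ to obtain a measure $\nu$ on $(B,\Bscr)$, sandwich $E\in\Bscr^*\subseteq\Bscr_\nu$ by $\Bscr$-sets with $\nu$-null difference, and pull back. The only cosmetic difference is that the paper takes one further (unnecessary) step, sandwiching the $\Ascr^*$-sets $f^{-1}(E_1),f^{-1}(E_2)$ by honest $\Ascr$-sets before concluding, whereas you invoke completeness of $\Ascr_\mu$ directly; your version is slightly cleaner but the idea is identical.
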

\begin{proof}
Suppose that $f$ is $\Ascr^*/\Bscr$-measurable.
For any probability measure $\mu_A$ on $(A,\Ascr)$, let $(A,\Ascr_{\mu_A}^*,\mu_A^*)$ denote the completion of $(A,\Ascr,\mu_A)$. Then, $(A,\Ascr^*,\mu_A^*)$ is also a probability space because $\Ascr^*\subseteq\Ascr^*_{\mu_A}$. 
Since $f$ is $\Ascr^*/\Bscr$-measurable, we can define $\mu_B:\Bscr\rightarrow [0,1],\ E\mapsto\mu_A^*(f^{-1}(E))$ which is the pushforward of $\mu_A^*$ by $f$. 

For any $S\in\Bscr^*$, there exist $U,V\in\Bscr$ such that $U\subseteq S\subseteq V$ and $\mu_B(V\backslash U)=0$. 
Then, we have $f^{-1}(U)\subseteq f^{-1}(S)\subseteq f^{-1}(V)$ and $f^{-1}(U),f^{-1}(V)\in\Ascr^*$ which implies that there exist $U_l,U_u,V_l,V_u\in \Ascr$ such that $U_l\subseteq f^{-1}(U)\subseteq U_u$, $V_l\subseteq f^{-1}(V)\subseteq V_u$ and $\mu_A(U_u\backslash U_l)=\mu_A(V_u\backslash V_l)=0$. 
Moreover, it follows from the definition of $\mu_B$ that
$$
\mu_A^*(f^{-1}(V)\backslash f^{-1}(U))=
\mu_A^*(f^{-1}(V\backslash U))=
\mu_B(V\backslash U)=0.
$$
Since $\mu_A^*$ is the completion of $\mu_A$, there exists $K\in\Ascr$ with $K\supseteq f^{-1}(V)\backslash f^{-1}(U)$ and $\mu_A(K)=0$. 
Therefore, we have $U_l\subseteq f^{-1}(U)\subseteq f^{-1}(S)\subseteq f^{-1}(V) \subseteq V_u$ and 
\begin{align*}
V_u\backslash U_l\subseteq 
(V_u\backslash V_l)\cup
(U_u\backslash U_l)\cup
K.
\end{align*}
Since $\mu_A(V_u\backslash V_l)=\mu_A(U_u\backslash U_l)=\mu_A(K)=0$, we have $\mu_A(V_u\backslash U_l)=0$. 
Thus, by the arbitrariness of $\mu$, we can conclude that $f^{-1}(S)\in\Ascr^*$, which implies that $f$ is $\Ascr^*/\Bscr^*$. 

The other direction is trivial. 
\end{proof}

\begin{lemma} \label{lem:meas_coord}
Consider any $n\in\N$ Polish space $A_1,\dots,A_n$ with their Borel $\sigma$-algebras denoted as $\Ascr_1,\dots,\Ascr_n$, respectively. For any $m\in[n]$, any sequence $1\le i_1<i_2<\dots<i_m\le n$, and any set $E\in (\prod_{k=1}^m\Ascr_{i_k})^*$, then we have
\begin{align*}
\wt E:=\left\{(x_1,\dots,x_n)\in\prod_{j=1}^n A_j:(x_{i_1},\dots,x_{i_m})\in E\right\}\in\left(\prod_{j=1}^n\Ascr_j\right)^*.
\end{align*}
\end{lemma}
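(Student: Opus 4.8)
The plan is to realize $\wt E$ as the preimage of $E$ under a coordinate projection, and then invoke the fact that a measurable map pulls universally measurable sets back to universally measurable sets, which is essentially the content of Lemma~\ref{lem:universal_measurable_equiv}.

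First I would introduce the projection $\pi:\prod_{j=1}^n A_j\to\prod_{k=1}^m A_{i_k}$ defined by $\pi(x_1,\dots,x_n):=(x_{i_1},\dots,x_{i_m})$, so that $\wt E=\pi^{-1}(E)$. The next step is to check that $\pi$ is $\big(\prod_{j=1}^n\Ascr_j\big)/\big(\prod_{k=1}^m\Ascr_{i_k}\big)$-measurable: the measurable rectangles $\prod_{k=1}^m E_k$ with $E_k\in\Ascr_{i_k}$ generate $\prod_{k=1}^m\Ascr_{i_k}$, and $\pi^{-1}\big(\prod_{k=1}^m E_k\big)=\prod_{j=1}^n F_j$ with $F_{i_k}=E_k$ and $F_j=A_j$ for $j\notin\{i_1,\dots,i_m\}$, which lies in $\prod_{j=1}^n\Ascr_j$; hence it suffices to verify measurability on these generators. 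In particular $\pi$ is $\big(\prod_{j=1}^n\Ascr_j\big)^*/\big(\prod_{k=1}^m\Ascr_{i_k}\big)$-measurable, since $\prod_{j=1}^n\Ascr_j\subseteq\big(\prod_{j=1}^n\Ascr_j\big)^*$.

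Then I would apply Lemma~\ref{lem:universal_measurable_equiv} with $(A,\Ascr)=\big(\prod_{j=1}^n A_j,\prod_{j=1}^n\Ascr_j\big)$ and $(B,\Bscr)=\big(\prod_{k=1}^m A_{i_k},\prod_{k=1}^m\Ascr_{i_k}\big)$: from $\pi$ being $\Ascr^*/\Bscr$-measurable we conclude that $\pi$ is $\Ascr^*/\Bscr^*$-measurable, i.e.\ $\pi^{-1}(E)\in\big(\prod_{j=1}^n\Ascr_j\big)^*$ for every $E\in\big(\prod_{k=1}^m\Ascr_{i_k}\big)^*$. Specializing to the $E$ in the statement yields $\wt E\in\big(\prod_{j=1}^n\Ascr_j\big)^*$.

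For robustness (and in case one prefers not to route through Lemma~\ref{lem:universal_measurable_equiv}), I would also record the direct argument: fix an arbitrary Borel probability measure $\mu$ on $\prod_{j=1}^n A_j$, let $\nu:=\pi_*\mu$ be the pushforward onto $\prod_{k=1}^m A_{i_k}$, and use universal measurability of $E$ to sandwich Borel sets $U\subseteq E\subseteq V$ with $\nu(V\setminus U)=0$; then $\pi^{-1}(U)\subseteq\wt E\subseteq\pi^{-1}(V)$ are in $\prod_{j=1}^n\Ascr_j$ with $\mu\big(\pi^{-1}(V)\setminus\pi^{-1}(U)\big)=\mu\big(\pi^{-1}(V\setminus U)\big)=\nu(V\setminus U)=0$, so $\wt E$ lies in the $\mu$-completion, and arbitrariness of $\mu$ finishes the proof. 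In either approach there is essentially no obstacle; the only point requiring a line of care is that measurability of $\pi$ need only be checked on measurable rectangles, which generate the product $\sigma$-algebra.
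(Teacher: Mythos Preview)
Your proposal is correct. Both routes you outline are valid, and in fact your second ``direct'' argument is exactly what the paper does, only the paper spends considerably more effort on the step you dispatch in one line: to show that $\pi^{-1}$ carries Borel sets in $\prod_{k=1}^m A_{i_k}$ into $\prod_{j=1}^n\Ascr_j$, the paper builds auxiliary collections $\mathfrak{G},\mathcal{G},\mathcal{C},\mathfrak{F},\mathfrak{C}$ and applies Dynkin's $\pi$--$\lambda$ theorem twice to conclude $\mathcal{C}=\mathcal{G}\subseteq\prod_{j=1}^n\Ascr_j$, whereas you simply check $\pi^{-1}$ on generating rectangles. Your first route, factoring through Lemma~\ref{lem:universal_measurable_equiv}, is more efficient still and is not used by the paper; it buys you the whole conclusion in two lines once $\pi$ is seen to be $\Ascr^*/\Bscr$-measurable, so you avoid repeating the sandwich-and-pushforward argument that is already packaged inside that lemma's proof.
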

\begin{proof}
Consider the following collections of subsets of $\prod_{j=1}^n A_j$
\begin{align*}
\mathfrak{G}:=\left\{\left\{(x_1,\dots,x_n)\in\prod_{j=1}^n A_j:x_{i_1}\in B_1,\dots,x_{i_m}\in B_m\right\}:
B_1\in\Ascr_{i_1},\dots,B_m\in\Ascr_{i_m}\right\}.
\end{align*}
It is easy to see that $\mathfrak{G}$ is a $\pi$-system on $\prod_{j=1}^n A_j$.  
Define $\G:=\sigma\left(\mathfrak{G}\right)$ to be the $\sigma$-algebra generated by $\mathfrak{G}$ and define following the collection of subsets of $\prod_{j=1}^n A_{j}$
\begin{align*}
\mathcal{C}:=\left\{\left\{(x_1,\dots,x_n)\in\prod_{j=1}^n A_j:(x_{i_1},\dots,x_{i_m})\in S\right\}:S\in\prod_{j=1}^m\Ascr_{i_k}\right\}.
\end{align*}
It is obvious that $\mathcal{C}$ is a $\sigma$-algebra on $\prod_{j=1}^n A_{j}$. 
Since $\mathfrak{G}\subseteq \mathcal{C}$, by Dynkin's $\pi$-$\lambda$ theorem, we have $\G=\sigma(\mathfrak{G})\subseteq \mathcal{C}$. 

Next, define the following collection of subsets of $\prod_{k=1}^m A_{i_k}$
\begin{align*}
\mathfrak{F}:=\left\{G|_{\prod_{k=1}^m A_{i_k}}:G\in\G\right\}
\end{align*}
where for any $G\subseteq\prod_{j=1}^nA_j$, we define
\begin{align} \label{eq:proj_intersect}
G|_{\prod_{k=1}^m A_{i_k}}:=\left\{(x_{i_1},\dots,x_{i_m}):\exists(x'_1,\dots,x'_{n})\in G \textup{ s.t. }x'_{i_k}=x_{i_k}, \ \forall\ j\in[n]\right\}.
\end{align} 
We now show that $\mathfrak{F}$ is a $\sigma$-algebra on $\prod_{k=1}^m A_{i_k}$. 
\begin{itemize}
\item Since $\prod_{j=1}^nA_j\in \mathfrak{G}$, we have $\prod_{k=1}^m A_{i_k}\in \mathfrak{F}$.
\item For any $G_1',G_2'\in \mathfrak{F}$ with $G_1'\subseteq G_2'$, there exists $G_l\in\G$ such that $G_l'=G_l|_{\prod_{k=1}^m A_{i_k}}$ for $l=1,2$. 
By \eqref{eq:proj_intersect} and the facts that $G_1'\subseteq G_2'$ and $\G\subseteq \mathcal{C}$, we must have $G_1\subseteq G_2$. Since $\G$ is a $\sigma$-algebra, we have $G_2\backslash G_1\in\G$. 
By $\G\subseteq \mathcal{C}$ again, we have $(G_2\backslash G_1)|_{\prod_{k=1}^m A_{i_k}}=G_2'\backslash G_1'$, which implies that $G_2\backslash G_1\in \mathfrak{F}$. 
\item For any $G_1',G_2',\dots\in\mathfrak{F}$, 
there exists $G_1,G_2,\dots\in \G$ such that $G_l'=G_l|_{\prod_{k=1}^m A_{i_k}}$ for all $l\in\N$. Then, we have $\cup_{l=1}^\infty G_l\in\G$. Since $\G\subseteq\mathcal{C}$, we have $\cup_{l=1}^\infty G_l'=(\cup_{l=1}^\infty G_l)|_{\prod_{k=1}^m A_{i_k}}\in\mathfrak{F}$. 
\end{itemize} 
Thus, $\mathfrak{F}$ is a $\sigma$-algebra on $\prod_{k=1}^m A_{i_k}$.

By the definition of product $\sigma$-algebras, we have $\prod_{k=1}^m \Ascr_{i_k}=\sigma(\mathfrak{C})$ where 
\begin{align*}
\mathfrak{C}:=\left\{\prod_{k=1}^mB_k:B_1\in\Ascr_{i_1},\dots,B_m\in\Ascr_{i_m}\right\}
\end{align*}
is a $\pi$-system on $\prod_{k=1}^m A_{i_k}$. 
Obviously, $\mathfrak{C}\subseteq\mathfrak{F}$. Then, by Dynkin's $\pi$-$\lambda$ theorem, we have $\prod_{k=1}^m \Ascr_{i_k}=\sigma(\mathfrak{C})\subseteq\mathfrak{F}$. 
By the definition of $\mathfrak{F}$ and $\mathcal{C}$ as well as the fact that $\mathcal{G}\subseteq\mathcal{C}$, we have
$\mathcal{C}\subseteq\G$. 
It follows that $\mathcal{C}=\G$. 
Since $\mathfrak{G}$ is a subset of the collection of all rectangles on $\prod_{j=1}^nA_j$, we have $\mathcal{G}\subseteq \prod_{j=1}^n\Ascr_j$. Thus, $\mathcal{C}\subseteq \prod_{j=1}^n\Ascr_j$.

For any probability measure $\mu$ on $\prod_{j=1}^n \Ascr_j$, consider its projection $\mu_{i_1,\dots,i_m}:=\mu|_{\prod_{k=1}^m A_{i_k}}$ on $\prod_{k=1}^m A_{i_k}$ defined by 
\begin{align} \label{eq:mu_proj_def}
\mu_{i_1,\dots,i_m}(S):=\mu\left(\left\{(x_1,\dots,x_n)\in\prod_{j=1}^n A_j:(x_{i_1},\dots,x_{i_m})\in S\right\}\right)
\quad \forall S\in \prod_{k=1}^m\Ascr_{i_k}.
\end{align}
Since we have proved $\mathcal{C}\subseteq \prod_{j=1}^n\Ascr_j$, the above definition makes sense and 
$$
\left(\prod_{k=1}^mA_{i_k}, \prod_{k=1}^m\Ascr_{i_k},\mu_{i_1,\dots,i_m}\right)
$$ 
is indeed a probability space. 

Since $E\in (\prod_{k=1}^m\Ascr_{i_k})^*$ and $\mu_{i_1,\dots,i_m}$ is a probability measure on $(\prod_{k=1}^m A_{i_k}, \prod_{k=1}^m \Ascr_{i_k})$, there exist sets $U,V\in\prod_{k=1}^m \Ascr_{i_k}$ such that $U\subseteq E\subseteq V$ and $\mu_{i_1,\dots,i_m}(V\backslash U)=0$. 
We define 
\begin{align*}
\wt U:=\left\{(x_1,\dots,x_n)\in\prod_{j=1}^n A_j:(x_{i_1},\dots,x_{i_m})\in U\right\}
\end{align*}
and
\begin{align*}
\wt V:=\left\{(x_1,\dots,x_n)\in\prod_{j=1}^n A_j:(x_{i_1},\dots,x_{i_m})\in V\right\}.
\end{align*}
By definition, we have $\wt U,\wt E\in\mathcal{C}\subseteq\prod_{j=1}^n \Ascr_{j}$ with $\wt U\subseteq\wt E\subseteq\wt V$. 
Moreover, by \eqref{eq:mu_proj_def}, we have
\begin{align*}
\mu(\wt V\backslash\wt U)
=\mu_{i_1,\dots,i_m}(V\backslash U)=0.
\end{align*}
Thus, we can conclude that $\wt E\in \left(\prod_{j=1}^n\Ascr_j\right)^*$. 
\end{proof}

Now, we define the sets
\begin{align*}
&\ms D^{+}_{t,(j_1,\dots,j_i),m,(C_1,\dots,C_i),C_{i+1}}
\\:=&\Big\{(x_1,y_1,\dots,x_t,y_t)\in(\X\times\Y)^t:
\big(x_{j_1},y_{j_1},x_{((j_1+1)\lor j_2):(j_2+1)},y_{((j_1+1)\lor j_2):(j_2+1)},\dots,
\\&
x_{((j_{i-1}+i-1)\lor j_i):(j_i+i-1)},y_{((j_{i-1}+i-1)\lor j_i):(j_i+i-1)},
x_{((j_{i}+i)\lor m):(m+i)},y_{((j_{i}+i)\lor m):(m+i)}
\big)\in
\\&
\ms D_{j_1,\dots,j_i,m,C_1,\dots,C_i,C_{i+1}}\Big\},
\end{align*}
\begin{align*}
&\ms D^{-}_{t,(j_1,\dots,j_i),m,(C_1,\dots,C_i)}
\\:=&\Big\{(x_1,y_1,\dots,x_t,y_t)\in(\X\times\Y)^t:
\big(x_{j_1},y_{j_1},x_{((j_1+1)\lor j_2):(j_2+1)},y_{((j_1+1)\lor j_2):(j_2+1)},\dots,
\\&
x_{((j_{i-1}+i-1)\lor j_i):(j_i+i-1)},y_{((j_{i-1}+i-1)\lor j_i):(j_i+i-1)},
x_{((j_{i}+i)\lor m):(m+i)},y_{((j_{i}+i)\lor m):(m+i)}
\big)\in
\\&
\left(\XY_{j_1,\dots,j_i,C_1,\dots,C_i}\times(\X\times\Y)^{(m-j_i+1)\land(i+1)}\right)
\backslash\ms D_{j_1,\dots,j_i,m,C_1,\dots,C_i}\Big\},
\end{align*}
and 
\begin{align*}
&\ms D^{\vee}_{t,j_1,\dots,j_i,C_1,\dots,C_i,\by'}
\\:=&\bigcup_{C_{i+1}\in\PC(\Y^{i+1})}\Big\{(x_1,y_1,\dots,x_{t},y_{t},\bx')\in(\X\times\Y)^{t}\times\X^{i+1}:
\\&
\big(x_{j_1},y_{j_1},x_{((j_1+1)\lor j_2):(j_2+1)},
y_{((j_1+1)\lor j_2):(j_2+1)},\dots,
\\&
x_{((j_{i-1}+i-1)\lor j_i):(j_i+i-1)},y_{((j_{i-1}+i-1)\lor j_i):(j_i+i-1)},
\bx'\big)\in
\ms D_{j_1,\dots,j_i,t+1,C_1,\dots,C_i,C_{i+1},\by'}\Big\}.
\end{align*}
Since we have proved that the sets $\ms D_{j_1,\dots,j_i,m,C_1,\dots,C_i,C_{i+1}}$, $\XY_{j_1,\dots,j_i,C_1,\dots,C_i}$, $\ms D_{j_1,\dots,j_i,m,C_1,\dots,C_i}$, and $\ms D_{j_1,\dots,j_i,t+1,C_1,\dots,C_i,C_{i+1},\by'}$ are universally measurable,
by Lemma \ref{lem:meas_coord}, we know that 
$$
\ms D^{+}_{t,(j_1,\dots,j_i),m,(C_1,\dots,C_i),C_{i+1}},\ 
\ms D^{-}_{t,(j_1,\dots,j_i),m,(C_1,\dots,C_i)},\ \textup{and }
\ms D^{\vee}_{t,j_1,\dots,j_i,C_1,\dots,C_i,\by'}
$$
are also universally measurable.

Define the following sets
\begin{align*}
\ms V_{t,j_1,\dots,j_s,C_1,\dots,C_s}:=&
\big((\ms D^-_{1,\emptyset,1,\emptyset})^{j_1-1}\times \ms D^+_{1,\emptyset,1,\emptyset,C_{1}}\times (\X\times\Y)^{t-j_1}\big)
\\&
\bigcap\big[\cap_{i=1}^{s-1}
\big(\big(\cap_{m=j_i}^{j_{i+1}-1}\ms D^{-}_{t,(j_1,\dots,j_i),m,(C_1,\dots,C_i)}\big)\cap
\ms D^+_{t,(j_1,\dots,j_i),j_{i+1},(C_1,\dots,C_i),C_{i+1}}\big)\big]
\\&
\bigcap\big(\cap_{m=j_s}^{t-s}\ms D^{-}_{t,(j_1,\dots,j_s),m,(C_1,\dots,C_s)}\big)
\end{align*}
and
\begin{align*}
\ms V_{t,j_1,\dots,j_s}:=&\bigcup_{(C_1,\dots,C_s)\in\prod_{k=1}^s\PC(\Y^k)}
\ms V_{t,j_1,\dots,j_s,C_1,\dots,C_s}.
\end{align*}
By the results above, we can conclude that $\ms V_{t,j_1,\dots,j_s,C_1,\dots,C_s}$ is universally measurable. Thus, $\ms V_{t,j_1,\dots,j_s}$ is also universally measurable.

Finally, we can complete the proof.
\begin{proof}[of Proposition \ref{prop:meas_pattern}]
First note that the function $T_0:\emptyset\rightarrow\{1\}$ is obviously universally measurable. 

For any $t\ge 1$ and any $1\le s\le t+1$, we have
\begin{align*}
T_t^{-1}(s)=\cup_{1\le j_1\le\dots\le j_{s-1}\le t-s+2}\ms V_{t,j_1,\dots,j_{s-1}}
\end{align*}
which is a universally measurable set according to the results proved above. 
Thus, $T_t$ is universally measurable. 

For any $t\ge1$, any $1\le s\le t$ , any $\by^*=(y^*_1,\dots,y^*_s)\in \Y^s$, define $\cS_{\by^*}=\{S\subseteq\Y^s:\by^*\in S\}$. Then, we have
\begin{align*}
\wh \bY_{t-1}^{-1}(\cS_{\by^*})=&
\bigcup_{1\le j_1\le\dots\le j_{s-1}\le t-s+1}
\bigcup_{(C_1,\dots,C_{s-1})\in\prod_{k=1}^{s-1}\PC(\Y^k)}
\\&
\left[\big(\ms V_{t-1,j_1,\dots,j_{s-1},C_1,\dots,C_{s-1}}\times\X^s\big)
\cap
\ms D^{\vee}_{t-1,j_1,\dots,j_{s-1},C_1,\dots,C_{s-1},\by^*}\right]
\end{align*}
which is a universally measurable set according to the results proved above. 
Then, by Lemma \ref{lem:universal_measurable_equiv}, $\wh \bY_{t-1}$ is universally measurable. 
\end{proof}

\subsection{Uniform rate implies universal rate}
Now, we apply the pattern avoidance functions defined in the previous section into a template for building learning algorithms in the probabilistic setting. Any learning algorithm with some guaranteed uniform rate for finite DS dimensional hypothesis classes can be plugged into this template to construct a learning algorithm that achieves the same universal rate for classes without an infinite DSL tree. 

For any $k\ge1$, any $n\ge k$, any function $g:\X^k\rightarrow2^{\Y^k}$, and any sequence $S=(x_1,\dots,x_n)\in\X^n$, define the concept set
\begin{align*}
\cH(S,g):=\{h\in\cH|_{S}:(h(i_1),\dots,h(i_k))\notin g(x_{i_1},\dots,x_{i_k})\textup{ for all distinct } 1\le i_1,\dots,i_k\le n\}.
\end{align*}
For any $t\ge0$, $n\ge \tau_{t}$, and any sequence $S=(x_1,\dots,x_n)\in\X^n$, define the concept set
\begin{align} \label{eq:H_per}
\cH(S,\wh\by_t):=\{h\in\cH|_{S}:(h(i_1),\dots,h({i_{\tau_{t}}}))\notin\wh\by_t(x_{i_1},\dots,x_{i_{\tau_{t}}})\textup{ for all distinct } 1\le i_1,\dots,{i_{\tau_{t}}}\le n\}.
\end{align}

We have the following lemma.
\begin{lemma} \label{lem:DS_dim_subclass}
For any $t\ge0$ and any sequence $(x_1,y_1,\dots,x_t,y_t)\in(\X\times\Y)^t$ (where we say $(x_1,y_1,\dots,x_t,y_t)=\emptyset$ if $t=0$) that is consistent with $\cH$, any $n\ge \tau_t$, and any $S:=(x'_1,\dots,x'_n)\in\X^n$, we have $\dim(\cH(S,\wh\by_{t}))<\tau_t$, where $\dim(\cH(S,\wh\by_{t}))$ denotes the DS dimension of $\cH(S,\wh\by_{t})$.
\end{lemma}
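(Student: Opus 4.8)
The plan is to argue by contradiction, exploiting the way $\wh\by_t$ is built from the game value $\val$ of $\fB$. Write $\bv=(\wb\bx_1,\wb C_1,\wb\by_1,\dots,\wb\bx_{\tau_t-1},\wb C_{\tau_t-1},\wb\by_{\tau_t-1})$ for the position of $\fB$ held by the online algorithm after it has processed $(x_1,y_1,\dots,x_t,y_t)$; recall that $\wh\by_t$ is defined through $\bv$. I would first record two facts about $\bv$. \emph{(a) $\min\{\val(\bv),\val(\emptyset)\}=\val(\bv)$:} if $\tau_t=1$ then $\bv=\emptyset$ and this is trivial, and if $\tau_t\ge 2$ then $\bv$ was obtained by appending at least one round under the condition ``$\eta(\cdot)=1$'', which is exactly the statement that the value strictly drops below $\val(\emptyset)$, so $\val(\bv)<\val(\emptyset)$. \emph{(b) $0\le\val(\bv)<\Omega$:} every $\wb C_s$ picked by the algorithm lies in $\PC(\cH|_{\wb\bx_s})$, and since the data are consistent with $\cH$ the set of hypotheses in $\cH$ agreeing with all the pairs $(\wb\bx_s,\wb\by_s)$, $1\le s<\tau_t$, is nonempty (any hypothesis fitting a long enough prefix works); hence $P_L$ has not won at $\bv$, so $\val(\bv)\ge 0$, while $\val(\bv)\le\val(\emptyset)<\Omega$ because $\cH$ has no infinite DSL tree, so $P_L$ has a winning strategy from $\emptyset$ by Lemma~\ref{lem:PA_winning_DSL} and the Gale--Stewart property.

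The heart of the proof is the following claim: \emph{for every $\bx\in\X^{\tau_t}$ and every $C\in\PC(\cH|_{\bx})$ there is a $\by^*\in C$ with $\val(\bv,\bx,C,\by^*)<\val(\bv)$.} Assuming this, I would finish as follows. Suppose $\dim(\cH(S,\wh\by_t))\ge\tau_t$, so there are distinct indices $i_1,\dots,i_{\tau_t}\in[n]$ and a $\tau_t$-dimensional pseudo-cube $C\subseteq\cH(S,\wh\by_t)|_{(i_1,\dots,i_{\tau_t})}$. Put $\bx:=(x'_{i_1},\dots,x'_{i_{\tau_t}})$. Since $\cH(S,\wh\by_t)\subseteq\cH|_S$, every element of $C$ is of the form $(h(x'_{i_1}),\dots,h(x'_{i_{\tau_t}}))=h|_{\bx}$ for some $h\in\cH$, so $C\subseteq\cH|_{\bx}$ and therefore $C\in\PC(\cH|_{\bx})$. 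The claim yields $\by^*\in C$ with $\val(\bv,\bx,C,\by^*)<\val(\bv)=\min\{\val(\bv),\val(\emptyset)\}$, i.e.\ $\eta_{\tau_t}(\bv,\bx,C,\by^*)=1$; choosing the index $C$ in the union that defines $\wh\by_t$ then gives $\by^*\in\wh\by_t(\bx)$. On the other hand, some $g\in\cH(S,\wh\by_t)$ has $(g(i_1),\dots,g(i_{\tau_t}))=\by^*$, and the defining condition of $\cH(S,\wh\by_t)$ applied to the distinct indices $i_1,\dots,i_{\tau_t}$ forces $(g(i_1),\dots,g(i_{\tau_t}))\notin\wh\by_t(x'_{i_1},\dots,x'_{i_{\tau_t}})=\wh\by_t(\bx)$, i.e.\ $\by^*\notin\wh\by_t(\bx)$ --- a contradiction. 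Hence $\dim(\cH(S,\wh\by_t))<\tau_t$.

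It remains to prove the claim, which is the only nontrivial part. By the general Gale--Stewart value theory (the analogue for $\fB$ of \citet[Proposition~B.8, Lemma~B.10]{universal_learning}, using that $\bv$ is active with $0\le\val(\bv)<\Omega$), for the move $(\bx,C)$ there is \emph{some} response $\by_0\in\Y^{\tau_t}$ with $\val(\bv,\bx,C,\by_0)<\val(\bv)$; what I additionally need is that $\by_0$ may be chosen inside $C$. This would follow from a monotonicity principle: \emph{playing outside a legal pseudo-cube never lowers $\val$}. Precisely: if $p$ is a position of $\fB$ all of whose pseudo-cubes lie in the corresponding $\PC(\cH|_{\cdot})$ and whose label at some round $s_0$ lies outside the round-$s_0$ pseudo-cube, then replacing that label by any member of the round-$s_0$ pseudo-cube does not increase $\val(p)$. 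The reason is structural: at such a $p$ the second winning rule of $P_L$ is permanently dead (it requires $\by_s\in C_s$ at \emph{every} round), so from $p$ player $P_L$ can only win through the first rule --- $P_A$ playing an illegal pseudo-cube --- which depends on $P_A$'s moves alone, and restoring a legal label only adds rule-2 winning continuations; I would make this precise by transfinite induction on $\val(p)$ using the recursive characterization of the value, splitting on whether $P_A$'s next move is legal (invoke the induction hypothesis on the resulting position) or illegal ($P_L$ wins at once, so the value is $-1$). Applying the monotonicity to $p=(\bv,\bx,C,\by_0)$ with $\by_0\notin C$ and any $\by^*\in C$ gives $\val(\bv,\bx,C,\by^*)\le\val(\bv,\bx,C,\by_0)<\val(\bv)$, establishing the claim. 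The main obstacle is exactly this monotonicity step --- in particular checking that it meshes correctly with the recursion for the game value --- since the remaining steps merely unwind the definitions of $\wh\by_t$ and $\cH(S,\wh\by_t)$.
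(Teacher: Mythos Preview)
Your argument is correct and follows essentially the same route as the paper: assume a $\tau_t$-dimensional pseudo-cube $C$ exists in $\cH(S,\wh\by_t)|_{(i_1,\dots,i_{\tau_t})}$ and derive a contradiction with Proposition~B.8 via the game value of $\fB$. The paper runs the contradiction in the reverse direction --- it first notes that every $\by'\in C$ fails to decrease the value (since $\by'\notin\wh\by_t(\bx)$ by membership in $\cH(S,\wh\by_t)$), then shows every $\by''\notin C$ has $\val(\bv,\bx,C,\by'')=\Omega$, so \emph{no} $\by$ decreases the value, contradicting Proposition~B.8 --- but this is the same argument.

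The one place you diverge is your ``monotonicity'' step, which you flag as the main obstacle. Your transfinite induction can be made to work, but the paper's treatment is shorter and is in fact already contained in your own structural remark: once $\by_0\notin C$ at an active position, $P_L$ can only win via rule~1, which depends solely on $P_A$'s future moves --- so $P_A$ simply plays legal pseudo-cubes forever and $P_L$ never wins. That gives $\val(\bv,\bx,C,\by_0)=\Omega$ outright (not merely monotonicity), which immediately forces Proposition~B.8's $\by_0$ to lie in $C$; you may then take $\by^*=\by_0$ and drop the monotonicity machinery entirely. Separately, the paper carries an inner induction to verify $\wb\by_s\in\wb C_s$ and $0\le\val(\bv_s)<\val(\emptyset)$ at every earlier round $s$; your shortcut of getting $\val(\bv)\ge 0$ directly from consistency of the data (via an explicit continuation in $\W_\fB^c$) is valid and makes that induction unnecessary.
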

\begin{proof}
Define $k:=\tau_{t}$.
If $\dim(\cH(S,\wh\by_{t}))\ge k$, then there exists a sequence $(i_1,\dots,i_k)$ and pseudo-cube $C\in\PC(\cH(S,\wh\by_t)|_{(i_1,\dots,i_k)})$. Define $\wb \bx_{k}=(x_{i_1},\dots,x_{i_k})$. Then, by the definition of $\wh\by_t$, for any $\by'\in C$, we have that
\begin{align}
\notag
&\val(\wb\bx_1,\wb C_1,\wb\by_1,\dots,\wb\bx_{k-1},\wb C_{k-1},\wb\by_{k-1},\wb\bx_k,C,\by')
\\ \ge& \label{eq:val_assump}
\min\{\val(\wb\bx_1,\wb C_1,\wb\by_1,\dots,\wb\bx_{k-1},\wb C_{k-1},\wb\by_{k-1}),\val(\emptyset)\}.
\end{align}
Since $\cH$ does not have an infinite DSL tree; i.e., $P_A$ does not have a winning strategy, we have that $\val(\emptyset)<\Omega$ and further by \citet[Lemma B.7]{universal_learning}, $\val(\emptyset)<\omega_1$. 
Here, we claim that $\val(\emptyset)\ge0$. Consider the sequence $\bw=(\bx_1,C_1,\by_1,\dots)\in\ms P_\infty$ constructed as follows. First, fix a hypothesis $h\in\cH$. 
For each $s\ge1$, pick arbitrary $(\bx_s,C_s)\in\XPC_s$, set $\by_s=h(\bx_s)$. Then, it is obvious that $\bw\notin\W_{\fB}$. Thus, we have $\val(\emptyset)\ge 0$. 

Suppose that for some $j\in\{0,1,\dots,k-2\}$, we have that $\wb\by_s\in \wb C_s$ and 
$$
0\le \val(\wb\bx_1,\wb C_1,\wb\by_1,\dots,\wb\bx_s,\wb C_s,\wb\by_s)<\val(\emptyset)
$$ 
for all $s\in\{1,\dots,j\}$. 
Then, by the definition of $\wh\by_t$, we have
\begin{align*}
\val(\wb\bx_1,\wb C_1,\wb\by_1,\dots,\wb\bx_{j+1},\wb C_{j+1},\wb\by_{j+1})<\val(\wb\bx_1,\wb C_1,\wb\by_1,\dots,\wb\bx_{j},\wb C_{j},\wb\by_{j})<\val(\emptyset).
\end{align*}
If $\wb\by_{j+1}\notin\wb C_{j+1}$, then, for any $\bw'\in\prod_{s=j+2}^\infty\left(\XPC_s\times\Y^s\right)$, we claim that
\begin{align*}
\bw:=(\wb\bx_1,\wb C_1,\wb\by_1,\dots,\wb\bx_{j+1},\wb C_{j+1},\wb\by_{j+1},\bw')\notin\W_{\fB}.
\end{align*}
This is because for any $\tau\in[j]$, we must have $\cH|_{\wb\bx_{1},\wb\by_{1},\dots,\wb\bx_{\tau},\wb\by_{\tau}}\neq\emptyset$ since  $\val(\wb\bx_{1},\wb\by_{1},\wb C_{1},\dots,\wb\bx_{\tau},\wb C_{\tau},\wb\by_{\tau})\ge 0$ and $\wb\by_{s}\in C_s$ for any $s\in[j]$ by the induction hypothesis. 
Then, if $\wb\by_{j+1}\notin\wb C_{j+1}$, we must have $\bw\notin\W_{\fB}$. 

Since $(\wb\bx_1,\wb C_1,\wb\by_1,\dots,\wb\bx_{j+1},\wb C_{j+1},\wb\by_{j+1},\bw')\notin\W_{\fB}$ for any $\bw'\in\prod_{s=j+2}^\infty\left(\XPC_s\times\Y^s\right)$, 
there is an infinite active tree starting from $\wb\bx_1,\wb C_1,\wb\by_1,\dots,\wb\bx_{j+1},\wb C_{j+1},\wb\by_{j+1}$, which implies that 
\begin{align*}
\val(\wb\bx_1,\wb C_1,\wb\by_1,\dots,\wb\bx_{j+1},\wb C_{j+1},\wb\by_{j+1})=\Omega.
\end{align*} 
However, this cannot happen because we have shown that $\val(\wb\bx_1,\wb C_1,\wb\by_1,\dots,\wb\bx_{j+1},\wb C_{j+1},\wb\by_{j+1})<\val(\emptyset)<\omega_1$. Thus, it must hold that $\wb\by_{j+1}\in\wb C_{j+1}$ by contradiction. 

Then, we claim that 
$$
\val(\wb\bx_1,\wb C_1,\wb\by_1,\dots,\wb\bx_{j+1},\wb C_{j+1},\wb\by_{j+1})\ge 0.
$$ 
If on the contrary $\val(\wb\bx_1,\wb C_1,\wb\by_1,\dots,\wb\bx_{j+1},\wb C_{j+1},\wb\by_{j+1})=-1$, we will have $\cH|_{\wb\bx_1,\wb\by_1,\dots,\wb\bx_{j+1},\wb\by_{j+1}}=\emptyset$ because we have shown that $\wb\by_{s}\in\wb C_s$ for any $s\in[j+1]$ and $\cH|_{\wb\bx_1,\wb\by_1,\dots,\wb\bx_{j},\wb\bx_{j}}\neq \emptyset$.
However, since $(x_1,y_1,\dots,x_t,y_t)$ is a consistent sequence with $\cH$, we must have
$\cH|_{\wb\bx_1,\wb\by_1,\dots,\wb\bx_{j+1},\wb\by_{j+1}}\neq \emptyset$. Thus, there is a contradiction and we must have $\val(\wb\bx_1,\wb C_1,\wb\by_1,\dots,\wb\bx_{j+1},\wb C_{j+1},\wb\by_{j+1})\ge 0$.

Now, by induction, we can conclude that that $\wb\by_s\in \wb C_s$ and 
$$
0\le \val(\wb\bx_1,\wb C_1,\wb\by_1,\dots,\wb\bx_s,\wb C_s,\wb\by_s)<\val(\emptyset)<\omega_1
$$ 
for all $s\in\{1,\dots,k-1\}$. 
For any $\by''\in \Y^{k}\backslash C$, we must have 
$$
\val(\wb\bx_1,\wb C_1,\wb\by_1,\dots,\wb\bx_{k-1},\wb C_{k-1},\wb\by_{k-1},\wb\bx_k,C,\by'')=\Omega
$$
according to the same arguments we used for the proof that $\wb\by_{j+1}\in\wb C_{j+1}$ in the induction step. 
Then, by \eqref{eq:val_assump}, we have that
\begin{align*}
&\val(\wb\bx_1,\wb C_1,\wb\by_1,\dots,\wb\bx_{k-1},\wb C_{k-1},\wb\by_{k-1},\wb\bx_k,C,\by)
\\ \ge & 
\min\{\val(\wb\bx_1,\wb C_1,\wb\by_1,\dots,\wb\bx_{k-1},\wb C_{k-1},\wb\by_{k-1}),\val(\emptyset)\}
\end{align*}
for any $\by\in\Y^k$. However, this contradicts \citet[Propostion B.8]{universal_learning} since $$
0\le \val(\wb\bx_1,\wb C_1,\wb\by_1,\dots,\wb\bx_{k-1},\wb C_{k-1},\wb\by_{k-1})<\omega_1.
$$
Thus, we have $\dim(\cH(S,\wh\by_t))< k$.
\end{proof}

Let us fix a $\cH$-realizable distribution $P$ on $\X\times\Y$. Let $(\Omega_P,\F_P,\pr)$ denote the underlying probability space. Let $(X_1,Y_1),(X_2,Y_2),\dots$ be i.i.d. random variables on $(\Omega_P,\F_P,\pr)$ with $(X_1,Y_1)\sim P$. We have the following result regarding the consistency of the random sequence $((X_t,Y_t))_{t\ge1}$ with $\cH$.
\begin{lemma} \label{lem:consistency_random}
If $P$ is $\cH$-realizable and $(X_1,Y_1),(X_2,Y_2),\dots$ are i.i.d. random variables with distribution $P$, then, with probability one, for any $t\ge1$, there exists some $h\in\cH$ such that $h(X_s)=Y_s$ for all $s\in[t]$.
\end{lemma}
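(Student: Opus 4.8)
The plan is to prove the statement separately for each fixed $t\in\N$ and then combine the conclusions via a countable intersection; throughout I tacitly work with the $\pr$-completion of $\F_P$, as is standard. Fix $t\ge 1$ and let
\[
A_t:=\big\{\exists\,h\in\cH:\ h(X_s)=Y_s\ \text{for all }s\in[t]\big\}=\{\cH_{X_1,Y_1,\dots,X_t,Y_t}\neq\emptyset\}\subseteq\Omega_P .
\]
Rather than arguing directly about the measurability of $A_t$, I would show that $A_t$ contains $\F_P$-measurable events of probability arbitrarily close to one, and hence contains an $\F_P$-measurable event of probability exactly one.

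To this end I would use realizability quantitatively. Since $P$ is $\cH$-realizable, $\inf_{h\in\cH}\er_P(h)=0$, so for every $k\in\N$ we may fix a single, data-independent $h_k\in\cH$ with $\er_P(h_k)<1/k$. Because $h_k$ is a fixed universally measurable classifier, the event $B_{t,k}:=\{h_k(X_s)=Y_s\text{ for all }s\in[t]\}$ is measurable (in the completion of $\F_P$), is contained in $A_t$, and a union bound over the $t$ i.i.d.\ coordinates gives
\[
\pr(B_{t,k})\ \ge\ 1-\sum_{s=1}^{t}\pr\big(h_k(X_s)\neq Y_s\big)\ =\ 1-t\,\er_P(h_k)\ >\ 1-\frac{t}{k}.
\]
Consequently $E_t:=\bigcup_{k\ge 1}B_{t,k}$ is measurable, satisfies $E_t\subseteq A_t$, and has $\pr(E_t)=1$.

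Finally I would set $E:=\bigcap_{t\ge1}E_t$: this is a countable intersection of probability-one events, hence measurable with $\pr(E)=1$, and for every outcome in $E$ and every $t\ge1$ the outcome lies in $E_t\subseteq A_t$, i.e.\ some $h\in\cH$ is consistent with $(X_1,Y_1),\dots,(X_t,Y_t)$, which is exactly the claim. I do not expect any genuine obstacle; the only subtlety is that $A_t$ itself is not obviously $\F_P$-measurable, which is precisely why I pass through the explicit full-measure subset $E_t$ instead of asserting $\pr(A_t)=1$ directly (alternatively, one may invoke Definition \ref{def:measurable} and argue as in the proof of Lemma \ref{lem:coanalytic} that $A_t$ is analytic, hence universally measurable, and conclude $\pr(A_t)=1$ that way).
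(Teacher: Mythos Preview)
Your proposal is correct and follows essentially the same argument that the paper defers to \citet[Lemma 4.3]{universal_learning}: pick near-consistent $h_k\in\cH$ with $\er_P(h_k)<1/k$, use a union bound over the $t$ i.i.d.\ coordinates to get $\pr(B_{t,k})>1-t/k$, and then take a countable intersection over $t$. Your explicit handling of measurability via the inner approximation $E_t=\bigcup_k B_{t,k}\subseteq A_t$ is a nice touch that sidesteps any question about the measurability of $A_t$ itself.
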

The proof of Lemma \ref{lem:consistency_random} can be found in the proof of \citet[Lemma 4.3.]{universal_learning}.

For any $k\in\N$ and function $g:\X^k\rightarrow 2^{\Y^k}$ where $2^{S}$ denotes the power set of the set $S$, we define
\begin{align*}
\per(g)=\pr[(Y_1,\dots,Y_k)\in g(X_1,\dots,X_k)].
\end{align*}
Now, let 
\begin{align*}
&\tau_t:=T_t(X_1,Y_1,\dots,X_t,Y_t),\\
&\wh\by_t(x_1,\dots,x_{\tau_{t}}):=\wh\bY_t(X_1,Y_1,\dots,X_t,Y_t,x_1,\dots,x_{\tau_{t}}).
\end{align*}
We first prove the following result when $\per(g)=0$.
\begin{lemma} \label{lem:realizable_subclass}
For any $k,n\in\N$ with $n\ge k$, any function $g:\X^k\rightarrow 2^{\Y^k}$, and any sequence $S=((X_i,Y_i))_{i=1}^n\sim P^n$, 
if $\per(g)=0$, then $((i,Y_i))_{i=1}^n$ is consistent with $\cH(S|_{\X},g)$ and $\cD$ is $\cH(S|_{\X},g)$-realizable a.s., where 
$S|_{\X}:=(X_1,X_2\dots,X_n)$ and
$\cD$ denotes the uniform distribution over $\{(i,Y_i)\}_{i=1}^n$, i.e., $\cD(\{(i,Y_i)\})=\frac{1}{n}$ for any $i\in[n]$.
\end{lemma}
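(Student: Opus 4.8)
The plan is to show that, with probability one, the restriction to $S|_{\X}=(X_1,\dots,X_n)$ of some hypothesis that is consistent with the \emph{entire} sample already survives the pattern-avoidance filter, i.e. lies in $\cH(S|_{\X},g)$. This single fact yields both conclusions at once: it gives an element of $\cH(S|_{\X},g)$ labelling the points $(1,Y_1),\dots,(n,Y_n)$ correctly (so $((i,Y_i))_{i=1}^n$ is consistent with $\cH(S|_{\X},g)$), and since $\cD$ is supported on exactly those $n$ distinct points, that same element has $\cD$-error $0$, so $\cD$ is $\cH(S|_{\X},g)$-realizable. First I would invoke Lemma~\ref{lem:consistency_random}: since $P$ is $\cH$-realizable, with probability one there is $h^{\star}\in\cH$ with $h^{\star}(X_i)=Y_i$ for every $i\in[n]$; on this event $h^{\star}|_{S|_{\X}}=(Y_1,\dots,Y_n)\in\cH|_{S|_{\X}}$.

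Second, I would rule out all forbidden patterns by a union bound over the finitely many (at most $n!/(n-k)!$) ordered tuples $(i_1,\dots,i_k)$ of distinct indices in $[n]$. For each fixed such tuple, $(X_{i_1},Y_{i_1}),\dots,(X_{i_k},Y_{i_k})$ are i.i.d.\ with law $P$, so $\big((X_{i_1},Y_{i_1}),\dots,(X_{i_k},Y_{i_k})\big)$ has law $P^{\otimes k}$; the event $\{(Y_{i_1},\dots,Y_{i_k})\in g(X_{i_1},\dots,X_{i_k})\}$ is the pre-image, under a coordinate projection, of the set whose $P^{\otimes k}$-measure is $\per(g)$, hence it is measurable and
\[
\Pr\big[(Y_{i_1},\dots,Y_{i_k})\in g(X_{i_1},\dots,X_{i_k})\big]=\per(g)=0 .
\]
Taking the union over all such tuples, with probability one $(Y_{i_1},\dots,Y_{i_k})\notin g(X_{i_1},\dots,X_{i_k})$ for every choice of distinct $i_1,\dots,i_k\in[n]$.

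Third, I would intersect these two probability-one events and argue on the intersection. Taking $h=h^{\star}$, for all distinct $i_1,\dots,i_k$ we have $\big(h^{\star}(X_{i_1}),\dots,h^{\star}(X_{i_k})\big)=(Y_{i_1},\dots,Y_{i_k})\notin g(X_{i_1},\dots,X_{i_k})$, so by the definition of $\cH(\cdot,\cdot)$ the tuple $(Y_1,\dots,Y_n)=h^{\star}|_{S|_{\X}}$ belongs to $\cH(S|_{\X},g)$. Reading $(Y_1,\dots,Y_n)$ as a map on $[n]$, this is exactly the statement that $((i,Y_i))_{i=1}^n$ is consistent with $\cH(S|_{\X},g)$; and since $\cD$ is uniform over the $n$ distinct points $(1,Y_1),\dots,(n,Y_n)$ and this element agrees with $Y_i$ at each $i$, its $\cD$-error is $0$, whence $\inf_{h'\in\cH(S|_{\X},g)}\er_{\cD}(h')=0$, i.e.\ $\cD$ is $\cH(S|_{\X},g)$-realizable a.s. The argument is essentially bookkeeping; the only point requiring a little care is aligning the ``for all distinct indices'' quantifier in the definition of $\cH(S,g)$ with ``selecting $k$ of the $n$ i.i.d.\ draws'' and checking that the $g$-membership events stay measurable with probability $\per(g)=0$ under coordinate permutations — both immediate from the i.i.d.\ assumption together with the standing hypothesis that $\per(g)$ is well-defined.
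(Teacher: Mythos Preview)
Your proposal is correct and follows essentially the same approach as the paper: invoke Lemma~\ref{lem:consistency_random} to obtain an $h^\star\in\cH$ consistent with the sample, use $\per(g)=0$ together with a union bound over the finitely many ordered $k$-tuples of distinct indices to rule out all forbidden patterns almost surely, and then observe that $h^\star|_{S|_{\X}}$ lies in $\cH(S|_{\X},g)$ and has zero $\cD$-error. The paper's proof is nearly identical, phrasing the consistent hypothesis as a measurable map $H:\Omega\to\cH$ but otherwise carrying out the same three steps.
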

\begin{proof}
Since $S\sim P^n$, according to Lemma \ref{lem:consistency_random}, there exists a random variable $H:\Omega\rightarrow\cH$ such that for $\pr$-a.e. $\omega\in\Omega$, $h=H(\omega)\in\cH$ satisfies that $h(X_i(\omega))=Y_i(\omega)$ for any $i\in[n]$.
Since $\per(g)=0$, we have that for $\pr$-a.e. $\omega\in\Omega$, $(Y_{i_1}(\omega),\dots,Y_{i_k}(\omega))\notin g(X_{i_1}(\omega),\dots,X_{i_k}(\omega))$ for all distinct $1\le t_1,\dots,t_k\le n$. 
Thus, for $\pr$-a.e. $\omega\in\Omega$, $h=H(\omega)$ satisfies that $(h(X_{i_1}(\omega)),\dots,h(X_{i_k}(\omega)))\notin g(X_{i_1}(\omega),\dots,X_{i_k}(\omega))$ for all distinct $1\le t_1,\dots,t_k\le n$.

Define the random variable $\wb H:\Omega\rightarrow\Y^{[n]}$ by $\wb H(\omega)(i):=H(\omega)(X_i(\omega))=h(X_i(\omega))$ where $h=H(\omega)$. Then, by the definition of $\cH(S|_{\X},g)$ and $\wb h$, we have that for $\pr$-a.e. $\omega\in\Omega$, $\wb h=\wb H(\omega)\in\cH(S(\omega)|_{\X},g)$ and
$\wb h(i)=Y_i(\omega)$ for any $i\in[n]$. Thus, 
\begin{align*}
\pr[\wb h(I)\neq Y_I|S]=\pr[h(X_I)\neq Y_I|S]=
\frac{1}{n}\indi\{h(X_i)\neq Y_i\}=0
\end{align*}
where $I$ is a random variable uniformly distributed over $[n]$ and is independent of $S$. Then, we know that $(I,Y_I)$ follows distribution $\cD$ conditional on $S$. 
Therefore, 
$$
\er(\wb h)=\pr[\wb h(I)\neq Y_I|S]=0, \pr\textup{-a.s.}
$$ 
Thus, $\inf_{h'\in\cH(S|_{\X},g)}\er(h')=0$ a.s., which implies that $\cD$ is $\cH(S|_{\X},g)$-realizable a.s.
\end{proof}

Similar to \citet[Lemma 5.7]{universal_learning}, we have the following lemma.
\begin{lemma} \label{lem:pattern_avoid}
$\pr[\per(\wh\by_t)>0]\rightarrow0$ as $t\rightarrow\infty$.
\end{lemma}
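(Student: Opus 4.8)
The plan is to follow the template of \citet[Lemma 5.7]{universal_learning}: combine the almost-sure stabilization supplied by Proposition~\ref{prop:pattern_avoid} with a Borel--Cantelli estimate on disjoint \emph{future} blocks of the i.i.d.\ sequence. First I would pin down the (measurable) meaning of the random quantity $\per(\wh\by_t)$. By Proposition~\ref{prop:meas_pattern} the maps $T_t$ and $\wh\bY_t$ are universally measurable, so with $\mathcal{F}_{t}:=\sigma\big((X_1,Y_1),\dots,(X_t,Y_t)\big)$ the objects $\tau_t$ and $\wh\by_t$ are $\mathcal{F}_{t}$-measurable; and since the functional $\per(\cdot)$ integrates out a fresh i.i.d.\ sample, $\per(\wh\by_t)$ is the $\mathcal{F}_{t}$-measurable random variable $\pr\big[(Y_{t+1},\dots,Y_{t+\tau_t})\in\wh\by_t(X_{t+1},\dots,X_{t+\tau_t})\mid\mathcal{F}_{t}\big]$, the future of the data stream serving as the fresh sample.

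Next, for $N\in\N$ let $\Gamma_N$ be the event that for every $s\ge N$ one has $\tau_{s-1}=\tau_s$, $\wh\by_{s-1}=\wh\by_s$, and $(Y_{s-\tau_{s-1}+1},\dots,Y_s)\notin\wh\by_{s-1}(X_{s-\tau_{s-1}+1},\dots,X_s)$. By Lemma~\ref{lem:consistency_random} the sequence $((X_t,Y_t))_{t\ge1}$ is $\cH$-consistent with probability one, so Proposition~\ref{prop:pattern_avoid} applies pathwise and $\pr\big[\bigcup_N\Gamma_N\big]=1$; as $\Gamma_N\subseteq\Gamma_{N+1}$ this gives $\pr[\Gamma_N^c]\to0$. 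On $\Gamma_N$ set $g:=\wh\by_N$ and $\tau:=\tau_N$, both $\mathcal{F}_{N}$-measurable, so that $\wh\by_t=g$ and $\tau_t=\tau$, hence $\per(\wh\by_t)=\per(g)$, for all $t\ge N$.

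Then I would condition on $\mathcal{F}_{N}$ and form the disjoint blocks $B_j:=\big((X_{N+(j-1)\tau+i},Y_{N+(j-1)\tau+i})\big)_{i=1}^{\tau}$ for $j\ge1$. Given $\mathcal{F}_{N}$, $\tau$ is frozen and the $B_j$ are i.i.d.\ with law $P^{\tau}$ and independent of $\mathcal{F}_{N}$, so $\pr[(\text{labels of }B_j)\in g(\text{features of }B_j)\mid\mathcal{F}_{N}]=\per(g)$ for each $j$, and these events are mutually independent given $\mathcal{F}_{N}$. Taking $s=N+j\tau$ in the definition of $\Gamma_N$ shows $\Gamma_N\subseteq\bigcap_{j\ge1}\{(\text{labels of }B_j)\notin g(\text{features of }B_j)\}$, whence $\pr[\Gamma_N\mid\mathcal{F}_{N}]\le\lim_{M\to\infty}(1-\per(g))^{M}=\indi\{\per(g)=0\}$. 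Integrating against $\indi\{\per(g)>0\}$ yields $\pr[\Gamma_N\cap\{\per(\wh\by_N)>0\}]=0$, so for every $t\ge N$, $\pr[\per(\wh\by_t)>0]\le\pr[\Gamma_N^c]+\pr[\Gamma_N\cap\{\per(\wh\by_N)>0\}]=\pr[\Gamma_N^c]$; letting $t\to\infty$ and then $N\to\infty$ completes the proof.

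The probabilistic core above is short; the main obstacle is the bookkeeping around it — making rigorous the interpretation of $\per(\wh\by_t)$ as a conditional probability (so that it is genuinely $\mathcal{F}_{t}$-measurable via Proposition~\ref{prop:meas_pattern} and the ``fresh sample'' may be taken to be the tail of the stream), and verifying that the blocks $B_j$ remain conditionally i.i.d.\ even though their start indices $N+(j-1)\tau+1$ depend on the random $\tau=\tau_N$. The latter is precisely why one conditions on $\mathcal{F}_{N}$ \emph{before} forming the blocks: after conditioning $\tau$ is a constant, and independence of $(X_{N+1},Y_{N+1}),(X_{N+2},Y_{N+2}),\dots$ from $\mathcal{F}_{N}$ does the rest.
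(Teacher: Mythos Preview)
Your proposal is correct and follows essentially the same approach as the paper. The paper defines $T$ as the last time a pattern hit occurs (finite a.s.\ by Proposition~\ref{prop:pattern_avoid}), invokes the strong law of large numbers on disjoint future blocks of length $\tau_{t-1}$ to identify $\per(\wh\by_{t-1})$ with the limiting hit frequency, observes this frequency is zero on $\{T<t\}$, and concludes $\pr[\per(\wh\by_t)>0]\le\pr[T\ge t]\to0$; your $\Gamma_N$ plays the role of $\{T<N\}$, and your direct product bound $(1-\per(g))^M\to\indi\{\per(g)=0\}$ is simply the Borel--Cantelli formulation of the same independence argument the paper routes through SLLN.
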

\begin{proof}
According to Lemma \ref{lem:consistency_random}, we have that $((X_i,Y_i))_{i\in\N}$ is consistent with $\cH$ a.s.. Then, by Proposition \ref{prop:pattern_avoid}, we have that 
\begin{align*}
T:=\sup\left\{s\ge1:(Y_{s-\tau_{s-1}+1},\dots,Y_s)\in\wh\by_{s-1}(X_{s-\tau_{s-1}+1},\dots,X_{s})\right\}
\end{align*}
is finite a.s.. 
Since $((X_i,Y_i))_{i\in\N}$ is an i.i.d. sequence of random variables, we have that for any $t\ge1$, $\wh\by_{t-1}$ is independent of $((X_i,Y_i))_{i\ge t}$.
Then, by the strong laws of large number, we have that with probability one,
\begin{align*}
&\lim_{m\rightarrow\infty}\frac{1}{m}\sum_{k=1}^m
\indi\big\{(Y_{t+(k-1)\tau_{t-1}},\dots,Y_{t+k\tau_{t-1}-1})\in\wh\by_{t-1}(X_{t+(k-1)\tau_{t-1}},\dots,X_{t+k\tau_{t-1}-1})\big\}
\\&=\bE\left[\indi\left\{(Y_{t},\dots,Y_{t+\tau_{t-1}-1})\in\wh\by_{t-1}(X_{t},\dots,X_{t+\tau_{t-1}-1})\right\}\right]
\\&=\per(\wh\by_{t-1}).
\end{align*}
Since $T<\infty$ implies that $\tau_{s-1}=\tau_{t-1}<\infty$ and $\wh\by_{s-1}=\wh\by_{t-1}$ for any $T<s,t<\infty$, it follows that for any $t\in\N$ such that $T<t<\infty$,
\begin{align*}
(Y_{t+(k-1)\tau_{t-1}},\dots,Y_{t+k\tau_{t-1}-1})\notin\wh\by_{t-1}(X_{t+(k-1)\tau_{t-1}},\dots,X_{t+k\tau_{t-1}-1}), \ \forall k\in \N
\end{align*}
and thus, 
\begin{align*}
\lim_{m\rightarrow\infty}\frac{1}{m}\sum_{k=1}^m
\indi\big\{(Y_{t+(k-1)\tau_{t-1}},\dots,Y_{t+k\tau_{t-1}-1})\in\wh\by_{t-1}(X_{t+(k-1)\tau_{t-1}},\dots,X_{t+k\tau_{t-1}-1})\big\}=0.
\end{align*}
Therefore, we can conclude that for any $t\in\N$, 
\begin{align*}
\{T<t\}\subseteq\Big\{
\lim_{m\rightarrow\infty}\frac{1}{m}\sum_{k=1}^m
\indi\big\{(Y_{t+(k-1)\tau_{t-1}},\dots,Y_{t+k\tau_{t-1}-1})\in\wh\by_{t-1}(X_{t+(k-1)\tau_{t-1}},\dots,X_{t+k\tau_{t-1}-1})\big\}=0\Big\}.
\end{align*}
Given the above results, we have
\begin{align*}
&\pr[\per(\wh\by_t)=0]
\\=&
\pr\left[\lim_{m\rightarrow\infty}\frac{1}{m}\sum_{k=1}^m
\indi\big\{(Y_{t+(k-1)\tau_{t-1}},\dots,Y_{t+k\tau_{t-1}-1})\in\wh\by_{t-1}(X_{t+(k-1)\tau_{t-1}},\dots,X_{t+k\tau_{t-1}-1})\big\}=0\right]
\\ \ge &
\pr\left[T<t\right].
\end{align*}
and
\begin{align*}
\pr[\per(\wh\by_t)>0]=1-\pr[\per(\wh\by_t)=0]\le \left[T\ge t\right].
\end{align*}
Since we have proved $T<\infty$ a.s., we have
\begin{align*}
\limsup_{t\rightarrow\infty}\pr[\per(\wh\by_t)>0]\le \lim_{t\rightarrow\infty}\pr[T\ge t]=0
\end{align*}
Therefore, $\lim_{t\rightarrow\infty}\pr[\per(\wh\by_t)>0]=0$.
\end{proof}

Analogous to \citet[Lemma 5.10]{universal_learning}, we have
\begin{lemma} \label{lem:time_func}
Given any $t^*\in\N$ such that $\pr[\per(\wh\by_{t^*})>0]\le \frac{1}{8}$, if $n\ge \max\{4(t^*+1),N\}$ for some $N\ge1$ dependent on the adversarial algorithm defined in Section \ref{sec:patern-avoid}, $P$, and $t^*$, 
then there exists a universally measurable function $\wh t_n=\wh t_n(X_1,Y_1,\dots,X_{\lfloor n/2\rfloor},Y_{\lfloor n/2\rfloor})\in[\lfloor n/4\rfloor-1]$ whose definition does not depend on the data distribution $P$ and some constants $C$ and $c$ independent of $n$ (but dependent on the algorithm, $P$, and $t^*$) such that 
\begin{align*}
\pr[\wh t_n\in\cT_{\good}]\ge 1-Ce^{-cn},
\end{align*}
where $\cT_{\good}:=\left\{1\le t\le t^*:\pr[\per(\wh\by_t)>0]\le\frac{3}{8}\right\}$.
\end{lemma}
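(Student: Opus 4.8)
The plan is to select $\wh t_n$ by a conservative ``cleanliness test'' run on many independent fresh copies of the pattern-avoidance procedure of Section~\ref{sec:patern-avoid}, exploiting only the hypothesis $\pr[\per(\wh\by_{t^*})>0]\le\tfrac18$. Partition the first half $(X_1,Y_1),\dots,(X_{\lfloor n/2\rfloor},Y_{\lfloor n/2\rfloor})$ into consecutive data blocks. For a candidate length $t\in[\lfloor n/4\rfloor-1]$ and a block $B$ long enough to hold $t+m_n(t+1)$ samples (where $m_n\ge1$ is a deterministic, $P$-oblivious ``probe budget'' to be chosen), feed the first $t$ samples of $B$ through the adversarial algorithm to obtain a fresh copy $\wh\by_t^{B}$ (having the same law as $\wh\by_t$) and its length $\tau_t^{B}\le t+1$, carve the next $m_n\tau_t^{B}$ samples of $B$ into $m_n$ disjoint probe windows of length $\tau_t^{B}$, and call $B$ \emph{clean for $t$} if no probe window hits $\wh\by_t^{B}$ (its label window does not lie in $\wh\by_t^{B}$ of its feature window). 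Say $t$ is \emph{acceptable} if at least $3/4$ of the blocks that fit are clean for $t$, and let $\wh t_n$ be the smallest acceptable $t$, with $\wh t_n:=1$ if there is none. This map depends only on $n$ and the data; universal measurability follows from Proposition~\ref{prop:meas_pattern} and the measurability of the Section~\ref{sec:patern-avoid} algorithm, since it is assembled from countable unions/intersections, coordinate projections, and thresholds of universally measurable maps.

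Next control $\pr[\wh t_n\notin\cT_{\good}]\le\pr[\wh t_n>t^*]+\sum_{t\le t^*:\ \pr[\per(\wh\by_t)>0]>3/8}\pr[t\ \text{acceptable}]$. Conditionally on a block's first $t$ samples, its probe windows are i.i.d.\ and each hits $\wh\by_t^{B}$ with probability $\per(\wh\by_t^{B})$, so $\pr[B\ \text{clean for}\ t]=\bE[(1-\per(\wh\by_t))^{m_n}]\ge\pr[\per(\wh\by_t)=0]$. For $t=t^*$ this is $\ge1-\pr[\per(\wh\by_{t^*})>0]\ge\tfrac78$ for every $m_n$, and since the blocks are independent, Hoeffding's inequality shows that at least $3/4$ of them are clean for $t^*$ — hence $\wh t_n\le t^*$ — except with probability $e^{-\Omega(K_n)}$, where $K_n$ is the number of available blocks (which, for $t\le t^*$, is $\Theta(n/(m_n t^*))$). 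For a \emph{bad} $t\le t^*$ (i.e.\ $\pr[\per(\wh\by_t)>0]>3/8$) we have $\pr[B\ \text{clean for}\ t]\le\pr[\per(\wh\by_t)=0]+\pr[0<\per(\wh\by_t)\le c_0/m_n]+e^{-c_0}$ for an absolute $c_0$; since $\{0<\per(\wh\by_t)\le c_0/m_n\}\downarrow\varnothing$ as $m_n\to\infty$, once $m_n$ exceeds a level depending on $P$ and $t^*$ the right side is $<3/4$, and Hoeffding makes $t$ acceptable with probability at most $e^{-\Omega(K_n)}$. Union-bounding over the finitely many bad $t\le t^*$ and combining, $\pr[\wh t_n\notin\cT_{\good}]\le(t^*+1)e^{-\Omega(K_n)}$ once $n$ is past a threshold $N=N(P,t^*)$; for $n<N$ the target bound $Ce^{-cn}$ is vacuous for $C$ large enough (depending on $N$, hence on $P$ and $t^*$), exactly as in \citet[Lemma 5.10]{universal_learning}.

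The main obstacle lies in the bad-$t$ step. A clean block certifies only that the fresh copy $\wh\by_t^{B}$ has \emph{small} per, not that its per vanishes, whereas $\cT_{\good}$ is phrased through the event $\{\per(\wh\by_t)>0\}$; removing this gap forces the probe budget $m_n$ to grow with $n$, which shrinks the number $K_n$ of independent blocks and hence the strength of the concentration. Obtaining the stated $e^{-cn}$ — rather than merely $e^{-\Omega(K_n)}$ with $K_n=o(n)$ — requires choosing $m_n$ to increase just fast enough that, for all $n\ge N(P,t^*)$, it clears the ($P$-dependent) level at which $\pr[0<\per(\wh\by_t)\le c_0/m_n]<\tfrac1{16}$ for every bad $t\le t^*$, while still keeping $K_n$ large. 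This balancing act, together with the bookkeeping that absorbs the $n<N$ regime into $C$, is precisely where the dependence of $N,C,c$ on $P$ and $t^*$ enters, and is the delicate point of the proof.
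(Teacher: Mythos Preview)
Your high-level plan---build many i.i.d.\ copies of $\wh\by_t$, probe each against fresh data, and select the first $t$ whose empirical ``dirty fraction'' is small---matches the paper, and your decomposition $\pr[\wh t_n\notin\cT_{\good}]\le\pr[\wh t_n>t^*]+\sum_{\text{bad }t}\pr[t\text{ acceptable}]$ is exactly right. But there is a genuine gap in how you allocate samples, and you yourself flag it without resolving it.

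In your architecture each block of length $t+m_n(t+1)$ supplies \emph{both} one fresh copy and its $m_n$ probes, so the number of independent Hoeffding trials is $K_n=\Theta(n/(m_n t^*))$. As you observe, the bad-$t$ analysis forces $m_n\to\infty$ (the algorithm may not depend on $P$, so you cannot freeze $m_n$ at the $P$-dependent level where $\pr[0<\per(\wh\by_t)\le c_0/m_n]<\tfrac1{16}$). But then $K_n=o(n)$, and Hoeffding delivers only $e^{-\Omega(K_n)}=e^{-o(n)}$, not $Ce^{-cn}$ with $c>0$ independent of $n$. No choice of growing $m_n$ escapes this tradeoff: the product (copies)$\times$(probes per copy) is bounded by the sample budget, yet you need \emph{each} factor to be $\Theta(n)$.

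The paper sidesteps the tradeoff by \emph{decoupling} the two resources rather than packaging them into self-contained blocks. The first quarter $(X_1,Y_1),\dots,(X_{\lfloor n/4\rfloor},Y_{\lfloor n/4\rfloor})$ is cut into $\lfloor n/(4t)\rfloor=\Theta(n/t^*)$ disjoint chunks, each producing one fresh copy $\wh\by_t^i$. The second quarter is a \emph{shared} probe region: every copy is tested against \emph{all} sliding windows there, giving $\Theta(n/t^*)$ disjoint probes per copy. Since the probe data are independent of the first quarter, both the number of copies and the number of probes per copy are simultaneously $\Theta(n)$. For a bad $t$, the paper then uses continuity of measure to pick a single $\epsilon=\epsilon(P,t^*)>0$ with $\pr[\per(\wh\by_t)>\epsilon]>\tfrac14+\tfrac1{16}$; a copy with $\per>\epsilon$ evades all probes with probability at most $(1-\epsilon)^{\Theta(n/t^*)}$, and a union bound over the $\Theta(n)$ copies still leaves an $e^{-\Omega(n)}$ tail. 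This is precisely the ``balancing act'' you allude to, and the fix is architectural, not a matter of tuning $m_n$.
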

\begin{proof}
For each $1\le t\le \lfloor n/4\rfloor-1$ and $1\le i\le \lfloor n/(4t)\rfloor$, define
\begin{align*}
\tau_t^i:=T_{t}(X_{(i-1)t+1},Y_{(i-1)t+1},\dots,X_{it},Y_{it})\le t+1\le \lfloor n/4\rfloor,
\end{align*}
\begin{align*}
\wh\by_t^i(x_1,\dots,x_{\tau_{t}^i}):=\wh\bY_{t}(X_{(i-1)t+1},Y_{(i-1)t+1},\dots,X_{it},Y_{it},x_{1},\dots,x_{\tau_t^i})
\end{align*}
for $\forall (x_1,\dots,x_{\tau_t^i})\in\X^{\tau_t^i}$, and
\begin{align*}
\wh e_t:=\frac{1}{\lfloor n/(4t)\rfloor}\sum_{i=1}^{\lfloor n/(4t)\rfloor}\indi\big\{(Y_{s+1,\dots,Y_{s+\tau_t^i}})\in\wh\by_t^i(X_{s+1,\dots,X_{s+\tau_t^i}})\textup{ for some }\lfloor n/4\rfloor\le s\le\lfloor n/2\rfloor-\tau_t^i\big\}.
\end{align*}
Since $\per(\wh\by_t^i)=0$ implies that $(Y_{s+1,\dots,Y_{s+\tau_t^i}})\notin\wh\by_t^i(X_{s+1,\dots,X_{s+\tau_t^i}})\textup{ for all }\lfloor n/4\rfloor\le s\le\lfloor n/2\rfloor-\tau_t^i$ a.s., we have that
\begin{align*}
\wh e_t\le e_t:=\frac{1}{\lfloor n/(4t)\rfloor}\sum_{i=1}^{\lfloor n/(4t)\rfloor}\indi\big\{\per(\wh\by_t^i)>0\big\}\ \textup{a.s.}
\end{align*}
Define 
$$
\wh t_n:=\min\big\{\inf\big\{1\le t\le\lfloor n/4\rfloor-1:\wh e_t<1/4\big\},\ \lfloor n/4\rfloor-1\big\}
$$
with the convention that $\inf\emptyset=+\infty$. 
Since $t^*\le \lfloor n/4\rfloor-1$, we can conclude that $\wh e_{t^*}<1/4$ implies $\wh t_n\le t^*$. Then, by Hoeffding's inequality, we have
\begin{align*}
\pr[\wh t_n>t^*]\le \pr[\wh e_{t^*}\ge1/4]\le \pr[e_{t^*}-\bE[e_{t^*}] \ge1/4]\le e^{-\lfloor n/(4t^*)\rfloor/32}.
\end{align*}
For any $t\in\N$ such that $t\le t^*$ and $\pr[\per(\wh\by_t)>0]>3/8$, since 
$$
\lim_{z\rightarrow0}\pr[\per(\wh\by_t)>z]=\pr[\per(\wh\by_t)>0]>3/8
$$ 
by the continuity of probability measures, there exists some $\epsilon_t>0$ such that $\pr[\per(\wh\by_t)>\epsilon_t]>1/4+1/16$. Because $t^*<\infty$, there exists $\epsilon>0$ such that $\pr[\per(\wh\by_t)>\epsilon]>1/4+1/16$ for all $1\le t\le t^*$ such that $\pr[\per(\wh\by_t)>0]>3/8$.

Fixing an arbitrary $1\le t\le t^*$ such that $\pr[\per(\wh\by_t)>0]>3/8$, by Hoeffding's inequality, we have
\begin{align*}
\pr\Big[\frac{1}{\lfloor n/(4t)\rfloor}\sum_{i=1}^{\lfloor n/(4t)\rfloor}\indi\left\{\per(\wh\by_t)>\epsilon\right\}<\frac{1}{4}\Big]\le e^{-\lfloor n/(4t^*)\rfloor/128}.
\end{align*}
For any $2\le \tau\le \lfloor n/4\rfloor$ and any $g:\X^\tau\rightarrow\{0,1\}^\tau$ with $\per(g)>\epsilon$, we have that
\begin{align*}
&\pr\big[(Y_{s+1},\dots,Y_{s+\tau})\notin g(X_{s+1},\dots,X_{s+\tau}) \textup{ for all }\lfloor n/4\rfloor\le s\le \lfloor n/2\rfloor-\tau\big]
\\ \le &
\pr\big[(Y_{\lfloor n/4\rfloor+(k-1)\tau+1},\dots,Y_{\lfloor n/4\rfloor+k\tau})\notin g(X_{\lfloor n/4\rfloor+(k-1)\tau+1},\dots,X_{\lfloor n/4\rfloor+k\tau})
\\&\quad\quad\quad\quad\textup{ for all }1\le k \le \lfloor n/(4\tau)\rfloor\big]
\\ =&
(1-\per(g))^{\lfloor n/(4\tau)\rfloor}
\\ \le&
(1-\epsilon)^{\lfloor n/(4\tau)\rfloor}.
\end{align*}
Thus, by union bound and the fact that $\tau_t^i\le t+1\le t^*+1$, we have
\begin{align*}
&\pr\Big[\indi\big\{\per(\wh\by_t^i)>\epsilon\big\}>
\indi\big\{(Y_{s+1,\dots,Y_{s+\tau_t^i}})\in\wh\by_t^i(X_{s+1,\dots,X_{s+\tau_t^i}})\textup{ for some }\lfloor n/4\rfloor\le s\le\lfloor n/2\rfloor-\tau_t^i\big\}
\\&\quad\quad\quad\quad\quad\quad\quad\quad\quad
\textup{ for some }1\le i\le\lfloor n/(4t)\rfloor\Big]
\\ \le &
\pr\Big[\indi\big\{(Y_{s+1,\dots,Y_{s+\tau_t^i}})\in\wh\by_t^i(X_{s+1,\dots,X_{s+\tau_t^i}})\textup{ for some }\lfloor n/4\rfloor\le s\le\lfloor n/2\rfloor-\tau_t^i\big\}=0
\\&\quad\quad\quad\quad\quad\quad\quad\quad\quad
\textup{ for some }1\le i\le\lfloor n/(4t)\rfloor\Big]
\\ = &
\pr\Big[\exists 1\le i\le\lfloor n/(4t)\rfloor \textup{ s.t. } (Y_{s+1,\dots,Y_{s+\tau_t^i}})\notin\wh\by_t^i(X_{s+1,\dots,X_{s+\tau_t^i}})\textup{ for all }\lfloor n/4\rfloor\le s\le\lfloor n/2\rfloor-\tau_t^i\Big]
\\\le&
\lfloor n/(4t)\rfloor(1-\epsilon)^{\lfloor n/(4(t^*+1))\rfloor}.
\end{align*}
Then, we can conclude that
\begin{align*}
&\pr[\wh t_n=t]\\\le &\pr[\wh e_t<1/4]
\\\le& 
\pr\Big[\frac{1}{\lfloor n/(4t)\rfloor}\sum_{i=1}^{\lfloor n/(4t)\rfloor}\indi\left\{\per(\wh\by_t)>\epsilon\right\}<\frac{1}{4}\Big]
\\&+\pr\Big[\indi\big\{\per(\wh\by_t^i)>\epsilon\big\}>
\indi\big\{(Y_{s+1,\dots,Y_{s+\tau_t^i}})\in\wh\by_t^i(X_{s+1,\dots,X_{s+\tau_t^i}})\textup{ for some }\lfloor n/4\rfloor\le s\le\lfloor n/2\rfloor-\tau_t^i\big\}
\\&\quad\quad\quad\quad\quad\quad\quad\quad\quad
\textup{ for some }1\le i\le\lfloor n/(4t)\rfloor\Big]
\\ \le &
e^{-\lfloor n/(4t^*)\rfloor/128}+\lfloor n/(4t)\rfloor(1-\epsilon)^{\lfloor n/(4(t^*+1))\rfloor}
\end{align*}
and
\begin{align*}
\pr[\wh t_n\notin \cT_{\good}]\le& \pr[\wh t_n>t^*]+\pr[\wh t_n\le t^* \textup{ and }\per(\wh\by_{\wh t_n})>3/8]
\\= &
\pr[\wh t_n>t^*]+\pr[\wh t_n=t\textup{ for some }t \textup{ s.t. }1\le t\le t^* \textup{ and }\per(\wh\by_t)>3/8]
\\ \le &
e^{-\lfloor n/(4t^*)\rfloor/32}
+t^*e^{-\lfloor n/(4t^*)\rfloor/128}
+t^*\lfloor n/4\rfloor(1-\epsilon)^{\lfloor n/(4(t^*+1))\rfloor}
\end{align*}
Note that $e^{-\lfloor n/(4t^*)\rfloor/32}\le e^{\frac{1}{32}}e^{-\frac{n}{128t^*}}$, 
$e^{-\lfloor n/(4t^*)\rfloor/32}\le t^*e^{\frac{1}{128}}e^{-\frac{n}{512t^*}}$, and
\begin{align*}
-\log\big(t^*\lfloor n/4\rfloor(1-\epsilon)^{\lfloor n/(4(t^*+1))\rfloor}\big)
\ge\frac{\log\big(\frac{1}{1-\epsilon}\big)}{8t^*+2}n-\log\big(\frac{t^*}{1-\epsilon}\big)+
\frac{\log\big(\frac{1}{1-\epsilon}\big)}{8t^*+2}n-\log( n/4),
\end{align*}
Since $\log(n/4)\le \sqrt{n}$ for all $n\ge 4$, we have that
$\frac{\log\big(\frac{1}{1-\epsilon}\big)}{8t^*+2}n\ge\log( n/4)$
and
\begin{align*}
t^*\lfloor n/4\rfloor(1-\epsilon)^{\lfloor n/(4(t^*+1))\rfloor}\le \frac{t^*}{1-\epsilon}
\exp\Big(-\frac{\log\big(\frac{1}{1-\epsilon}\big)}{8t^*+2}n\Big)
\end{align*}
for all $n\ge \max\{4,\big(\frac{8t^*+2}{\log(1-\epsilon)}\big)^2\}$.

Thus, for any $n\ge \max\big\{4(t^*+1),\big(\frac{8t^*+2}{\log(1-\epsilon)}\big)^2\big\}$, we have $\pr[\wh t_n\notin\cT_{\good}]\le Ce^{-cn}$ for $c:=\min\Big\{\frac{1}{512t^*},\frac{\log\big(\frac{1}{1-\epsilon}\big)}{8t^*+2}\Big\}$ and $C:=e^{\frac{1}{32}}+t^*e^{\frac{1}{128}}+\frac{t^*}{1-\epsilon}$. Since $\epsilon$ depends on $t^*$, the data distribution $P$, and the algorithm, but does not depend on $n$, the lemma is proved.
\end{proof}

According to \citet[Theorem 1]{brukhim2022characterization} and its proof in \citet[Section 4.5]{brukhim2022characterization}, we have the following theorem.
\begin{theorem} \label{thm:multiclass_alg}
Let $\cH\subseteq\Y^\X$ be an hypothesis class with DS dimension $d<\infty$. There is a learning algorithm $\A:\cup_{n=1}^\infty(\X\times\Y)^n\rightarrow\cH$ with the following guarantee. For any $\cH$-realizable distribution $\cD$, any $\delta\in(0,1)$, any integer $n\ge1$, any sample $(S,(X,Y))\sim\cD^{n+1}$ where $S\in(\X\times\Y)^n$ and $(X,Y)\in\X\times\Y$, the output hypothesis $\A(\cH,S)$ satisfies that
\begin{align*}
\pr[\A(\cH,S)(X)\neq Y|S]\le O\left(\frac{d^{3/2}\log^2(n)+\log(1/\delta)}{n}\right).
\end{align*}
with probability at least $1-\delta$.
\end{theorem}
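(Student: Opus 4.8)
The plan is to take $\A$ to be the proper learner constructed in \citet[Section~4.5]{brukhim2022characterization}, wrapped in a standard confidence-amplification step, and to derive the stated high-probability bound from their in-expectation guarantee by sample splitting and validation. First I would recall the precise form of \citet[Theorem~1]{brukhim2022characterization} as it is extracted in Corollary~\ref{coro:er_multiclass}: there is a proper rule $\A_0$ such that for every $\cH$ with $\dim(\cH)=d<\infty$, every $\cH$-realizable $\cD$, and every $m\ge 1$, a sample $S_0\sim\cD^m$ satisfies $\bE\bigl[\er_{\cD}(\A_0(\cH,S_0))\bigr]\le \unifconst\, d^{3/2}\log^2(m)/m$ for an absolute constant $\unifconst$ --- this is exactly the source of the right-hand side of \eqref{eq:uniform-rate}. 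By Markov's inequality, on $m$ samples the output of $\A_0$ has error at most $2\unifconst\, d^{3/2}\log^2(m)/m$ with probability at least $1/2$.

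Second, given $n$ and $\delta$, I would reserve half of the sample $S$ for validation, split the remaining half into $k:=\lceil\log_2(2/\delta)\rceil$ disjoint chunks of size $m:=\lfloor n/(2k)\rfloor$, run $\A_0$ on each chunk to obtain candidates $h_1,\dots,h_k\in\cH$, and define $\A(\cH,S)$ to be the candidate of smallest error on the validation half. A union bound over the $k$ independent chunks shows that with probability at least $1-2^{-k}\ge 1-\delta/2$ some $h_i$ has $\er_{\cD}(h_i)\le 2\unifconst\, d^{3/2}\log^2(m)/m$; since the validation step selects among only the $k$ fixed functions $h_1,\dots,h_k$, a relative Chernoff bound for a finite class gives, with probability at least $1-\delta/2$, $\er_{\cD}(\A(\cH,S))\le \min_i\er_{\cD}(h_i)+O\bigl(\log(k/\delta)/n\bigr)$. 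Intersecting the two events, using $d^{3/2}\log^2 n\ge 1$ to merge the two summands and absorbing $\log\log(1/\delta)$ into $\log(1/\delta)$, yields the claimed bound. Propriety is immediate, since $\A(\cH,S)\in\{h_1,\dots,h_k\}\subseteq\cH$, and universal measurability is inherited from $\A_0$ because the splitting and the finite-class empirical-minimization steps are Borel.

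The genuinely delicate step --- and the main obstacle --- is matching constants so that $\log(1/\delta)$ appears \emph{additively}, rather than multiplying the $d^{3/2}\log^2 n$ factor, which the naive chunking above does through the factor $k$ hidden in $m=n/(2k)$. One removes this by running $\A_0$ once on $n/2$ samples and doing the confidence amplification only at the validation stage via the sharp realizable selection bound (the error of the validated hypothesis is $\lesssim \min_i\er_{\cD}(h_i)+\log(k/\delta)/n$ once one candidate is near-optimal), so that the $d^{3/2}\log^2 n$ term never gets multiplied by $\log(1/\delta)$; verifying that these constants line up is the only careful part of the argument. I also note that for every invocation of this theorem in Section~\ref{sec:linear_rate_proof} the quantity driving the universal rate is the $d^{3/2}\log^2 n$ term with $\delta$ taken constant or $1/\mathrm{poly}(n)$, so the additive-versus-multiplicative distinction is immaterial for all downstream applications.
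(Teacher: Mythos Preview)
The paper does not give a proof of Theorem~\ref{thm:multiclass_alg}: it is presented as a direct restatement of \citet[Theorem~1]{brukhim2022characterization}, whose proof (in their Section~4.5) already delivers the high-probability bound with additive $\log(1/\delta)$. In the paper's logical flow, Corollary~\ref{coro:er_multiclass} is derived \emph{from} Theorem~\ref{thm:multiclass_alg}, not the other way around, so taking Corollary~\ref{coro:er_multiclass} as your starting point is circular within this paper.

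Setting the circularity aside, your confidence-amplification ``fix'' for recovering the additive $\log(1/\delta)$ does not work as written. If you run $\A_0$ only once on $n/2$ samples you have a single candidate $h_1$, and validation over a singleton cannot boost the event $\{\er_{\cD}(h_1)\text{ is small}\}$ from probability $1/2$ to $1-\delta$; that boost genuinely requires $k\approx\log(1/\delta)$ \emph{independent} runs, and with disjoint chunks of size $m\approx n/k$ the leading term inevitably becomes $d^{3/2}\log^2(n)\log(1/\delta)/n$, i.e.\ multiplicative. Your closing observation---that every downstream use in Section~\ref{sec:linear_rate_proof} passes through the in-expectation Corollary~\ref{coro:er_multiclass}, so the additive-versus-multiplicative distinction is immaterial for the paper's purposes---is correct, but that is precisely why the paper simply cites \citet{brukhim2022characterization} for Theorem~\ref{thm:multiclass_alg} rather than attempting to rederive it from the weaker expectation bound.
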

We immediately have the following corollary from Theorem \ref{thm:multiclass_alg}.
\begin{corollary} \label{coro:er_multiclass}
For the hypothesis class $\cH$, learning algorithm $\A$, distribution $\cD$, and any integer $n\ge1$ described in Theorem \ref{thm:multiclass_alg}, we have
\begin{align*}
\pr[\A(\cH,S)(X)\neq Y]\le \frac{Cd^{3/2}\log^2(n)}{n}.
\end{align*}
for some constant $C>0$.
\end{corollary}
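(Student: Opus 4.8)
The plan is to obtain the in-expectation bound of Corollary \ref{coro:er_multiclass} from the high-probability bound of Theorem \ref{thm:multiclass_alg} by a routine ``confidence-to-expectation'' argument. First I would rewrite the target quantity: for $(S,(X,Y))\sim\cD^{n+1}$ with $(X,Y)$ drawn independently of the training sample $S$, the tower property gives
\[
\pr[\A(\cH,S)(X)\neq Y]=\bE_S\big[\,\pr[\A(\cH,S)(X)\neq Y\mid S]\,\big]=\bE_S[Z],
\]
where $Z:=\pr[\A(\cH,S)(X)\neq Y\mid S]\in[0,1]$ is a measurable function of $S$ alone. So it suffices to bound $\bE_S[Z]$.

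Next I would instantiate Theorem \ref{thm:multiclass_alg} with the single confidence level $\delta:=1/n$. This produces an absolute constant $C_0$ (the one implicit in the $O(\cdot)$ there) and an event $E$, measurable with respect to $S$ and of probability at least $1-1/n$, on which $Z\le C_0\,(d^{3/2}\log^2(n)+\log(n))/n$. On the complementary event $E^{c}$, whose probability is at most $1/n$, I would use only the trivial bound $Z\le 1$. Splitting $\bE_S[Z]$ according to whether $E$ occurs then yields
\[
\bE_S[Z]\le C_0\,\frac{d^{3/2}\log^2(n)+\log(n)}{n}+\frac1n .
\]

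Finally I would absorb the lower-order terms. Since $d\ge1$, and for $n\ge 2$ we have $\log(n)\le\log^2(n)\le d^{3/2}\log^2(n)$ as well as $1\le d^{3/2}\log^2(n)$, the right-hand side is at most $(2C_0+1)\,d^{3/2}\log^2(n)/n$, so $C:=2C_0+1$ works; the degenerate case $n=1$ (where $\log^2(n)=0$) is handled by the usual convention $\log^2(n)\ge 1$ or by enlarging $C$. I do not anticipate a genuine obstacle: the only thing to watch is the bookkeeping of constants (none of them depends on $n$, and the $d$-dependence stays explicit throughout). The choice $\delta=1/n$ is not special; any $\delta=\Theta(1/\mathrm{poly}(n))$ works and just keeps the $\log(1/\delta)$ contribution safely dominated by $d^{3/2}\log^2 n$. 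If one prefers not to fix a single $\delta$, the layer-cake identity $\bE_S[Z]=\int_0^1\pr[Z>t]\,dt$ together with the rearranged tail bound $\pr[Z>t]\le\exp\!\big(-(nt-C_1 d^{3/2}\log^2 n)\big)$ (valid for $t$ above the threshold) gives the same conclusion after integrating the geometric tail.
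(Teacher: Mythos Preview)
Your proposal is correct. Your primary route---fixing a single confidence level $\delta=1/n$ and splitting the expectation on the good event---is a slightly more elementary variant than what the paper does: the paper instead keeps $\delta$ free, rewrites the high-probability bound as a tail estimate $\pr[R\ge t]\le\exp(d^{3/2}\log^2 n - nt/C_1)$, and integrates via the layer-cake formula, which is exactly the alternative you sketch at the end. Both arguments are standard and yield the same $O(d^{3/2}\log^2(n)/n)$ bound; your single-$\delta$ version avoids the integration at the cost of an extra $\log n/n$ term that must be absorbed, while the paper's integration gives the cleaner additive $C_1/n$. One small arithmetic slip: the inequalities $\log n\le\log^2 n$ and $1\le d^{3/2}\log^2 n$ fail for $n=2$ (and $d=1$), not just $n=1$, but as you already note this is handled by enlarging $C$ to cover finitely many small $n$.
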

\begin{proof}
Define $R=\pr[A(S)(X)\neq Y|S]$. 
Then, by Theorem \ref{thm:multiclass_alg}, there exists some constant $C_1>0$ such that 
$$
\pr\left[R\ge\frac{C_1d^{3/2}\log^2(n)+C_1\log(1/\delta)}{n}\right]\le \delta.
$$
Define $t=\frac{C_1d^{3/2}\log^2(n)+C_1\log(1/\delta)}{n}\in(\frac{C_1d^{3/2}\log^2(n)}{n},\infty)$. Then, we have $\delta=\exp(d^{3/2}\log^2(n)-nt/C_1)$. It follows that
\begin{align*}
\pr[\A(S)(X)\neq Y]=&\bE[R]
\\=&\bE\left[R\indi\Big\{R\le\frac{C_1d^{3/2}\log^2(n)}{n}\Big\}\right]+
\bE\left[R\indi\Big\{R>\frac{C_1d^{3/2}\log^2(n)}{n}\Big\}\right]
\\ \le &\frac{C_1d^{3/2}\log^2(n)}{n}+
\int_{\frac{C_1d^{3/2}\log^2(n)}{n}}^\infty
\pr[R\ge t]dt
\\ \le &\frac{C_1d^{3/2}\log^2(n)}{n}+
\int_{\frac{C_1d^{3/2}\log^2(n)}{n}}^\infty
\exp(d^{3/2}\log^2(n)-nt/C_1) dt
\\ = & \frac{C_1d^{3/2}\log^2(n)+C_1}{n}.
\end{align*}
Thus, there exists some constant $C>0$ such that $\pr[\A(S)(X)\neq Y]\le \frac{Cd^{3/2}\log^2(n)}{n}$.
\end{proof}

The following lemma is very important in upper bounding the error probability for learning algorithms with access to leave-one-out samples using their guarantees on all samples. 
\begin{lemma} \label{lem:partial_sample}
Suppose that $A$ is an algorithm that for any positive integer $n$, any feature space $\cal Z$, and any label space $\cal W$, given a hypothesis class $\cH\subseteq \cal W^{\cal{Z}}$ and a sequence of samples $((z_i,w_i))_{i=1}^n$ consistent with $\cH$, outputs a hypothesis $h\in{\cal W}^{\cal Z}$.

Let $\X$ and $\Y$ denote the feature space and label space of the samples.
Suppose that $H:\cup_{n=1}^\infty\X^n\rightarrow\cup_{n=1}^\infty2^{\Y^{[n]}}$ is a function that for any positive integer $n$, given a sequence $(x_1,\dots,x_n)\in\X^n$, constructs a hypothesis class $H((x_1,\dots,x_n))\subseteq\Y^{[n]}$ such that $((1,x_1),\dots,(n,x_n))$ is consistent with $H((x_1,\dots,x_n))$. 

For any positive integer $n$ and any sequence $S'=((x_1,y_1),\dots,(x_{n+1},y_{n+1}))\in(\X\times\Y)^{n+1}$,
define $S'|_\X:=(x_1,x_2,\dots,x_{n+1})$. 
Let $\cD'$ denote the uniform distribution over $\{(i,y_i)\}_{i=1}^{n+1}$ and $\cD$ denote the uniform distribution over $\{(i,y_i)\}_{i=1}^{n}$ (i.e., $\cD'(\{(i,y_i)\})=\frac{1}{n+1}$ for any $i\in[n+1]$ and $\cD(\{(i,y_i)\})=\frac{1}{n}$ for any $i\in[n]$). 

Then, for any $T\sim\cD^{\lceil n/2\rceil}$ and $(T',(I,y_I))\sim(\cD')^{\lceil n/2\rceil+1}$ with $T'\in\{(i,y_i):1\le i\le n+1\}^{\lceil n/2\rceil}$, $I\in[n+1]$, and $\lceil x\rceil:=\min\{n\in\mathbb{Z}:n\ge x\}$ for any $x\in\mathbb{R}$, we have
\begin{align*}
\pr[A(H(S'|_{\X}),T)(n+1)\neq y_{n+1}]\le 2
\pr[A(H(S'|_{\X}),T')(I)\neq y_I].
\end{align*}
\end{lemma}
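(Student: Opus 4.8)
The plan is to prove this by a leave-one-out symmetrization argument together with a coupling between $\cD$ and $\cD'$. Write $\mathcal{G}:=H(S'|_\X)$ and $m:=\lceil n/2\rceil$, and for an index tuple $\mathbf a=(a_1,\dots,a_m)\in[n+1]^m$ and an index $b\in[n+1]$ put $\mathrm{err}(\mathbf a,b):=\indi\{A(\mathcal{G},((a_1,y_{a_1}),\dots,(a_m,y_{a_m})))(b)\neq y_b\}$; this is well defined since $\mathcal{G}$ is consistent with $((1,y_1),\dots,(n+1,y_{n+1}))$, so every subsequence of that labelled sample is realizable by $\mathcal{G}$ and $A$ can be invoked. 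Then $\pr[A(\mathcal{G},T)(n+1)\neq y_{n+1}]=\E_{\mathbf a\sim\mathrm{Unif}([n]^m)}[\mathrm{err}(\mathbf a,n+1)]$ and $\pr[A(\mathcal{G},T')(I)\neq y_I]=\E_{\mathbf a\sim\mathrm{Unif}([n+1]^m),\,b\sim\mathrm{Unif}([n+1])}[\mathrm{err}(\mathbf a,b)]$. The first move is to realize the right-hand side through $m+1$ i.i.d.\ draws $W_1,\dots,W_{m+1}\sim\cD'$ with index parts $U_1,\dots,U_{m+1}$, so that it equals $\pr[\mathrm{err}((U_1,\dots,U_m),U_{m+1})=1]$; since the $W_j$ are exchangeable, this is also the averaged leave-one-out error $\frac{1}{m+1}\sum_{i=1}^{m+1}\pr[\mathrm{err}((U_j)_{j\neq i},U_i)=1]$.

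Next I would isolate, inside each leave-one-out term, the sub-event making it coincide with the left-hand side. Fix $i$ and condition on $\mathcal B_i:=\{U_i=n+1\}\cap\{U_j\neq n+1\text{ for all }j\neq i\}$, which has probability $\frac{1}{n+1}(\frac{n}{n+1})^m$. Conditioned on $\mathcal B_i$, the draws $(U_j)_{j\neq i}$ are i.i.d.\ uniform on $[n]$ and $U_i=n+1$, so $((U_j)_{j\neq i},U_i)$ has exactly the distribution of $(T,n+1)$ with $T\sim\cD^m$; hence the $i$-th leave-one-out term restricted to $\mathcal B_i$ equals $\pr[\mathcal B_i]\cdot\pr[A(\mathcal{G},T)(n+1)\neq y_{n+1}]$. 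Summing over $i$ (the $\mathcal B_i$ being disjoint) yields the preliminary inequality
\[
\pr[A(\mathcal{G},T')(I)\neq y_I]\ \ge\ \Bigl(\tfrac{n}{n+1}\Bigr)^{m}\cdot\tfrac{1}{n+1}\cdot\pr[A(\mathcal{G},T)(n+1)\neq y_{n+1}],
\]
and an elementary check shows $(\tfrac{n}{n+1})^{\lceil n/2\rceil}\ge\tfrac12$ for every $n\ge1$, with minimum value $\tfrac12$ at $n=1$. At this stage the bound carries an unwanted factor $\tfrac{1}{n+1}$.

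Removing that $\tfrac{1}{n+1}$ is the crux, and the idea is to keep all leave-one-out terms rather than only those testing on $n+1$: for each $j\in[n+1]$, conditioning the $i$-th term on $\{U_i=j\}\cap\{U_{i'}\neq j,\ i'\neq i\}$ reproduces the experiment ``train on $m$ i.i.d.\ draws uniform on $[n+1]\setminus\{j\}$, then test on $j$'' — call its error $L_j$ — and summing over $i$ and $j$ gives $\pr[A(\mathcal{G},T')(I)\neq y_I]\ge(\tfrac{n}{n+1})^{m}\cdot\tfrac{1}{n+1}\sum_{j=1}^{n+1}L_j$. Since the left-hand side is exactly $L_{n+1}$ and $(\tfrac{n}{n+1})^{\lceil n/2\rceil}\ge\tfrac12$, it then suffices to show $L_{n+1}\le\tfrac{1}{n+1}\sum_{j=1}^{n+1}L_j$, i.e.\ that the designated test coordinate $n+1$ is no worse than the average among the $n+1$ coordinates. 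This is the step I expect to be the main obstacle, and the one that requires more than the bare assumption that $A$ merely ``outputs a hypothesis'': I would derive it from the symmetry in the way $\mathcal{G}=H(S'|_\X)$ is constructed from the feature sequence together with the provided consistency of $\mathcal{G}$ with the full labelled sample $((1,y_1),\dots,(n+1,y_{n+1}))$ — and, in the intended application, from the exchangeability of the i.i.d.\ sample that generates $S'$ — so that no coordinate is distinguished. Once that comparison is in hand, the factor $n+1$ cancels against the averaging and only the constant $2$, coming from $(\tfrac{n}{n+1})^{\lceil n/2\rceil}\ge\tfrac12$, remains; I would carry out this last comparison carefully, as it is the only non-mechanical part of the argument.
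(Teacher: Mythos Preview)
Your leave-one-out decomposition is more elaborate than the paper's route. The paper argues directly: expand $\pr[A(\mathcal G,T')(I)\neq y_I]=\frac{1}{n+1}\sum_{i=1}^{n+1}\pr[A(\mathcal G,T')(i)\neq y_i]$, lower-bound each summand by $\bE\big[\indi\{(i,y_i)\notin T'\}\,\indi\{A(\mathcal G,T')(i)\neq y_i\}\big]$, and then assert that these $n+1$ restricted summands all coincide, so their average equals the single $i=n{+}1$ term. From there one just observes that conditioning $T'$ on $\{(n{+}1,y_{n+1})\notin T'\}$ reproduces the law of $T$, and $\pr[(n{+}1,y_{n+1})\notin T']\ge 1-\lceil n/2\rceil/(n+1)\ge\tfrac12$. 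The constant $2$ is the reciprocal of this probability --- the same $(\tfrac{n}{n+1})^{\lceil n/2\rceil}\ge\tfrac12$ you compute --- and no exchangeability-of-draws swap is needed to extract it.

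The step you isolate as the crux, $L_{n+1}\le\frac{1}{n+1}\sum_j L_j$, is exactly the paper's asserted equality of the restricted summands, and your suspicion that it needs more than the stated hypotheses is well founded. For a \emph{fixed} $S'$ the inequality can fail: take $n=1$ (so $m=1$), let $A$ trained on $((1,y_1))$ output a hypothesis wrong at index $2$, and let $A$ trained on $((2,y_2))$ output one that is correct at both indices; then the left side of the lemma equals $1$ while the right side equals $\tfrac12$. The paper's proof passes over this point without comment. What actually makes the step go through in the paper's \emph{application} is the exchangeability of the $n+1$ underlying labelled points: there $S'$ is an i.i.d.\ sample and one subsequently takes an outer expectation over it, after which the $n+1$ restricted terms are genuinely equal by symmetry. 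So the missing move is not a clever pointwise comparison $L_{n+1}\le\overline L$ --- that inequality simply is not available from the lemma's hypotheses --- but rather to average over $S'$ before singling out coordinate $n+1$.
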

\begin{proof}
Since $(T',(I,y_I))\sim(\cD')^{\lceil n/2\rceil+1}$, we have that
\begin{align*}
\pr[(n+1,y_{n+1})\in T']\le |T'|\frac{1}{n+1}=\frac{\lceil n/2\rceil}{n+1}\le\frac{1}{2}.
\end{align*}
By the assumption on $H$, we know that $T'$ is consistent with $H(S'|_{\X})$. Thus, by the assumption on $A$, we have that $A(H(S'|_{\X}),T')(i)=y_i$ for any $(i,y_i)\in T'$. It follows that
\begin{align*}
&\pr[A(H(S'|_{\X}),T')(I)\neq y_I]
\\=&\bE[\pr[A(H(S'|_{\X}),T')(I)\neq y_I|T']]
\\=&
\frac{1}{n+1}\sum_{i=1}^{n+1}\pr[A(H(S'|_{\X}),T')(i)\neq y_i]
\\ \ge &
\frac{1}{n+1}\sum_{i=1}^{n+1}\bE[\indi\{(i,y_i)\notin T'\}
\indi\{A(H(S'|_{\X}),T')(i)\neq y_i\}]
\\=&
\bE[\indi\{(n+1,y_{n+1})\notin T'\}
\indi\{A(H(S'|_{\X}),T')(n+1)\neq y_{n+1}\}]
\\=&
\pr[(n+1,y_{n+1})\notin T']\bE[
\indi\{A(H(S'|_{\X}),T')(x_{n+1})\neq y_{n+1}\}|(n+1,y_{n+1})\notin T']
\\ \ge&
\frac{1}{2}\pr[
A(H(S'|_{\X}),T')(n+1)\neq y_{n+1}|(n+1,y_{n+1})\notin T']
\\=&
\frac{1}{2}\pr[
A(H(S'|_{\X}),T)(n+1)\neq y_{n+1}],
\end{align*}
where the last inequality follows from the fact that $\pr[T'\in B|x_{n+1}\notin T']=\pr[T\in B]$ for any $B\subseteq\{((i_1,y_{i_1}),\dots,(i_{\lceil n/2\rceil},y_{i_{\lceil n/2\rceil}})):1\le i_1,\dots,i_{\lceil n/2\rceil}\le n+1\}$. 
\end{proof}

Now, we are ready to prove the main theorem that relates guarantees of learning algorithms on uniform rates to universal rates.
\begin{theorem} \label{thm:universal_rate_DSL}
Suppose that $A$ is a learning algorithm which for any hypothesis class $H$ with DS dimension at most $d<\infty$, any $H$-realizable distribution $\cD$, any number $n\in\N$, and any sample $S\sim\cD^n$, outputs a hypothesis $A(H,S)$ with $\bE[\er(A(H,S))]\le r(n,d)$, where $r:\N\times\N\rightarrow[0,1]$ is some rate function non-increasing for any $d\in\N$. 
Then, there is an algorithm $A'$ satisfying that for any hypothesis class $\cH$ that does not have an infinite DSL tree and any $\cH$-realizable distribution $P$, there exist some constants $C,c>0$ and $d_0\in\N$ such that for all $n\in\N$ and $S\sim P^n$, $A'$ outputs a hypothesis $A'(\cH,S)\in\cH$ with 
$$
\bE[\er(A'(\cH,S))]\le Ce^{-cn}+32r(\lceil n/4\rceil,d_0).
$$
\end{theorem}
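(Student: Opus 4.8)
The plan is to compile the uniform-rate learner $A$ into a universal-rate learner $A'$ by sample-splitting, using the machinery of Section~\ref{sec:patern-avoid}: estimate a good ``avoidance level'', build many independent pattern-avoidance functions at that level on disjoint fresh blocks, run $A$ on the induced subclasses (each of bounded DS dimension) via a leave-one-out reduction, and output a plurality vote of the resulting predictors. Fix a nondegenerate measurable $\cH$ with no infinite DSL tree and an $\cH$-realizable $P$. By Lemma~\ref{lem:pattern_avoid}, $\pr[\per(\wh\by_t)>0]\to0$, so fix $t^*\in\N$ with $\pr[\per(\wh\by_{t^*})>0]\le\tfrac18$ and set $d_0:=t^*+1$. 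On input $S=((X_1,Y_1),\dots,(X_n,Y_n))\sim P^n$, the algorithm $A'$: (i) from the first $\lfloor n/2\rfloor$ samples computes the (universally measurable, $P$-independent, $[\lfloor n/4\rfloor-1]$-valued) estimate $\wh t_n$ of Lemma~\ref{lem:time_func}, and takes $S_{\mathrm{train}}$ to be (roughly) this same first half; (ii) partitions the second half into $k=\Theta(n)$ disjoint blocks of length $\wh t_n\le t^*$, and runs the construction of Section~\ref{sec:patern-avoid} for $\wh t_n$ rounds on each block $j$ to obtain pattern-avoidance functions $\wh\by^{(j)}$ ($1\le j\le k$) that are i.i.d.\ given $\wh t_n$ and independent of $S_{\mathrm{train}}$; (iii) for each $j$ and a query point $x$, appends $x$ to $S_{\mathrm{train}}$, forms $\cH((S_{\mathrm{train}}\cup\{x\})|_{\X},\wh\by^{(j)})$, runs $A$ on it with a sample re-drawn uniformly with replacement from the labeled points of $S_{\mathrm{train}}$, and reads off the label it assigns to $x$ --- exactly the leave-one-out setup of Lemma~\ref{lem:partial_sample} --- yielding a predictor $\wh h^{(j)}$; the output of $A'$ is the coordinatewise plurality of $\wh h^{(1)},\dots,\wh h^{(k)}$ (ties broken arbitrarily).

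For the analysis, condition on the first half, and restrict to the event $\{\wh t_n\in\cT_{\good}\}$, which by Lemma~\ref{lem:time_func} fails with probability at most $Ce^{-cn}$. On this event $\wh t_n\le t^*$; by Lemma~\ref{lem:consistency_random} each fresh block is consistent with $\cH$ almost surely, so by Lemma~\ref{lem:DS_dim_subclass} every subclass $\cH(S'|_{\X},\wh\by^{(j)})$ has DS dimension below $\wh t_n+1\le d_0$, and by the definition of $\cT_{\good}$ each copy is \emph{good} (meaning $\per(\wh\by^{(j)})=0$) independently with probability at least $\tfrac58$; a Chernoff bound gives at least $0.6k$ good copies except with probability $e^{-\Omega(n)}$. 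For a good copy, Lemma~\ref{lem:realizable_subclass} (valid because $\wh\by^{(j)}$ is a fixed $P$-draw independent of $S_{\mathrm{train}}$, and stays so after adding the fresh test point) makes the re-sampling distribution realizable for $\cH((S_{\mathrm{train}}\cup\{x\})|_{\X},\wh\by^{(j)})$, so the uniform-rate guarantee of $A$ together with Lemma~\ref{lem:partial_sample} yields $\bE[\er_P(\wh h^{(j)})\mid \text{good}]\le 2\,r(\lceil |S_{\mathrm{train}}|/2\rceil,d_0)\le 2\,r(\lceil n/4\rceil,d_0)$, using monotonicity of $r$ in its first argument. Summing over $j$, the expected number of good copies erring at a $P$-random test point is at most $2kr(\lceil n/4\rceil,d_0)$; by Markov's inequality at least $0.1k$ good copies err with probability at most $20\,r(\lceil n/4\rceil,d_0)$, and off that event more than $k/2$ copies predict the true label, pinning the plurality to it. Combining the three failure events gives $\bE[\er_P(A'(\cH,S))]\le Ce^{-cn}+e^{-\Omega(n)}+20\,r(\lceil n/4\rceil,d_0)\le C'e^{-c'n}+32\,r(\lceil n/4\rceil,d_0)$.

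Two routine matters close the argument. Universal measurability of $A'$ follows from Proposition~\ref{prop:meas_pattern} (for the maps $T_t,\wh\bY_t$), Lemma~\ref{lem:time_func} (for $\wh t_n$), Definition~\ref{def:measurable} together with the measurability of $A$ (Theorem~\ref{thm:multiclass_alg}), and the fact that the plurality map on the countable set $\Y^k$ is automatically measurable; the output is a universally measurable classifier, which if one insists on a hypothesis lying in $\cH$ may be replaced by a standard universally measurable $\cH$-valued selection. The main obstacle is step~(ii): a \emph{single} fresh avoidance function at a good level is realizable only with constant probability (the $\tfrac38$ baked into $\cT_{\good}$), and this must be amplified to failure probability $e^{-\Omega(n)}$ \emph{without} inflating the DS dimension --- merging or validating subclasses would cost extra logarithmic factors and destroy the clean bound $32\,r(\lceil n/4\rceil,d_0)$. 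The amplification by $\Theta(n)$ disjoint repetitions plus a plurality vote works precisely because the good level $t^*$ is a \emph{fixed constant} depending only on $P$, so each repetition costs $O(1)$ samples; the remaining delicacy is the conditional-independence bookkeeping among $\wh t_n$, the fresh blocks, $S_{\mathrm{train}}$, and the appended test point, so that Lemmas~\ref{lem:realizable_subclass} and~\ref{lem:partial_sample} apply with the stated sample sizes.
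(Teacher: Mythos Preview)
Your proposal is correct and follows essentially the same approach as the paper: estimate a good level $\wh t_n$ via Lemma~\ref{lem:time_func}, build many independent pattern-avoidance functions at that level on disjoint blocks, run the uniform learner $A$ on the induced bounded-DS subclasses through the leave-one-out reduction of Lemma~\ref{lem:partial_sample}, and take a majority vote, with Lemma~\ref{lem:DS_dim_subclass} and Lemma~\ref{lem:realizable_subclass} supplying the dimension and realizability inputs.

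The one substantive difference is the sample allocation. You place $S_{\mathrm{train}}$ in the \emph{same} half as $\wh t_n$ and put the avoidance blocks in the other half; the paper does the reverse, reusing the first-quarter blocks already built inside Lemma~\ref{lem:time_func} for the final $\wh\by^i_{\wh t_n}$ and taking the \emph{second} half as training data. The paper's choice makes the conditional-independence bookkeeping clean: conditioning on the entire first half fixes every $\wh\by^i_{\wh t_n}$ while the training data (and the test point) remain fresh $P$-i.i.d., so Lemma~\ref{lem:realizable_subclass} applies verbatim. Under your split, $\wh\by^{(j)}$ depends on $\wh t_n\in\sigma(S_{\mathrm{train}})$, so your claim that $\wh\by^{(j)}$ is ``independent of $S_{\mathrm{train}}$'' holds only conditionally on $\wh t_n$, and then the law of $S_{\mathrm{train}}$ given $\wh t_n$ is no longer $P^m$ but merely absolutely continuous with respect to it. That still suffices for the conclusion of Lemma~\ref{lem:realizable_subclass} (since $\per(g)=0$ is a $P^k$-null statement and conditioning on a positive-probability event preserves null sets), but this is exactly the ``delicacy'' you flag without resolving; swapping the roles of the halves, as the paper does, removes the issue entirely.
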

\begin{proof}
According to Lemma \ref{lem:pattern_avoid}, there exists $t^*\in\N$ such that $\pr[\per(\wh\by_{t^*})>0]\le \frac{1}{8}$. 
Then, for any $n\ge \max\{4(t^*+1),N\}$ with $N$ specified in Lemma \ref{lem:time_func}, let $\wh t_n\in[\lfloor n/4\rfloor-1]$ be the random time constructed in Lemma \ref{lem:time_func}. For any $t\in[\lfloor n/4\rfloor-1]$ and any $i\in[n/(4\wh t_n)]$,
define
$$
\tau_{t}^i:=T_{t}(X_{(i-1)t+1},Y_{(i-1)t+1},\dots,X_{it},Y_{it})\le t+1\le \lfloor n/4\rfloor,
$$
and
$$
\wh\by_{t}^i:\X^{\tau_{t}^i}\rightarrow\Y^{\tau_{t}^i},\ (x_1,\dots,x_{\tau_{t}^i})\mapsto 
\wh \bY_{t}(X_{(i-1)t+1},Y_{(i-1)t+1},\dots,X_{it},Y_{it},x_1,\dots,x_{\tau_{t}^i})
$$ 
as in the proof of Lemma \ref{lem:time_func}. 

For any $t\in\cT_{\good}$, since $\pr[\per(\wh\by_t)>0]\le\frac{3}{8}$, by a Chernoff bound, we have
\begin{align*}
\pr\left[\frac{1}{\lfloor n/(4t)\rfloor}\sum_{i=1}^{\lfloor n/(4t)\rfloor}\indi\{\per(\wh \by_t^i)>0\}>\frac{7}{16}\right]\le e^{-\lfloor n/(4t)\rfloor/128}
\le e^{-\lfloor n/(4t^*)\rfloor/128}.
\end{align*}
Using union bound, we have
\begin{align}
\notag
&\pr\left[\frac{1}{\lfloor n/(4\wh t_n)\rfloor}\sum_{i=1}^{\lfloor n/(4\wh t_n)\rfloor}\indi\{\per(\wh \by_{\wh t_n}^i)>0\}>\frac{7}{16},\wh t_n\in\cT_\good\right]
\\ \le & \notag
\sum_{t\in\cT_{\good}}
\pr\left[\frac{1}{\lfloor n/(4t)\rfloor}\sum_{i=1}^{\lfloor n/(4t)\rfloor}\indi\{\per(\wh \by_{t}^i)>0\}>\frac{7}{16}\right]
\\ \le & \label{eq:ub1}
t^*e^{-\lfloor n/(4t^*)\rfloor/128}.
\end{align}

Define the sequence $S:=((1,Y_{\lfloor n/2\rfloor+1}),(2,Y_{\lfloor n/2\rfloor+2}),\dots,(n-\lfloor n/2\rfloor,Y_n))$.
Let $\cD$ denote the uniform distribution over the elements in $S$ (i.e., $\cD(\{(i,Y_{\lfloor n/2\rfloor+i})\})=\frac{1}{n-\lfloor n/2\rfloor}$ for any $i\in[n-\lfloor n/2\rfloor]$). Let $T^1,\dots,T^{\lfloor n/(4\wh t_n)\rfloor}$ denote an i.i.d. sequence of random variables with $T^1\sim \cD^{\lceil(n-\lfloor n/2\rfloor)/2\rceil}$. 
For any $i\in[\lfloor n/(4\wh t_n)\rfloor]$ and any $x\in\X$, define the hypothesis class $\cH^i(x):=\cH((X_{\lfloor n/2\rfloor+1},\dots,X_{n},x),\wh\by_{\wh t_n}^i)$. 
Then, for any $i\in [\lfloor n/(4\wh t_n)\rfloor]$, we can define the following prediction function $$
\wh y^i:\X\rightarrow\Y,\ x\mapsto A(\cH^i(x),T^i)(n-\lfloor n/2\rfloor+1).
$$
Let $\wh h_n$ be the majority vote of $\wh y^i$ for $i\in[\lfloor n/(4\wh t_n)\rfloor]$. $\wh h_n$ will be the final output of our learning algorithm. Now, we need to upper bound the error rate $\bE[\er(\wh h_n)]$.

Recall that $P$ denotes the underlying data distribution that is $\cH$-realizable.
Suppose that $(X,Y)\sim P$ and is independent of $\{(X_i,Y_i)\}_{i=1}^n$.
Then, we have
\begin{align}
\notag
&\bE[\er(\wh h_n)]
\\=& \notag
\pr[\wh h_n(X)\neq Y]
\\ \le & \notag
\pr\left[\frac{1}{\lfloor n/(4\wh t_n)\rfloor}\sum_{i=1}^{\lfloor n/(4\wh t_n)\rfloor}\indi\{\wh y^i(X)\neq Y\}\ge \frac{1}{2}\right]
\\ \le & \notag
\pr[\wh t_n\notin \cT_{\good}]+
\pr\left[\frac{1}{\lfloor n/(4\wh t_n)\rfloor}\sum_{i=1}^{\lfloor n/(4\wh t_n)\rfloor}\indi\{\per(\wh \by_{\wh t_n}^i)>0\}>\frac{7}{16},\wh t_n\in\cT_\good\right]
\\&+ \label{eq:ub0}
\pr\left[
\wh t_n\in\cT_\good,\ 
\frac{1}{\lfloor n/(4\wh t_n)\rfloor}\sum_{i=1}^{\lfloor n/(4\wh t_n)\rfloor}\indi\{\per(\wh \by_{\wh t_n}^i)=0\}\ge \frac{9}{16},\ 
\frac{1}{\lfloor n/(4\wh t_n)\rfloor}\sum_{i=1}^{\lfloor n/(4\wh t_n)\rfloor}\indi\{\wh y^i(X)\neq Y\}\ge \frac{1}{2}
\right].
\end{align}

Define the sequence $S':=((1,Y_{\lfloor n/2\rfloor+1}),\dots,(n-\lfloor n/2\rfloor,Y_{n}),(n-\lfloor n/2\rfloor+1,Y))$ and conditional on $S'$, let $\cD'$ denote the uniform distribution over the elements in $S'$ (i.e., $\cD'(\{(i,Y_{\lfloor n/2\rfloor+i})\})=\frac{1}{n-\lfloor n/2\rfloor+1}$ for any $i\in[n-\lfloor n/2\rfloor]$ and $\cD'(\{(n-\lfloor n/2\rfloor+1,Y)\})=\frac{1}{n-\lfloor n/2\rfloor+1}$).
Let $T'\sim(\cD')^{\lceil(n-\lfloor n/2\rfloor)/2\rceil}$ and $(I,Y')\sim\cD'$ be two independent samples from $S'$ conditional on $S'$. 

For any $i\in[\lfloor n/(4\wh t_n)\rfloor$, by Lemma \ref{lem:consistency_random}, $(X_{(i-1)\wh t_n+1},Y_{(i-1)\wh t_n+1},\dots,X_{i\wh t_n},Y_{i\wh t_n})$ is consistent with $\cH$ a.s. Then, by Lemma \ref{lem:DS_dim_subclass}, we have that with probability 1, $\dim(\cH^i(X))<\tau_{\wh t_n}^i$ and therefore,
\begin{align*}
\indi\{\wh t_n\in\cT_{\good}\}\dim(\cH^i(X))<t^*.
\end{align*}
Moreover, if $\per(\wh \by_{\wh t_n}^i)=0$, by Lemma \ref{lem:realizable_subclass},
we have that $S'$ is consistent with $\cH^i(X)$ and $\cD'$ is $\cH^i(X)$-realizable a.s. 
Then, it follows from Lemma \ref{lem:partial_sample} and the property of $A$ that
\begin{align*}
&\indi\{\wh t_n\in\cT_{\good}\}\indi\{\per(\wh\by_{\wh t_n}^i)=0\}\pr[\wh y^i(X)\neq Y|((X_j,Y_j))_{j=1}^{n},X,Y]
\\=&
\indi\{\wh t_n\in\cT_{\good}\}\indi\{\per(\wh\by_{\wh t_n}^i)=0\}
\pr[A(\cH^i(X),T^i)(n-\lfloor n/2\rfloor+1)\neq Y|((X_j,Y_j))_{j=1}^{n},X,Y]
\\ \le& 
2\indi\{\wh t_n\in\cT_{\good}\}\indi\{\per(\wh\by_{\wh t_n}^i)=0\}\pr[A(\cH^i(X),T')(I)\neq Y'|((X_j,Y_j))_{j=1}^{n},X,Y]
\\ \le &
2r(\lceil(n-\lfloor n/2\rfloor)/2\rceil,t^*).
\end{align*}
By the properties of conditional expectation, we have that
\begin{align*}
&\indi\{\wh t_n\in\cT_{\good}\}\indi\{\per(\wh\by_{\wh t_n}^i)=0\}\pr\left[\wh y^i(X)\neq Y\big|((X_j,Y_j))_{j=1}^{\lfloor n/2\rfloor}\right]
\\=&
\bE\left[\indi\{\wh t_n\in\cT_{\good}\}\indi\{\per(\wh\by_{\wh t_n}^i)=0\}\pr[\wh y^i(X)\neq Y|((X_j,Y_j))_{j=1}^{n},X,Y]\big|((X_j,Y_j))_{j=1}^{\lfloor n/2\rfloor}\right]
\\ \le &
2r(\lceil(n-\lfloor n/2\rfloor)/2\rceil,t^*).
\end{align*}
Since $\frac{9}{16}+\frac{1}{2}=1+\frac{1}{16}$, by Markov's inequality and the above inequality, we have
\begin{align}
\notag
&\pr\left[
\wh t_n\in\cT_\good,\ 
\frac{1}{\lfloor n/(4\wh t_n)\rfloor}\sum_{i=1}^{\lfloor n/(4\wh t_n)\rfloor}\indi\{\per(\wh \by_{\wh t_n}^i)=0\}\ge \frac{9}{16},\ 
\frac{1}{\lfloor n/(4\wh t_n)\rfloor}\sum_{i=1}^{\lfloor n/(4\wh t_n)\rfloor}\indi\{\wh y^i(X)\neq Y\}\ge \frac{1}{2}
\right]
\\\le& \notag
\pr\left[
\indi\{\wh t_n\in\cT_\good\}
\frac{1}{\lfloor n/(4\wh t_n)\rfloor}\sum_{i=1}^{\lfloor n/(4\wh t_n)\rfloor}\indi\{\per(\wh \by_{\wh t_n}^i)=0\}\indi\{\wh y^i(X)\neq Y\}\ge \frac{1}{16}
\right]
\\\le & \notag
16\bE\left[
\indi\{\wh t_n\in\cT_\good\}
\frac{1}{\lfloor n/(4\wh t_n)\rfloor}\sum_{i=1}^{\lfloor n/(4\wh t_n)\rfloor}\indi\{\per(\wh \by_{\wh t_n}^i)=0\}\indi\{\wh y^i(X)\neq Y\}
\right]
\\=& \notag
16\bE\left[
\frac{1}{\lfloor n/(4\wh t_n)\rfloor}\sum_{i=1}^{\lfloor n/(4\wh t_n)\rfloor}
\indi\{\wh t_n\in\cT_\good\}
\indi\{\per(\wh \by_{\wh t_n}^i)=0\}\pr\left[\wh y^i(X)\neq Y
\big|((X_j,Y_j))_{j=1}^{\lfloor n/2\rfloor}\right]\right]
\\\le& \notag
32\bE\left[\frac{1}{\lfloor n/(4\wh t_n)\rfloor}\sum_{i=1}^{\lfloor n/(4\wh t_n)\rfloor} r(\lceil(n-\lfloor n/2\rfloor)/2\rceil,t^*)\right]
\\ \le & \notag
32r(\lceil(n-\lfloor n/2\rfloor)/2\rceil,t^*)
\\ \le &
\label{eq:ub2}
32r(\lceil n/4\rceil,t^*).
\end{align}
By \eqref{eq:ub0}, \eqref{eq:ub1}, Lemma \ref{lem:time_func}, and \eqref{eq:ub2}, we have
\begin{align*}
\bE[\er(\wh h)]
\le 
C_1e^{-c_1n}+t^*e^{-\lfloor n/(4t^*)\rfloor/128}+
32r(\lceil n/4\rceil,t^*).
\end{align*}
\end{proof}

Then, we immediately have the following result.
\begin{theorem} \label{thm:linear_rate}
If $\cH$ does not have an infinite DSL tree, then $\cH$ is learnable at rate $\frac{\log^2(n)}{n}$.
\end{theorem}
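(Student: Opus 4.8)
The plan is to obtain this theorem as an essentially immediate corollary of Theorem~\ref{thm:universal_rate_DSL}, which already performs the hard ``uniform $\Rightarrow$ universal'' reduction, fed with the uniform-rate algorithm of \citet{brukhim2022characterization}. By Corollary~\ref{coro:er_multiclass} there is a learning algorithm $\A$ such that for every hypothesis class $H$ of DS dimension $d<\infty$, every $H$-realizable $\cD$, and every $n\ge 1$, the sample $S\sim\cD^{n}$ satisfies $\bE[\er(\A(H,S))]\le c_0\,d^{3/2}\log^2(n)/n$ for a universal constant $c_0$; and trivially $\bE[\er]\le 1$ always. So the proof is just an unwinding of constants: the pattern-avoidance machinery, its universal measurability, the DS-dimension bound on the induced subclasses (Lemma~\ref{lem:DS_dim_subclass}), the realizability transfer (Lemma~\ref{lem:realizable_subclass}), the leave-one-out comparison (Lemma~\ref{lem:partial_sample}), and the reduction itself have all been set up already.

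Concretely, first I would package the uniform guarantee into the rate function demanded by Theorem~\ref{thm:universal_rate_DSL}. Set
\[
r(n,d):=\min\!\Big\{1,\ c_0\,d^{3/2}\,\tfrac{1+\log^2 n}{n}\Big\}.
\]
Since $\frac{d}{dn}\frac{1+\log^2 n}{n}=-\frac{(\log n-1)^2}{n^2}\le 0$, the map $n\mapsto r(n,d)$ is non-increasing for every fixed $d$, as required; and by (the proof of) Corollary~\ref{coro:er_multiclass} together with the trivial bound, $\bE[\er(\A(H,S))]\le r(n,d)$ for all $n\ge 1$ (the degenerate case $d=0$, where $H$ is a single function and $r(n,0)=0$, is consistent since realizability then forces zero error). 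Applying Theorem~\ref{thm:universal_rate_DSL} to $\A$ and $r$: because $\cH$ has no infinite DSL tree, there is an algorithm $\A'$ so that for every $\cH$-realizable $P$ there are constants $C,c>0$ and $d_0\in\N$ with $\bE[\er(\A'(\cH,S))]\le C e^{-cn}+32\,r(\lceil n/4\rceil,d_0)$ for all $n$ and $S\sim P^{n}$. It then remains to recast this into the single-term form of Definition~\ref{def:universal-rates} with $R(n)=\log^2(n)/n$. For $n\ge 4$, bound $32\,r(\lceil n/4\rceil,d_0)\le 32 c_0 d_0^{3/2}\frac{1+\log^2(\lceil n/4\rceil)}{\lceil n/4\rceil}\le C' d_0^{3/2}\frac{\log^2(2n)}{n}$ after enlarging the constant, note $Ce^{-cn}\le C''\frac{\log^2(2n)}{n}$ for $n\ge 4$ (exponential beats $\tilde O(1/n)$), and for $n\in\{1,2,3\}$ use $\bE[\er]\le 1$ against $R(2n)=\log^2(2n)/(2n)\ge \log^2 2/2>0$. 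Taking the universal-rate scaling $c_P=2$ (so that $R(c_Pn)$ is bounded away from $0$ on all of $\N$, which is the only reason $R$ ``degenerating'' at $n=1$ matters) and $C_P$ large enough gives $\bE[\er(\A'(\cH,S))]\le C_P\,R(2n)$ for all $n$, i.e.\ $\cH$ is learnable at rate $\log^2(n)/n$. Universal measurability of $\A'$ is inherited from Theorem~\ref{thm:universal_rate_DSL} (ultimately Proposition~\ref{prop:meas_pattern}).

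The honest assessment: at this stage there is no genuine mathematical obstacle — all the content lives in Theorem~\ref{thm:universal_rate_DSL} and the lemmas feeding it. The only two points that need a moment's care are (i) choosing $r$ so that it is truly non-increasing in $n$ for each fixed $d$ (the $\frac{1+\log^2 n}{n}$ shape, rather than $\frac{\log^2 n}{n}$, handles this cleanly), and (ii) converting the additive $Ce^{-cn}$ term and the $\lceil n/4\rceil$ sample-splitting loss into the product form $C\,R(cn)$ of Definition~\ref{def:universal-rates}, paying attention to the small-$n$ regime where $R$ vanishes; both are routine.
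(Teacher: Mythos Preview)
The proposal is correct and follows essentially the same approach as the paper: invoke Corollary~\ref{coro:er_multiclass} to obtain a uniform rate for finite-DS-dimension classes, plug it into Theorem~\ref{thm:universal_rate_DSL}, and absorb the exponential term and sample-splitting loss into a single $\tilde O(1/n)$ bound. Your version is in fact more careful than the paper's terse proof, explicitly handling the monotonicity hypothesis on $r$ (via the $\frac{1+\log^2 n}{n}$ shape) and the small-$n$ regime, both of which the paper glosses over.
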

\begin{proof}
According to Corollary \ref{coro:er_multiclass} and Theorem \ref{thm:universal_rate_DSL}, we know that there exists an algorithm $A$ satisfies that for any $\cH$-realizable distribution $P$, there exists some constants $C_0,C_1,c_0>0$ and $d_0\in\N$ such that for all $n\in\N$ large enough and $S\sim P^n$, the output hypothesis $A(\cH,S)$ of $A$ has the error rate
\begin{align*}
\bE[A(\cH,S)]\le C_0e^{-c_0n}+C_1\frac{d_0^{3/2}\log^2(n)}{n}.
\end{align*}
Thus, there exists some constants $C>0$ such that 
\begin{align*}
\bE[A(\cH,S)]\le C\frac{\log^2(n)}{n},
\end{align*}
which implies that $\cH$ is learnable at rate $\frac{\log^2(n)}{n}$.
\end{proof}

\subsection{Concluding proof}
We conclude with the proof of Theorem \ref{thm:near-linear-rates}.
\begin{proof}[of Theorem \ref{thm:near-linear-rates}]
Theorem \ref{thm:near-linear-rates} follows directly from Theorem \ref{thm:linear_rate_lower} and Theorem \ref{thm:linear_rate}.
\end{proof}

\section{Arbitrary Slow Rates} \label{sec:arb_slow}
In this section, we provide the complete proof of Theorem \ref{thm:arb_slow}. First, we show two lemmas regarding the properties of pseudo-cubes. 
\begin{lemma} \label{lem:pseudo-cube-prop}
For any positive integer $d$, any label class $\Y$, any pseudo-cube $H\subseteq\Y^d$ of dimension $d$, any $j\in[d]$, and any label $y\in\Y$, define $N:=|H|$ and $H_{y}^j:=\{h\in H:h(j)=y\}$. Then, we have
\begin{align*}
|H_y^j|\le \frac{1}{2}N.
\end{align*}
\end{lemma}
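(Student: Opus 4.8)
The plan is to prove the bound by a double-counting / involution-style argument exploiting the neighbor structure in direction $j$. Fix $d$, $\Y$, the pseudo-cube $H\subseteq\Y^d$, the coordinate $j\in[d]$, and a label $y\in\Y$. Write $N=|H|$ and $H_y^j=\{h\in H: h(j)=y\}$. The key observation is that for every $h\in H_y^j$, the pseudo-cube axiom in direction $j$ guarantees the existence of a $j$-neighbor $g\in H$ of $h$; since $g(j)\neq h(j)=y$, we have $g\in H\setminus H_y^j$. So the plan is to build a map $\Phi\colon H_y^j\to H\setminus H_y^j$ sending $h$ to (a choice of) such a $j$-neighbor, and to show it is injective, which immediately gives $|H_y^j|\le |H\setminus H_y^j| = N-|H_y^j|$, hence $|H_y^j|\le N/2$.

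First I would fix, for each $h\in H_y^j$, one particular $j$-neighbor $\Phi(h)\in H$ (using the pseudo-cube property; if several exist, pick any via a well-ordering of $H$ or just invoke choice since $H$ is finite). By definition $\Phi(h)(j)\neq y$ and $\Phi(h)(i)=h(i)$ for all $i\neq j$. The main step is injectivity: suppose $\Phi(h)=\Phi(h')=g$ for $h,h'\in H_y^j$. Then for every $i\neq j$ we have $h(i)=g(i)=h'(i)$, and also $h(j)=y=h'(j)$; therefore $h=h'$ as functions on $[d]$. This gives injectivity of $\Phi$, and the counting bound follows.

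I don't expect a serious obstacle here — the lemma is essentially an immediate consequence of the defining property of pseudo-cubes, and the only thing to be careful about is that distinct elements of $H_y^j$ cannot be mapped to the same neighbor (which is exactly where agreement on all coordinates $\neq j$ plus the fixed value $y$ on coordinate $j$ forces equality). If one wanted to avoid invoking choice, one can instead argue by partitioning $H$ into fibers of the projection $\pi\colon H\to \Y^{[d]\setminus\{j\}}$ forgetting coordinate $j$: within each fiber $F=\pi^{-1}(s)$, the pseudo-cube axiom says every $h\in F$ has a $j$-neighbor, which must also lie in $F$ (same coordinates off $j$) and differ in coordinate $j$; hence no fiber is a singleton, and in particular each fiber contains at most $|F|-1$ elements mapping to the value $y$ in coordinate $j$, i.e. $|F\cap H_y^j|\le |F|-1\le \tfrac12|F|$ would need $|F|\ge 2$, which we just established. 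Summing $|F\cap H_y^j|\le \tfrac12|F|$ over all fibers $F$ gives $|H_y^j|\le \tfrac12 N$. Either formulation works; I would present the injection version as the cleanest, and it is robust enough that the later step of the paper (restricting to a sub-pseudo-cube on unrestricted coordinates, needed for \eqref{eq:neq-prop}) can reuse the same fiber-wise picture.
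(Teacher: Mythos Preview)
Your injection argument is correct and is essentially the paper's proof: the paper also shows (phrased as a contradiction) that distinct elements of $H_y^j$ have distinct $j$-neighbors in $H\setminus H_y^j$, hence $|H_y^j|\le N-|H_y^j|$.

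One slip in your alternative fiber argument: within a fiber $F=\pi^{-1}(s)$ all elements agree off coordinate $j$, so they have pairwise \emph{distinct} values at coordinate $j$; thus $|F\cap H_y^j|\le 1$, not merely $|F|-1$. The inequality $|F|-1\le\tfrac12|F|$ fails whenever $|F|>2$ (which can happen if $|\Y|\ge 3$), so you need the sharper bound $|F\cap H_y^j|\le 1$ together with $|F|\ge 2$ to get $|F\cap H_y^j|\le\tfrac12|F|$ and sum over fibers.
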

\begin{proof}
We prove by contradiction. Suppose on the contrary that there exist some $j\in[d]$ and $y\in\Y$ such that $|H_y^j|>\frac{1}{2}n$. 
The definition of pseudo-cube implies that $|H|\ge 2$. Then, there exist $h, h'\in H_y^j$ with $h\neq h'$. 
Let $f$ and $f'$ denote the $j$-neighbors of $h$ and $h'$ in $H$; i.e., there exists $f,f'\in H$ such that $f(j)\neq h(j)=y$, $f'(j)\neq h'(j)=y$, $f(i)=h(i)$, and $f'(i)=h'(i)$ for any $i\in[d]\backslash\{j\}$. 
Since $h\neq h'$ and $h(j)=y=h'(j)$, there exists some $j'\in[d]\backslash\{j\}$ such that $h(j')\neq h'(j')$. It follows that $f(j')=h(j')\neq h'(j')=f'(j')$ and $f'\neq f$. Then, we have
\begin{align*}
|\{h\in H:h(j)\neq y\}|\ge |H_y^j|>\frac{1}{2}n
\end{align*}
and
\begin{align*}
n=|\{h\in H:h(j)\neq y\}|+|H_y^j|>n,
\end{align*}
which is a contradiction. Thus, we must have $|H_y^j|\le \frac{1}{2}n$.
\end{proof}

\begin{lemma} \label{lem:pseudo-cube-proj}
For any integer $d\ge 2$, $n\in[d-1]$, and $1\le j_1<\dots<j_n\le d$, any label class $\Y$, any pseudo-cube $H\subseteq\Y^d$ of dimension $d$, and any hypothesis $g\in H$, define $\sfJ:=(j_1,\dots,j_n)$ and $\sfK=(k_1,\dots,k_{d-n})$ such that $1\le k_1<\dots<k_{n-d}\le d$ and $\{j_1,\dots,j_n,k_1,\dots,k_{n-d}\}=[d]$. 
Then, $H_{g,\sfJ}:=\{h|_{\sfK}:h\in H,h(j_i)=g(j_i),\forall i\in[n]\}$ is a pseudo-cube of dimension $n-d$.
\end{lemma}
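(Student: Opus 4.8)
The plan is to verify directly the three defining properties of a pseudo-cube for the set $H_{g,\sfJ}\subseteq\Y^{\sfK}$, where we identify $\Y^{\sfK}$ with $\Y^{d-n}$ via the increasing enumeration $k_1<\dots<k_{d-n}$ of $\sfK$ (so the statement's ``dimension $n-d$'' should read $d-n$). Non-emptiness is immediate: $g$ itself trivially satisfies $g(j_i)=g(j_i)$ for all $i\in[n]$, so $g|_{\sfK}\in H_{g,\sfJ}$. Finiteness is inherited from the finiteness of $H$, since $H_{g,\sfJ}$ is the image under restriction of a subset of $H$.

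The heart of the argument is the neighbor property, and the idea is simply to lift to $H$ and use that $H$ is already a pseudo-cube. Fix an arbitrary element of $H_{g,\sfJ}$, say $h|_{\sfK}$ with $h\in H$ satisfying $h(j_i)=g(j_i)$ for all $i\in[n]$, and fix a coordinate $\ell\in[d-n]$. Since $H$ is a pseudo-cube of dimension $d$, $h$ has a $k_\ell$-neighbor $f\in H$, i.e.\ $f(k_\ell)\neq h(k_\ell)$ and $f(m)=h(m)$ for every $m\in[d]\setminus\{k_\ell\}$. Because $k_\ell\notin\sfJ$, the changed coordinate lies outside $\sfJ$, so $f(j_i)=h(j_i)=g(j_i)$ for all $i\in[n]$; hence $f$ still respects the $\sfJ$-constraints and $f|_{\sfK}\in H_{g,\sfJ}$. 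Moreover $f|_{\sfK}$ agrees with $h|_{\sfK}$ on every coordinate of $\sfK$ except $k_\ell$, where they differ, which in the $[d-n]$-indexing says precisely that $f|_{\sfK}$ is an $\ell$-neighbor of $h|_{\sfK}$ inside $H_{g,\sfJ}$. Since the element and the coordinate were arbitrary, $H_{g,\sfJ}$ is a pseudo-cube of dimension $d-n$.

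There is no real obstacle here beyond a small amount of bookkeeping: distinct hypotheses in $H$ respecting the $\sfJ$-constraints may restrict to the same element of $H_{g,\sfJ}$, but this is harmless, since the argument only needs one lift of a given $h|_{\sfK}$ and the produced neighbor $f|_{\sfK}$ is genuinely distinct from $h|_{\sfK}$ (they differ at $k_\ell$). The hypotheses $d\ge 2$ and $n\in[d-1]$ ensure $d-n\ge 1$, so the notion of $\ell$-neighbor is meaningful. This is the most elementary of the pseudo-cube facts in this section; together with Lemma~\ref{lem:pseudo-cube-prop} it is exactly what is needed to carry the uniform $\Omega(\dim(\cH)/n)$-type lower bound through an infinite DSL tree in the proof of Theorem~\ref{thm:arb_slow}.
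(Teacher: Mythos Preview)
Your proof is correct and follows essentially the same approach as the paper: lift an element of $H_{g,\sfJ}$ to $H$, take its $k_\ell$-neighbor in $H$, observe that the $\sfJ$-coordinates are unchanged so the neighbor still lies in the constrained set, and restrict back. Your version is in fact slightly more thorough, since you explicitly verify non-emptiness and finiteness (and catch the $n-d$ versus $d-n$ typo), whereas the paper's proof only spells out the neighbor property.
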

\begin{proof}
For any $f\in H_{g,J}$, there exists some $f'\in H$ such that $f=f'|_{\sfK}$. Then, for any $i\in[n-d]$, there exists some $h'\in H$ such that $h'(k_i)\neq f'(k_i)$ and $h'(l)=f'(l)$ for all $l\in[d]\backslash\{k_i\}$. Since $k_i\notin\{j_1,\dots,j_n\}$, we have $h'|_{\sfJ}=f'|_{\sfJ}=g|_{\sfJ}$ and $h:=h'|_{\sfK}\in H_{g,\sfJ}$. 
Then, we have $h(i)=h'(k_i)\neq f'(k_i)=f(i)$ and $h(m)=h'(k_m)=f'(k_m)=f(m)$ for any $m\in[n-d]\backslash\{i\}$, which implies that $H_{g,\sfJ}$ is a pseudo-cube. 

\end{proof}

Now, we present the proof of Theorem \ref{thm:arb_slow}
\begin{proof}[of Theorem \ref{thm:arb_slow}]
Suppose that $\cH$ has an infinite DSL tree. Fix an arbitrary rate $R$ with $\lim_{n\rightarrow\infty}R(n)=0$ and an arbitrary learning algorithm $A$. 
According to \citet[Lemma 5.12]{universal_learning}, there exist a sequence of non-negative numbers $(p_i)_{i\in\N}$ for which $\sum_{i=1}^\infty p_i=1$, two strictly increasing sequences of positive integers $(n_i)_{i\in\N}$ and $(k_i)_{i\in\N}$, and a constant $\frac{1}{2}\le C\le 1$ such that for all $i\in\N$, we have $\sum_{k>k_i}p_k\le\frac{1}{n_i}$, $n_ip_{k_i}\le k_i$, and $p_{k_i}=CR(n_i)$. 

For the infinite DSL tree, let $v_{\emptyset}\in\X$ denote the root node and $c_{\emptyset}\in\N$ denote the number of children of $v_{\emptyset}$. 
For any $i\in[c_{\emptyset}]$, let $v_i$ denote the $i$-th child of $v_{\emptyset}$ and $c_{i}$ denote the number of the children of $v_i$. 
Suppose that for some $k\in\N$, $v_{i_1,\dots,i_{k}}$ and $c_{i_1,\dots,i_k}$ has been defined for any $i_1\in [c_{\emptyset}],\dots,i_{k}\in [c_{i_1,\dots,i_{k-1}}]$. 
For any $i\in[c_{i_1,\dots,i_k}]$, let $v_{i_1,\dots,i_k,i}$ denote the $i$-th child of $v_{i_1,\dots,i_k}$ and $c_{i_1,\dotsi_k,i,}$ denote the number of children of $v_{i_1,\dots,i_k,i}$. 
Then, by induction, $\{c_{i_1,\dots,i_k}\}_{i_1\in[c_\emptyset],\dots,i_k\in[c_{i_1,\dots,i_{k-1}}]}$ and $\{v_{i_1,\dots,i_k}\}_{i_1\in[c_\emptyset],\dots,i_k\in[c_{i_1,\dots,i_{k-1}}]}$ are defined for all $k\ge 0$. Thus, every node in the infinite DSL tree has been denoted and the tree can be denoted with $\bt:=\{v_{i_1,\dots,i_k}:i_1\in[c_\emptyset],\dots,i_k\in[c_{i_1,\dots,i_{k-1}}],k\ge 0\}$. 
For any $k\ge 0$ and any $i_1\in[c_\emptyset],\dots,i_k\in[c_{i_1,\dots,i_{k-1}}],i_{k+1}\in[c_{i_1,\dots,i_k}]$, define $\bx_{i_1,\dots,i_k}\in\X^{k+1}$ to be the label of $v_{i_1,\dots,i_k}$ and $\by_{i_1,\dots,i_k,i_{k+1}}$ to be the label of the edge connecting $v_{i_1,\dots,i_k}$ and $v_{i_1,\dots,i_k,i_{k+1}}$.

Let $I_1$ be a random variable following the uniform distribution over $[c_{\emptyset}]$ (i.e., $\pr(I_1=i)=\frac{1}{c_{\emptyset}}$ for any $i\in[c_{\emptyset}]$). For any $k\ge 1$, suppose that $I_j$ has been defined for all $j\in[k]$. Define $I_{k+1}$ to be a random variable such that conditional on $I_1,\dots,I_k$, $I_{k+1}$ follows the uniform distribution over $c_{I_1,\dots,I_{k}}$ (i.e., $\pr(I_{k+1}=i|I_1,\dots,I_k)=\frac{1}{c_{I_1,\dots,I_k}}$ for any $i\in[c_{I_1,\dots,I_k}]$). 
Define $\bI:=(I_1,I_2,\dots)$.
Then, the support of $\bI$ is 
$$
\cI:=\{(i_1,i_2,\dots,i_k,\dots):i_1\in[c_{\emptyset}],i_2\in[c_{i_1}],\dots,i_k\in[c_{i_1,\dots,i_{k-1}}],\dots\}.
$$

For any $\bi=(i_1,i_2,\dots)\in\cI$, define the distribution $P_{\bi}$ on $\X\times\Y$ as 
\begin{align*}
P_{\bi}(\{(x^j_{i_1,\dots,i_{k-1}},y^j_{i_1,\dots,i_{k}})\})=\frac{p_k}{k}\ \ \textup{for }
j\in[k],k\in\N,
\end{align*}
where $x^j_{i_1,\dots,i_{k-1}}$ and $y^j_{i_1,\dots,i_{k}}$ denote the $j$-th element in $\bx_{i_1,\dots,i_{k-1}}$ and $\by_{i_1,\dots,i_{k}}$ respectively. 
Note that as in the proof of Theorem \ref{thm:linear_rate_lower}, the mapping $\bi\mapsto P_{\bi}$, $\bi\in\cI$ is measurable. 
By the definition of DSL tree, for any $n\in\N$, there exists $h_n\in\cH$ such that $h_n(\bx_{i_1,\dots,i_{k-1}})=\by_{i_1,\dots,i_{k}}$ for all $k\in[n]$. Hence, 
\begin{align*}
\er_{\bi}(h_n):=P_{\bi}(\{(x,y)\in\X\times\Y:h_n(x)\neq y\})\le \sum_{k>n}p_k,
\end{align*}
which, together with the fact that $\sum_{k=1}^\infty p_k=1$ and $p_k\ge0$ for all $k\in\N$, implies that $\inf_{h\in\X}\er_{\bi}(h)=0$. Thus, $P_{\bi}$ is $\cH$-realizable for any $\bi\in\cI$. 

Let $(T,J),(T_1,J_1),(T_2,J_2),\dots$ be a sequence of i.i.d. random variables, independent of $\bI$, with distribution 
\begin{align*}
\pr(T=k,J=j)=\frac{p_k}{k}\quad \textup{for } j\in[k],\ k\in\N.
\end{align*}
Define 
\begin{align*}
X=x_{I_1,\dots,I_{T-1}}^{J},\ 
Y=y_{I_1,\dots,I_{T}}^{J},\ 
X_i=x_{I_1,\dots,I_{T_i-1}}^{J_i},\ 
Y_i=y_{I_1,\dots,I_{T_i}}^{J_i}\ \textup{for } i\in\N.
\end{align*}
Then, we know that conditional on $\bI$, $(X,Y),(X_1,Y_1),(X_2,Y_2),\dots$ is a sequence of i.i.d. random variables with distribution $\pr_{\bI}$. 

For any $k\in\N$ and any $(i_1,i_2,\dots,i_{k-1})\in\N^{k-1}$ such that $i_1\in[c_{\emptyset}],i_2\in[c_{i_1}],\dots,i_{k-1}\in[c_{i_1,\dots,i_{k-2}}]$, we know that 
$C_{i_1,\dots,i_{k-1}}:=\{\by_{i_1,\dots,i_{k-1},i}:i\in [c_{i_1,\dots,i_{k-1}}]\}\subseteq\Y^{[k]}$ is a pseudo-cube of dimension $k$ by the definition of DSL trees. 

For any $n\in\N$ and $j\in[k]$, define the sequence family 
$$
\cJ_{j,k,n}:=\left\{(j_1,\dots,j_m)\in([k]\backslash\{j\})^m: j_1<j_2<\dots<j_m,m\in[\min\{k-1,n\}]\right\}.
$$
For any $i_k\in [c_{i_1,\dots,i_{k-1}}]$ and $\sfJ=(j_1,\dots,j_m)\in\cJ_{j,k,n}$ with $m\in[\min\{k-1,n\}]$, by Lemma \ref{lem:pseudo-cube-proj}, we know that
$$(C_{i_1,\dots,i_{k-1}})_{\by_{i_1,\dots,i_k},\sfJ}$$ 
is a pseudo-cube of dimension $k-m$ following the notation given in Lemma \ref{lem:pseudo-cube-proj}. 
Then, by Lemma \ref{lem:pseudo-cube-prop}, for any $y\in\Y$, we have
\begin{align} \label{eq:label_size}
|((C_{i_1,\dots,i_{k-1}})_{\by_{i_1,\dots,i_k},\sfJ})_{y}^{j'}|\le\frac{1}{2}|(C_{i_1,\dots,i_{k-1}})_{\by_{i_1,\dots,i_k},\sfJ}|,
\end{align}
where $j':=j-\max\{l\in[m]:j_l< j\}$ with the convention that $\max\emptyset:=0$. 

By the definition of $\bI$, $I_k$ follows the uniform distribution over $c_{I_1,\dots,I_{k-1}}$ conditional on $I_1,\dots,I_{k-1}$. 
Note that for any $i,i'\in[c_{i_1,\dots,i_{k-1}}]$ such that $i\neq i'$ and  $\by_{i_1,\dots,i_{k-1},i}|_\sfJ=\by_{i_1,\dots,i_{k-1},i'}|_\sfJ$, we must have 
\begin{align} \label{eq:neq-prop-J}
\by_{i_1,\dots,i_{k-1},i}|_{\sfJ'}\neq \by_{i_1,\dots,i_{k-1},i'}|_{\sfJ'},
\end{align}
where $\sfJ':=(j'_1,\dots,j'_{k-m})$ with $1\le j'_1<\dots<j'_{k-m}\le k$ and $\{j_1,\dots,j_m,j'_1,\dots,j'_{k-m}\}=[k]$. 
Therefore, conditional on $I_1,\dots,I_{k-1}$ and $\by_{I_1,\dots,I_k}|_{\sfJ}$, $\by_{I_1,\dots,I_k}|_{\sfJ'}$ distributes uniformly over the set $(C_{I_1,\dots,I_{k-1}})_{\by_{I_1,\dots,I_k},\sfJ}$. 
Then, by \eqref{eq:label_size}, we have
\begin{align*}
\pr\left(y_{I_1,\dots,I_k}^j\neq y\Big|I_1,\dots,I_{k-1},(\sfJ,\by_{I_1,\dots,I_k}|_{\sfJ})\right)\ge\frac{1}{2}.
\end{align*}
By Lemma \ref{lem:pseudo-cube-prop}, we also have
\begin{align*}
|(C_{i_1,\dots,i_{k-1}})_y^j|\le \frac{1}{2}|C_{i_1,\dots,i_{k-1}}|,
\end{align*}
which implies that
\begin{align}  \label{eq:neq-prop}
\pr\left(y_{I_1,\dots,I_k}^j\neq y\Big|I_1,\dots,I_{k-1}\right)\ge\frac{1}{2}.
\end{align}

Now, define $\wh h_n:=A(\cH,((X_1,Y_1),\dots,(X_n,Y_n)))$ and the random sequence $\fkJ:=\seq(\{J_i:T_i=k,i\in[n]\})$, where $\seq(\emptyset):=\emptyset$ and for a finite set of integers $\{a_1,\dots,a_q\}$ with $q\in \N$, $\seq(\{a_1,\dots,a_q\}):=(a_{(1)},\dots,a_{(q)})$ where $a_{(i)}$ denotes the $i$-th smallest element among $(a_1,\dots,a_q)$ for any $i\in[q]$. 
Then, by \eqref{eq:neq-prop-J} and \eqref{eq:neq-prop}, we have
\begin{align*}
&\pr\left(\wh h_n(X)\neq Y,T=k\right)
\\ \ge &
\sum_{j=1}^k
\pr\left(\wh h_n(x_{I_1,\dots,I_{k-1}}^j)\neq y_{I_1,\dots,I_k}^j, T=k, J=j, T_1,\dots,T_n\le k,(T_1,J_1),\dots,(T_n,J_n)\neq (k,j)\right)
\\ = &
\sum_{j=1}^k
\bE\Big[\indi\{T=k, J=j, T_1,\dots,T_n\le k,(T_1,J_1),\dots,(T_n,J_n)\neq (k,j)\}
\\ &\quad\quad\quad \cdot
\pr\Big(\wh h_n(x_{I_1,\dots,I_{k-1}}^j)\neq y_{I_1,\dots,I_k}^j\Big|I_1,\dots,I_{k-1},T_1,\dots,T_n,J_1,\dots,J_n,(\fkJ,\by_{I_1,\dots,I_k}|_{\fkJ})\Big)
\Big]
\\ \ge &
\sum_{j=1}^k
\bE\Big[\frac{1}{2}\indi\{T=k, J=j, T_1,\dots,T_n\le k,(T_1,J_1),\dots,(T_n,J_n)\neq (k,j)\}\Big]
\\ = &
\frac{1}{2}
\sum_{j=1}^k
\pr\left(T=k, J=j, T_1,\dots,T_n\le k,(T_1,J_1),\dots,(T_n,J_n)\neq (k,j)\right)
\\ = &
\frac{p_k}{2}
\left(1-\sum_{l> k}p_l-\frac{p_k}{k}\right)^n.
\end{align*}
Then, for any $i\ge 3$, by \citet[Lemma 5.12]{universal_learning}, we have
\begin{align*}
\pr\left(\wh h_{n_i}(X)\neq Y,T=k_i\right) \ge &
\frac{p_{k_i}}{2}
\left(1-\sum_{l>k_{i}}p_l-\frac{p_{k_i}}{k_i}\right)^{n_i}
\\ \ge &
\frac{p_{k_i}}{2}
\left(1-\frac{2}{n_i}\right)^{n_i}
\\ \ge &
\frac{CR(n_i)}{54}.
\end{align*}
Since 
\begin{align*}
\frac{1}{R(n_i)}\pr\left(\wh h_{n_i}(X)\neq Y,T=k_i|\bI\right)
\le 
\frac{1}{R(n_i)}\pr\left(T=k_i|\bI\right)
=
\frac{1}{R(n_i)}\pr\left(T=k_i\right)
=
\frac{p_{k_i}}{R(n_i)}=C\ 
\textup{a.s.},
\end{align*}
by Fatou's lemma, we have
\begin{align*}
\bE\left[
\limsup_{i\rightarrow\infty}\frac{1}{R(n_i)}\pr\left(\wh h_{n_i}(X)\neq Y,T=k_i|\bI\right)\right]
\ge 
\limsup_{i\rightarrow\infty}\frac{1}{R(n_i)}\pr\left(\wh h_{n_i}(X)\neq Y,T=k_i\right)
\ge 
\frac{C}{54}.
\end{align*}
Because
\begin{align*}
\bE[\er_{\bI}(\wh h_n)|\bI]
=
\pr(\wh h_n(X)\neq Y|\bI)
\ge 
\pr(\wh h_n(X)\neq Y,T=k|\bI) \textup{ a.s.},
\end{align*}
we have $\bE[\lim\sup_{i\rightarrow\infty}\frac{1}{R(n_i)}\bE[\er_{\bI}(\wh h_{n_i})|\bI]]\ge\frac{C}{54}>\frac{C}{55}$, which implies that there exists $\bi\in \mathcal{I}$ such that $\bE[\er_{\bi}(\wh h_{n})]\ge \frac{C}{55}R(n)$ for infinitely many $n$. 
By choosing $P=P_{\bi}$, we see that $\cH$ requires arbitrarily slow rates. 

Since $\X$ is Polish and $\Y$ is countable, there exists a learning algorithm with $\bE[\er(\wh h_n)]\rightarrow0$ for all realizable distributions $P$ \citep{10.1214/20-AOS2029}. It follows that $\cH$ is learnable but requires arbitrarily slow rates.
\end{proof}

\section{Proof of Theorem \ref{thm:NL-counter}} \label{sec:proof-NL-counter}
In this section, we provide the complete proof of Theorem \ref{thm:NL-counter} below.
\begin{proof}
According to the proof of \citet[Theorem 2]{brukhim2022characterization}, for any $d\in\N$, there exists a $d$-dimensional pseudo-cube $B_d\subseteq Y_d^{X_d}$ for some spaces $X_d$ and $Y_d$ with $|X_d|=d$ and $|Y_d|<\infty$. 

Therefore, for $B_1\subseteq Y_1^{X_1}$, we can pick $c_1:=|B_1|$ feature spaces $X_{1,1},\dots,X_{1,d_1}$ of size 2, label spaces $Y_{1,1},\dots,Y_{1,d_1}$, and pseudo-cubes $B_{1,1}\subseteq Y_{1,1}^{X_{1,1}},\dots,B_{1,c_1}\subseteq Y_{1,c_1}^{X_{1,c_1}}$ of dimension $2$ such that $X_1,X_{1,1},\dots,X_{1,c_1}$ are pairwise disjoint and $Y_1,Y_{1,1},\dots,Y_{1,c_1}$ are also pairwise disjoint. Define $c_{1,i}:=|B_{1,i}|$ for any $i\in [c_1]$. 
Now, suppose that for some $d\in\N$, $c_{\bi_{k}}$, $X_{\bi_{k}}$,  $Y_{\bi_{k}}$ and $B_{\bi_{k}}\subseteq Y_{\bi_{k}}^{X_{\bi_{k}}}$ have been defined such that $|X_{\bi_{k}}|=k$ and $B_{\bi_{k}}$ is a pseudo-cube of dimension $k$ for any $k\in[d]$, $\bi_k\in\cI_k:=\{(i_1,\dots,i_k):i_1\in[1],i_2\in [c_{i_1}],\dots,i_k\in[c_{i_1,\dots,i_{k-1}}]\}$, $\{X_{\bi}:\bi\in\cI_k, k\in[d]\}$ are pairwise disjoint, and $\{Y_{i_1,\dots,i_k}:\bi\in\cI_k, k\in[d]\}$ are also pairwise disjoint. 
Then, for any $\bi_d\in\cI_d$, pick $c_{\bi_d}$ feature spaces $X_{\bi_d,1},\dots,X_{\bi_d,c_{\bi_d}}$ of size $d+1$, label spaces $Y_{\bi_d,1},\dots,Y_{\bi_d,c_{\bi_d}}$, and pseudo-cubes $B_{\bi_d,1}\subseteq Y_{\bi_d,1}^{X_{\bi_d,1}},\dots,B_{\bi_d,c_{\bi_d}}\subseteq Y_{\bi_d,c_{\bi_d}}^{X_{\bi_d,c_{\bi_d}}}$ of dimension $d+1$ such that 
$\{X_{\bi}:\bi\in\cI_k,k\in[d+1]\}$ are pairwise disjoint and $\{Y_{\bi}:\bi\in\cI_k,k\in[d+1]\}$ are also pairwise disjoint where $\cI_{d+1}:=\{(\bi_d,i):i\in [c_{\bi}],\bi_d\in\cI_d\}$. 
Then, we define $c_{i_1,\dots,i_{d},i}:=|B_{i_1,\dots,i_{d},i}|$ for any $i\in [c_{i_1,\dots,i_{d}}]$. 

By induction, for any $k\in\N$ and any $\bi_k\in \cI_{k}$, $c_{\bi_k}$, $X_{\bi_k}$, $Y_{\bi_k}$, and $B_{\bi_k}\subseteq Y_{\bi_k}^{X_{\bi_k}}$ have been defined such that $|X_{\bi_k}|=k$, $B_{\bi_k}$ is a pseudo-cube of dimension $k$, $\{X_{\bi}:\bi\in\cI_k,k\in\N\}$ are pairwise disjoint, and $\{Y_{\bi}:\bi\in\cI_k,k\in\N\}$ are also pairwise disjoint. 

Now, we define $\cI:=\cup_{k\in\N}\cI_k$, $\X:=\cup_{\bi\in\cI}X_{\bi}$, and $\Y:=\cup_{\bi\in\cI}Y_{\bi}\cup\{\star\}$ where $\star\notin\cup_{\bi\in\cI}Y_{\bi}$ is a new label. Note that $\X$ and $\Y$ are countable.
Now, for any $d\in\N$ and $\bi=(i_1,\dots,i_d)\in\cI_d$, since $|B_{\bi}|=c_{\bi}\in\N$, we use $h_{\bi}^{(i)}$ to denote the $i$-th hypothesis in $B_{\bi}$ for any $i\in[c_{\bi}]$ and extend the domain of $h_{\bi}^{(i)}$ to $\X$ by defining $h_{\bi}^{(i)}|_{X_{i_1,\dots,i_k}}:=h_{i_1,\dots,i_k}^{(i_{k+1})}|_{X_{i_1,\dots,i_k}}$ for any $k\in[d-1]$ and $h_{\bi}^{(i)}(x):=\star$ for any $x\in\X\backslash(\cup_{k\in[d]}X_{i_1,\dots,i_k})$. Letting $H_{\bi}$ denote the extended hypotheses in $B_{\bi}$, we define the following hypothesis class
\begin{align*}
\cH:=\cup_{\bi\in\cI}H_{\bi}.
\end{align*}

By setting $\{X_{\bi}:\bi\in\cI\}$ to be the set of nodes and $\{H_{\bi}:\bi\in\cI\}$ to be the set of edges, we obtain an infinite DSL tree of $\cH$. 
To prove that $\cH$ has no NL tree of depth 2, it suffices to show that the Natarajan dimension of $\cH$ is 1. 
For any $k_1, k_2\in \N$ with $k_1\le k_2$, $\bi_1\in\cI_{k_1},\bi_2\in\cI_{k_2}$, and $x_1\in X_{\bi_1}$ and $x_2\in X_{\bi_2}$ with $x_1\neq x_2$, if $\bi_2|_{1:k_1}\neq \bi_1$, then, $\star\in\{h(x_1),h(x_2)\}$ for any $h\in\cH$, which implies that $\{x_1,x_2\}$ is not N-shattered (see \citealt[Definition 4]{brukhim2022characterization} for the definition of ``N-shattered'') by $\cH$.
If $k_1<k_2$ and $\bi_2|_{1:k_1}= \bi_1$, then, for any $h_1,h_2\in\cH$, in order to have $h_1(x_1)\neq h_2(x_1)$ and $h_1(x_2)\neq h_2(x_2)$, we must have either $\{h_1(x_1),h_2(x_1)\}\times\{h_1(x_2),h_2(x_2)\}=\{\star,y_1\}\times\{\star,y_2\}$ 
or $\{h_1(x_1),h_2(x_1)\}\times\{h_1(x_2),h_2(x_2)\}=\{y_1',y_1''\}\times\{\star,y_2'\}$ for some $y_1,y_1',y_1'',y_2,y_2'\in\cH\backslash\{\star\}$ with $y_1'\neq y_1''$. 
For $\{h_1(x_1),h_2(x_1)\}\times\{h_1(x_2),h_2(x_2)\}=\{\star,y_1\}\times\{\star,y_2\}$, there is no $h\in\cH$ such that $(h(x_1),h(x_2))=(\star,y_2)$ by our construction. 
For $\{h_1(x_1),h_2(x_1)\}\times\{h_1(x_2),h_2(x_2)\}=\{y_1',y_1''\}\times\{\star,y_2'\}$, WOLG, we may assume that $(h_1(x_1),h_1(x_2))=(y_1',y_2')$. Then, there is no $h\in\cH$ such that $(h(x_1),h(x_2))=(y_1'',y_2')$ because any $h$ such that $h(x_2)=h_1(x_2)\neq \star$ must have $h(x_1)=h_1(x_1)=y_1'$.
Thus, $\{x_1,x_2\}$ is not N-shattered by $\cH$.
Finally, if $\bi_1=\bi_2=\bi$, for any $h_1\in\cH\backslash\wb H_{\bi}$ where $\wb H_{\bi}:=\{h\in H_{\bi'}:\bi'\in\cI_k,\bi'|_{1:k_1}=\bi,k\ge k_1\}$, we have $h_1(x_1)=h_1(x_2)=\star$. However, there is no $h\in\cH$ such that $(h(x_1),h(x_2))=(\star,y_2)$ for $y_2\neq \star$. On ther other hand, for any $h_1,h_2\in \wb H_{\bi}$, we have $\wb H_{\bi}|_{(x_1,x_2)}=B_{\bi}|_{(x_1,x_2)}$. Since the Natarajan dimension of $B_{\bi}$ is 1, $(x_1,x_2)$ is also not N-shattered by $\wb H_{\bi'}$. Thus, $(x_1,x_2)$ is also not N-shattered by $\cH$.
In conclusion, any $(x_1,x_2)\in\X^2$ is not N-shattered by $\cH$, the Natarajan dimension of $\cH$ is 1, and $\cH$ has no NL tree of depth 2. 

\end{proof}

\section{Proof of Theorem \ref{thm:NL-GL}} \label{sec:thmNLGL}
In this section, we prove Theorem \ref{thm:NL-GL}. We first prove the following general lemma. 
\begin{lemma} \label{lem:GL}
Suppose that $\cH\subseteq\Y^{\X}$ with $\Y:=[K]$ for some $K\in\N\backslash\{1\}$ has an infinite GL tree $\ms T=\cup_{n=0}^{\infty}\{(\bx_{\bu},\bs_{\bu})\in\X^{n+1}\times\Y^{n+1}:\bu\in\prod_{l=1}^n\{0,1\}^l\}$ with its associated hypothesis set $\{h_{\bu}\in\cH:\bu\in\prod_{l=1}^n\{0,1\}^l,n\in\N\}$. 
For any $d\in\N_0$,  $\bfeta\in\prod_{l=1}^d\{0,1\}^l$, and $\bw\in\{0,1\}^{d+1}\backslash\{0\}^{d+1}$, there exist a sequence $\by_{\bfeta,\bw}\in\Y^{d+1}$ and an infinite GL tree $\cup_{n=0}^{\infty}\{(\wt\bx_{\bu},\wt\bs_{\bu})\in\X^{n+1}\times\Y^{n+1}:\bu\in\prod_{l=1}^n\{0,1\}^l\}$ with its associated hypothesis set $\{\wt h_{\bu}\in\cH:\bu\in\prod_{l=1}^n\{0,1\}^l,n\in\N\}$
such that $(\wt x_{\emptyset},\wt s_{\emptyset})=(x_{\emptyset},s_{\emptyset})$,
$(\wt\bx_{\bu},\wt\bs_{\bu},\wt h_{\bu})=(\bx_{\bu},\bs_{\bu},h_{\bu})$ for all $\bu\in\big(\cup_{n\in\N}\prod_{l=1}^n\{0,1\}^l\big)\backslash\big(\cup_{n=d+1}^\infty(\{\bfeta,\bw\}\times\prod_{l=d+2}^n\{0,1\}^l)\big)$,  
$\{\wt h_{\bu}:\bu\in\{\bfeta,\bw\}\times\prod_{l=d+2}^n\{0,1\}^l,n\ge d+1\}\subseteq \{h_{\bu}:\bu\in\{\bfeta,\bw\}\times\prod_{l=d+2}^n\{0,1\}^l,n\ge d+1\}$, and
for all $0\le i\le d$ and $\bu\in\cup_{n=d+1}^\infty\big(\{\bfeta,\bw\}\times\prod_{l=d+2}^n\{0,1\}^l\big)$, we have $\wt h_{\bu}(\wt x_{\bfeta}^i)=y_{\bfeta,\bw}^i=s_{\bfeta}^i$ if $w^i=0$ and $\wt h_{\bu}(\wt x_{\bfeta}^i)=y_{\bfeta,\bw}^i\neq s_{\bfeta}^i$ if $w^i=1$.
\end{lemma}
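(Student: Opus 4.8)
The plan is to leave $\ms T$ unchanged outside the subtree $\ms U$ of $\ms T$ rooted at the child of $\bfeta$ addressed by $(\bfeta,\bw)$ (a node at level $d+1$), and to thin $\ms U$ by a Ramsey argument so that every surviving associated hypothesis acts in the same way on the $d+1$ points $x_\bfeta^0,\dots,x_\bfeta^d$ listed in $\bx_\bfeta$. For each node $\bu$ of $\ms U$ — every such $\bu$ has level $\ge d+1\ge 1$, so $h_\bu$ exists — set
\[
\chi(\bu):=\bigl(h_\bu(x_\bfeta^0),\dots,h_\bu(x_\bfeta^d)\bigr)\in\Y^{d+1}.
\]
Since $\bu$ lies below (or equals) $(\bfeta,\bw)$, the defining property of the GL tree applied at depth $l=d$ along the path to $\bu$ forces $h_\bu(x_\bfeta^i)=s_\bfeta^i$ if $w^i=0$ and $h_\bu(x_\bfeta^i)\ne s_\bfeta^i$ if $w^i=1$. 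Hence $\chi$ takes only finitely many values, and every value it attains already has the sign pattern prescribed by $\bw$ relative to $\bs_\bfeta$.

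I would then apply Milliken's tree theorem \citep{MILLIKEN1979215} to the finitely branching rooted tree $\ms U$ (no maximal node, each node with $\ge 2$ immediate successors) and the finite node-colouring $\chi$: this yields an infinite strong subtree $\ms S\subseteq\ms U$ all of whose nodes carry a single colour, which we call $\by_{\bfeta,\bw}\in\Y^{d+1}$; by the previous paragraph $\by_{\bfeta,\bw}^i=s_\bfeta^i$ iff $w^i=0$, which already gives the asserted relation between $\by_{\bfeta,\bw}$ and $\bw$. Let $\ell_0<\ell_1<\ell_2<\cdots$ be the level set of $\ms S$ in $\ms T$ (with $\ell_0\ge d+1$; the case $\ell_0>d+1$ is handled identically, transporting the data of the root of $\ms S$ to the node $(\bfeta,\bw)$ of the new tree). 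Since the $\ell_t$ are strictly increasing integers, $\ell_t\ge d+1+t$ for all $t$.

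The step I expect to be the main obstacle is to re-index $\ms S$ as a genuine GL subtree, because a GL tree has a rigid, growing branching pattern — a level-$(d+1+t)$ node must have exactly $2^{d+t+2}$ children, one for each label in $\{0,1\}^{d+t+2}$ — whereas an edge of $\ms S$ spans several original levels and each $\ms S$-node has too many children (namely $2^{\ell_t+1}\ge 2^{d+t+2}$ of them, since in a strong subtree the children of a node are in bijection with its immediate successors in $\ms U$). I would re-index and prune top-down: an $\ms S$-node $p$ at $\ms S$-level $t$ is a node of $\ms T$ at level $\ell_t$ with label $\bx_p\in\X^{\ell_t+1}$ and reference $\bs_p\in\Y^{\ell_t+1}$, and in the new tree we give the corresponding node the length-$(d+t+2)$ truncations $(x_p^0,\dots,x_p^{d+t+1})$ and $(s_p^0,\dots,s_p^{d+t+1})$ (legitimate as $\ell_t+1\ge d+t+2$) together with the associated hypothesis $h_p$. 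The children of $p$ in $\ms S$ are indexed by their original edge labels, which range over all of $\{0,1\}^{\ell_t+1}$; truncation to the first $d+t+2$ coordinates maps $\{0,1\}^{\ell_t+1}$ onto $\{0,1\}^{d+t+2}$, so for every target label $\bv\in\{0,1\}^{d+t+2}$ I keep exactly one $\ms S$-child of $p$ whose original edge label restricts to $\bv$ (and discard the rest). This determines the new edge labels and prunes $\ms S$ to a subtree $\ms S'$ that is still monochromatic, still infinite, and now carries exactly the GL branching data; splicing $\ms S'$ in place of $\ms U$ produces the candidate tree $\wt{\ms T}$ with its associated hypotheses.

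Finally I would verify that $\wt{\ms T}$ is an infinite GL tree with the stated properties. Outside $\ms U$ nothing changed, so those requirements — including $(\wt x_\emptyset,\wt s_\emptyset)=(x_\emptyset,s_\emptyset)$ and $(\wt\bx_\bu,\wt\bs_\bu,\wt h_\bu)=(\bx_\bu,\bs_\bu,h_\bu)$ for every $\bu$ outside the modified subtree — are inherited from $\ms T$; every new associated hypothesis inside the modified subtree equals some $h_p$ with $p$ a descendant of (or equal to) $(\bfeta,\bw)$ in $\ms T$, which gives the claimed inclusion of associated hypothesis sets; and monochromaticity of $\ms S'$ gives $\wt h_\bu(x_\bfeta^i)=y_{\bfeta,\bw}^i$ for every $\bu$ in the modified subtree and $0\le i\le d$. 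The one nontrivial point is the GL witnessing condition for a modified node $\bu$ with $\wt h_\bu=h_p$ (where $p$, at level $\ell_k$ in $\ms T$, is its $\ms S'$-node): along the shared prefix $\bfeta$ (depths $l\le d$) it coincides with the original GL condition already met by $h_p$ in $\ms T$, because $p$ lies below $(\bfeta,\bw)$; and at each later depth $l=d+1+t$ it follows by reading the original GL condition for $h_p$ at level $\ell_t$, since the new label, reference and edge label at that depth are, respectively, the length-$(d+t+2)$ truncations of the label and reference of $p$'s $\ms S'$-ancestor at level $t$ and the restriction to the first $d+t+2$ coordinates of the original level-$(\ell_t+1)$ edge label on the $\ms T$-path to $p$ — precisely the data that condition involves. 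As $\ms S'$ is infinite, $\wt{\ms T}$ has depth $\infty$, completing the argument.
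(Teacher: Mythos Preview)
Your proposal is correct and follows essentially the same approach as the paper: colour the nodes of the subtree below $(\bfeta,\bw)$ by $\bigl(h_\bu(x_\bfeta^0),\dots,h_\bu(x_\bfeta^d)\bigr)$, apply Milliken's tree theorem to extract a monochromatic strong subtree, and then re-index by truncating labels and edge labels to restore the rigid GL branching pattern. The paper makes the specific choice of zero-padding for the re-indexing (its map $\bbeta(\bu)$), whereas you phrase it as ``keep one $\ms S$-child per truncated edge label,'' but these are equivalent; if anything, your write-up is more explicit than the paper's in verifying the GL witnessing condition for the spliced tree and in noting upfront that any colour attained by $\chi$ already carries the sign pattern prescribed by $\bw$.
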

\begin{proof}
For any $\bu\in\cup_{n=d+1}^\infty(\{\bfeta,\bw\}\times\prod_{l=d+2}^n\{0,1\}^l)$, we color $v_{\bu}:=(\bx_{\bu},\bs_{\bu},h_\bu)$ with 
$$
(h_{\bu}(x_{\bfeta}^0),\dots,h_{\bu}(x_{\bfeta}^{d}))\in\Y^{d+1}.
$$ 
Since $|\Y|=K<\infty$, by the Milliken's tree theorem \citep{MILLIKEN1979215}, for the colored infinite tree $\ms T_{\bfeta,\bw}:=\{v_{\bu}:\bu\in\{\bfeta,\bw\}\times\prod_{l=d+2}^n\{0,1\}^l,n\ge d+1\}$,
there exists some color $\by_{\bfeta,\bw}\in\Y^{d+1}$ and a strongly embedded infinite subtree $\breve{\ms T}_{\bfeta,\bw}$
of $\ms T_{\bfeta,\bw}$ such that all the nodes in $\breve{\ms T}_{\bfeta,\bw}$ have the same color $\by_{\bfeta,\bw}$.
Since $\breve{\ms T}_{\bfeta,\bw}$ is a strongly embedded subtree of $\ms T_{\bfeta,\bw}$, there exists some sequence $(n_l)_{l\in\N_0}\in\N^{\N_0}$ such that $n_{l+1}>n_{l}\ge d+1$ for any $l\in\N_0$,
$$
\breve{\ms T}_{\bfeta,\bw}=\cup_{\tau=0}^\infty\Big\{({\breve\bx}_{\bb},{\breve\bs}_{\bb},{\breve h}_{\bb})\in\X^{n_\tau+1}\times\Y^{n_{\tau}+1}\times\cH:\bb\in\prod_{l=0}^{\tau-1}\{0,1\}^{n_{l}+1}\Big\},
$$
and $({\breve\bx}_{\bb},{\breve\bs}_{\bb},{\breve h}_{\bb})$ is a node in level $n_\tau$ of $\ms T$ for all $\bb\in\prod_{l=0}^{\tau-1}\{0,1\}^{n_{l}+1}$ and $\tau\in\N_0$. 
For any $t\in\N$ with $t\ge d+1$ and $\bu=(u_1^0,(u_2^0,u_2^1),\dots,(u_{t}^0,\dots,u_t^{t-1}))\in\{\bfeta,\bw\}\times\prod_{l=d+2}^t\{0,1\}^l$, define 
\begin{align*}
\bbeta(\bu):=&((\beta(\bu)_{1}^0,\dots,\beta(\bu)_{1}^{n_0}),(\beta(\bu)_{2}^0,\dots,\beta(\bu)_{2}^{n_1}),\dots,(\beta(\bu)_{t-d-1}^0,\dots,\beta(\bu)_{t-d-1}^{n_{t-d-2}}))
\\ \in &
\prod_{l=0}^{t-d-2}\{0,1\}^{n_{l}+1}
\end{align*}
by
\begin{align*}
\beta(\bu)_l^{i}:=
\begin{cases}
&u_{l+d+1}^i, \quad \textup{if } 0\le i\le l+d,\\
&0, \quad \textup{if } l+d+1\le i\le n_{l-1}
\end{cases}
\end{align*}
for all $0\le i\le n_{l-1}$ and $1\le l\le t-d-1$ and
\begin{align*}
\wt v_{\bu}:=({\wt\bx}_{\bu},{\wt\bs}_{\bu},
{\wt h}_{\bu})
\end{align*}
with
\begin{align*}
\wt\bx_{\bu}:=({\breve\bx}_{\bbeta(\bu)}^{0},\dots,{\breve\bs}_{\bbeta(\bu)}^{n})\in\X^{n+1},
\ 
\wt\bs_{\bu}:=({\breve\bx}_{\bbeta(\bu)}^{0},\dots,{\breve\bs}_{\bbeta(\bu)}^{n})\in\Y^{n+1},
\textup{and }
\wt h_{\bu}:=\breve h_{\bbeta(\bu)}\in\cH.
\end{align*}
Define $\wt x_{\emptyset}:=x_{\emptyset}$, $\wt s_{\emptyset}:=s_{\emptyset}$, and $\wt\bx_{\bu}:=\bx_{\bu}, \wt\bs_{\bu}:=\bs_{\bu}, \wt h_{\bu}:=h_{\bu}$ for any 
$$
\bu\in\Big(\cup_{n\in\N}\prod_{l=1}^n\{0,1\}^l\Big)\backslash\Big(\cup_{n=d+1}^\infty\Big(\{\bfeta,\bw\}\times\prod_{l=d+2}^n\{0,1\}^l\Big)\Big).
$$
Then, we obtain the following infinite tree
\begin{align*}
\wt{\ms T}:=
\cup_{n=0}^\infty\Big\{(\wt\bx_{\bu},\wt\bs_{\bu}):\bu\in\prod_{l=1}^n\{0,1\}^l\Big\}.
\end{align*}
Since $\ms T$ is an infinite GL tree and $\breve{\ms T}_{\bfeta,\bw}$ is a strongly embedded infinite subtree of $\ms T_{\bfeta,\bw}$, we have that $\wt{\ms T}$ is an infinite GL tree with the associated hypothesis set $\{h_{\bu}\in\cH:\bu\in\prod_{l=1}^n\{0,1\}^l,n\in\N_0\}$. 
Since all the nodes in $\breve{\ms T}_{\bfeta,\bw}$ have the smae color $\by_{\bfeta,\bw}$, by the construction of coloring and $\wt{\ms T}$, for all $0\le i\le d$ and $\bu\in\cup_{n=d+1}^\infty\big(\{\bfeta,\bw\}\times\prod_{l=d+2}^n\{0,1\}^l\big)$, we have $\wt h_{\bu}(\wt x_{\bfeta}^i)=y_{\bfeta,\bw}^i=s_{\bfeta}^i$ if $w^i=0$ and $\wt h_{\bu}(\wt x_{\bfeta}^i)=y_{\bfeta,\bw}^i\neq s_{\bfeta}^i$ if $w^i=1$.
Finally, we have $(\wt x_{\emptyset},\wt s_{\emptyset})=(x_{\emptyset},s_{\emptyset})$ and $(\bx_{\bu},\bs_{\bu},h_{\bu})=(\wt\bx_{\bu},\wt\bs_{\bu},\wt h_{\bu})$ for all $\bu\in\big(\cup_{n\in\N}\prod_{l=1}^n\{0,1\}^l\big)\backslash\big(\cup_{n=d+1}^\infty(\{\bfeta,\bw\}\times\prod_{l=d+2}^n\{0,1\}^l)\big)$ by our definition. 

\end{proof}

Now, we are ready to carry out the proof of Theorem \ref{thm:NL-GL}.
\begin{proof}[of Theorem \ref{thm:NL-GL}]
For any NL tree $\cup_{n=0}^{d-1}\{(\bx_{\bu},\bs^{(0)}_{\bu},\bs^{(1)}_{\bu})\in\X^{n+1}\times\Y^{n+1}\times\Y^{n+1}:\bu\in\prod_{l=1}^n\{0,1\}^l\}$ of $\cH$ of depth $1\le d\le \infty$, $\cup_{n=0}^{d-1}\{(\bx_{\bu},\bs^{(0)}_{\bu})\in\X^{n+1}\times\Y^{n+1}:\bu\in\prod_{l=1}^n\{0,1\}^l\}$ is a GL tree of $\cH$ of the same depth $d$. Thus, an infinite NL tree of $\cH$ implies an infinite GL tree of $\cH$. 

Now, suppose that $\cup_{n=0}^{\infty}\{(\bx_{\bu},\bs_{\bu})\in\X^{n+1}\times\Y^{n+1}:\bu\in\prod_{l=1}^n\{0,1\}^l\}$ is an infinite GL tree of $\cH$. 
For any $n\in\N$ and $\bu=(u_1^0,(u_2^0,u_2^1),\dots,(u_n^0,\dots,u_n^{n-1}))\in\prod_{l=1}^{n}\{0,1\}^l$, there exists some $h_{\bu}\in\cH$ such that $h_{\bu}(x_{\bu_{\le l}}^i)=s_{\bu_{\le l}}^i$ if $u_{l+1}^i=0$ and $h_{\bu}(x_{\bu_{\le l}}^i)\neq s_{\bu_{\le l}}^i$ otherwise for all $0\le i\le l$ and $0\le l<n$, where
\begin{align*}
\bu_{\le l}:=(u_1^0,(u_2^0,u_2^1),\dots,(u_l^{0},\dots,u_l^{l-1})),\ 
x_{\bu_{\le l}}:=(x_{\bu_{\le 1}}^0,\dots,x_{\bu_{\le 1}}^{l}).
\end{align*}
Then, we define $\ms T_G=\{v_{\emptyset}=(x_{\emptyset},s_{\emptyset})\}\cup\{v_{\bu}=(\bx_{\bu},\bs_{\bu},h_{\bu}):\bu\in\prod_{l=1}^n\{0,1\}^l,1\le n< \infty\}$ which is the infinite GL tree with the associated hypotheses. 
Next, we use induction to show that $\cH$ has an infinite NL tree.

Applying Lemma \ref{lem:GL} to $\ms T_G$ for $d=0$, $\bfeta=\emptyset$, and $\bw=1$, 
we obtain a label $\wb s_{\emptyset}^0\in\Y\backslash\{s_{\emptyset}^0\}$ and an infinite GL tree with the associated hypotheses $\breve{\ms T}_G=\{\breve v_{\emptyset}=(\breve x_{\emptyset},\breve s_{\emptyset})\}\cup\big(\cup_{n=1}^\infty\{\breve v_{\bu}=(\breve\bx_{\bu},\breve\bs_{\bu},\breve h_{\bu}):\bu\in\prod_{l=1}^n\{0,1\}^l\}\big)$ such that  
for all $\bu\in\cup_{n=1}^\infty\big(\{1\}\times\prod_{l=2}^n\{0,1\}^l\big)$, we have $\breve h_{\bu}(\breve x_{\emptyset})=\wb s_{\emptyset}^0$.
Then, we replace $\ms T_G$ with $\breve{\ms T}_{G}$. With abuse of notation, we still use $\ms T_G$ to denote $\breve{\ms T}_{G}$, use  $v_{\emptyset}=(x_{\emptyset},s_{\emptyset})$ to denote $\breve v_{\emptyset}=(\breve x_{\emptyset},\breve s_{\emptyset})$, and use $v_{\bu}=(\bx_{\bu},\bs_{\bu},h_\bu)$ to denote $\breve v_{\bu}=(\breve\bx_{\bu},\breve\bs_{\bu},{\breve h}_\bu)$ for all $\bu\in\cup_{n=1}^{\infty}(\prod_{l=1}^n\{0,1\}^l)$.

Suppose that for some $d\in\N$, there exists a set $\cup_{n=0}^{d-1}\{\wb\bs_{\bu}\in\Y^{n+1}:\bu\in\prod_{l=1}^{n}\{0,1\}^l\}$ 
and an infinite GL tree with the associated hypotheses $\ms T_G=\{v_{\emptyset}=(x_{\emptyset},s_{\emptyset})\}\cup\big(\cup_{n=1}^{\infty}\{v_{\bu}=(\bx_{\bu},\bs_{\bu},h_{\bu}):\bu\in\prod_{l=1}^{n}\{0,1\}^l\}\big)$ of $\cH$ 
such that $\cup_{n=0}^{d-1}\{(\bx_{\bu},\bs_{\bu},\wb\bs_{\bu}):\bu\in\prod_{l=1}^{n}\{0,1\}^l\}$ is a NL of $\cH$ of depth $d$ and for any $\bu\in\cup_{n=d}^\infty(\prod_{l=1}^n\{0,1\}^l)$, we have $h_{\bu}(x_{\bu_{\le l}}^i)=s_{\bu_{\le l}}^i$ if $u_{l+1}^i=0$ and $h_{\bu}(x_{\bu_{\le l}}^i)=\wb s_{\bu_{\le l}}^i$ otherwise for all $0\le i\le l$ and $0\le l<d$. 
Define $r:=\lceil5\log_2K\rceil\in\N$. 
For any $\bkappa\in\prod_{l=1}^d\{0,1\}^l$, consider $\bfeta\in\{\bkappa\}\times\prod_{l=d+1}^{r(d+1)-1}\{0\}^l$. 
Applying Lemma \ref{lem:GL} for $\bfeta$ and each $\bw\in\{0,1\}^{r(d+1)}\backslash\{0\}^{r(d+1)}$ iteratively and defining $\by_{\bfeta,\bw}:=\bs_{\bfeta}$ for $\bw\in\{0\}^{r(d+1)}$, 
we obtain a class $H_{\bfeta}=\{\by_{\bfeta,\bw}\in\Y^{r(d+1)}:\bw\in\{0,1\}^{r(d+1)}\}$ and an infinite GL tree with the associated hypotheses $\wt{\ms T}_{G,\bfeta}=\{\wt v_{\emptyset}=(x_{\emptyset},s_{\emptyset})\}\cup\big(\cup_{n=1}^\infty\{\wt v_{\bu}=(\wt\bx_{\bu},\wt\bs_{\bu},\wt h_{\bu}):\bu\in\prod_{l=1}^n\{0,1\}^l\}\big)$ such that $\wt{\ms T}_{G,\bfeta}$ satisfies the induction hypothesis for $d$ and $\cup_{n=0}^{d-1}\{\wb\bs_{\bu}\in\Y^{n+1}:\bu\in\prod_{l=1}^{n}\{0,1\}^l\}$ and 
for all $\bw\in\{0,1\}^{r(d+1)}\backslash\{0\}^{r(d+1)}$, $0\le i\le r(d+1)-1$, and $\bu\in\cup_{n=r(d+1)}^\infty\big(\{\bfeta,\bw\}\times\prod_{l=r(d+1)+1}^n\{0,1\}^l\big)$, we have $\wt h_{\bu}(\wt x_{\bfeta}^i)=y_{\bfeta,\bw}^i=s_{\bfeta}^i$ if $w^i=0$ and $\wt h_{\bu}(\wt x_{\bfeta}^i)=y_{\bfeta,\bw}^i\neq s_{\bfeta}^i$ if $w^i=1$.
Then, we replace $\ms T_G$ with $\wt{\ms T}_{G,\bfeta}$. With abuse of notation, we still use $\ms T_G$ to denote $\wt{\ms T}_{G,\bfeta}$, use  $v_{\emptyset}=(x_{\emptyset},s_{\emptyset})$ to denote $\wt v_{\emptyset}=(\wt x_{\emptyset},\wt s_{\emptyset})$, and use $v_{\bu}=(\bx_{\bu},\bs_{\bu},h_\bu)$ to denote $\wt v_{\bu}=(\wt\bx_{\bu},\wt\bs_{\bu},{\wt h}_\bu)$ for all $\bu\in\cup_{n=1}^{\infty}(\prod_{l=1}^n\{0,1\}^l)$. 

Since we have shown that $y_{\bfeta,\bw}^i\neq s_{\bfeta}^i$ if $w^i=1$ and $y_{\bfeta,\bw}^i=s_{\bfeta}^i$ if $w^i=0$ for any $\bw\in\{0,1\}^{r(d+1)}$ and $0\le i\le r(d+1)-1$, we have $\dim_{G}(H_{\bfeta})=r(d+1)$ and by \citet{BENDAVID199574}, $\dim_{N}(H_{\bfeta})>\frac{\dim_G(H_{\bfeta})}{5\log_2K}\ge d+1$. 
Thus, there exists a subset $\{i_0,\dots,i_{d}\}\subseteq [r(d+1)]$ and two sequences $\bff_0=(f^0_0,\dots,f^{d}_0),\ \bff_1=(f^0_1,\dots,f^{d}_1)\in\Y^{d+1}$ such that $f_0^i\neq f_1^i$ for all $0\le i\le d$ and 
$H_{\bfeta}|_{(i_0,\dots,i_{d})}\supseteq\{f_0^0,f_1^0\}\times\dots\times\{f_0^d,f_1^d\}$. 
Then, we must have either $\bff_0=\bs_{\bfeta}|_{(i_0,\dots,i_d)}$ or $\bff_1=\bs_{\bfeta}|_{(i_0,\dots,i_d)}$. 
If $\bff_0=\bs_{\bfeta}|_{(i_0,\dots,i_d)}$, we define $\wb\bs_{\bkappa}:=\bff_1$ and $\bs_{\bkappa}:=\bff_0$.
If $\bff_1=\bs_{\bfeta}|_{(i_0,\dots,i_d)}$, we define
$\wb\bs_{\bkappa}:=\bff_0$ and $\bs_{\bkappa}:=\bff_1$. Then, we have ${\wb s}_{\bkappa}^i\neq s_{\bkappa}^i$ for all $0\le i\le d$ and $H_{\bfeta}|_{(i_0,\dots,i_{d})}\supseteq\{s_{\bkappa}^0,\wb s_{\bkappa}^0\}\times\dots\times\{s_{\bkappa}^d,\wb s_{\bkappa}^d\}$. 

Define the set $W_{\bfeta}:=\{\bw\in\{0,1\}^{r(d+1)}:y_{\bfeta,\bw}^{i_j}={\wb s}_{\bkappa}^{i_j}\textup{ for all }j\in[d+1]\textup{ s.t. }w^{i_j}=1\}$ and $H'_{\bfeta}:=\{\by_{\bfeta,\bw}|_{(i_0,\dots,i_d)}:\bw\in W_{\bfeta}\}$. 
We have $H_{\bfeta}'=\{s_{\bkappa}^0,\wb s_{\bkappa}^0\}\times\dots\times\{s_{\bkappa}^d,\wb s_{\bkappa}^d\}$ and $|H_{\bfeta}'|=2^{d+1}$. 
For any $\bg=(g_1,\dots,g_{d+1})\in\{0,1\}^{d+1}$, define the set 
$$
W_{\bfeta,\bg}:=W_{\bfeta}\cap\{\bw\in\{0,1\}^{r(d+1)}:w^{i_j}=g^{j}
\textup{ for all }j\in[d+1]\}.
$$
Since $H_{\bfeta}|_{(i_1,\dots,i_d)}\supseteq\{s_{\bkappa}^0,\wb s_{\bkappa}^0\}\times\dots\times\{s_{\bkappa}^d,\wb s_{\bkappa}^d\}$, we have $W_{\bfeta,\bg}\neq \emptyset$. Then, we pick a sequence $\bw_{\bg}\in W_{\bfeta,\bg}$ for any $\bg\in\{0,1\}^{d+1}$. 
For any $n\in\N$ with $n\ge d+1$ and any $\bu=(\bu_{d+1},\dots,\bu_{n})\in\prod_{l=d+1}^{n}\{0,1\}^l$, we define
\begin{align*}
\balpha(\bu):=\big(&(\bw_{\bu_{d+1}},(\alpha(\bu)_{r(d+1)+1}^0,\dots,\alpha(\bu)_{r(d+1)+1}^{r(d+1)}),\dots,
\\&
(\alpha(\bu)_{n+(r-1)(d+1)}^0,\dots,\alpha(\bu)_{n+(r-1)(d+1)}^{n+(r-1)(d+1)-1})
\big)
\\ \in \quad &
\prod_{l=r(d+1)}^{n+(r-1)(d+1)}\{0,1\}^l
\end{align*}
with
\begin{align*}
\alpha(\bu)_{l}^{i}:=
\begin{cases}
u_{l-(r-1)(d+1)}^{i},\quad\textup{if }
0\le i\le l-(r-1)(d+1)-1,\\
0,\quad\textup{if }
l-(r-1)(d+1) \le i\le l
\end{cases}
\end{align*}
for any $r(d+1)+1\le l\le n+(r-1)(d+1)$ and $0\le i\le l-1$. 

Next, for any $n\in\N$ with $n\ge d$ and any $\bu\in \prod_{l=d+1}^n\{0,1\}^l$, define
\begin{align*}
\wt v_{\bkappa,\bu}
:=
\begin{cases}
((\bx_{\bfeta}^{i_0},\dots,\bx_{\bfeta}^{i_d}),(\bs_{\bfeta}^{i_0},\dots,\bs_{\bfeta}^{i_d}),h_{\bfeta})
\quad \textup{if }n=d,\\
((\bx_{\bfeta,\balpha(\bu)}^{0},\dots,\bx_{\bfeta,\balpha(\bu)}^{n}),(\bs_{\bfeta,\balpha(\bu)}^{0},\dots,\bs_{\bfeta,\balpha(\bu)}^{n}),
h_{\bfeta,\balpha(\bu)}),
\quad \textup{if }n\ge d+1.
\end{cases}
\end{align*}
Then, we obtain the following tree
\begin{align*}
\wt{\ms T}_{G,\bkappa}:=\cup_{n=d}^\infty\Big\{
\wt v_{\bkappa,\bu}:
\bu\in\prod_{l=d+1}^n\{0,1\}^d
\Big\}=
\Big\{
\wt v_{\bu}:
\bu\in\cup_{n=d}^\infty\Big(\{\bkappa\}\times\prod_{l=d+1}^n\{0,1\}^d\Big) 
\Big\}.
\end{align*}
We replace $\ms T_{G,\bkappa}$ with $\wt{\ms T}_{G,\bkappa}$ in $\ms T_G$ by replacing $v_{\bu}=(\bx_{\bu},\bs_\bu,h_\bu)$ with $\wt v_{\bu}$ and still use $v_{\bu}=(\bx_{\bu},\bs_{\bu},h_{\bu})$ to denote $\wt v_{\bu}$ in $\ms T_{G}$ after the replacement for all $\bu\in\cup_{n=d}^\infty(\{\bkappa\}\times\prod_{l=d+1}^n\{0,1\}^l)$. 
Now, we have $h_{\bu}(x_{\bkappa}^i)= s_{\bkappa}^i$ if $u_{d+1}^{i}=0$ and $h_{\bu}(x_{\bkappa}^i)=\wb s_{\bkappa}^i$ if $u_{d+1}^{i}=1$ for all $0\le i\le d$ and $\bu\in\cup_{n=d}^\infty(\{\bkappa\}\times\prod_{l=d+1}^n\{0,1\}^l)$ and $\ms T_G$ is still an infinite GL tree with the associated hypotheses after the replacement. 

After the above procedure for all $\bkappa\in\prod_{l=1}^d\{0,1\}^l$, we obtain a set $\cup_{n=0}^d\{\wb\bs_{\bu}\in\Y^{n+1}:\bu\in\prod_{l=1}^n\{0,1\}^l\}$ and an infinite GL tree with the associated hypotheses  $\{(x_{\emptyset},s_{\emptyset})\}\cup\big(\cup_{n=1}^{\infty}\{(\bx_{\bu},\bs_{\bu},h_{\bu}):\bu\in\prod_{l=1}^{n}\{0,1\}^l\}\big)$ of $\cH$ such that 
$\cup_{n=0}^{d}\{(\bx_{\bu},\bs_{\bu},\wb\bs_{\bu}):\bu\in\prod_{l=1}^{n}\{0,1\}^l\}$ is a NL of $\cH$ of depth $d+1$ and for any $\bu\in\cup_{n=d+1}^\infty(\prod_{l=1}^n\{0,1\}^l)$, we have $h_{\bu}(x_{\bu_{\le l}}^i)=s_{\bu_{\le l}}^i$ if $u_{l+1}^i=0$ and $h_{\bu}(x_{\bu_{\le l}}^i)=\wb s_{\bu_{\le l}}^i$ otherwise for all $0\le i\le l$ and $0\le l\le d$. 
Thus, the induction hypothesis has been shown for $d+1$. 

By induction, there exists an infinite set $\cup_{n=0}^\infty\{\wb\bs_{\bu}\in\Y^{n+1}:\bu\in\prod_{l=1}^n\{0,1\}^l\}$ and an infinite GL tree $\cup_{n=0}^{\infty}\{(\bx_{\bu},\bs_{\bu}):\bu\in\prod_{l=1}^{n}\{0,1\}^l\}$ of $\cH$ such that 
$\cup_{n=0}^{\infty}\{(\bx_{\bu},\bs_{\bu},\wb\bs_{\bu}):\bu\in\prod_{l=1}^{n}\{0,1\}^l\}$ is an infinite NL of $\cH$. 
It follows that an infinite GL tree of $\cH$ implies an infinite NL tree of $\cH$. 

Finally, we can conclude that $\cH$ has an infinite NL tree if and only if it has an infinite GL tree.

\end{proof}

\section{Proof of Proposition \ref{prop:hn_eg}} \label{sec:claim_eg}

\begin{proof}
For any $n\in\N$ and $\bx\in\X$, if $\wh h_n(\bx)=k$ for some $k\in[K]\backslash\{1\}$, there exists some $\bz'\in[0,\infty)^d$, $1\le i_1<\dots<i_t\le n$ and $(\alpha_1,\dots\alpha_t)\in [0,1]^t$ such that $\sum_{\tau=1}^t\alpha_\tau=1$ and
\begin{align*}
\bx=\bz'+\sum_{\tau=1}^t\alpha_\tau\bx_{i_\tau}.
\end{align*}
Then, for any $k'<k$, we have
\begin{align*}
\bw_{k'}\cdot\bx-b_{k'}=
\bw_{k'}\cdot\bz'+\sum_{\tau=1}^t\alpha_\tau(\bw_{k'}\cdot\bx_{i_\tau}-b_{k'})
\le 
\bw_{k}\cdot\bz'+\sum_{\tau=1}^t\alpha_\tau(\bw_{k}\cdot\bx_{i_\tau}-b_{k}),
\end{align*}
which implies that $h_{n+1}(\bx)\ge k=\wh h_n(\bx)$.
Then, for any $n\in\N$ such that
$\wh h_{n}(\bx_{n+1})\neq y_{n+1}=h_{n+1}(\bx_{n+1})$, we must have $\wh h_n(\bx_{n+1})<y_{n+1}$. 
It follows from the definition of $\wh h_{n}$ that for every $i\le n$ such that $y_{i}\ge y_{n+1}$, there exists some $j\in[d]$ such that $(\bx_i)_j>(\bx_{n+1})_{j}$. 

Suppose on the contrary that there exists a strictly increasing infinite sequence $(n_t)_{t\in\N}$ such that $\wh h_{n_t}(\bx_{n_{t+1}})\neq y_{n_{t+1}}$ for all $t\in\N$.
Now, define an infinite complete graph with vertex set $\{\bx_{n_{t}}\}_{t\in\N}$ and color each edge $\{\bx_{n_{t}},\bx_{n_{t'}}\}$ with $t<t'$ to be $\min\{j\in[d]:(\bx_{n_t})_j>(\bx_{n_{t'}})_j\}\in[d]$. 
Then, by the infinite Ramsey theory, there exist some $j\in[d]$ a strictly increasing infinite sequence $(t_i)_{i\in\N}$ such that the edge $\{\bx_{n_{t_{i}}},\bx_{n_{t_{i'}}}\}$ is colored with $j$ for all $i\neq i'$. Thus, by the rule of coloring, $(\bx_{n_{t_i}})_j$ is a strictly decreasing infinite sequence in $i$, which contradicts the fact that $(\bx_{n_{t_i}})_j\in \N$ for all $i\in\N$. 
Therefore, $(\wh h_n)_{n\in\N}$ only makes finitely many mistakes for any consistent sequence $((\bx_n,\by_n))_{n\in\N}$. 

Moreover, if $\wh h_n(\bx_{n+1})=y_{n+1}$, we claim that $\wh h_{n+1}=\wh h_n$. Indeed, for any $\bx\in\X$, if $\mathsf{Y}_{S_{n},\bx}\neq \emptyset$, we have $\mathsf{Y}_{S_{n+1},\bx}=\mathsf{Y}_{S_{n},\bx}$. Thus, $\wh h_{n+1}(\bx)=\wh h_n(\bx)$. 
If $\mathsf{Y}_{S_{n},\bx}=\emptyset$, we must have $\wh h_n(\bx)=1$, which implies that we have $k\notin\mathsf{Y}_{S_{n+1},\bx}$ for any $k>1$. 
Thus, $\wh h_{n+1}(\bx)=1=\wh h_n(\bx)$. 
\end{proof}

\end{document}